\newif\ifdraft
\DeclareMathOperator{\sign}{sign}
\newtheorem{thm}{Theorem}
\newtheorem{lemma}{Lemma}
\newtheorem{definition}{Definition}
\newtheorem{rmk}{Remark}
\newtheorem{fact}{Fact}
\newtheorem{clm}{Claim}
\DeclareMathOperator*{\argmax}{argmax}
\DeclareMathOperator*{\argmin}{argmin}
\newcommand{\Hs}{\mathcal{H}}
\newcommand{\X}{\mathcal{X}}
\newcommand{\Y}{\mathcal{Y}}
\newcommand{\Z}{\mathcal{Z}}
\newcommand{\R}{\mathbb{R}}
\newcommand{\E}{\mathbb{E}}
\newcommand{\F}{\mathcal{F}}
\newcommand{\1}{\mathds{1}}
\newcommand{\Ps}{\mathcal{P}}
\newcommand{\Qs}{\mathcal{Q}}
\newcommand{\Gs}{\mathcal{G}}
\newcommand{\emily}[1]{}
\newcommand{\ar}[1]{}
\newcommand{\mk}[1]{}
\newcommand{\wes}[1]{}
\title{Multiaccurate Proxies for Downstream Fairness}
\author[1,2]{Emily Diana}
\author[1,2]{Wesley Gill}
\author[1,2]{Michael Kearns}
\author[3]{\authorcr Krishnaram Kenthapadi}
\author[1,2]{Aaron Roth}
\author[1]{Saeed Sharifi-Malvajerdi}
\affil[1]{University of Pennsylvania}
\affil[2]{Amazon AWS AI}
\affil[3]{Fiddler AI}
\begin{document}

\maketitle

\begin{abstract}
    We study the problem of training a model that must obey demographic fairness 
    conditions when the sensitive features are not  available at training time --- in other words, 
    \textit{how can we train a model to be fair by race when we don't have data about race?} 
    We adopt a \textit{fairness pipeline} perspective, in which an ``upstream'' 
    learner that does have access to the sensitive features will learn a \textit{proxy model} for these features from the other attributes.
    The goal of the proxy is to allow a general ``downstream'' learner --- with minimal assumptions on their prediction task --- to be able to use the
    proxy to train a model that is fair with respect to the true sensitive features. We show that obeying \textit{multiaccuracy} constraints with respect to the
    downstream model class suffices for this purpose, provide sample- and oracle efficient-algorithms and generalization bounds for learning
    such proxies, and conduct an experimental evaluation. In general, multiaccuracy is much easier to satisfy than classification accuracy, and can be satisfied even when the sensitive features are hard to predict.
\end{abstract}

\section{Introduction}

There are various settings in which there is a desire to train a model that is fair with respect to some sensitive
features (e.g. race and gender), but in which the values for these features are unavailable in the training data.
This might be for legal reasons (e.g. in the United States it is against the law to use race as an input to 
consumer lending models), or for policy reasons (e.g. many large consumer-facing organizations choose
not to ask their customers for such information). This leads to an apparent technical conundrum: \textit{How can we be fair by race if we don't have data about race?}

Standard practice when attempting to enforce statistical fairness constraints in the absence of sensitive data is to attempt to predict individual sensitive features $z$---like race---using  \textit{proxies} $\hat z$ that predict $z$ from other available features $x$. For example, the popular ``Bayesian Improved Surname Geocoding'' method \cite{surname} attempts to predict race from an individual's location and surname. But what properties should $\hat z$ have, and how can we achieve them algorithmically? The answer is not obvious. Suppose for a moment that $z \in \{0,1\}$ is binary valued. It has been observed in prior work that training a binary valued $\hat z$ to minimize classification error can yield a proxy that results in substantial bias even when used to solve the easier \textit{auditing} problem (i.e. bias detection only, not bias mitigation) on a downstream predictive model $h$ \cite{Kallus1,awasthi2}. 

\subsection{Our Model and Results}
We envision a pipeline model in which an upstream Proxy Learner (PL) has access to a data set with sensitive features, but without knowledge of what learning problems a variety of Downstream Learners (DLs) might want to solve.  We consider two cases: either the PL has access to \textit{samples} of labels from the distribution over problems that the DLs are interested in, or the PL hypothesizes that the labels the DLs wish to learn can be predicted using models from some binary-valued function class. The DLs do not have access to the sensitive features $z$. The goal of the PL is to train a \textit{proxy model} $\hat z$ that tries to predict the conditional expectation of  $z$, conditional on the other observable features $x$. A good proxy will have the property that for most DLs, training subject to demographic fairness constraints imposed via the proxy $\hat z$ will result in the same model that would have been obtained from imposing constraints directly on $z$. In the body of the paper we focus on equal error rate constraints, but in the appendix we show how our techniques extend to other standard measures of demographic fairness, including statistical parity and  false positive/negative rate equality. 

We make a connection between the PL problem and \textit{multiaccuracy} as defined by \cite{multiaccuracy}, which informally asks that a model $\hat z$ be statistically unbiased on a large collection of sets $G$. We note that if our proxy $\hat z$ is appropriately multiaccurate over groups defined by a function class $\Hs$, it serves as a good proxy for downstream fair learning problems over the hypothesis class $\Hs$. For statistical parity, we simply need multiaccuracy over the collection of sets $G = \Hs$ --- i.e. the collection of sets corresponding to points that each $h \in \Hs$ labels as positive. Past work gives algorithms for learning multiaccurate functions $\hat z$ with respect to any collection of groups such that membership in those groups can be determined at test time --- which it can, in the case of groups defined by a known function class $\Hs$ \cite{multiaccuracy}. Because statistical parity is a fairness constraint that is defined independently of the labels, a proxy $\hat z$ that is multiaccurate with respect to $\Hs$ can be used to solve \textit{any} downstream learning problem over $\Hs$ subject to a statistical parity constraint. For an equal error fairness constraint, we require that $\hat z$ satisfy multiaccuracy over a collection of sets $G$ corresponding to \textit{error regions} of each $h \in \Hs$. Whether an individual falls into such an error region is not observable by a deployed classifier (since this depends on the unobserved label $y$), so we must develop new algorithms for this case. These error regions depend on the labelling function (i.e. the learning problem), so to serve as a good proxy for many different downstream learning problems, $\hat z$ must be multiaccurate with respect to the error regions defined using many different labelling functions. We provide two ways to do this:

\begin{enumerate}
    \item If the labelling functions come from a \textit{distribution over problems}, then we show how to take a polynomially sized sample from this distribution and use it to train a proxy $\hat z$ that is a good proxy for \textit{most} learning problems in the distribution (i.e. we give a PAC-style bound with respect to the problem distribution). 
    \item If the labelling functions come from a bounded VC-class we can optimize over, then we show how to train a proxy $\hat z$ that is a good proxy for \textit{every} labelling function generated from that class.
\end{enumerate}

All of our algorithms are \textit{oracle efficient}, meaning they are efficient reductions to standard empirical risk minimization problems. Finally, we perform an empirical evaluation to demonstrate the utility of our proxy training algorithms.

\subsection{Related Work}
Using proxies for race or ethnicity is standard practice in finance and other settings in which sensitive features are often unavailable but fairness is a concern. Common features used for prediction include surname, first name, and geographic location \cite{surname,firstname,lending}.

Several papers, beginning with Chen et al. \cite{Kallus1}, have considered the problem of \textit{evaluating} measures of statistical fairness on a fixed classifier using a proxy for the specified sensitive attribute. Chen et al. \cite{Kallus1} characterize the bias that is introduced in estimating the degree to which a fixed classifier violates the statistical parity (also known as demographic disparity) condition, when a proxy representing a thresholding of the conditional probability of a binary  sensitive attribute is used. They also show that when the proxy is computed using the same features as the downstream classification, the true conditional expectation of the protected attribute (conditional on the non-sensitive features) can be used to give an unbiased estimate of the demographic disparity. Awasthi et al. \cite{awasthi2} embark on a similar study for evaluating disparities in false positive or negative rates, and characterize the \textit{distortion factor} of a proxy as a function of properties of the underlying distribution and propose estimating this distortion factor and then trying to correct for it. 

Several papers also aim at postprocessing or training fair models without sensitive features. Awasthi, Kleindessner, and Morgenstern \cite{awasthi1} consider perturbations of sensitive features (e.g. as they might be if labelled using crowdsourced workers) and give conditions (such as conditional independence of the noisy sensitive features and the non-sensitive features) under which post-processing a fixed classifier to equalize false positive or negative rates as measured under the proxy reduces the true disparity between false positive or negative rates subject to the true sensitive features. In similar noise models, Wang et al. \cite{wang2020robust} propose robust optimization based approaches to fairness constrained training with noisy sensitive features and Mehrotra and Celis consider the problem of fair subset selection \cite{subset}. Lahoti et al. \cite{lahoti} propose to solve a minimax optimization problem over an enormous set of ``computationally identifiable'' subgroups, under the premise that if there exists a good proxy for a sensitive feature, it will be included as one of these computationally identifiable groups defined with respect to the other features. This is related to subgroup fairness studied by Kearns et al. \cite{gerrymandering} and Hebert-Johnson et al. \cite{multiaccuracy} --- but this approach generally leads to a degradation in accuracy. A related line of work considers cryptographic solutions in a setting in which the relevant sensitive features for individuals are available---held by a third party in \cite{crypto1} or by the individuals themselves in \cite{crypto2}---but can only be accessed via cryptographic means like secure multiparty computation. Similarly, \cite{privatefair} studies the case in which the sensitive features can only be used in a differentially private way. These papers are similarly motivated, but operate in a very different setting. Finally, several papers study fairness constraints in pipelines, in which an individual is subject to a sequence of classification decisions, and study how the effects of these constraints compound \citep{pipeline1,pipeline2,pipeline3}. Many results in this literature are negative. Our paper gives a positive result in this setting.

\section{Model and Preliminaries}
\label{sec:preliminaries}
Let $\Omega = \X \times \Z \times \Y$ be an arbitrary data domain. Each data point is a triplet $\omega = (x,z,y)$, where $x \in \X$ is the feature vector excluding the sensitive attributes, $z \in \Z$ is a vector of sensitive attributes, and $y \in \Y = \{0,1\}$ is the binary label. In this paper we take $\Z = \{0,1\}^K$, and every $z \in \Z$ is a $K$-dimensional binary vector representing which groups (out of $K$ groups) an individual is a member of. For instance, in a case with $K=4$ groups, an individual with $z = (0,1,0,1)$ is a member of the second and fourth groups. We will use $z_k$ to denote the $k$th entry of $z$.

We assume there exists a distribution $\Ps$ over the unlabeled data domain $\X \times \Z$. We assume the labels are generated by functions in some domain $\F \subseteq \{ f: \X \times \Z \to \Y \}$. In other words, for any data point $\omega = (x,z,y) \in \Omega$, there exists a function $f \in \F$ such that $y = f(x,z)$. This is without loss of generality if we make no assumptions on the complexity of $\F$ --- in this case, functions $f$ can be randomized and represent arbitrary conditional label distributions, and will be the setting we operate in when we assume there is a distribution over $\F$. Alternately, we can make assumptions about the capacity of $\F$, and then aim to form good proxies for \textit{every} labelling function in $\F$. The data generation process can be viewed as first drawing $(x,z)$ from $\Ps$, and then letting $y = f(x,z)$ for some $f \in \F$. We \textit{may} additionally assume there exists a probability distribution $\Qs$ over $\F$. More details are discussed later on.

Our primary goal in this paper is to learn a proxy for $z$ as a function of features $x$,  which we write as  $\hat{z}$, such that any downstream classifier satisfies a variety of fairness constraints with respect to the learned proxy $\hat{z}$ if and only if it satisfies the same fairness constraints with respect to the true underlying $z$, up to small approximation.  Let $\Gs \subseteq \{ g: \X \to [0,M] \}$ be a class of functions that map a feature vector $x \in \X$ to a real-valued number in $[0,M]$. Given $\Gs$, our goal will be to learn $\hat{z} = (\hat{z}_1, \ldots, \hat{z}_K)$ such that for all $k$, $\hat{z}_k \in \Gs$. The $k$th component of $\hat{z}$ can be interpreted as a real-valued predictor for $z_k$.

We assume the downstream learning task for which we want to guarantee fairness can be cast as learning over a hypothesis class $\Hs \subseteq \{ h: \X \to \Y \}$. Thus the goal of the DLs will be to learn $h \in \Hs$ such that $h$ satisfies some statistical notion of fairness. These fairness notions generally require that a statistic of the learned classifier be (approximately) equalized across different groups. While our methods will apply to a broad class of fairness notions including statistical parity and equalized false positive and negative rates (see the appendix for details), in the body we focus on  \textit{equalized error} fairness  which requires that the error rate of the learned classifier be (approximately) equalized across groups. In other words, $h \in \Hs$ satisfies equalized error fairness if:

\begin{equation}\label{eq:equalized-error}
\forall k_1, k_2 \in [K]: \quad
\Pr \left[ h(x) \neq y \, \vert \, z_{k_1} = 1 \right] \approx \Pr \left[ h(x) \neq y \, \vert \, z_{k_2} = 1 \right]
\end{equation}

We first make the following simple, yet important, observation that will allow us to write fairness constraints, usually defined with respect to binary valued group membership, using a real valued proxy.

\begin{clm}
For every $k \in [K]$, we have
\begin{equation}\label{eq:observation}
    \Pr \left[ h(x) \neq y \, \vert \, z_k =1 \right] = \frac{\E \left[ z_k \1 \left[ h(x) \neq y \right] \right]}{\E \left[  z_k \right]}
\end{equation}
\end{clm}

\ifdraft
\begin{proof}
We have
\begin{align*}
    \Pr \left[ h(x) \neq y \, \vert \, z_k =1 \right] &= \frac{\Pr \left[ z_k = 1, h(x) \neq y \right]}{\Pr \left[ z_k =1 \right]} \\
    & = \frac{\E \left[ \1 \left[ z_k = 1 \right] \1 \left[ h(x) \neq y \right]\right] }{\E \left[ \1 \left[ z_k = 1 \right] \right]} \\
    &= \frac{\E \left[ z_k \1 \left[ h(x) \neq y \right] \right]}{\E \left[  z_k \right]}
\end{align*}
\end{proof}
\fi

Observe that the expression on the right hand side of Equation~\eqref{eq:observation} could be evaluated even if the sensitive feature labels $z$ were real valued rather than binary. We exploit this to evaluate these equalized error fairness constraints with our real valued proxies $\hat z$. Observe that if we have a proxy $\hat{z} \in \Gs$, such that for a particular classifier $h \in \Hs$:
\begin{equation}\label{eq:condition0}
\forall k \in [K]: \quad
\frac{\E \left[ z_k \1 \left[ h(x) \neq y \right] \right]}{\E \left[  z_k \right]} \approx \frac{\E \left[ \hat{z}_k (x) \1 \left[ h(x) \neq y \right] \right]}{\E \left[  \hat{z}_k (x) \right]}
\end{equation}
then if $h$ satisfies proxy fairness constraints defined by the proxy $\hat{z}$, i.e., constraints of the form:
\begin{equation}\label{eq:proxy_constraints0}
\forall k_1, k_2 \in [K]: \quad
\frac{\E \left[ \hat{z}_{k_1} (x) \1 \left[ h(x) \neq y \right] \right]}{\E \left[  \hat{z}_{k_1} (x) \right]} \approx \frac{\E \left[ \hat{z}_{k_2} (x) \1 \left[ h(x) \neq y \right] \right]}{\E \left[  \hat{z}_{k_2} (x) \right]}
\end{equation}
it will also satisfy the original fairness constraints with respect to the real sensitive groups $z$ and vice versa (Equation~\eqref{eq:equalized-error}). If the condition in Equation~\eqref{eq:condition0} is satisfied for \textit{every} $h \in \Hs$, then the proxy fairness constraints (Equation~\eqref{eq:proxy_constraints0}) can without loss be used to \textit{optimize} over all fair classifiers in $\Hs$. With this idea in mind, we can formally define a (good) proxy. The constraints we ask for can be interpreted as so-called \textit{multiaccuracy} or \textit{mean consistency} constraints as studied by \cite{multiaccuracy,momentmulti}, defined over the \textit{error regions} of hypotheses in the class $\Hs$: $\{\{(x,y) : h(x) \neq y\} : h \in \Hs\}$.

We will consider two different settings for modelling a multiplicity of downstream learning problems: 1) when there exists a distribution over $\F$ and we want our guarantee to hold with high probability over a draw of $f$ from this distribution, and  2) when we want our guarantee to hold for every $f \in \F$.

\begin{definition}[Proxy]\label{def:proxy}
Fix a distribution $\Ps$ over  $(\X \times \Z)$  and a distribution $\Qs$ over $\F$. We say $\hat{z}$ is an $(\alpha, \beta)$-proxy for $z$ with respect to $(\Ps, \Qs)$, if with probability $1-\beta$ over the draw of $f \sim \Qs$: for all classifiers $h \in \Hs$, and all groups $k \in [K]$,
\[
\left\vert \frac{\E_{(x,z) \sim \Ps} \left[ z_k \1 \left[ h(x) \neq f(x,z) \right] \right]}{\E_{(x,z) \sim \Ps} \left[  z_k \right]} - \frac{\E_{(x,z) \sim \Ps} \left[ \hat{z}_k (x) \1 \left[ h(x) \neq f(x,z) \right] \right]}{\E_{(x,z) \sim \Ps} \left[  \hat{z}_k (x) \right]} \right\vert \le \alpha
\]
If the above condition holds for every $f \in \F$, we say $\hat{z}$ is an $\alpha$-proxy with respect to $\Ps$. When providing in sample guarantees, we take the distributions to be the uniform distributions over the  data set. When distributions are clear from context, we simply write that $\hat z$ is an $(\alpha, \beta)$-proxy.
\end{definition}

Do such proxies exist? We first show the existence of perfect proxies, under the assumption that the sensitive features and the labels are conditionally independent given the other features. Note that this conditional independence assumption can be satisfied in a number of ways --- and in particular is always satisfied if \textit{either} the sensitive features or the labels can be determined as a function of the non-sensitive features --- even if the relationship is arbitrarily complex. For example this will be the case for prediction tasks in which human beings are near perfect. The proxy that we exhibit below is the conditional expectation defined over the underlying joint distribution on $x$ and $z$ and hence will generally not be learnable from polynomially sized samples. Subsequently, we will demonstrate that we can obtain proxies learnable with modest sample complexity. We note that perfect proxies always exist (without requiring a conditional independence assumption) for statistical parity fairness -- see the appendix.

\begin{clm}[Existence of a Proxy]
\label{clm:existence}
For any distribution $\Ps$ over $\X \times \Z$, $\hat{z}(x) = \E \left[z \,| \, x \right]$ is an $\alpha$-proxy with respect to $\Ps$, for $\alpha = 0$, provided that $z$ and $y$ are independent conditioned on $x$.
\end{clm}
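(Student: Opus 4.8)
The plan is to show that both the numerator and the denominator of the proxy ratio coincide \emph{exactly} with those of the true ratio, for every labelling function $f \in \F$ and every hypothesis $h \in \Hs$, so that the absolute difference in Definition~\ref{def:proxy} is identically zero. Since the bound must hold for every $f$ (rather than merely with high probability over a draw from $\Qs$), it suffices to fix arbitrary $f \in \F$, $h \in \Hs$, and $k \in [K]$, write $y = f(x,z)$, and compare the two sides term by term. I would first dispose of the denominators: by the tower rule, $\E[\hat{z}_k(x)] = \E_x[\E[z_k \mid x]] = \E[z_k]$, since $\hat{z}_k(x) = \E[z_k \mid x]$ by definition, so the denominators agree.

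The crux is the numerator. Conditioning on $x$ and applying the law of total expectation, I would write $\E[z_k \1[h(x) \neq y]] = \E_x[\E[z_k \1[h(x) \neq y] \mid x]]$. The essential observation is that, conditioned on $x$, the value $h(x)$ is fixed, so $\1[h(x) \neq y]$ is a deterministic function of $y$ alone, while $z_k$ is a function of $z$. The conditional independence hypothesis --- that $z$ and $y$ are independent given $x$ --- therefore lets me factor the inner expectation as $\E[z_k \mid x]\cdot\E[\1[h(x) \neq y] \mid x] = \hat{z}_k(x)\cdot\E[\1[h(x)\neq y]\mid x]$. Pulling $\hat{z}_k(x)$ back inside (it is $x$-measurable) and reintegrating over $x$ yields exactly $\E[\hat{z}_k(x)\1[h(x)\neq y]]$, so the numerators also agree and the proxy error is zero.

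The main obstacle --- really the only subtle point --- is justifying the factorization of the indicator $\1[h(x)\neq y]$ against $z_k$ under conditional independence, despite the fact that $y = f(x,z)$ is itself a function of $z$. The resolution is that the hypothesis is a statement about the joint law of $(z,y)$ given $x$, and once we condition on $x$ the indicator depends on the randomness only through $y$; conditional independence of $z$ and $y$ then transfers to conditional independence of the two random variables $z_k$ and $\1[h(x)\neq y]$ appearing in the product. As the text notes, this is precisely why the assumption is automatically met whenever either $z$ or $y$ is a deterministic function of $x$. Because no step invokes finiteness of a sample or any property of $\Qs$, the argument holds distributionally and for every $f$, establishing the $\alpha = 0$ guarantee.
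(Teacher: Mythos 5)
Your proposal is correct and follows essentially the same argument as the paper's proof: the tower property handles the denominators, and the numerators are matched by conditioning on $x$ and using the conditional independence of $z$ and $y$ to factor $\E\left[z_k \1\left[h(x) \neq y\right] \mid x\right]$ into $\E\left[z_k \mid x\right]\E\left[\1\left[h(x)\neq y\right] \mid x\right]$. The only cosmetic difference is that you run the chain of equalities from the true numerator to the proxy numerator rather than the reverse, and you spell out the factorization subtlety (functions of conditionally independent variables remain conditionally independent) that the paper leaves implicit.
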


\ifdraft
\begin{proof}
Fix $f \in \F$, $h \in \Hs$, and $k \in [K]$. We have that
\[
\E_{(x,z) \sim \Ps} \left[  \hat{z}_k (x) \right] = \E_{x \sim \Ps_\X} \left[  \hat{z}_k (x) \right] = \E_{x \sim \Ps_\X} \left[  \E \left[z_k \, | \, x \right] \right] = \E_{(x,z)} [z_k ]
\]
Also, note that $y= f(x,z)$, and that
\begin{align*}
    \E_{(x,z) \sim \Ps} \left[ \hat{z}_k (x) \1 \left[ h(x) \neq f(x,z) \right] \right] &= \E_{(x,z)} \left[ \E \left[z_k \, | \, x \right] \1 \left[ h(x) \neq y \right] \right] \\
    &= \E_{(x,z)} \left[ \E \left[z_k \1 \left[ h(x) \neq y \right] \, | \, x \right] \right] \\
    &= \E_{(x,z)} \left[z_k \1 \left[ h(x) \neq y \right] \right] \\
    &= \E_{(x,z)} \left[z_k \1 \left[ h(x) \neq f(x,z) \right] \right]
\end{align*}
completing the proof. The second equality holds because of the assumption that conditional on $x$, $y$ and $z$ are independent.
\end{proof}
\fi

\paragraph{Modelling the Proxy Learner (PL)}
The PL wants to learn a proxy in $\hat{z} \in \Gs^K$ as defined in Definition~\ref{def:proxy}. 

Solving this problem requires the knowledge of distributions; however, typically we will only have samples. Therefore, we assume the PL has access to a data set, which consists of two components: 1) $S = \{ (x_i, z_i) \}_{i=1}^n$ which is a sample of $n$ individuals from $\X \times \Z$ represented by their non-sensitive features and sensitive attributes. Throughout we will take $S$ to be $n$ $i.i.d.$ draws from the underlying distribution $\Ps$. 2) $F = \{ f_j \}_{j=1}^m$ which is a sample of $m$ labeling functions (or \textit{learning tasks}) taken from $\F$. The PL does not observe the actual functions $f_j \in \F$ but instead observes the realized labels of functions in $F$ on our data set of individuals $S$: $Y = \{ y_{ij} = f_j (x_i, z_i) \}_{i,j}$. The \textit{empirical} problem of the PL is to find a proxy $\hat{z}$ with respect to the observed data sets.

In this paper we have the PL optimize  squared error subject to the constraints given by the definition of a proxy:
\begin{mini}|l|
{\hat{z}_k \in \Gs}{\frac{1}{n}\sum_{i=1}^{n} \left( z_{ik} - \hat{z}_k (x_i) \right)^2}{}{}
\addConstraint{\frac{\sum_{i=1}^n z_{ik} \1 \left[  h(x_i) \neq y_{ij} \right]}{\sum_{i=1}^n z_{ik}  } = \frac{\sum_{i=1}^n \hat{z}_{k} (x_i) \1 \left[  h(x_i) \neq y_{ij} \right]}{\sum_{i=1}^n \hat{z}_{k} (x_i)}}{}{, \ \forall j \in [m], h \in \Hs}
\label{eqn:program1}
\end{mini}
Note this formulation gives us a decomposition of learning $\hat{z} = (\hat{z}_1, \ldots, \hat{z}_K) \in \Gs^K$ into learning each component $\hat{z}_k \in \Gs$ separately. The squared error objective is not strictly necessary (the constraints encode our notion of a good proxy on their own), but encourages the optimization towards the conditional label distribution of $z$ given $x$, which we showed in Claim \ref{clm:existence} is a good proxy. In our experiments we find this to be helpful.

\paragraph{Modelling the Downstream Learner (DL)} The DLs want to solve fair learning problems using models in some class $\Hs \subseteq \{h: \X \to \Y\}$ subject to the equalized error fairness constraint given in Equation~\eqref{eq:equalized-error}. The DL does not have access to the sensitive attribute $z$ and instead is given the proxy $\hat z \in \Gs^K$  learned by the PL. Thus, for a given learning task represented by some $f \in \F$ (determining the label $y = f(x,z)$), the DL solves the following  learning task subject to proxy fairness constraints.
\begin{mini}|l|
{h \in \Hs}{\E \left[ \text{err} \left( h ; (x,y)\right)\right] }{}{}
\addConstraint{\frac{\E \left[ \hat{z}_{k_1} (x) \1 \left[ h(x) \neq y \right] \right]}{\E \left[  \hat{z}_{k_1} (x) \right]} \approx \frac{\E \left[ \hat{z}_{k_2} (x) \1 \left[ h(x) \neq y \right] \right]}{\E \left[  \hat{z}_{k_2} (x) \right]}}{}{, \ \forall k_1, k_2 \in [K]}
\label{eqn:program3}
\end{mini}
where $\text{err}$ is some arbitrary objective function, and all expectations here are taken with respect to a draw of an individual $(x,z)$ from $\Ps$. Observe that if $\hat z$ is an $\alpha$-proxy, then this is equivalent to solving the original fairness constrained learning problem (defined with respect to the true demographic features $z$) in which the fairness constraints have slack at most $2\alpha$. We remind the reader that our focus in this paper is to solve the problem of the PL, and hence, we avoid standard issues that the DL will face, such as relating empirical and distributional quantities (these issues are identical whether the DL uses the sensitive features $z$ directly or a proxy $\hat z$). 

\paragraph{Game Theory and Online Learning Basics.}
In our analysis, we rely on several key concepts in game theory and online learning which we summarize here. Consider a zero-sum game between two players, a Learner with strategies in $S_1$ and an Auditor with strategies in $S_2$. The payoff function of the game is $U: S_1 \times S_2 \rightarrow \R_{\ge 0}$.

\begin{definition}[Approximate Equilibrium]\label{def:nuapprox}
A pair of strategies $(s_1, s_2) \in S_1 \times S_2$ is said to be a $\nu$-approximate minimax equilibrium of the game if the following conditions hold:
\[
 U(s_1, s_2) - \min_{s'_1 \in S_1}  U(s'_1, s_2)  \le \nu,
\quad
\max_{s'_2 \in S_2}  U(s_1, s'_2) -  U(s_1, s_2)  \le \nu
\]
\end{definition}

Freund and Schapire \cite{Freund} show that if a sequence of actions for the two players jointly has low \textit{regret}, then the uniform distribution over each player's actions forms an approximate equilibrium:

\begin{thm}[No-Regret Dynamics \cite{Freund}]\label{thm:noregret}
    Let $S_1$ and $S_2$ be convex, and suppose  $U(\cdot, s_2): S_1 \to \R_{\ge 0}$ is convex for all $s_2 \in S_2$ and $U(s_1, \cdot): S_2 \to \R_{\ge 0}$ is concave for all $s_1 \in S_1$. Let $(s_1^1, s_1^2, \ldots, s_1^T)$ and $(s_2^1, s_2^2, \ldots, s_2^T)$ be  sequences of actions for each player. If for $\nu_1,\nu_2 \ge 0$, the regret of the players jointly satisfies
    \[
    \sum_{t=1}^T U(s_1^t, s_2^t) - \min_{s_1 \in S_1} \sum_{t=1}^T U(s_1, s_2^t) \le \nu_1 T,
    \quad
    \max_{s_2 \in S_2} \sum_{t=1}^T U(s_1^t, s_2) - \sum_{t=1}^T U(s_1^t, s_2^t) \le \nu_2 T
    \]
then  the pair $(\bar{s}_1, \bar{s}_2)$ is a $(\nu_1+\nu_2)$-approximate equilibrium, where      $\bar{s}_1 = \frac{1}{T}\sum_{t=1}^T s_1^t \in S_1$ and $\bar{s}_2 = \frac{1}{T}\sum_{t=1}^T s_2^t \in S_2$ are the uniform distributions over the action sequences.
\end{thm}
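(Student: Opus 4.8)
The plan is to reduce the statement entirely to two applications of Jensen's inequality --- one exploiting convexity of $U$ in its first argument, one exploiting concavity in its second --- glued together with the two regret hypotheses and the definition of an approximate equilibrium (Definition~\ref{def:nuapprox}). Throughout I write $\bar U = \frac{1}{T}\sum_{t=1}^T U(s_1^t, s_2^t)$ for the average payoff realized along the two action sequences; this scalar will serve as the anchor that both equilibrium inequalities are measured against.

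First I would establish the upper estimate
\[
\max_{s_2' \in S_2} U(\bar s_1, s_2') \le \bar U + \nu_2 .
\]
Fix any $s_2' \in S_2$. Since $U(\cdot, s_2')$ is convex and $\bar s_1 = \frac{1}{T}\sum_t s_1^t$ is a convex combination, Jensen's inequality gives $U(\bar s_1, s_2') \le \frac{1}{T}\sum_t U(s_1^t, s_2')$. Taking the maximum over $s_2'$ and pulling the (positive) factor $\frac1T$ outside the maximum rewrites the right-hand side as $\frac{1}{T}\max_{s_2'}\sum_t U(s_1^t, s_2')$, which is exactly the quantity controlled by the Auditor's regret bound; substituting that bound yields the displayed inequality. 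Symmetrically, using that $U(s_1', \cdot)$ is concave and that $\bar s_2$ is a convex combination, Jensen's inequality in the reverse direction together with the Learner's regret bound gives the lower estimate
\[
\min_{s_1' \in S_1} U(s_1', \bar s_2) \ge \bar U - \nu_1 .
\]

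Finally I would close the argument with the trivial sandwich $\min_{s_1'} U(s_1', \bar s_2) \le U(\bar s_1, \bar s_2) \le \max_{s_2'} U(\bar s_1, s_2')$. Combining the upper bound on $U(\bar s_1, \bar s_2)$ supplied by the first estimate with the lower bound on $\min_{s_1'} U(s_1', \bar s_2)$ supplied by the second gives $U(\bar s_1, \bar s_2) - \min_{s_1'} U(s_1', \bar s_2) \le (\bar U + \nu_2) - (\bar U - \nu_1) = \nu_1 + \nu_2$, which is the first equilibrium condition; the second condition $\max_{s_2'} U(\bar s_1, s_2') - U(\bar s_1, \bar s_2) \le \nu_1 + \nu_2$ falls out identically by instead bounding $U(\bar s_1, \bar s_2)$ from below by $\min_{s_1'} U(s_1', \bar s_2)$. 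I do not expect a genuine obstacle here, as this is the classical Freund--Schapire argument; the only place demanding care is bookkeeping the direction of each inequality --- convexity must produce $\le$ and concavity $\ge$ --- and correctly commuting $\frac1T$ with the $\max$ and $\min$ so that each regret hypothesis is applied to the sum \emph{before} averaging rather than after.
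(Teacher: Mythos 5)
Your proof is correct: the two Jensen steps (convexity in the first argument, concavity in the second) combined with the respective regret hypotheses and the sandwich $\min_{s_1'} U(s_1', \bar s_2) \le U(\bar s_1, \bar s_2) \le \max_{s_2'} U(\bar s_1, s_2')$ yield both equilibrium conditions exactly as claimed. The paper itself gives no proof of this theorem --- it is imported directly from Freund and Schapire --- and your argument is precisely the classical one that citation refers to, so there is nothing further to reconcile.
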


\begin{comment}
\subsection{Organization}
\saeed{re-write this}The rest of the paper is organized as follows. First we consider the case in which we have samples of learning problems from $\F$ and wish to learn $(\alpha,\beta)$-proxies for $\F$. In Section \ref{sec:lin}, we give an algorithm for learning a \textit{linear} proxy $\hat z$ (when one exists). We handle the general case (in which $\hat z$ can be arbitrary, so long as we have a regression oracle for it) in Section \ref{sec:general}. In Section \ref{sec:kahuna} we show how to learn an $\alpha$-proxy for the entire class $\F$ given an appropriate classification oracle. These results are all for empirical (in-sample) optimization. In Section \ref{sec:generalization}, we prove generalization theorems for proxies which allow us to lift our learning guarantees to out-of-sample statements. 
\end{comment}
%%%%%%%%%%%%%%%%%%%%%%%%%%%%%%%%%%%%%%%%%%%%%%%%%%%%%%%%%%%%%%%%%%%%%%%%%%%%%%%%%%%%%

\section{Learning a Proxy from Data}
\label{sec:general}

We now give a general oracle  efficient algorithm that the Proxy Learner can use to learn a proxy, whenever the underlying proxy class $\Gs^K$ is expressive enough to contain one. Our algorithm is in fact a general method for obtaining a \textit{multiaccurate} regression function $\hat z$ with respect to an arbitrary collection of sets --- we instantiate it with sets defined by the error regions of classifiers $h \in \Hs$. In contrast to the algorithms for multiaccurate learning given by \cite{multiaccuracy,multiaccuracy2}, our algorithm has the advantage that it need not be able to evaluate which sets a new example is a member of at test time (but has the disadvantage that it must operate over a sufficiently expressive model class). This is crucial, because we will not know whether a new example $x$ falls into the error region of a classifier $h$ before learning its label.  

Our derivation proceeds as follows. First, we rewrite the constraints in Program~\eqref{eqn:program1} as a large linear program. We then appeal to strong duality to derive the Lagrangian of the linear program. We note that computing an approximately optimal solution to the linear program corresponds to finding approximate equilibrium strategies for both players in the game in which one player ``The Learner'' controls the primal variables and aims to minimize the Lagrangian value, and the other player ``The Auditor'' controls the dual variables and aims to maximize the Lagrangian value.

Finally, if we construct our algorithm in such a way that it simulates repeated play of the Lagrangian game such that both players have sufficiently small regret, we can apply  Theorem~\ref{thm:noregret} to conclude that our empirical play converges to an approximate equilibrium of the game. In our algorithm, the Learner approximately best responds to the mixed strategy of the Auditor -- who plays \textit{Follow the Perturbed Leader (FTPL)} \citep{kalai}, described in the appendix. Note that it is the functions played by the Learner that will eventually form the proxy output by the  \textit{Proxy Learner}. Furthermore, our algorithm will be \textit{oracle efficient}: it will make polynomially many calls to oracles that solve ERM problems over $\Hs$ and $\Gs$. The specific types of oracles that we need are defined as follows.

\begin{definition}[Cost Sensitive Classification Oracle for $\Hs$]
An instance of a Cost Sensitive Classification problem, or a $CSC$ problem, for the class $\Hs$ is given by a set of $n$ tuples $\{x_i,c_i^0,c_i^1 \}_{i=1}^{n}$ such that $c_i^l$ corresponds to the cost for predicting label $l$ on sample $x_i$. Given such an instance as input, a $CSC (\Hs)$ oracle finds a hypothesis $h \in \Hs$ that minimizes the total cost across all points:
$
h \in \argmin_{h' \in \Hs} \sum_{i=1}^{n}\left[h'(x_i)c_i^1 + (1-h'(x_i))c_i^0\right]
$.
\end{definition}

\begin{definition}[Empirical Risk Minimization Oracle for $\Gs$]
\label{def:erm}
An empirical risk minimization oracle for a class $\Gs$ (abbreviated $ERM (\Gs)$) takes as input a data set $S$ consisting of $n$ samples and a loss function $L$, and finds a function $g \in \Gs$ that minimizes the empirical loss, i.e., $g \in \text{argmin}_{g' \in \Gs} \sum_{i=1}^{n} L(g',S_i)$.
\end{definition}

\ifdraft
\paragraph{Follow the Perturbed Leader} Follow the Perturbed Leader (FTPL) is a no-regret learning algorithm that can sometimes be applied -- with access to an oracle -- to an appropriately convexified learning space that is too large to run gradient descent over. It is formulated as a two-player game over $T$ rounds. At each round $t \leq T$, a learner selects an action $a^t$ from its action space $A \subset \{0,1\}^d$, and an auditor responds with a loss vector $\ell^t \in \Re ^d$. The learner's loss is the inner product of $\ell^t$ and $a^t$. If the learner is using an algorithm $A$ to select its action each round, then the learner wants to pick $A$ so that the regret $R_A(T) := \sum_{t=1}^{T} \langle \ell^t, a^t\rangle - \min_{a \in A} \sum_{i=1}^{T} \langle \ell^t, a^t \rangle$ grows sublinearly in $T$. Algorithm~\ref{alg:ftpl_pseudo} accomplishes this goal by perturbing the cumulative loss vector with appropriately scaled noise, and then an action is chosen to minimize the perturbed loss. Pseudocode and guarantees are stated below. 

\begin{algorithm}[h]
\KwIn{learning rate $\eta$}
Initialize the learner $a^1 \in A$\;
\For{t = 1,2, \ldots}{
Learner plays action $a^{t}$\;
Adversary plays loss vector $\ell^t$\;
Learner incurs loss of $\langle \ell^t, a^t \rangle$. Learner updates its action: $a^{t+1} = \argmin_{a \in A} \left\{ \left\langle \sum_{s \le t} \ell^s , a \right\rangle + \frac{1}{\eta} \left\langle \xi^t , a \right\rangle \right\}$ \;
where $\xi^t \sim Uniform \left( [0,1]^d \right)$, independent of every other randomness.
}
\caption{Follow the Perturbed Leader (FTPL)}
\label{alg:ftpl_pseudo}
\end{algorithm}

%The guarantees of Follow the Perturbed Leader are stated below.

\begin{thm}[Regret of FTPL \cite{kalai}]
\label{thm:ftpl}
    Suppose for all $t$, $\ell^t \in [-M,M]^d$. Let $\mathcal{A}$ be Algorithm~\ref{alg:ftpl_pseudo} run with learning rate $\eta = 1/(M \sqrt{dT})$. For every sequence of loss vectors $(\ell^1, \ell^2, \ldots, \ell^T)$ played by the adversary,  $\E \left[ R_\mathcal{A}(T) \right] \le 2 M d^{3/2} \sqrt{T}$, where expectation is taken with respect to the randomness in $\mathcal{A}$. 
\end{thm}
\else
\fi

\paragraph{Specifying the Linear Program.} To transform Program~\eqref{eqn:program1} into a linear program amenable to our two-player zero sum game formulation, we do the following: 1) We break the constraints of Program~\eqref{eqn:program1}, which are given as equality of fractions, into joint equality of their numerators and denominators. 2) We expand $\Gs$ to the set of distributions over $\Gs$: we will find a distribution $p \in \Delta (\Gs)$, where $\Delta (\Gs)$ is the set of probability distributions over $\Gs$, and further \textit{linearize} our objective function and constraints by taking expectations with respect to the variable $p \in \Delta (\Gs)$. 3) Finally, we ensure that we have \textit{finitely many} variables and constraints by assuming that $\Hs$ and $\Gs$ have bounded complexity. In particular, given a data set $S$, we can write constraints corresponding to every $h \in \Hs (S) \triangleq \{ (h(x_1), \ldots, h(x_n)): h \in \Hs\}$ where $\Hs(S)$ includes the set of all possible labelings induced by $\Hs$ on $S$. Note that as long as $\Hs$ has finite VC dimension $d_\Hs$, Sauer's Lemma implies $\vert \Hs (S) \vert = O(n^{d_\Hs})$, and therefore we will have only finitely many constraints. Second, instead of working with the entire class $\Gs$, for some appropriately chosen $\epsilon$, and given our data set $S$, we can optimize over (distributions over) an $\epsilon$-covering of $\Gs$ with respect to the data set $S$, which we call $\Gs (S)$. As long as the class $\Gs$ has finite pseudo-dimension $d_\Gs$, it is known that $|\Gs (S)| = O(\epsilon^{-d_\Gs})$, and therefore we reduce our primal variables from distributions over $\Gs$ to distributions over $\Gs (S)$ which will guarantee that we have finitely many variables in our optimization problem. We provide more details in the appendix. Given these considerations, we  formulate the constrained ERM problem of the PL as follows: for every group $k \in [K]$, the PL solves

\begin{mini}|l|
{p_k \in \Delta (\Gs (S))}{\frac{1}{n}\sum_{i=1}^{n} \E_{\hat{z}_k \sim p_k} \left[ \left( z_{ik} - \hat{z}_k (x_i) \right)^2 \right]}{}{}
\addConstraint{\frac{\sum_{i=1}^n \E_{\hat{z}_k \sim p_k} \left[ \hat{z}_{k} (x_i) \right]}{\sum_{i=1}^n z_{ik}} - 1 = 0 \;}{}{}
\addConstraint{
\sum_{i=1}^n \left( z_{ik} - \E_{\hat{z}_k \sim p_k} \left[ \hat{z}_{k} (x_i) \right]\right) \1 \left[ h(x_i) \neq y_{ij} \right] = 0, }{}{\; \forall j \in [m], \forall h \in \Hs (S)}
\label{eqn:programAlgos}
\end{mini}

We will solve this constrained optimization problem by simulating a zero sum two player game on the Lagrangian dual. Given dual variables $\lambda \in \R^d$ (where $d = 1 + m | H(S)|$) we have that the Lagrangian of Program~\eqref{eqn:programAlgos} is given by:
\begin{align}
L(\lambda,p_k) &= \frac{1}{n} \sum_{i=1}^{n} \E_{\hat{z}_k \sim p_k} \left[ \left( z_{ik} - \hat{z}_k (x_i) \right)^2 \right] + \E_{\hat{z}_k \sim p_k} \left[ \lambda_0\left(\frac{\sum_{i=1}^n \hat{z}_{k} (x_i)}{\sum_{i=1}^n z_{ik}} - 1\right) +\sum_{\substack{h \in \Hs (S) \\ j \in [m]}} \lambda_{h,j} \sum_{i=1}^n \left(z_{ik} - \hat{z}_{k} (x_i) \right) \1 \left[ h(x_i) \neq y_{ij} \right] \right]
\label{eqn:game}
\end{align}

Given the Lagrangian, solving linear program~\eqref{eqn:programAlgos} is equivalent to solving the following minimax problem:
\begin{equation}\label{eq:minimaxp}
\min_{p_k \in \Delta (\Gs (S))} \max_{\lambda \in \R^d} L(\lambda,p_k) = \max_{\lambda \in \R^d} \min_{p_k \in \Delta (\Gs (S))} L(\lambda,p_k)
\end{equation}
where the minimax theorem holds because the range of the primal variable, i.e. $\Delta (\Gs (S))$, is convex and compact, the range of the dual variable, i.e. $\R^d$, is convex, and that the Lagrangian function $L$ is linear in both primal and dual variables. Therefore we focus on solving the minimax problem \eqref{eq:minimaxp} which can be seen as a two player zero sum game between the primal player (the Learner) who is controlling $p_k$, and the dual player (the Auditor) who is controlling $\lambda$. Using  no-regret dynamics, we will have the Learner deploy its best response strategy in every round which will be reduced to a call to $ERM (\Gs)$ and let the Auditor with strategies in $\Lambda = \{ \lambda = (\lambda_0, \lambda') \in \R^d: | \lambda_0 | \le C_0, \Vert \lambda' = (\lambda_{h,j})_{h,j} \Vert_1 \le C \}$ play according to Follow the Perturbed Leader (FTPL). We place upper bounds ($C_0$ and $C$) on the components of the dual variable to guarantee convergence of our algorithm; note that the minimax theorem continues to hold in the presence of these upper bounds. We will set these upper bounds optimally in our algorithm to guarantee the desired convergence. 

Our algorithm is described in Algorithm~\ref{alg:ftpl} and its guarantee is given in Theorem~\ref{thm:ftplProxy}. The algorithm returns a distribution over $\Gs$, but we turn the distribution into a deterministic regression function that defines a proxy by taking the expectation with respect to that distribution. We note that the Auditor will employ FTPL for the constraints that depend on $h$, calling upon the cost sensitive classification oracle $CSC(\Hs)$. 

\ifdraft
Given an action $\hat{z}_k$ of the Learner, we write $LC(\hat{z}_k)$ for the $n \times m$ matrix of costs for labelling each data point as 1, where $LC_j(\cdot)$ indicates the column of costs corresponding to the choice of labels $y_{.j}$. Note that this formulation allows us to cast our seemingly nonlinear problem as an $n$-dimension linear optimization problem, which we do by viewing our costs as the inner product of the outputs of a classifier $h$ on the $n$ points and the corresponding cost vector. When we want to enforce equal group error rates, we can define the costs for labeling examples as positive ($h(x)=1$) as a function of their true labels $y_{i,j}$ as:

\begin{align}
\begin{split}
c^0(x_i, y_{ij})=0,
\end{split}
\begin{split}
c^1(x_i, y_{ij})=\left(z_{ik} - \hat{z}_k(x_i)\right) \left(\1 \left[y_{ij}=0\right] - \1\left[y_{ij}=1\right]\right)
\end{split}
\label{eqn:costs}
\end{align}
\else
Given an action $\hat{z}_k$ of the Learner, we write $LC(\hat{z}_k)$ for the vector of costs for labelling each data point as 1. Note that this formulation allows us to cast our seemingly nonlinear problem as an $n$-dimension linear optimization problem, which we do by viewing our costs as the inner product of the outputs of a classifier $h$ on the $n$ points and the corresponding cost vector, derived in the appendix.
\fi

We denote the true distribution over $\lambda'$ maintained by the Auditor's FTPL algorithm by $Q_k^t$. Because $Q_k^t$ is a distribution over an exponentially large domain $(O(n^{d_\Hs}))$, we can only aim to represent a sparse version, which we do by efficiently sampling from $Q_k^t$; we call the empirical distribution $\hat{Q}_k^t$. We represent the Auditor's learned distribution over $\lambda_0$ by $P_k^t$, and we find that $P_k^t:=C_0\left(2Bern(p_k^t) - 1\right)$ is a scaled Bernoulli distribution with success probability $p_k^t$, where $p_k^t$ is given in Algorithm~\ref{alg:ftpl}. When we sample $\lambda= \left(\lambda_0,\lambda'\right)$ from the product distribution $P_k^t \times \hat{Q}_k^t$, this means that we are drawing $\lambda_0$ from $P_k^t$ and $\lambda'$ from $\hat{Q}_k^t$. The proof of Theorem~\ref{thm:ftplProxy}, below, is included in the appendix.

\begin{algorithm}
\SetAlgoLined
\KwIn{ Data set $\{x_i,y_{ij},z_i\}_{i=1}^{n} \forall j \in [m]$, target proxy parameter $\alpha$, target confidence parameter $\delta$, upper bound $M$ on proxy values, groups $k \in [K]$}
Set dual variable upper bounds: $C=C_0 = (M^2(1+nM)/2\alpha \sum_{i=1}^n z_{ik}) + 1$\;
Set iteration count: $T = \left\lceil \sqrt{2(1+nM)\left(n^{3/2}CM + C_0\frac{nM}{\sum_{i=1}^n z_{ik}}\right)/ \alpha\sum_{i=1}^n z_{ik}} \; \right\rceil$\;
Set sample count: $W = \left \lceil (1+nM)^2 n^2C^2M^2 \log(\frac{TK}{2\delta})/\left(\alpha \sum_{i=1}^n z_{ik} \right)^2 \; \right \rceil$\;
Set learning rates of FTPL: $\eta =\frac{1}{CM} \sqrt{\frac{1}{nT}}\;$, $\;\eta' = \frac{\sum_{i=1}^n z_{ik}}{C_0nM} \sqrt{\frac{1}{T}}$\;
\For{$k=1$ \textbf{to} $K$}{
Initialize $\hat{z}_k^0=\bar{0}$\;
\For{$t=1$ \textbf{to} $T$}{
\For{$w=1$ \textbf{to} $W$}{
Draw a random vector $\xi^w$ uniformly at random from $[0,1]^{n}$\;
Use oracle $CSC(\Hs)$ to compute:
$(h^{w,t}, j^{w,t}) = \argmin_{h \in H, j \in [m]} - \sum_{t' < t}|\langle LC_j(\hat{z}_k^{t'}),h \rangle| + \frac{1}{\eta} \langle\xi^w,h\rangle$\;
Let $\lambda'^{w,t}$ be defined as $\lambda^{w,t}_{h,j} = sign(2\1 \left[ \langle LC_{j^{w,t}}(\hat{z}_k^{t}),h^{w,t}\rangle > 0 \right] - 1) \times C \1 \left[h = h^{w,t},j = j^{w,t} \right]$\;
}

Let $\hat{Q}_k^t$ be the empirical distribution over $\lambda'^{w,t}$\;
Set distribution over $\lambda_0^t$ : $P_k^t = C_0\left(2Bern(p_k^t) - 1\right)$ where $p_k^t = \min(1,-\eta'(\frac{\sum_{i=1}^n \hat{z}_{k} (x_i)}{\sum_{i=1}^n z_{ik}} - 1)\1\left[\frac{\sum_{i=1}^n \hat{z}_{k} (x_i)}{\sum_{i=1}^n z_{ik}} -1 < 0 \right])$\;
The Learner best responds: $\hat{z}_k^t = \argmin_{\hat{z}_k \in \Gs} \mathbb{E}_{\lambda \sim P_k^t \times \hat{Q}_k^t} L(\hat{z}_k, \lambda)$ by calling $ERM(\Gs)$.
}
}
\KwOut{$\hat{p}$ = uniform distribution over $\{ \hat{z}^1$,...,$\hat{z}^T \}$}
\caption{Learning a Proxy}
\label{alg:ftpl}
\end{algorithm}

\begin{thm}[$\alpha$-Proxy for $m$ labeling functions taken from $\F$]
\label{thm:ftplProxy}
Fix any $\alpha$, and $\delta$. Suppose $\Hs$ has finite VC dimension, and $\Gs$ has finite pseudo-dimension. Suppose $\Delta (\Gs)^K$ contains a $0$-proxy. Then given access to oracles $CSC(\Hs)$ and $ERM(\Gs)$, we have that with probability at least $1-\delta$, Algorithm~\ref{alg:ftpl} returns a distribution $\hat{p} \in \Delta (\Gs)^K$ such that $\hat{z} (x) \triangleq \E_{g \sim \hat{p}} \left[ g(x) \right] = \frac{1}{T} \sum_{t=1}^T \hat{z}_t (x)$ is an $\alpha$-proxy. 

\end{thm}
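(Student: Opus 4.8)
The plan is to show that for each group $k$ Algorithm~\ref{alg:ftpl} computes an approximate equilibrium of the Lagrangian game in Equation~\eqref{eqn:game}, to translate that approximate equilibrium into approximate satisfaction of the separated (numerator and denominator) constraints of Program~\eqref{eqn:programAlgos}, and finally to fold those two separated bounds into the single \emph{ratio} bound required by Definition~\ref{def:proxy} in the in-sample sense (uniform distribution over $S$ and over the $m$ functions in $F$). Since Program~\eqref{eqn:programAlgos} decouples across $k$, I would fix one $k$, bound the failure probability for that group, and union-bound over the $K$ groups at the very end; this is the source of the $K$ inside the sample count $W$.

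First I would bound the joint regret of the two players in the group-$k$ game. The Learner plays an exact best response each round (a vertex of $\Delta(\Gs(S))$ returned by $ERM(\Gs)$), but to the \emph{sampled} dual distribution $P_k^t \times \hat Q_k^t$ rather than the true $P_k^t \times Q_k^t$; I would control this gap by a Hoeffding argument over the $W$ draws, where each sampled contribution to the Lagrangian is bounded by $O(CM)$, so that with the $\log(TK/2\delta)$ factor absorbing a union bound over all $T$ rounds and $K$ groups, the induced per-round Learner regret is driven below the target. The Auditor's regret on the $\lambda'$ coordinates follows from the FTPL guarantee (Theorem~\ref{thm:ftpl}) applied with action space $\Hs(S) \subseteq \{0,1\}^n$ (so $d=n$) and losses in $[-CM,CM]$, giving average regret of order $CMn^{3/2}/\sqrt{T}$, while the one-dimensional $\lambda_0$ coordinate is handled by a direct Bernoulli/online-gradient computation. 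Invoking Theorem~\ref{thm:noregret}, the uniform averages $(\bar p_k, \bar\lambda)$ then form a $\nu$-approximate equilibrium, with $\nu$ the sum of these average regrets plus the sampling slack.

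Next I would convert the $\nu$-approximate equilibrium into per-constraint violation bounds. The $0$-proxy hypothesis supplies a feasible $p_k^\star$ with zero constraint violation, so $\min_{p_k} L(\bar\lambda, p_k) \le L(\bar\lambda, p_k^\star) = \mathrm{obj}(p_k^\star) \le \mathrm{OPT} \le nM^2$; combined with the Learner side of the equilibrium this yields $L(\bar\lambda, \bar p_k) \le \mathrm{OPT} + \nu$. For the Auditor side, using linearity of the constraints in the proxy (so the averaged distribution $\bar p_k$ and the deterministic average $\bar z_k = \E_{g \sim \bar p_k}[g]$ have identical constraint values), if $\bar z_k$ violated some numerator constraint $(h,j)$ or the denominator constraint by more than $\gamma$, the Auditor could place its full budget $C$ (resp.\ $C_0$) on that constraint and raise the Lagrangian by $\gamma C$ (resp.\ $\gamma C_0$); the inequality $\max_\lambda L(\lambda, \bar p_k) \le L(\bar\lambda, \bar p_k) + \nu$ then forces $\gamma \le (\mathrm{OPT} + 2\nu)/C$ (resp.\ $/C_0$). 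The parameters $C, C_0, T, W$ in the algorithm are calibrated so these violations are at most $\alpha \sum_i z_{ik}$ on the numerator scale and at most $\alpha$ on the denominator scale.

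Finally I would combine the two separated bounds into the ratio bound of Definition~\ref{def:proxy}. Writing $Z = \sum_i z_{ik}$, $\hat Z = \sum_i \bar z_k(x_i)$, and the corresponding error-weighted numerators $N, \hat N$ for a fixed $(h,j)$, the denominator bound gives $|\hat Z - Z| \le \alpha Z$ (so $\hat Z$ stays bounded away from $0$, which is exactly what keeps the ratio from exploding), and the numerator bound gives $|N - \hat N| \le \alpha Z$; a short manipulation of $N/Z - \hat N/\hat Z$ using these two facts yields $|N/Z - \hat N/\hat Z| \le \alpha$ for every $h \in \Hs(S)$ and $j \in [m]$, and the union bound over $k$ finishes the $1-\delta$ statement. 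I expect the main obstacle to be the bookkeeping in the last two steps: simultaneously choosing $C, C_0, T, W$ so that the FTPL-plus-sampling regret $\nu$, once divided by the dual budgets, produces separated violations small enough that the \emph{ratio} translation — which divides by $\hat Z$ and is therefore sensitive to the denominator slack — closes at exactly $\alpha$. This is precisely where keeping $\hat Z$ away from zero via the $\lambda_0$ constraint is essential and where the intricate parameter settings in Algorithm~\ref{alg:ftpl} originate.
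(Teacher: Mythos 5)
Your proposal is correct and follows essentially the same route as the paper's proof: FTPL regret for the Auditor with $d=n$, a Hoeffding-plus-union-bound argument (over $T$ rounds and $K$ groups) to control the Learner's best response against the sampled dual distribution, Theorem~\ref{thm:noregret} to extract an approximate equilibrium, the $0$-proxy feasibility plus the dual budget $C, C_0$ to bound per-constraint violations by roughly $(M^2 + 2\nu)/(C_0+C)$, and finally the paper's Lemma~\ref{lemma:epsilon} performing exactly your "ratio folding" with the calibration $\epsilon = \alpha \sum_i z_{ik}/(1+nM)$. The only differences are constant-level bookkeeping (e.g., the objective is bounded by $M^2$, not $nM^2$, and the paper's numerator/denominator slacks are tighter than the $\alpha Z$ slacks you posit, which is what lets the final bound close at exactly $\alpha$ rather than $O(\alpha)$), which you explicitly flagged and which does not change the structure of the argument.
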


\ifdraft
\section{Learning a Linear Proxy}
\label{sec:linear}
Our follow-the-perturbed-leader based algorithm can handle an arbitrary proxy class $\Gs$, so long as we have an oracle for optimizing over it. But the algorithm can simplify substantially when the primal optimization problem is convex in its parameters, as it is when we choose $\Gs$ to be the set of linear proxies.  In this section we consider the case in which $\hat{z}(x;\theta)$ is a linear regression of $x$ on the true sensitive features $z$, taking the form $\hat{z}(x;\theta) = \theta x$. Because both $\hat{z}$ and its negation are convex in $\theta$, we can find such a proxy by implementing a two-player game in which the Proxy Learner uses Online Projected Gradient Descent, and the Auditor best responds by appealing to an oracle over $\Hs$, as in Algorithm~\ref{alg:ftpl}. We summarize Online Projected Gradient Descent below.

\paragraph{Online Projected Gradient Descent}
Online Projected Gradient Descent is a no-regret online learning algorithm which we can formulate as a two-player game over $T$ rounds. At each round $t \leq T$, a Learner selects an action $\theta^t$ from its action space $\Theta \subset \Re^d$ (equipped with the $L_2$ norm) and an Auditor responds with a loss function $\ell^t: \Theta \to \R_{\ge 0}$. The learner's loss at round $t$ is $\ell^t(\theta^t)$. If the learner is using an algorithm $A$ to select its action each round, then the learner wants to pick $A$ so that the regret $R_A(T) := \sum_{t=1}^{T} \ell^t (\theta^t) - \min_{\theta \in \Theta} \sum_{i=1}^{T} \ell^t(\theta)$ grows sublinearly in $T$. When $\Theta$ and the loss function played by the Auditor are convex, the Learner may deploy Online Projected Gradient Descent (Algorithm~\ref{alg:descent}) to which the Auditor best responds. In this scenario, each round $t$, the Learner selects $\theta^{t+1}$ by taking a step in the opposite direction of the gradient of that round's loss function, and $\theta^{t+1}$ is projected into the feasible action space $\Theta$. Pseudocode and the regret bound are included below.

\begin{algorithm}[t]
\KwIn{learning rate $\eta$}
Initialize the learner $\theta^1 \in \Theta$\;
\For{$t=1, 2, \ldots$}{
Learner plays action $\theta^{t}$\;
Adversary plays loss function $\ell^t$\;
Learner incurs loss of $\ell^t (\theta^t)$\;
Learner updates its action:
$\theta^{t+1} = \text{Proj}_{\Theta} \left( \theta^{t} - \eta \nabla \ell^{t} (\theta^{t}) \right)$
}
\caption{Online Projected Gradient Descent}
\label{alg:descent}
\end{algorithm}

\begin{thm}[Regret for Online Projected Gradient Descent \cite{zinkevich}]\label{thm:gdregret} 
    Suppose $\Theta \subseteq \R^d$ is convex, compact and has bounded  diameter $D$: $\sup_{\theta, \theta' \in \Theta} \left\Vert \theta - \theta' \right\Vert_2 \le D$. Suppose for all $t$, the loss functions $\ell^t$ are convex and that there exists some $G$ such that $\left\Vert \nabla \ell^t (\cdot) \right\Vert_2 \le G$. Let $\mathcal{A}$ be Algorithm~\ref{alg:descent} run with learning rate $\eta = D/(G \sqrt{T})$. We have that for every sequence of loss functions $(\ell^1, \ell^2, \ldots, \ell^T)$ played by the adversary, $R_\mathcal{A}(T) \le GD \sqrt{T}$.
\end{thm}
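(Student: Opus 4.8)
The plan is to run the standard potential-function argument for online projected gradient descent, tracking the squared Euclidean distance $\left\Vert \theta^t - \theta^\star \right\Vert_2^2$ from the iterates to an arbitrary fixed comparator $\theta^\star \in \Theta$, which I will eventually instantiate as the offline minimizer $\theta^\star \in \argmin_{\theta \in \Theta} \sum_{t=1}^{T} \ell^t(\theta)$. The one non-trivial ingredient is the non-expansiveness of Euclidean projection onto the convex set $\Theta$: for any point $v \in \R^d$ and any $\theta^\star \in \Theta$ we have $\left\Vert \text{Proj}_\Theta(v) - \theta^\star \right\Vert_2 \le \left\Vert v - \theta^\star \right\Vert_2$, which follows from the first-order (obtuse-angle) optimality characterization of the projection. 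Applying this to the update $\theta^{t+1} = \text{Proj}_\Theta\!\left( \theta^t - \eta \nabla \ell^t(\theta^t) \right)$ immediately gives $\left\Vert \theta^{t+1} - \theta^\star \right\Vert_2^2 \le \left\Vert \theta^t - \eta \nabla \ell^t(\theta^t) - \theta^\star \right\Vert_2^2$.

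First I would expand the right-hand side of this inequality and rearrange. Expanding the square yields
\[
\left\Vert \theta^{t+1} - \theta^\star \right\Vert_2^2 \le \left\Vert \theta^t - \theta^\star \right\Vert_2^2 - 2\eta \left\langle \nabla \ell^t(\theta^t), \theta^t - \theta^\star \right\rangle + \eta^2 \left\Vert \nabla \ell^t(\theta^t) \right\Vert_2^2,
\]
so that the per-round inner product is bounded by
\[
\left\langle \nabla \ell^t(\theta^t), \theta^t - \theta^\star \right\rangle \le \frac{1}{2\eta}\left( \left\Vert \theta^t - \theta^\star \right\Vert_2^2 - \left\Vert \theta^{t+1} - \theta^\star \right\Vert_2^2 \right) + \frac{\eta}{2} \left\Vert \nabla \ell^t(\theta^t) \right\Vert_2^2.
\]
Convexity of $\ell^t$ then supplies the first-order inequality $\ell^t(\theta^t) - \ell^t(\theta^\star) \le \left\langle \nabla \ell^t(\theta^t), \theta^t - \theta^\star \right\rangle$, which converts the left-hand side into the per-round regret contribution $\ell^t(\theta^t) - \ell^t(\theta^\star)$.

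Next I would sum over $t = 1, \ldots, T$. The distance terms telescope, leaving $\tfrac{1}{2\eta}\left( \left\Vert \theta^1 - \theta^\star \right\Vert_2^2 - \left\Vert \theta^{T+1} - \theta^\star \right\Vert_2^2 \right) \le \tfrac{D^2}{2\eta}$ by the diameter bound (both $\theta^1$ and $\theta^\star$ lie in $\Theta$, and the final term is nonnegative), while the gradient terms are controlled by the assumption $\left\Vert \nabla \ell^t(\cdot) \right\Vert_2 \le G$, contributing at most $\tfrac{\eta G^2 T}{2}$. This gives $R_\mathcal{A}(T) = \sum_{t=1}^{T} \left( \ell^t(\theta^t) - \ell^t(\theta^\star) \right) \le \tfrac{D^2}{2\eta} + \tfrac{\eta G^2 T}{2}$. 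Finally, substituting the stated learning rate $\eta = D/(G\sqrt{T})$ balances the two terms, each becoming $\tfrac{GD\sqrt{T}}{2}$, and yields $R_\mathcal{A}(T) \le GD\sqrt{T}$.

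I expect essentially no obstacle here: the only substantive geometric fact is the non-expansiveness of projection onto a convex set, and the remainder is the expansion of a square, a telescoping sum, and a one-line optimization of $\eta$. The single point requiring a word of care is that $\nabla \ell^t(\theta^t)$ should be read as a \emph{subgradient}, since the $\ell^t$ are assumed convex but not necessarily differentiable; the argument uses only the two defining properties of a subgradient, namely the convexity inequality $\ell^t(\theta^t) - \ell^t(\theta^\star) \le \langle \nabla \ell^t(\theta^t), \theta^t - \theta^\star \rangle$ and the norm bound $\left\Vert \nabla \ell^t(\theta^t) \right\Vert_2 \le G$, so nothing in the derivation changes.
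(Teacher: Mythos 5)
Your proof is correct, and it is exactly the standard potential-function argument that the paper's citation points to: the paper states Theorem~\ref{thm:gdregret} without proof, importing it from Zinkevich, and your chain of non-expansiveness of projection, expansion of the square, the first-order convexity inequality, telescoping, and balancing $\eta = D/(G\sqrt{T})$ (each term equal to $GD\sqrt{T}/2$) reproduces the claimed bound $R_{\mathcal{A}}(T) \le GD\sqrt{T}$ precisely, with the correct observation that only the subgradient inequality and the norm bound $G$ are used. Your side remark is also apt: Zinkevich's original analysis uses a time-varying step size $\eta_t = t^{-1/2}$, so the fixed-rate statement given here is the variant your one-line optimization of $\eta$ handles directly.
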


\paragraph{Specifying and Solving the Linear Program}
Many aspects of our setup for Algorithm~\ref{alg:ftpl} are shared here. In particular, we once again focus on solving the minimax problem \eqref{eq:minimaxp} by viewing it as a two player zero sum game. The primal player (the Learner) controls $p_k$, but this time explicitly through the parameter $\theta$, and the dual player (the Auditor) controls $\lambda$. We continue to utilize no-regret dynamics; now, however, the Proxy Learner plays Online Projected Gradient Descent, and the Auditor with strategies in $\Lambda = \{ \lambda = (\lambda_0, \lambda') \in \R^d: | \lambda_0 | \le C_0, \Vert \lambda' = (\lambda_{h,j})_{h,j} \Vert_1 \le C \}$ best responds by appealing to a cost sensitive classification oracle over the class $H$ (CSC($H$)). Upper bounds ($C_0$ and $C$) on the dual variable components are again set to guarantee convergence.

Our algorithm is described in Algorithm~\ref{alg:EqualErrors} and its guarantee is given in Theorem~\ref{thm:linear}. The algorithm once again returns a distribution over $\Gs$, but because $\hat{z}$ is now convex in $\theta$, we can simply return an average over $\theta$ as our final model. This is the approach we take in our experiments. In contrast to the previous algorithm, the Auditor calls upon $CSC(\Hs)$ to maximize the exact costs without additional noise. Note that in practice, we need to use a heuristic to estimate $CSC(\Hs)$, as such an oracle is generally not available in practice; more information is given in Section~\ref{sec:experiments}.

\begin{algorithm}
\SetAlgoLined
\KwIn{ Data set $\{x_i,y_{ij},z_i\}_{i=1}^{n} \forall j \in [m]$, target proxy parameter $\alpha$, upper bound $M$ on proxy values, upper bound $B$ on magnitude of proxy gradient values, groups $k \in [K]$, dimension of parameter vector $d$, diameter of $\Theta$ space $D$;
}
Set dual variable upper bound: $C = \lceil \left( M^2(1+nM) + 2\alpha \sum_{i=1}^n z_{ik}(1+nM))^{-1} \right)/ \left( \alpha \sum_{i=1}^n z_{ik} \right) \rceil$\;
Set iteration count: $T = \lceil \left( d D (1+nM)\left( 2MB + nCB \left( \sum_{i=1}^{n} z_{ik}\right)^{-1}\right) / \left( \alpha \sum_{i=1}^n z_{ik} \right) \right)^2 \rceil$\;
\For{$k=1$ \textbf{to} $K$}{
\For{$t=1$ \textbf{to} $T$}{
    Set $\eta = t^{-1/2}$\;
    $h^*, j^* = \argmax_{h \in \Hs, j \in [m]} |\sum_{i=1}^n \left(z_{ik} - \hat{z}_{k} (x_i; \theta_k) \right) \1 \left[ h(x_i) \neq y_{ij} \right]| $\;
    \If{$|\sum_{i=1}^n \left(z_{ik} - \hat{z}_{k} (x_i; \theta_k)\right) \1 \left[ h^*(x_i) \neq y_{ij^*} \right]| \geq |\frac{\sum_{i=1}^n \hat{z}_{k} (x_i; \theta_k)}{\sum_{i=1}^n z_{ik}} - 1|$}{
    $\lambda_{h^*,j^*} = C \cdot \sign \left( \sum_{i=1}^n \left(z_{ik} - \hat{z}_{k} (x_i; \theta_k)\right) \1 \left[ h^*(x_i) \neq y_{ij^*} \right]\right) $\;
    }
    \Else{$\lambda_0 = C \cdot \sign \left( \frac{\sum_{i=1}^n \hat{z}_{k} (x_i; \theta_k)}{\sum_{i=1}^n z_{ik}} - 1\right)$}
    Set all other $\lambda$ to 0\;

    $\frac{\delta L}{\delta \theta_k}=\frac{2}{n}\sum_{i=1}^{n} \nabla_{\theta_k} \hat{z}_k(x_i;\theta_k)\cdot \left( \hat{z}_k(x_i;\theta_k)- z_{ik} \right) + \lambda_0 \frac{\sum_{i=1}^n \nabla_{\theta_k}\hat{z}_{k} (x_i; \theta_k)}{\sum_{i=1}^n z_{ik}} - \lambda_{h^*,j^*} \sum_{i=1}^n \nabla_{\theta_k} \hat{z}_{k} (x_i; \theta_k) \1 \left[ h^*(x_i) \neq y_{ij^*} \right]$\;
$\theta_k^{(t+1)} = \theta_k^{(t)}-\eta^{t} \frac{\delta L}{\delta \theta_k}$\;
}
}
\KwOut{$\hat{p}$ = uniform distribution over $\{ \hat{z}(\cdot; \theta^1)$,...,$\hat{z}(\cdot; \theta^T) \}$}
\caption{Learning a Linear Proxy}
\label{alg:EqualErrors}
\end{algorithm} 

\begin{thm}[Learning a Linear $\alpha$-Proxy]
\label{thm:linear}
Fix any $\alpha$. Suppose $\Hs$ has finite VC dimension, and $\Gs$ has finite pseudo-dimension. Suppose $\Delta (\Gs)^K$ contains a $0$-proxy. Then given access to oracle $CSC(\Hs)$, Algorithm~\ref{alg:EqualErrors} returns a distribution $\hat{p} \in \Delta (\Gs)^K$ such that $\hat{z} (x) \triangleq \E_{g \sim \hat{p}} \left[ g(x) \right] = (1/T) \sum_{t=1}^T \hat{z}_t (x)$ is an $\alpha$-proxy.
\end{thm}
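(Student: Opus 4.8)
The plan is to mirror the equilibrium-based argument behind Theorem~\ref{thm:ftplProxy}, but to exploit the linearity of $\hat z(x;\theta)=\theta x$ by running the no-regret dynamics on the \emph{primal} (Learner) side. Fix a group $k\in[K]$; it suffices to treat each $k$ separately, since Program~\eqref{eqn:programAlgos} decomposes over $k$. First I would record the convex--concave structure of the Lagrangian~\eqref{eqn:game}: with $\hat z_k(x_i;\theta)=\theta x_i$, the squared-error objective is convex in $\theta$ and every constraint term is affine in $\theta$, so $L(\theta,\lambda)$ is convex in $\theta$ for each fixed $\lambda$, and it is linear, hence concave, in $\lambda$ by construction. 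Restricting the dual player to the compact box $\Lambda=\{|\lambda_0|\le C_0,\ \|\lambda'\|_1\le C\}$ keeps the minimax theorem valid (as already noted for \eqref{eq:minimaxp}), so the game has a value and I only need to produce approximate equilibrium strategies.

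Next I would instantiate the no-regret reduction of Theorem~\ref{thm:noregret} with the Learner (minimizer over $\theta\in\Theta$) playing Online Projected Gradient Descent (Algorithm~\ref{alg:descent}) against the Auditor's (maximizer over $\lambda\in\Lambda$) exact best response. Because $L$ is linear in $\lambda$ over the box, the Auditor's best response is attained at a vertex and is the one computed in Algorithm~\ref{alg:EqualErrors}: compare the denominator-constraint violation against the single most-violated numerator constraint $(h^\ast,j^\ast)$ — the latter obtained from one call to $CSC(\Hs)$ — and place mass $\pm C$ (or $\pm C_0$) there. A best-responding Auditor has regret $\nu_2\le 0$; applying Theorem~\ref{thm:gdregret} to the per-round convex losses $\ell^t(\theta)=L(\theta,\lambda^t)$ gives Learner regret $\nu_1 T\le GD\sqrt{T}$, where $D$ is the diameter of $\Theta$ and $G$ is a uniform bound on $\|\nabla_\theta\ell^t\|$. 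Bounding $G$ is a routine but necessary computation: the gradient displayed in Algorithm~\ref{alg:EqualErrors} is controlled using the proxy bound $M$, the gradient bound $B$ on $\|\nabla_\theta\hat z_k\|$, the group size $\sum_i z_{ik}$, and the dual caps $C,C_0$. Theorem~\ref{thm:noregret} then certifies that the average iterate $\bar\theta=\tfrac1T\sum_t\theta^t$ together with the Auditor's average play is a $\nu=\nu_1+\nu_2\le GD/\sqrt{T}$-approximate equilibrium. Crucially, linearity gives $\E_{g\sim\hat p}[g(x)]=\tfrac1T\sum_t\hat z(x;\theta^t)=\hat z(x;\bar\theta)$, so the returned proxy is exactly the equilibrium strategy $\bar\theta$.

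Finally I would convert this approximate equilibrium into the $\alpha$-proxy guarantee. Using the hypothesized $0$-proxy $\theta^\star$, whose constraint terms vanish, the Learner's near-optimality bound gives $L(\bar\theta,\bar\lambda)\le L(\theta^\star,\bar\lambda)+\nu=\mathrm{obj}(\theta^\star)+\nu$, while the Auditor's condition gives $\max_{\lambda\in\Lambda}L(\bar\theta,\lambda)\le L(\bar\theta,\bar\lambda)+\nu$. Since $\max_\lambda L(\bar\theta,\lambda)=\mathrm{obj}(\bar\theta)+C_0|v_0(\bar\theta)|+C\max_{h,j}|v_{h,j}(\bar\theta)|$, where $v_0(\cdot)$ and $v_{h,j}(\cdot)$ denote the denominator and numerator violations of \eqref{eqn:programAlgos}, and $\mathrm{obj}(\bar\theta)\ge0$, I obtain $C_0|v_0(\bar\theta)|+C\max_{h,j}|v_{h,j}(\bar\theta)|\le \mathrm{obj}(\theta^\star)+2\nu$, so each linearized violation is $O(\nu/C)$ (respectively $O(\nu/C_0)$). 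The main obstacle — and the step that dictates the choices of $C$, $C_0$, and $T$ in Algorithm~\ref{alg:EqualErrors} — is translating these small numerator and denominator violations into the bound on the \emph{ratios} demanded by Definition~\ref{def:proxy}. Here the denominator constraint forces $\sum_i\hat z_k(x_i)$ to stay within $|v_0(\bar\theta)|\sum_i z_{ik}$ of $\sum_i z_{ik}>0$, hence bounded away from $0$, after which propagating the errors through the division — over the finitely many $h\in\Hs(S)$ guaranteed by Sauer's Lemma since $\mathrm{VC}(\Hs)<\infty$, and over $j\in[m]$ — yields ratio discrepancy at most $\alpha$ for every $h$ and $j$. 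Choosing $C,C_0$ large enough that $\nu/C$ and $\nu/C_0$ are small while simultaneously taking $T$ large enough that $\nu=GD/\sqrt T$ is small — recalling that $G$ itself grows with $C$, so the two choices are coupled — closes the argument.
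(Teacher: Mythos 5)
Your proposal is correct and follows essentially the same route as the paper's own proof: the Learner runs Online Projected Gradient Descent on $\theta$ against a best-responding Auditor who uses $CSC(\Hs)$ to place mass $\pm C$ on the most violated constraint, Theorems~\ref{thm:gdregret} and~\ref{thm:noregret} convert the regret bound into an approximate equilibrium, and the comparison against the assumed $0$-proxy together with the dual cap $C$ bounds every constraint violation by $(M^2+2\nu)/C$, which Lemma~\ref{lemma:epsilon} then translates into the $\alpha$-proxy ratio guarantee with the same coupled choice of $C$ and $T$. The only refinements you add — noting explicitly that the average of linear proxies equals the proxy at the averaged parameter $\bar\theta$, and writing the dual best response as the support function of the box — are implicit in the paper's argument.
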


\fi

\section{Proxies for the Entire Class of Labeling Functions}
\label{sec:functionclass}

Finally, we show that with minor adaptations, and given access to the right oracle, one can learn a proxy for an entire class of labeling functions $\F$, instead of only a finite sample of $m$ functions from $\F$. Accordingly, our data set here only consists of $n$ individual $S= \{ (x_i,z_i) \}_{i=1}^n$, with no observed labels, as our goal is to learn a proxy which is good for \textit{every} $f \in \F$. In this case, the second set of constraints in Program~\eqref{eqn:programAlgos} will be re-written as:

\begin{align}
\sum\nolimits_{i=1}^n \left( z_{ik} - \E_{\hat{z}_k \sim p_k} \left[ \hat{z}_{k} (x_i) \right]\right) \1 \left[ h(x_i) \neq f(x_i, z_i) \right] = 0, \ \forall f \in \F, \forall h \in \Hs
\label{eqn:aboveconstraint}
\end{align}

Note that $1[h(x_i) \neq f(x_i,z_i)] = 1[(h \oplus f) (x_i,z_i) = 1]$ where $h \oplus f$ denotes the XOR of $h$ and $f$ over the $\X \times \Z$ domain: $(h \oplus f) (x,z) = h(x) \oplus f(x,z)$. Define
$\Hs \oplus \F = \left\{ h \oplus f : h \in \Hs, f \in \F \right\}$. We rewrite Equation~\eqref{eqn:aboveconstraint} as:

\[
\sum\nolimits_{i=1}^n \left( z_{ik} - \E_{\hat{z}_k \sim p_k} \left[ \hat{z}_{k} (x_i) \right]\right) \1 \left[ g (x_i, z_i) = 1 \right] = 0, \ \forall g \in \Hs \oplus \F
\]

\ifdraft
Therefore, assuming access to a cost sensitive classification oracle for $\Hs \oplus \F$, we can solve the corresponding optimization problem in an oracle-efficient manner, with the following adjusted linear program and Lagrangian dual:

\begin{mini}|l|
{p_k \in \Delta (\Gs (S))}{\frac{1}{n}\sum_{i=1}^{n} \E_{\hat{z}_k \sim p_k} \left[ \left(z_{ik} - \hat{z}_k (x_i) \right)^2\right]}{}{}
\addConstraint{\frac{\sum_{i=1}^n \E_{\hat{z}_k \sim p_k} \left[\hat{z}_k (x_i)\right]}{\sum_{i=1}^n z_{ik}} - 1 = 0\;}{}{}
\addConstraint{
\sum_{i=1}^{n} \left( z_{ik} - \E_{\hat{z}_k \sim p_k} \left[ \hat{z}_k (x_i) \right] \right) \1 \left[ g (x_i, z_i) = 1 \right] = 0, }{}{\; \forall g \in (\Hs \oplus \F)(S)}
\label{eqn:programXOR}
\end{mini}

\begin{align}
\label{eqn:gameXOR}
L(\hat{z}_k, \lambda) &= \sum_{i=1}^{n} \E_{\hat{z}_k \sim p_k} \frac{\left( z_{ik} - \hat{z}_k (x_i) \right)^2}{n} + \E_{\hat{z}_k \sim p_k} \left[ \lambda_0\left(\frac{\sum_{i=1}^n \hat{z}_k (x_i)}{\sum_{i=1}^n z_{ik}} - 1\right) +\sum_{g \in \Hs \oplus \F} \lambda_{g} \sum_{i=1}^n \left(\hat{z}_k (x_i) - z_{ik}\right) \1 \left[ g(x_i) =1 \right] \right]
\end{align}

We need only minimally alter Algorithm~\ref{alg:ftpl} to find an approximate solution to this game. Rather than call on $CSC(\Hs)$ to find the most violated constraint of Program~\ref{eqn:programAlgos}, we call on $CSC(\Hs \oplus F)$, using the cost for labeling $g(x_i)=1$:
\begin{align}
\begin{split}
c^0(x_i)=0,
\end{split}
\begin{split}
c^1(x_i)=\left(z_{ik} - \hat{z}_k(x_i)\right)
\end{split}
\end{align}

\else
Therefore, assuming access to a cost sensitive classification oracle for $\Hs \oplus \F$, we can solve the corresponding optimization problem in an oracle-efficient manner nearly identical to Algorithm~\ref{alg:ftpl}. Details are relegated to the appendix.
\fi

\begin{thm}[$\alpha$-Proxy for the entire $\F$]
\label{thm:functionclass}
Fix any $\alpha$, and $\delta$. Suppose $\Hs$ and $\F$ have finite VC dimension, and $\Gs$ has finite pseudo-dimension. Suppose $\Delta (\Gs)^K$ contains a $0$-proxy. Then given access to oracles $CSC(\Hs \oplus \F)$ and $ERM(\Gs)$, we have that with probability at least $1-\delta$, Algorithm~\ref{alg:ftpl} returns a distribution $\hat{p} \in \Delta (\Gs)^K$ such that $\hat{z} (x) \triangleq \E_{g \sim \hat{p}} \left[ g(x) \right] = (1/T) \sum_{t=1}^T \hat{z}_t (x)$ is an $\alpha$-proxy.
\end{thm}

\ifdraft
\emily{Added proof}
\begin{proof}
It suffices to observe that Algorithm~\ref{alg:ftpl} reduces to Algorithm~\ref{alg:functionclass} if we replace $CSC(\Hs)$ with $CSC(\Hs \oplus \F)$ and consider one vector of dummy labels, $y_{ij}$, where $j=1$ and $y_{i1}=0$ for all samples $i$.
\end{proof}
\fi

\section{Generalization Theorems}
\label{sec:generalization}
In this section, we provide generalization guarantees for a proxy using a \textit{uniform convergence} approach.

First, in Theorem~\ref{thm:gen-x}, we consider the case where there is no distribution over the class of labeling functions $\F$, and we want to form a good proxy for \textit{every} labelling function in $\F$; in particular, we show how many samples in $S \sim \Ps^n$ are required to guarantee (with high probability over $S$) that \textit{every} $\hat z$ that is a good proxy with respect to the sample $S$ is also a good proxy with respect to the underlying distribution of the data points $\Ps$. Second, in Theorem~\ref{thm:gen-xf}, we consider the case where there is a distribution $\Qs$ over the class of labeling functions $\F$, in addition to the distribution $\Ps$ over individuals; in particular, we show how many samples in $S \sim \Ps^n$ and $F \sim \Qs^m$ are required to guarantee (with high probability over $S$ and $F$) that \textit{every} $\hat z$ that is a good proxy with respect to the sample $(S,F)$ is also a good proxy with respect to the distributions $(\Ps, \Qs)$. We point out that our uniform convergence bounds are taken over the entire $\Delta (\Gs)^K$, not only $\Gs^K$, because our algorithm outputs an object in $\Delta (\Gs)^K$.

It turns out that the sample complexity of learning such proxies can be characterized by the \textit{pseudo-dimension} ($Pdim$) of the proxy class $\Gs$, which is a standard notion used in the learning theory literature (see for e.g. \cite{haussler}) to measure the complexity of a \textit{real-valued} function class. While we provide the formal definition of pseudo-dimension in the appendix, we note a couple of facts regarding this notion. First, pseudo-dimension generalizes the notion of \textit{VC dimension} ($VCdim$) which is typically used to measure the complexity of \textit{binary} function classes.

\begin{fact}[\cite{haussler}]\label{fact:pseudo-vc}
If $\Hs \subseteq \{ h: \X \to \{0,1\} \}$, then $Pdim (\Hs) = VCdim (\Hs)$.
\end{fact}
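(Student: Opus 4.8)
The plan is to prove the two inequalities $VCdim(\Hs) \le Pdim(\Hs)$ and $Pdim(\Hs) \le VCdim(\Hs)$ separately, working directly from the definitions. Recall that a set $\{x_1, \ldots, x_d\} \subseteq \X$ is \emph{VC-shattered} by $\Hs$ if for every labeling $(b_1, \ldots, b_d) \in \{0,1\}^d$ there is some $h \in \Hs$ with $h(x_i) = b_i$ for all $i$, and that the same set is \emph{pseudo-shattered} if there exist thresholds $r_1, \ldots, r_d \in \R$ such that for every $(b_1, \ldots, b_d) \in \{0,1\}^d$ there is some $h \in \Hs$ with $\sign(h(x_i) - r_i) = 2 b_i - 1$ for all $i$. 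Each dimension is the size of the largest shattered set of the respective type, so it suffices to relate the two shattering notions on binary-valued $\Hs$.

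For the direction $VCdim(\Hs) \le Pdim(\Hs)$, I would take any set VC-shattered by $\Hs$ and exhibit thresholds witnessing pseudo-shattering. Setting $r_i = 1/2$ for every $i$ suffices: since $h$ is $\{0,1\}$-valued, $\sign(h(x_i) - 1/2)$ equals $+1$ exactly when $h(x_i) = 1$ and $-1$ exactly when $h(x_i) = 0$, so the sign pattern coincides with the binary labeling. Because VC-shattering realizes every labeling, the constant thresholds $1/2$ realize every sign pattern, and the set is pseudo-shattered; hence any VC-shattered set is pseudo-shattered and the inequality follows.

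For the reverse direction $Pdim(\Hs) \le VCdim(\Hs)$, I would start from a set pseudo-shattered by $\Hs$ with witnessing thresholds $r_1, \ldots, r_d$ and argue it is VC-shattered. The key observation is that each $r_i$ must strictly separate the two attainable values $0$ and $1$: if $r_i \le 0$ then $h(x_i) - r_i \ge 0$ for every $h$, so the $i$th sign can never equal $-1$; and if $r_i \ge 1$ then $h(x_i) - r_i \le 0$ for every $h$, so the $i$th sign can never equal $+1$. Either case prevents half of the $2^d$ sign patterns from being realized, contradicting pseudo-shattering. Hence every $r_i \in (0,1)$, and for such thresholds $\sign(h(x_i) - r_i) = +1$ iff $h(x_i) = 1$, so the sign pattern is a faithful copy of the binary labeling produced by $h$. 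The $2^d$ realizable sign patterns are therefore in bijection with $2^d$ labelings realized by functions in $\Hs$, so $\Hs$ realizes every binary labeling on the set and the set is VC-shattered.

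The only subtlety is the boundary handling in the second direction, namely pinning down that the witnessing thresholds must lie strictly between the two attainable function values so that $\sign(h(x_i) - r_i)$ is a faithful copy of $h(x_i)$. Once that is in place the bijection between sign patterns and labelings is immediate, and combining the two inequalities yields $Pdim(\Hs) = VCdim(\Hs)$.
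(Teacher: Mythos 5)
The paper does not prove this fact at all --- it is stated as a quotation from \cite{haussler} --- so there is nothing internal to compare against; you have supplied a self-contained, from-the-definitions proof where the paper simply cites the literature. Your argument is correct in substance and is the natural one: VC-shattering implies pseudo-shattering via the constant threshold $1/2$, and conversely any witnessing threshold vector must separate the two attainable values $0$ and $1$ at each coordinate, making the sign pattern a faithful (monotone, injective) copy of the binary labeling, so the $2^d$ sign patterns force all $2^d$ labelings. One small slip worth fixing: your claim that every witnessing threshold must lie in the \emph{open} interval $(0,1)$ cannot hold under any single convention for the sign of zero. With the paper's convention $\sign(t) = \1[t > 0]$ (and its additive form $\sign(h(x_i) + r_i)$), the admissible set is a half-open interval --- e.g.\ $r_i = 0$ in your subtractive formulation is fine when $\sign(0)$ is taken as the negative label, since $h(x_i)=0$ then yields one sign and $h(x_i)=1$ the other --- so exactly one of your two boundary exclusions fails. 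This does not damage the proof: you never need the exact interval, only the weaker observation that since both signs must be realized at coordinate $i$ and $h(x_i)$ takes only the values $0$ and $1$, the map $h(x_i) \mapsto \sign(h(x_i) - r_i)$ must be injective, and by monotonicity it sends $0$ and $1$ to the negative and positive signs respectively; the bijection between sign patterns and labelings then goes through verbatim. Restated that way, your proof is complete and convention-independent.
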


Second, if $\Gs$ is a class of $d$-dimensional \textit{linear} proxies, then $Pdim(\Gs) = d$ 

\begin{fact}[\cite{haussler}]\label{lem:vectorspace}
If $\Gs \subseteq \{ g: \X \to \R \}$ forms a vector space of dimension $d$, then $Pdim (\Gs) = d$.
\end{fact}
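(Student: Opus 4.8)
The plan is to establish the two inequalities $Pdim(\Gs) \le d$ and $Pdim(\Gs) \ge d$ separately, working directly from the definition of pseudo-dimension and exploiting the vector-space structure of $\Gs$ via elementary linear algebra. First I would fix a basis $g_1, \dots, g_d$ of $\Gs$ and set $\phi(x) = (g_1(x), \dots, g_d(x)) \in \R^d$, so that every $g \in \Gs$ is of the form $g(x) = \langle \theta, \phi(x)\rangle$ for a unique $\theta \in \R^d$. Recall that $\{x_1, \dots, x_N\}$ is pseudo-shattered if there exist thresholds $t_1, \dots, t_N$ such that every sign pattern of $(g(x_i) - t_i)_i$ is realized by some $g \in \Gs$, and $Pdim(\Gs)$ is the largest such $N$.

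For the lower bound I would use linear independence of $g_1, \dots, g_d$ as functions to select $d$ points $x_1, \dots, x_d$ making the evaluation matrix $\Phi = (g_j(x_i))_{i,j} \in \R^{d \times d}$ invertible; such points necessarily exist, since otherwise the $g_j$ would be linearly dependent. Taking all thresholds $t_i = 0$, realizing a target pattern $\sigma \in \{-1,+1\}^d$ reduces to finding $\theta$ with $\sign((\Phi\theta)_i) = \sigma_i$, and because $\Phi$ is invertible I can simply take $\theta = \Phi^{-1}\sigma$, giving $\Phi\theta = \sigma$ and hence the exact sign pattern. This pseudo-shatters $d$ points, so $Pdim(\Gs) \ge d$.

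The upper bound is where I expect the real difficulty, because the thresholds $t_i$ act like an additional free parameter and a naive appeal to ``linear classifiers in $\R^d$ have VC dimension $d$'' would wrongly suggest the bound $d+1$; the crux is showing why the extra degree of freedom does \emph{not} materialize. I would argue by contradiction: suppose $(x_1,t_1), \dots, (x_{d+1}, t_{d+1})$ are pseudo-shattered. Writing $g(x_i) - t_i = \langle (\theta,1), a_i\rangle$ with $a_i := (\phi(x_i), -t_i) \in \R^{d+1}$, realizing a sign pattern means choosing $u = (\theta,1)$ on the affine flat $\{u \in \R^{d+1} : u_{d+1} = 1\}$ with prescribed signs of $\langle u, a_i\rangle$. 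The key observation is that the constraint $u_{d+1} = 1$ confines $u$ to a $d$-dimensional flat, so the achievable patterns are exactly the full-dimensional cells cut on this flat by the $d+1$ homogeneous hyperplanes $\{\langle u, a_i\rangle = 0\}$. An arrangement of $d+1$ hyperplanes in a $d$-dimensional space has at most $\sum_{i=0}^{d}\binom{d+1}{i} = 2^{d+1} - 1$ regions, strictly fewer than the $2^{d+1}$ sign patterns shattering would require, a contradiction.

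As an alternative to the region count I would keep in reserve a self-contained sign argument that avoids arrangement theory: if the $a_i$ are linearly dependent, a nonzero relation $\sum_i c_i a_i = 0$ forces $\sum_i c_i (g(x_i) - t_i) = 0$ for every $g \in \Gs$, which is incompatible with realizing the pattern $\sign(g(x_i) - t_i) = \sign(c_i)$; and if the $a_i$ are independent, the last row of the inverted matrix yields a fixed functional $\ell$ with $\langle \ell, v\rangle = 1$ on every achievable score vector $v = (\langle u, a_i\rangle)_i$, so the orthant of sign-opposite-to-$\ell$ is unreachable. Either route gives $Pdim(\Gs) \le d$, which together with the lower bound yields $Pdim(\Gs) = d$. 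The only point demanding care is the inequality convention in the definition (strict versus non-strict thresholds): the lower-bound construction produces strictly nonzero values $g(x_i) = \sigma_i$ and so is convention-robust, while for the upper bound I would phrase the contradiction in terms of full-dimensional cells so that boundary patterns do not interfere.
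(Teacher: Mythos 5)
The paper never proves this statement: it is imported verbatim as a known fact from Haussler's work, so there is no internal proof to compare against. Your proposal supplies a self-contained proof, and it is correct overall --- it essentially reconstructs the standard textbook argument for the pseudo-dimension of a vector space of functions. The lower bound is clean: linear independence of the basis forces the evaluation vectors $\phi(x)$ to span $\R^d$ (otherwise a nonzero $c$ orthogonal to all of them would give the relation $\sum_j c_j g_j = 0$), so $d$ points with invertible $\Phi$ exist, and $\theta = \Phi^{-1}\sigma$ with zero thresholds realizes every pattern with values $\pm 1$, which indeed makes the construction robust to the paper's strict-inequality convention $\sign(r)=\1\left[r>0\right]$. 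For the upper bound, however, be aware that only your ``reserve'' argument is airtight as written. The region-counting route rests on the claim that the achievable patterns are exactly those of the full-dimensional cells, and this is false in general: pseudo-shattering uses the collapsed sign $\1\left[\cdot>0\right]$, so points lying on some hyperplane $\left\{\langle u, a_i\rangle = 0\right\}$ also realize patterns (with those coordinates mapped to $0$), and such boundary patterns need not be realized by any open cell --- for instance $f_1(u)=u$, $f_2(u)=-u$ on $\R$ yields three collapsed patterns but only two cells. Repairing that route requires an explicit perturbation step: shift every threshold by half the minimal positive margin over the finitely many witnesses, so that each of the $2^{d+1}$ patterns is realized with strictly nonzero values, and only then does the bound of $2^{d+1}-1$ cells against $2^{d+1}$ required patterns give the contradiction. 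Your linear-algebra alternative needs no such repair: in the dependent case the relation $\sum_i c_i a_i = 0$ (with $c$ replaced by $-c$ if necessary so that some $c_i>0$) rules out the pattern $\1\left[c_i>0\right]$, and in the independent case the functional $\ell$ given by the last row of the inverse matrix satisfies $\langle \ell, v\rangle = 1$ on every achievable score vector while any $v$ realizing the pattern $\1\left[\ell_i<0\right]$ has $\langle \ell, v\rangle \le 0$. That argument alone gives $Pdim(\Gs) \le d$ and, with your lower bound, the equality claimed in the Fact.
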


 With this notion of pseudo-dimension in hand, we formally state our first generalization theorem below in Theorem~\ref{thm:gen-x}. We note that in addition to the pseudo-dimension of $\Gs$, our sample complexity bound depends on the VC dimension of $\Hs$ and $\F$ as well because we take a uniform convergence approach that requires bounding the difference of empirical and distributional expectations appearing in the definition of the proxy, for \textit{all} classifiers $h \in \Hs$, and \textit{all} learning tasks $f \in \F$. The sample complexity bound will further depend polynomially on $M$ (a uniform upper bound for functions in $\Gs$), $\mu$ (smallest probability measure of groups), and $\mu_\Gs$ (smallest probability measure of groups, determined by proxies in $\Gs$). The proof of this theorem is in the appendix.

\begin{thm}[Generalization over $\Ps$]\label{thm:gen-x}
Fix any $\epsilon$ and $\delta$. Fix a distribution $\Ps$ over $\X \times \Z$. Suppose $Pdim (\Gs) = d_\Gs$, $VCdim(\F) = d_\F$, and $VCdim(\Hs) = d_\Hs$. We have that with probability at least $1-\delta$ over $S \sim \Ps^n$, every $\hat z$ that is an $\alpha$-proxy with respect to the data set $S$ is also an $(\alpha + \epsilon)$-proxy with respect to the underlying distribution $\Ps$, provided that
\[
n = \tilde{\Omega} \left( \frac{M^2 \left( d_\Gs + \max \left\{ d_\Hs, d_\F \right\}  + \log \left( K / \delta \right) \right)}{\mu \mu_\Gs^2 \left( \min \left\{ \mu, \mu_\Gs \right\} \right)^2 \epsilon^2} \right)
\]
where
$
\mu = \min_{1 \le k \le K} \E_{(x,z) \sim \Ps} \left[ z_k \right], \ \mu_\Gs = \inf_{\hat z \in \Gs} \left\{  \min_{1 \le k \le K} \E_{(x,z) \sim \Ps} \left[ \hat{z}_k (x) \right] \right\}
$.
\end{thm}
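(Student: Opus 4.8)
The plan is a \textbf{uniform convergence} argument. Fix a group $k$, a classifier $h \in \Hs$, a labeling function $f \in \F$, and a proxy coordinate $\hat z_k$, and write $\1[h(x)\neq f(x,z)] = \1[(h\oplus f)(x,z)=1]$. The two expressions compared in Definition~\ref{def:proxy} are ratios built from four empirical averages,
\[
\begin{gathered}
A_1 = \tfrac1n\textstyle\sum_i z_{ik}\1[(h\oplus f)(x_i,z_i)=1], \qquad A_2 = \tfrac1n\textstyle\sum_i z_{ik}, \\
A_3 = \tfrac1n\textstyle\sum_i \hat z_k(x_i)\1[(h\oplus f)(x_i,z_i)=1], \qquad A_4 = \tfrac1n\textstyle\sum_i \hat z_k(x_i),
\end{gathered}
\]
with population counterparts $a_1,\dots,a_4$ (their $\Ps$-expectations). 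The in-sample proxy gap is $A_1/A_2 - A_3/A_4$ and the population gap is $a_1/a_2 - a_3/a_4$, so it suffices to show that, with probability $1-\delta$ over $S$,
\[
\sup \left| \left(\tfrac{A_1}{A_2}-\tfrac{A_3}{A_4}\right) - \left(\tfrac{a_1}{a_2}-\tfrac{a_3}{a_4}\right)\right| \le \epsilon,
\]
where the supremum ranges over all $h \in \Hs$, $f \in \F$, $k \in [K]$, and $\hat z \in \Delta(\Gs)^K$. I would split this into (i) uniform closeness of each $A_i$ to $a_i$ and (ii) a deterministic conversion of that into closeness of the ratios.

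For step (i) I would show $\sup |A_i - a_i| \le \gamma$ for each $i$, uniformly over $h,f,k,\hat z$. The term $A_2$ depends only on $k$, so Hoeffding plus a union over $K$ suffices. The average $A_4$ ranges over $\Gs$ (values in $[0,M]$), and standard uniform convergence in terms of $Pdim(\Gs)=d_\Gs$ applies; crucially, since $A_4-a_4$ is \emph{affine} in the mixing distribution $p_k\in\Delta(\Gs)$, its supremum over the convex set $\Delta(\Gs)$ is attained at an extreme point, so convergence over $\Gs$ lifts for free to $\Delta(\Gs)$ (this is how the theorem's guarantee over $\Delta(\Gs)^K$ is obtained). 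The average $A_1$ ranges over the binary class $\{(x,z)\mapsto z_k\1[(h\oplus f)(x,z)=1]\}$, controlled by $VCdim(\Hs\oplus\F)=O(\max\{d_\Hs,d_\F\})$ (up to logarithmic factors absorbed by $\tilde\Omega$, using Fact~\ref{fact:pseudo-vc}), and $A_3$ ranges over the \emph{product} class $\{g(x)\cdot\1[(h\oplus f)(x,z)=1]\}$, whose pseudo-dimension I would bound by combining $d_\Gs$ with the VC dimension of the indicator part, again lifting from $\Gs$ to $\Delta(\Gs)$ by affineness. A union bound over the four events and over $K$ gives $\max_i\sup|A_i-a_i|\le\gamma$ with $\gamma = \tilde O\!\big(M\sqrt{(d_\Gs+\max\{d_\Hs,d_\F\}+\log(K/\delta))/n}\big)$.

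For step (ii), I would use the elementary identity $\tfrac{A_1}{A_2}-\tfrac{a_1}{a_2} = \tfrac{(A_1-a_1)a_2 - a_1(A_2-a_2)}{A_2 a_2}$ and its analogue for $A_3/A_4$. By definition $a_2=\E[z_k]\ge\mu$ and $a_4=\E[\hat z_k]\ge\mu_\Gs$, and once $\gamma\le\tfrac12\min\{\mu,\mu_\Gs\}$ the empirical denominators satisfy $A_2\ge\mu/2$ and $A_4\ge\mu_\Gs/2$; together with the numerator bounds $a_1\le a_2$ and $a_3\le a_4$, each ratio gap is bounded by $\gamma$ times a polynomial in $1/\mu$ and $1/\mu_\Gs$ reflecting the ratio's sensitivity to perturbations of both its numerator and its (possibly tiny) denominator. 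Requiring the sum of the two ratio gaps to be at most $\epsilon$ pins down how small $\gamma$ must be, and inverting the rate from step (i) then yields the stated sample complexity; the asymmetric powers of $\mu$ and $\mu_\Gs$ in the final bound are exactly the result of tracking, term by term, how each of the four deviations propagates through both ratio terms.

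The main obstacle I expect is the product average $A_3$ in step (i): obtaining a clean pseudo-dimension (or $L_\infty$ covering-number) bound for products $g(x)\cdot\1[(h\oplus f)(x,z)=1]$ of a real-valued and a binary function, and making the convex-hull extension to $\Delta(\Gs)$ rigorous there. The secondary difficulty is the bookkeeping in step (ii): because the ratio is only $\Theta(1/\mu_\Gs)$-stable when its denominator can be as small as $\mu_\Gs$, the worst-case propagation---rather than a loose bound---must be matched to recover the precise polynomial dependence $\mu\mu_\Gs^2(\min\{\mu,\mu_\Gs\})^2$ claimed in the theorem.
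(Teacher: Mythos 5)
Your proposal is correct and follows essentially the same route as the paper's proof: uniform convergence for the same four quantities (the XOR class $\Hs \oplus \F$ with $VCdim = O(\max\{d_\Hs, d_\F\})$ for the first numerator, $Pdim(\Gs)$ for the proxy denominator, a product-class bound for the cross term), the lift from $\Gs$ to $\Delta(\Gs)$ by linearity of the deviations in the mixing distribution, and then deterministic propagation of the four deviations through the two ratios using the denominator lower bounds $\mu$ and $\mu_\Gs$. The one obstacle you flag---controlling the product class $\left\{ (x,z) \mapsto g(x) \cdot \1\left[ (h \oplus f)(x,z) = 1 \right] \right\}$---is resolved in the paper by an off-the-shelf composition lemma stating that the fat-shattering dimension (at every scale) of a product of two classes of pseudo-dimensions $d_1, d_2$ is $O(d_1 + d_2)$, after which the standard uniform convergence theorem for fat-shattering classes applies; also, your additive-concentration bookkeeping in step (ii) in fact yields a slightly tighter sufficient sample size than the paper's multiplicative-Chernoff chain (which is where its extra $\mu \mu_\Gs^2$ factor comes from), and this is harmless since the theorem only asserts a sufficient condition on $n$.
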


\begin{rmk}
We remark that Theorem~\ref{thm:gen-x} subsumes the case when there is only one, or more generally, finitely many learning tasks, because it is known that if $| \F | < \infty$, then we have $d_\F \le \log \left( | \F | \right)$.
\end{rmk}

We note that the sample complexity bound of Theorem~\ref{thm:gen-x} grows with the VC dimension of $\F$, i.e., we are assuming $\F$ has bounded complexity ($d_\F < \infty$). In our next generalization theorem, we consider the setting where there is a distribution over $\F$ from which an $i.i.d.$ sample is collected. This distributional modeling allows us to make no assumption on the complexity of $\F$, i.e., $\F$ can have $d_\F = \infty$. This allows us to handle real labels, without the need to make any assumptions about their underlying complexity. The proof of Theorem~\ref{thm:gen-xf} is provided in the appendix.

\begin{thm}[Generalization over $\Ps$ and $\Qs$]\label{thm:gen-xf}
Fix any $\epsilon$, $\delta$, and $\beta$. Fix a distribution $\Ps$ over $\X \times \Z$, and a distribution $\Qs$ over $\F$. Suppose $Pdim (\Gs) = d_\Gs$, and $VCdim(\Hs) = d_\Hs$. We have that with probability at least $1-\delta$ over $S \sim \Ps^n$ and $F \sim \Qs^m$, every $\hat z$ that is an $(\alpha, 0)$-proxy with respect to the data set $(S, F)$ is also a $((\alpha + \epsilon) / \beta, \beta )$-proxy with respect to the underlying distributions $(\Ps, \Qs)$, provided that
\[
n = \tilde{\Omega} \left( \frac{ M^{2} \left( d_\Gs + \max \left\{ d_\Hs, \log \left( m \right) \right\}  + \log \left( K / \delta \right) \right)}{ \mu \mu_\Gs^2 \left( \min \left\{ \mu, \mu_\Gs \right\} \right)^2 \epsilon^2} \right),
 \ m = \tilde{\Omega} \left( \frac{ M^{2} \left( K d_\Gs \log \left( \left\vert \text{supp} (\Ps) \right\vert \right) + \log \left( 1 / \delta \right) \right) }{ \mu_\Gs^2 \left( \min \left\{ \mu, \mu_\Gs \right\} \right)^2 \epsilon^4} \right)
\]
where $\text{supp} (\Ps)$ is the support of $\Ps$, and $\mu$ and $\mu_\Gs$ are defined as in Theorem~\ref{thm:gen-x}.
\end{thm}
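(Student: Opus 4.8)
The plan is to reduce the statement to two uniform-convergence estimates---one over the individuals $S$ and one over the tasks $F$---glued together by a single application of Markov's inequality, which is what produces the characteristic $((\alpha+\epsilon)/\beta,\beta)$ form. For a fixed proxy $\hat z\in\Delta(\Gs)^K$ and a fixed labeling function $f$, let
\[
\phi(\hat z,f)=\max_{h\in\Hs,\,k\in[K]}\left\vert \frac{\E_{\Ps}\left[z_k\1[h(x)\neq f(x,z)]\right]}{\E_{\Ps}\left[z_k\right]}-\frac{\E_{\Ps}\left[\hat z_k(x)\1[h(x)\neq f(x,z)]\right]}{\E_{\Ps}\left[\hat z_k(x)\right]}\right\vert\in[0,1]
\]
denote the population violation, and let $\hat\phi(\hat z,f)$ be its empirical analog with $\Ps$ replaced by the uniform distribution on $S$. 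With this notation, ``$\hat z$ is an $(\alpha,0)$-proxy on $(S,F)$'' is exactly the statement that $\hat\phi(\hat z,f_j)\le\alpha$ for all $j\in[m]$, while ``$\hat z$ is a $((\alpha+\epsilon)/\beta,\beta)$-proxy on $(\Ps,\Qs)$'' is exactly $\Pr_{f\sim\Qs}[\phi(\hat z,f)>(\alpha+\epsilon)/\beta]\le\beta$. The goal is therefore to show, uniformly over $\hat z$, that a small \emph{maximum} empirical violation over the sample $F$ forces a small \emph{expected} population violation under $\Qs$.

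\textbf{Step A (over $S$).} First I would invoke (the proof of) Theorem~\ref{thm:gen-x} with the finite task set $F$ playing the role of $\F$. Since $|F|=m$, the Remark following Theorem~\ref{thm:gen-x} lets me replace $d_\F$ by $\log m$, yielding that, with probability $1-\delta/2$ over $S$, $|\hat\phi(\hat z,f)-\phi(\hat z,f)|\le\epsilon/2$ simultaneously for every $\hat z\in\Delta(\Gs)^K$ and every $f\in F$. As in that theorem, the argument proceeds by establishing uniform convergence of the four expectations appearing in the numerators and denominators (using $Pdim(\Gs)=d_\Gs$ for the $\hat z$-terms and $VCdim(\Hs)=d_\Hs$ together with a union bound over the $m$ functions for the indicator terms), and then converting closeness of numerators and denominators into closeness of ratios via the lower bounds $\mu$ and $\mu_\Gs$ on the denominators---this conversion is the source of the $\mu\mu_\Gs^2(\min\{\mu,\mu_\Gs\})^2$ factor in $n$. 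Consequently, if $\hat z$ is an $(\alpha,0)$-proxy on $(S,F)$ then $\phi(\hat z,f_j)\le\alpha+\epsilon/2$ for all $j$, hence $\frac1m\sum_{j=1}^m\phi(\hat z,f_j)\le\alpha+\epsilon/2$.

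\textbf{Step B (over $F$).} Next I would show that, with probability $1-\delta/2$ over $F\sim\Qs^m$, the empirical task-average is close to its expectation: $\bigl|\frac1m\sum_{j=1}^m\phi(\hat z,f_j)-\E_{f\sim\Qs}[\phi(\hat z,f)]\bigr|\le\epsilon/2$ uniformly over $\hat z\in\Delta(\Gs)^K$. The key observation is that $\phi(\hat z,\cdot)$ depends on $f$ only through the values of $f$ on $\text{supp}(\Ps)$ and on $\hat z$ only through the values of $\hat z$ on $\text{supp}(\Ps)$, the inner $\max$ over $h\in\Hs$ being a fixed operation once these restrictions are pinned down. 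Thus the class $\{f\mapsto\phi(\hat z,f):\hat z\in\Delta(\Gs)^K\}$ can be controlled by an $\ell_\infty$-cover of $\Gs$ restricted to $\text{supp}(\Ps)$; since $Pdim(\Gs)=d_\Gs$, such a cover has log-size $O(d_\Gs\log|\text{supp}(\Ps)|)$ per coordinate, with a factor $K$ for the $K$ components of $\hat z$. Combining a Hoeffding bound at each cover element with a union bound, and choosing the cover fine enough that the ratio-sensitivity (again amplified by $1/\mu_\Gs$ and $1/\min\{\mu,\mu_\Gs\}$) perturbs $\phi$ by at most $\epsilon$, yields the stated bound on $m$; both the absence of any $d_\Hs$ term and the extra factor of $\epsilon^{-2}$ relative to $n$ (i.e.\ the $\epsilon^4$) fall out of this covering/ratio bookkeeping, since $h$ is absorbed into the max and never enters the cover.

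\textbf{Step C (gluing and Markov).} On the intersection of the two good events (probability $\ge1-\delta$), Steps A and B give $\E_{f\sim\Qs}[\phi(\hat z,f)]\le\alpha+\epsilon$ for every $\hat z$ that is an $(\alpha,0)$-proxy on $(S,F)$. Since $\phi\ge0$, Markov's inequality then gives $\Pr_{f\sim\Qs}[\phi(\hat z,f)>(\alpha+\epsilon)/\beta]\le\beta(\alpha+\epsilon)/(\alpha+\epsilon)=\beta$, which is precisely the definition of a $((\alpha+\epsilon)/\beta,\beta)$-proxy with respect to $(\Ps,\Qs)$. The main obstacle is Step B: unlike Step A it does not reduce to a prior result, and the delicate points are arguing that the task-dependence of $\phi$ factors through $\text{supp}(\Ps)$ so that the complexity is carried entirely by a cover of $\Gs$ on the support (producing $Kd_\Gs\log|\text{supp}(\Ps)|$ with no $\Hs$- or $\F$-term), and tracking how the nonlinear ratio structure converts the target accuracy into the $\epsilon^4$ and $\mu$-dependence in the bound on $m$.
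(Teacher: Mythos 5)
Your proposal follows the same architecture as the paper's proof: your Step A is exactly the paper's first step (apply Theorem~\ref{thm:gen-x} with the finite sample $F$ in place of $\F$, replacing $d_\F$ by $\log m$ via the bound $VCdim(F)\le\log|F|$), and your Step C is exactly the Markov-inequality step inside the paper's key Lemma (Generalization over $\Qs$), which bounds $\Pr_{f\sim\Qs}\left[\exists h,k:\ g(f;\hat z,h,k)>\alpha'\right]$ by $(\alpha+\text{uniform-convergence error})/\alpha'$ and sets $\alpha'=(\alpha+\epsilon)/\beta$. Your Step B likewise mirrors the paper's plan of covering $\Gs$ restricted to $\text{supp}(\Ps)$, applying Hoeffding plus a union bound over the cover, and choosing the cover scale $O(\epsilon\mu_\Gs)$ so the ratio structure is stable; your explanation for why no $d_\Hs$ term appears in $m$ (the $\sup$ over $h$ sits inside the expectation over $f$ and never enters the cover) is also correct.

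However, Step B has a genuine gap. You assert that the class $\left\{f\mapsto\phi(\hat z,f):\hat z\in\Delta(\Gs)^K\right\}$ "can be controlled by an $\ell_\infty$-cover of $\Gs$" with log-size $O\left(Kd_\Gs\log\left\vert\text{supp}(\Ps)\right\vert\right)$. This is false as stated: an element of $\Delta(\Gs)^K$ acts through the averaged function $x\mapsto\E_{g\sim p_k}[g(x)]$, i.e.\ through the \emph{convex hull} of $\Gs$, and a cover of $\Gs$ does not cover its convex hull. Moreover, unlike in Theorem~\ref{thm:gen-x}, you cannot reduce the simplex to $\Gs^K$ by linearity of expectations, because $\phi$ is not linear in $\hat z_k$: it involves the ratio $\E\left[\hat z_k\1\left[h(x)\neq f(x,z)\right]\right]/\E\left[\hat z_k(x)\right]$ and an inner $\sup_{h,k}$ sitting inside the expectation over $f$. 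The paper closes exactly this gap with a Maurey-type sparsification step: every mixture is approximated pointwise on $\text{supp}(\Ps)$, to accuracy $O(\epsilon\mu_\Gs)$, by an $s$-sparse mixture with $s=O\left(1/(\epsilon\mu_\Gs)^2\right)$, and the union bound is then taken over the $\vert C\vert^{sK}$ sparse mixtures supported on the cover $C$, which multiplies the sample complexity by $s$. That factor of $s$ --- not the "covering/ratio bookkeeping" you cite --- is the actual source of the $\epsilon^{4}$ and the extra $\mu_\Gs^{2}$ in the bound on $m$; the ratio sensitivity enters only through the cover scale, hence only logarithmically, and the mechanism you describe would yield an $\epsilon^{2}$ bound that is valid only for deterministic proxies in $\Gs^K$, not for the class $\Delta(\Gs)^K$ over which the theorem (and the algorithm's output) requires the guarantee to hold.
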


\begin{rmk}
Our bounds contain the term $\mu_{\Gs}$ because they are algorithm independent uniform convergence bounds. The algorithms we give in this paper however always produce a $\hat{z}$ that satisfies $\E_{S} \left[ \hat{z}_k (x) \right] \ge (1/2)\E_{S} \left[ z_k \right]$ in sample. Together with standard arguments, this allows us to give generalization guarantees for our algorithms that remove the dependence on $\mu_\Gs$. Technically, this follows from applying our uniform convergence theorems to the class $\Gs^K(\mu) \equiv \{ \hat{z} \in \Gs^K: \forall k, \, \E_{(x,z) \sim \Ps} \left[ \hat{z}_k (x) \right] = \Omega (\mu) \}$.
\end{rmk}

\section{Experiments}
\label{sec:experiments}
In this section, we perform an empirical evaluation of our proxy training algorithms. Observe that our theorems are predicated on two assumptions that are either difficult to verify or else do not hold exactly in practice:
\begin{enumerate}
    \item Our class $\Gs$ contains a good proxy. We cannot verify this without demonstrating one, which we have to do by training a good proxy. 
    \item We have a cost sensitive classification oracle for $\Hs$ (and in the non-linear case, also for $\Gs$). In practice we do not have ``oracles'' and most learning problems are NP-complete, but we have good heuristics that we can use in place of our oracles.
\end{enumerate}

Our experiments are aimed at verifying the utility of our algorithms, even in the simple case in which we take $\Gs$ to be the set of linear regression functions and $\Hs$ to be the set of linear threshold functions.

\subsection{Methodology}

\label{sec:methodology_preiminaries}

\subsubsection{Weighted Binary Sample Transformation}
\label{sec:swt}

To use a real-valued proxy with standard downstream algorithms for fair machine learning algorithms, many of which assume sensitive features are binary or categorical, we transform a dataset with real-valued sensitive features (see \S\ref{sec:data}) into a dataset with twice as many samples, each of which is paired with a \textit{binary} group membership and sample weight (see Definition~\ref{def:swt} and Claim~\ref{clm:swt}). We note that many learning algorithms are already equipped to handle sample weights, so our transformed dataset fits nicely into existing methods. 

\subsubsection{Paired Regression Classifier}
\label{sec:prc}

For the oracle $CSC(\Hs)$, we experiment with the paired regression classifier (PRC) used in \cite{minimax2} and \cite{gerrymandering}. The PRC produces a \emph{linear threshold function}, just as logistic regression does -- see Definition~\ref{def:PRC}.

\subsubsection{Reductions algorithm for error parity}
\label{sec:reductions_accuracy_parity}

The reductions algorithm for error parity was introduced in \cite{agarwal2018reductions} as a method of producing randomized ensembles of classifiers that achieve high population accuracy while satisfying accuracy parity between sensitive groups. The algorithm takes a relaxation parameter $\gamma \in [0, 1]$ that specifies the maximum allowable difference in error between any two sensitive groups. We implemented the algorithm and augmented it to support arbitrary sample weights. See Definition~\ref{def:reductions_error_parity} for a precise specification. 

\subsection{Experimental Process}
\label{sec:exp_process}

We performed a variety of experiments on real, fairness-sensitive datasets, taking $\Gs$ to be the model class of linear regression functions and $\Hs$ to be the class of linear threshold functions. We compared the performance of a proxy trained with our algorithm (``$\Hs$-proxy'') to the performance achieved using the true sensitive features (``true labels''), a binary proxy in the form of a logistic regression model (``baseline proxy''), and a real-valued proxy in the form of a linear regression model (``mean-squared error (MSE) proxy''). We evaluated the performance of the models produced by the downstream learner for each type of proxy on both group error disparity (with respect to the true sensitive features) and overall population accuracy.

For the three types of proxies and the true labels we performed the following:
\begin{enumerate}
    \item Train the proxy.
    \item If the proxy is real-valued ($\Hs$- or MSE proxy), apply the Weighted Binary Sample Transformation to the dataset.
    \item Train a downstream learner using the reductions algorithm for accuracy parity (described in Section~\ref{sec:reductions_accuracy_parity}) to produce a relaxation curve of models over 10 values of $\gamma \in \{0, 0.005, 0.001, \dots, 0.045\}$.
\end{enumerate}

We plot the performance of the proxies with respect to population error and error disparity between the sensitive groups. The error disparity is plotted with respect to the \textit{true sensitive features}, even for the downstream models that only have access to the proxy features during training.

\subsection{Implementation}
\label{sec:implementation}

We implemented a generic and slightly simplified version of our proxy training algorithm in PyTorch that (approximately) solves our constrained optimization problem via gradient descent. Our implementation leverages auto-differentiation to avoid solving for gradients explicitly in closed form and allows us to select $\Gs$ as an arbitrary architecture multi-layer perceptron (MLP). In particular, this permits $\Gs$ as the class of linear models--which is equivalent to the class of one-layer perceptrons--but also enables the use of our algorithm to train more complex proxies. We also support an arbitrary downstream learner class $\Hs$,  given an algorithm for training regression models in $\Hs$. \footnote{See the Section~\ref{sec:appendix_implementation} of the appendix for a full report of the implementation details and hyperparameter selection.}

\subsection{Data}
\label{sec:data}

We primarily relied on the recently published American Community Survey (ACS) datasets and tasks from~\cite{DBLP:journals/corr/abs-2108-04884}. Rather than looking at the entire United States, we focused our analysis on data from New York state, which we found was sufficiently large to admit excellent out-of-sample generalization. For each task, we examined three sensitive features: sex, age, and race. Because we focus on binary sensitive attributes, we transformed age and race into binary features; we thresholded age at 40 years, and we treated race as a white/non-white binary indicator. All downstream models were trained to make predictions without having access to \textit{any} of the three sensitive features, regardless of which feature the model enforced fairness with respect to. Categorical features were converted into one-hot encoded vectors and the dimensionality $d$ was computed before one-hot encoding. The table below summarizes the tasks used in our experiments. Full details about each prediction task are specified in the appendix in Section~\ref{sec:appendix_datasets}.
 
\begin{table}[ht]
\centering
\begin{tabular}{|l|c|c|l|l|l|}
\hline
Dataset   & \multicolumn{1}{l|}{Sample Count} & \multicolumn{1}{l|}{$d$} & Label                                                                       & Sensitive Feature(s)     \\  
\hline

ACSEmployment & 196104 & 12 & Employment & Race, sex, age \\
\hline
ACSIncome & 101270 & 4 & Income > \$50K & Race, sex, age \\
\hline
ACSIncomePovertyRatio & 196104 & 15 & Income-Poverty Ratio < 250\% & Race, sex, age \\
\hline
ACSMobility & 39828 & 17 & Same address one year ago & Race, sex, age \\
\hline
ACSPublicCoverage & 71379 & 15 & Health Insurance & Race, sex, age \\
\hline
ACSTravelTime & 89145 & 8 & Commute > 20 minutes& Race, sex, age \\
\hline
\end{tabular}
\label{tab:data}
\end{table}

\subsection{Results Overview}

\begin{itemize}

  \item A linear proxy trained with our algorithm often serves as an excellent substitute for the true sensitive features and enables us to train downstream models that attain high population accuracy while enforcing fairness constraints with respect to the true sensitive features. This performance is robust to relaxations in fairness constraints of the downstream learner.
  
  \item Models trained on $\Hs$-proxies almost never performed worse than those trained on a naive baseline--which for linear models is a logistic regression trained to predict the binary sensitive feature of each instance--and often performed far better. 
   
  \item Nearly all downstream models we experimented with generalized extremely well out-of-sample with respect to both fairness and accuracy. In fact, for most experiments the in-sample and out-of-sample plots appear nearly identical in terms of both the shape of the tradeoff curves and the values they span. This generalization performance can likely be attributed to the choice to use linear-complexity $\Gs$ and $\Hs$ rather than more complex classes. Further experimentation is necessary before assuming good out-of-sample generalization for non-linear proxies or downstream learners. 
  
  \item On some tasks, the $\Hs$-proxy failed to serve as a good substitute for the true sensitive features, resulting in a downstream model that violated the intended fairness constraints. However, in each of our experiments where this occurred, the failure of the proxy could be detected at training time. We address these failures in more detail in the appendix (Section~\ref{sec:addressing_failures}). 
    
   \item Often, the MSE proxy--which is a simple linear regression model without the additional multi-accuracy constraints--serves as a performant proxy. This empirical finding justifies the use of the MSE in the objective function of our constrained optimization problem as a heuristic for finding good solutions. We also find that the real-valued MSE proxy often out-performs the naive binary-valued baseline. This is predicted by our Claim \ref{clm:existence}, which proves that this will work whenever the conditional distribution on the protected feature can be well approximated by a linear function. 

\end{itemize}

\subsection{Plots}
\label{sec:plots}

In this section we will analyze the experimental results on the ACSIncome dataset on the three sensitive features race, age, and sex. The ACSIncome dataset is an improved version of the popular Adult dataset, making these experiments more easily contextualized with those in the existing literature compared to the other ACS tasks. Moreover, we found these three tasks had diverse results that demonstrated the capability of our proxy algorithm while also revealing some of its shortcomings in practice. The remaining experiments and plots, including those on which the proxy failed more dramatically, can be found in the appendix in Section~\ref{sec:appendix_plots}. \footnote{Since our downstream models are randomized ensembles, we report all statistics \textit{in expectation} over these ensembles. This means that we can create linear combinations of models to trace a Pareto frontier, which is plotted as a dotted line.}

\subsubsection{ACS-Income-Race}
Fig.~\ref{fig:acs_income_race} displays nearly ideal results supporting the theory. In sample, downstream models trained on true sensitive features exhibit the best tradeoff curve, followed by the $\Hs$-proxy, and then the MSE proxy, all three of which exhibit a sensible tradeoff between error disparity and population error. The $\Hs$-proxy induces a tradeoff curve with similar shape to that of the true labels, but with ${\sim}$0.003 greater error disparity and ${\sim}$0.01 greater error. The curve of the MSE proxy is similar in shape to that of the $\Hs$-proxy but with accuracy ${\sim}0.002$ less than the $\Hs$-proxy. The least disparate model induced by the MSE proxy is equally accurate to that of the $\Hs$-proxy, but marginally less disparate. The baseline proxy exhibits the worst downstream performance, with a clustered tradeoff curve that is Pareto dominated by models from all other proxies. For \textit{any} model trained with the baseline proxy, we can improve accuracy by more than 0.01 without increasing disparity by switching to some model trained with our $\Hs$-proxy. 

Out-of-sample behavior of all models is quite similar to in-sample, though the maximum disparity for all curves decreases from  ${\sim}0.025$ to ${\sim}0.02$ and the least disparate models trained on the true labels increase disparity from  ${\sim}0$ to ${\sim}0.0025$. 
 
\begin{figure*}[h]
    \centering
    \includegraphics[width=0.7\textwidth]{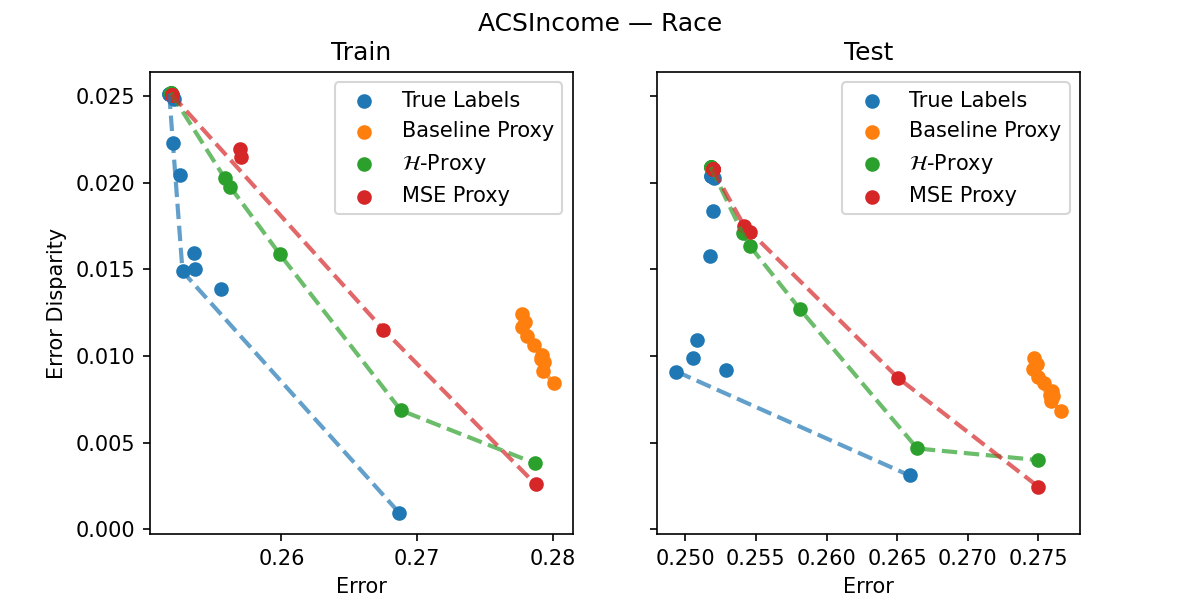}
    \captionsetup[subfigure]{labelformat=empty, belowskip=0pt}
    \caption{Proxy results on the ACSIncome dataset with race as sensitive feature}
    \label{fig:acs_income_race}
\end{figure*}

\subsubsection{ACS-Income-Age}
In Fig.~\ref{fig:acs_income_age} we observe that the downstream performance of models trained on the $\Hs$-proxy are nearly identical to that of the models trained with the true sensitive features, indicating success for our proxy algorithm. Models trained on both of these proxies exhibit a clean tradeoff between error and fairness; the least disparate models achieve error disparity near 0 and population error slightly under 0.29, and the most disparate models accept error disparity ${\sim}$0.05 to achieve population error near 0.25. The baseline and MSE proxies induce similar tradeoffs but are unable to induce downstream models with error disparity lower than  ${\sim}$0.01. This indicates success of the $\Hs$-proxy's multi-accuracy constraints in enforcing downstream accuracy parity -- its least disparate model achieves error disparity near 0. Out of sample, the performance of each model is nearly identical to its performance in sample. 

\begin{figure*}[h]
    \centering
    \includegraphics[width=0.7\textwidth]{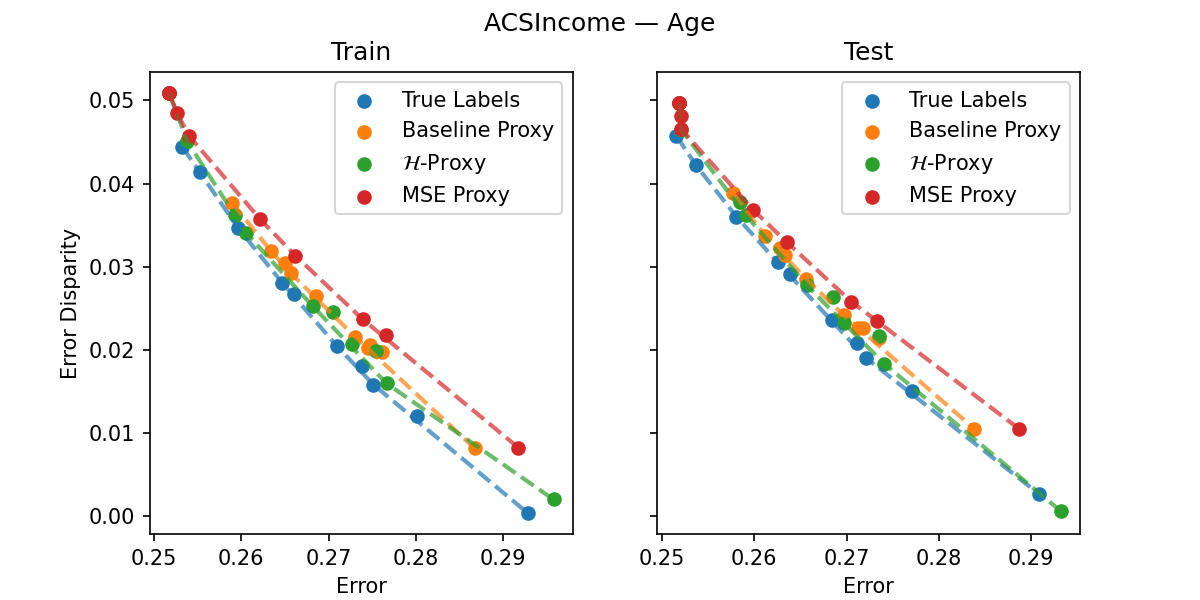}
        \captionsetup[subfigure]{labelformat=empty, belowskip=0pt}
    \caption{Proxy results on the ACSIncome dataset with age as sensitive feature}
    \label{fig:acs_income_age}
\end{figure*}

\subsubsection{ACS-Income-Sex}
In Fig.~\ref{fig:acs_income_sex} we observe that the downstream performance of the models trained on the $\Hs$-proxy is quite close to that of the the true labels, although the minimum disparity it achieves is greater by about 0.007. This indicates that the $\Hs$-proxy is not a perfect substitute for the true labels, although it is still quite good. As we relax fairness constraints, this disparity in accuracy gap between the models trained on the $\Hs$-proxy and those trained on the true labels shrinks, and the left endpoints of both curves are nearly identical in terms of error and disparity. Of the three proxies, the baseline proxy achieves the lowest error disparity of ${\sim}$0.0025 on the training data, but it comes at the cost of significantly lower accuracy. At the same levels of disparity, the model trained on the $\Hs$-proxy is more accurate than the model trained on the baseline by ${\sim}$0.01, and for error disparity values greater than ${\sim}$0.0075, the $\Hs$-proxy Pareto dominates the baseline in terms of both accuracy and  accuracy gap. Generalization is quite good for all models: the shape of each curve and the range of values spanned is consistent in and out of sample, although nearly all models are slightly more disparate out-of-sample. \footnote{The one exception to this is the rightmost point on the curve of models corresponding to the baseline proxy, which achieves slightly lower error disparity out-of-sample. However, given that the other models trained on the baseline proxy have worse out-of-sample performance than in-sample, we suspect this empirical improvement is simply noise that, by luck, worked in the proxy's favor.}

\begin{figure*}[h]
    \centering
    \includegraphics[width=0.7\textwidth]{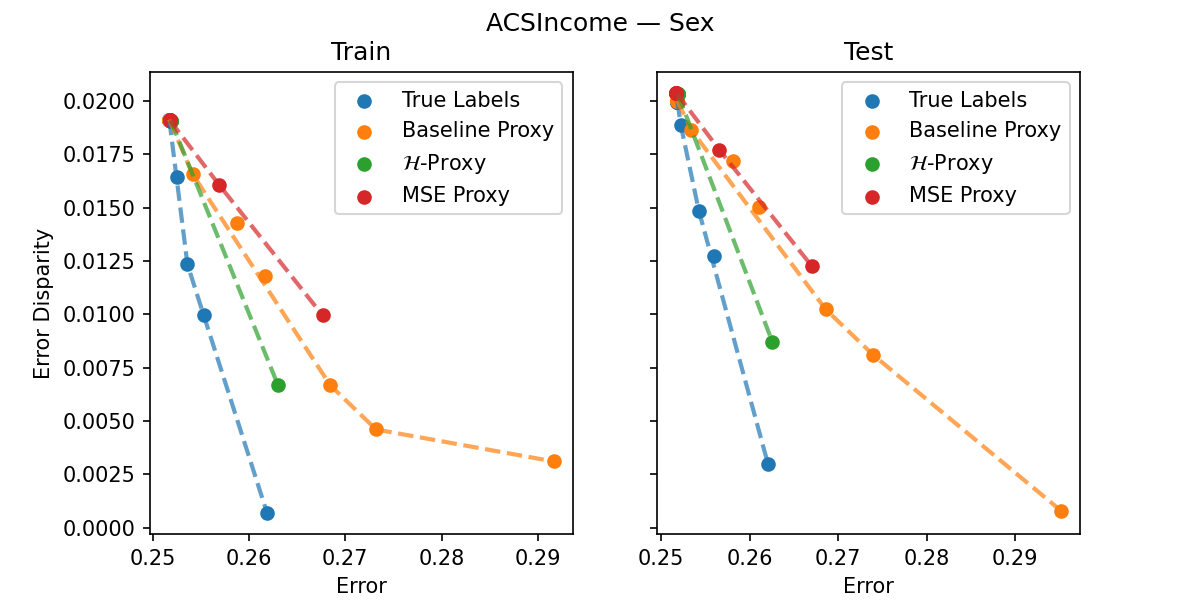}
        \captionsetup[subfigure]{labelformat=empty, belowskip=0pt}
    \caption{Proxy results on the ACSIncome dataset with sex as sensitive feature}
    \label{fig:acs_income_sex}
\end{figure*}

\section{Conclusion and Discussion}
We have shown that it is possible to efficiently train \textit{proxies} that can stand in for missing sensitive features to effectively train downstream classifiers subject to a variety of demographic fairness constraints. We caution however that proxies --- even when well trained --- should continue to be viewed as a second best solution, to be used only when sensitive features are impossible to collect. Our theoretical and empirical results demonstrate that proxies trained using our methods can stand in as near perfect substitutes for sensitive features in downstream training tasks, but these results crucially depend on the assumption that the data that the Proxy Learner uses to train its proxy is distributed identically to the data that the Downstream Learner uses, and has labels from the same problem distribution. In real applications, either of these assumptions can fail (or can become false due to distribution shift, even if they are true at the moment that the proxy is trained). A risk of relying on proxies is that the Learner might be blind to these failures. Without other guardrails, proxies could also be used to explicitly engage in discrimination, and so should be used only in the context of enforcing and auditing fairness constraints.

\bibliographystyle{plainnat}
\bibliography{main_arxiv_facct}

\begin{thebibliography}{27}
\providecommand{\natexlab}[1]{#1}
\providecommand{\url}[1]{\texttt{#1}}
\expandafter\ifx\csname urlstyle\endcsname\relax
  \providecommand{\doi}[1]{doi: #1}\else
  \providecommand{\doi}{doi: \begingroup \urlstyle{rm}\Url}\fi

\bibitem[Agarwal et~al.(2018)Agarwal, Beygelzimer, Dudík, Langford, and
  Wallach]{agarwal2018reductions}
Alekh Agarwal, Alina Beygelzimer, Miroslav Dudík, John Langford, and Hanna
  Wallach.
\newblock A reductions approach to fair classification.
\newblock In \emph{Proceedings of the 35th International Conference on Machine
  Learning}, Stockholm, Sweden, 2018. ICML 2018.

\bibitem[Awasthi et~al.(2020)Awasthi, Kleindessner, and Morgenstern]{awasthi1}
Pranjal Awasthi, Matth{\"a}us Kleindessner, and Jamie Morgenstern.
\newblock Equalized odds postprocessing under imperfect group information.
\newblock In \emph{International Conference on Artificial Intelligence and
  Statistics}, pages 1770--1780. PMLR, 2020.

\bibitem[Awasthi et~al.(2021)Awasthi, Beutel, Kleindessner, Morgenstern, and
  Wang]{awasthi2}
Pranjal Awasthi, Alex Beutel, Matth{\"a}us Kleindessner, Jamie Morgenstern, and
  Xuezhi Wang.
\newblock Evaluating fairness of machine learning models under uncertain and
  incomplete information.
\newblock In \emph{Proceedings of the 2021 ACM Conference on Fairness,
  Accountability, and Transparency}, pages 206--214, 2021.

\bibitem[Bower et~al.(2017)Bower, Kitchen, Niss, Strauss, Vargas, and
  Venkatasubramanian]{pipeline1}
Amanda Bower, Sarah~N Kitchen, Laura Niss, Martin~J Strauss, Alexander Vargas,
  and Suresh Venkatasubramanian.
\newblock Fair pipelines.
\newblock \emph{arXiv preprint arXiv:1707.00391}, 2017.

\bibitem[Chen et~al.(2019)Chen, Kallus, Mao, Svacha, and Udell]{Kallus1}
Jiahao Chen, Nathan Kallus, Xiaojie Mao, Geoffry Svacha, and Madeleine Udell.
\newblock Fairness under unawareness: Assessing disparity when protected class
  is unobserved.
\newblock In \emph{Proceedings of the conference on fairness, accountability,
  and transparency}, pages 339--348, 2019.

\bibitem[Diana et~al.(2021)Diana, Gill, Kearns, Kenthapadi, and Roth]{minimax2}
Emily Diana, Wesley Gill, Michael Kearns, Krishnaram Kenthapadi, and Aaron
  Roth.
\newblock Minimax group fairness: Algorithms and experiments.
\newblock \emph{In \textit{AAAI/ACM Conference on Artificial Intelligence,
  Ethics and Society}}, 2021.

\bibitem[Ding et~al.(2021)Ding, Hardt, Miller, and
  Schmidt]{DBLP:journals/corr/abs-2108-04884}
Frances Ding, Moritz Hardt, John Miller, and Ludwig Schmidt.
\newblock Retiring adult: New datasets for fair machine learning.
\newblock \emph{CoRR}, abs/2108.04884, 2021.
\newblock URL \url{https://arxiv.org/abs/2108.04884}.

\bibitem[Duan(2011)]{composition}
Hubert~Haoyang Duan.
\newblock Bounding the fat shattering dimension of a composition function class
  built using a continuous logic connective.
\newblock \emph{arXiv preprint arXiv:1105.4618}, 2011.

\bibitem[Dwork et~al.(2020)Dwork, Ilvento, and Jagadeesan]{pipeline2}
Cynthia Dwork, Christina Ilvento, and Meena Jagadeesan.
\newblock Individual fairness in pipelines.
\newblock In \emph{1st Symposium on Foundations of Responsible Computing (FORC
  2020)}. Schloss Dagstuhl-Leibniz-Zentrum f{\"u}r Informatik, 2020.

\bibitem[Elliott et~al.(2009)Elliott, Morrison, Fremont, McCaffrey, Pantoja,
  and Lurie]{surname}
Marc~N Elliott, Peter~A Morrison, Allen Fremont, Daniel~F McCaffrey, Philip
  Pantoja, and Nicole Lurie.
\newblock Using the census bureau’s surname list to improve estimates of
  race/ethnicity and associated disparities.
\newblock \emph{Health Services and Outcomes Research Methodology}, 9\penalty0
  (2):\penalty0 69--83, 2009.

\bibitem[Freund and Schapire(1996)]{Freund}
Yoav Freund and Robert~E. Schapire.
\newblock Game theory, on-line prediction and boosting.
\newblock In \emph{Proceedings of the Ninth Annual Conference on Computational
  Learning Theory}, 1996.

\bibitem[Haussler(1992)]{haussler}
David Haussler.
\newblock Decision theoretic generalizations of the pac model for neural net
  and other learning applications.
\newblock \emph{Information and Computation}, 100\penalty0 (1):\penalty0
  78--150, 1992.
\newblock ISSN 0890-5401.
\newblock \doi{https://doi.org/10.1016/0890-5401(92)90010-D}.
\newblock URL
  \url{https://www.sciencedirect.com/science/article/pii/089054019290010D}.

\bibitem[H{\'e}bert-Johnson et~al.(2018)H{\'e}bert-Johnson, Kim, Reingold, and
  Rothblum]{multiaccuracy}
Ursula H{\'e}bert-Johnson, Michael Kim, Omer Reingold, and Guy Rothblum.
\newblock Multicalibration: Calibration for the (computationally-identifiable)
  masses.
\newblock In \emph{International Conference on Machine Learning}, pages
  1939--1948. PMLR, 2018.

\bibitem[Jagielski et~al.(2019)Jagielski, Kearns, Mao, Oprea, Roth,
  Sharifi-Malvajerdi, and Ullman]{privatefair}
Matthew Jagielski, Michael Kearns, Jieming Mao, Alina Oprea, Aaron Roth, Saeed
  Sharifi-Malvajerdi, and Jonathan Ullman.
\newblock Differentially private fair learning.
\newblock In \emph{International Conference on Machine Learning}, pages
  3000--3008. PMLR, 2019.

\bibitem[Jung et~al.(2021)Jung, Lee, Pai, Roth, and Vohra]{momentmulti}
Christopher Jung, Changhwa Lee, Mallesh~M Pai, Aaron Roth, and Rakesh Vohra.
\newblock Moment multicalibration for uncertainty estimation.
\newblock In \emph{Conference on Learning Theory}. PMLR, 2021.

\bibitem[Kalai and Vempala(2005)]{kalai}
Adam Kalai and Santosh Vempala.
\newblock Efficient algorithms for online decision problems.
\newblock \emph{Journal of Computer and System Sciences}, 71\penalty0
  (3):\penalty0 291--307, 2005.
\newblock ISSN 0022-0000.
\newblock \doi{https://doi.org/10.1016/j.jcss.2004.10.016}.
\newblock URL
  \url{https://www.sciencedirect.com/science/article/pii/S0022000004001394}.
\newblock Learning Theory 2003.

\bibitem[Kannan et~al.(2019)Kannan, Roth, and Ziani]{pipeline3}
Sampath Kannan, Aaron Roth, and Juba Ziani.
\newblock Downstream effects of affirmative action.
\newblock In \emph{Proceedings of the Conference on Fairness, Accountability,
  and Transparency}, pages 240--248, 2019.

\bibitem[Kearns et~al.(2018)Kearns, Neel, Roth, and Wu]{gerrymandering}
Michael Kearns, Seth Neel, Aaron Roth, and Zhiwei~Steven Wu.
\newblock Preventing fairness gerrymandering: Auditing and learning for
  subgroup fairness.
\newblock In \emph{International Conference on Machine Learning}, pages
  2564--2572. PMLR, 2018.

\bibitem[Kilbertus et~al.(2018)Kilbertus, Gasc{\'o}n, Kusner, Veale, Gummadi,
  and Weller]{crypto2}
Niki Kilbertus, Adri{\`a} Gasc{\'o}n, Matt Kusner, Michael Veale, Krishna
  Gummadi, and Adrian Weller.
\newblock Blind justice: Fairness with encrypted sensitive attributes.
\newblock In \emph{International Conference on Machine Learning}, pages
  2630--2639. PMLR, 2018.

\bibitem[Kim et~al.(2019)Kim, Ghorbani, and Zou]{multiaccuracy2}
Michael~P Kim, Amirata Ghorbani, and James Zou.
\newblock Multiaccuracy: Black-box post-processing for fairness in
  classification.
\newblock In \emph{Proceedings of the 2019 AAAI/ACM Conference on AI, Ethics,
  and Society}, pages 247--254, 2019.

\bibitem[Lahoti et~al.(2020)Lahoti, Beutel, Chen, Lee, Prost, Thain, Wang, and
  Chi]{lahoti}
Preethi Lahoti, Alex Beutel, Jilin Chen, Kang Lee, Flavien Prost, Nithum Thain,
  Xuezhi Wang, and Ed~H Chi.
\newblock Fairness without demographics through adversarially reweighted
  learning.
\newblock \emph{arXiv preprint arXiv:2006.13114}, 2020.

\bibitem[Mehrotra and Celis(2021)]{subset}
Anay Mehrotra and L~Elisa Celis.
\newblock Mitigating bias in set selection with noisy protected attributes.
\newblock In \emph{Proceedings of the 2021 ACM Conference on Fairness,
  Accountability, and Transparency}, pages 237--248, 2021.

\bibitem[Veale and Binns(2017)]{crypto1}
Michael Veale and Reuben Binns.
\newblock Fairer machine learning in the real world: Mitigating discrimination
  without collecting sensitive data.
\newblock \emph{Big Data \& Society}, 4\penalty0 (2):\penalty0
  2053951717743530, 2017.

\bibitem[Voicu(2018)]{firstname}
Ioan Voicu.
\newblock Using first name information to improve race and ethnicity
  classification.
\newblock \emph{Statistics and Public Policy}, 5\penalty0 (1):\penalty0 1--13,
  2018.

\bibitem[Wang et~al.(2020)Wang, Guo, Narasimhan, Cotter, Gupta, and
  Jordan]{wang2020robust}
Serena Wang, Wenshuo Guo, Harikrishna Narasimhan, Andrew Cotter, Maya Gupta,
  and Michael~I Jordan.
\newblock Robust optimization for fairness with noisy protected groups.
\newblock \emph{arXiv preprint arXiv:2002.09343}, 2020.

\bibitem[Zhang(2018)]{lending}
Yan Zhang.
\newblock Assessing fair lending risks using race/ethnicity proxies.
\newblock \emph{Management Science}, 64\penalty0 (1):\penalty0 178--197, 2018.

\bibitem[Zinkevich(2003)]{zinkevich}
Martin Zinkevich.
\newblock Online convex programming and generalized infinitesimal gradient
  ascent.
\newblock In \emph{Proceedings of the Twentieth International Conference on
  Machine Learning}. Washington, DC, 2003.

\end{thebibliography}

%%%%%%%%%%%%%%%%%%%%%%%%%%%%%%%%%%%%%%%%%%%%%%%%%%%%%%%%%%%%

\newpage
\appendix
\section{A More General Framework}

In this section we first define a general family of fairness notions that captures equalized error fairness (discussed in the body of the paper), equalized false positive (negative) rate fairness, and statistical parity, to name a few. We will then give a definition of a proxy with respect to this general family of fairness notions. All our techniques used to derive algorithms and generalization guarantees for the specific case of equalized error fairness discussed in the body of paper can be extended to work for any other fairness notion that meets our definition in this section.

\begin{definition}[A General Family of Fairness Notions]\label{def:fair-notion}
We say a fairness notion is defined by $(E_1, E_2)$ where $E_1: \Hs \to 2^{\X \times \Y}$ and $E_2 \subseteq \X \times \Y$, if it has the following form: $h \in \Hs$ is fair if for all pairs of groups $k_1, k_2 \in [K]$,
\[
\Pr \left[ (x,y) \in E_1 (h) \, \vert \, (x,y) \in E_2, z_{k_1} = 1 \right] \approx \Pr \left[ (x,y) \in E_1 (h) \, \vert \, (x,y) \in E_2, z_{k_2} = 1 \right]
\]
\end{definition}

This is a very general class of fairness constraints that captures (for example) statistical parity (by taking $E_1(h) = \{ (x,y): h(x) = 1\}, E_2 = \X \times \Y$), equalized errors (by taking $E_1(h) = \{ (x,y): h(x) \neq y\}, E_2 = \X \times \Y$), and equalized false positive rates (by taking $E_1(h) = \{ (x,y): h(x) = 1\}, E_2 = \{ (x,y): y = 0 \}$), and many others. We first make the following simple, yet important, observation that will allow us to come up with a well-defined notion of a proxy.
\begin{clm}
Given a fairness notion defined by $(E_1, E_2)$, for every $k \in [K]$, we have
\begin{equation}\label{eq:observation0}
    \Pr \left[ (x,y) \in E_1 (h) \, \vert \,(x,y) \in E_2, z_k =1 \right] = \frac{\E \left[ z_k \1 \left[ (x,y) \in E_1 (h) \cap E_2 \right] \right]}{\E \left[  z_k \1 \left[ (x,y) \in E_2 \right] \right]}
\end{equation}
\end{clm}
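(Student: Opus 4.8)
The plan is to mirror the proof of the earlier Claim~\ref{clm:existence}'s companion observation (Equation~\eqref{eq:observation}), since the present statement is simply its generalization from the specific error event $\{h(x) \neq y\}$ and trivial conditioning event $\X \times \Y$ to arbitrary $(E_1(h), E_2)$. First I would unfold the definition of conditional probability, treating the compound conditioning event as the intersection of the two events $\{(x,y) \in E_2\}$ and $\{z_k = 1\}$:
\[
\Pr \left[ (x,y) \in E_1(h) \,\middle\vert\, (x,y) \in E_2,\, z_k = 1 \right] = \frac{\Pr \left[ (x,y) \in E_1(h),\, (x,y) \in E_2,\, z_k = 1 \right]}{\Pr \left[ (x,y) \in E_2,\, z_k = 1 \right]}.
\]

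Next I would rewrite each probability as the expectation of the corresponding indicator, which gives a numerator of $\E\left[ \1[z_k = 1]\, \1[(x,y) \in E_1(h)]\, \1[(x,y) \in E_2] \right]$ and a denominator of $\E\left[ \1[z_k = 1]\, \1[(x,y) \in E_2] \right]$. I would then apply two elementary simplifications. Since $z_k \in \{0,1\}$ is binary-valued, we have $\1[z_k = 1] = z_k$, exactly as in the body's Claim. And since indicators of events multiply to the indicator of their intersection, $\1[(x,y) \in E_1(h)]\, \1[(x,y) \in E_2] = \1[(x,y) \in E_1(h) \cap E_2]$ in the numerator. Substituting these yields precisely the right-hand side of Equation~\eqref{eq:observation0}.

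I do not anticipate any real obstacle here: the statement is a definitional identity, and the only points requiring minor care are bookkeeping ones, namely correctly collapsing the product of the two conditioning indicators in the denominator and the product of all three indicators in the numerator, together with the observation that $E_2$ appears on both sides so no assumption on $E_2$ (e.g.\ its dependence or independence from $h$) is needed. The identity holds pointwise over any joint distribution on $(x,z,y)$, so no probabilistic or independence assumptions enter --- in particular, unlike Claim~\ref{clm:existence}, this observation requires no conditional independence of $z$ and $y$ given $x$. I would close by remarking that specializing $E_1(h) = \{(x,y): h(x) \neq y\}$ and $E_2 = \X \times \Y$ recovers Equation~\eqref{eq:observation}, confirming consistency with the equalized-error case treated in the body.
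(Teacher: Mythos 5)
Your proposal is correct and follows essentially the same route as the paper's proof: unfold the conditional probability with the compound conditioning event, rewrite the probabilities as expectations of indicators, and use that $\1[z_k=1]=z_k$ for binary $z_k$, with the intersection of $E_1(h)$ and $E_2$ absorbed into a single indicator. The only difference is cosmetic --- the paper writes the joint probability directly over $E_1(h)\cap E_2$ rather than first keeping the two event indicators separate and then multiplying them together.
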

\begin{proof}
We have
\begin{align*}
    \Pr \left[ (x,y) \in E_1 (h) \, \vert \,(x,y) \in E_2, z_k =1 \right] &= \frac{\Pr \left[ z_k = 1, (x,y) \in E_1 (h) \cap E_2 \right]}{\Pr \left[ z_k =1,  (x,y) \in E_2\right]} \\
    & = \frac{\E \left[ \1 \left[ z_k = 1 \right] \1 \left[ (x,y) \in E_1 (h) \cap E_2 \right]\right] }{\E \left[ \1 \left[ z_k = 1 \right] 1 \left[ (x,y) \in E_2 \right] \right]} \\
    &= \frac{\E \left[ z_k \1 \left[ (x,y) \in E_1 (h) \cap E_2 \right] \right]}{\E \left[  z_k \1 \left[ (x,y) \in E_2 \right] \right]}
\end{align*}
\end{proof}

Observe that the expression on the right hand side of Equation~\eqref{eq:observation0} could be evaluated even if the demographic labels $z$ were real valued rather than binary valued. We exploit this to be able to evaluate these fairness constraints with our real valued proxies $\hat z$. Observe that if we have a proxy $\hat{z}$, such that for a particular classifier $h \in \Hs$:
\begin{equation}\label{eq:condition}
\forall k \in [K]: \quad
\frac{\E \left[ z_k \1 \left[ (x,y) \in E_1 (h) \cap E_2 \right] \right]}{\E \left[  z_k \1 \left[ (x,y) \in E_2 \right] \right]} \approx \frac{\E \left[ \hat{z}_k (x) \1 \left[ (x,y) \in E_1 (h) \cap E_2 \right] \right]}{\E \left[  \hat{z}_k (x) \1 \left[ (x,y) \in E_2 \right] \right]}
\end{equation}
Then if $h$ satisfies proxy fairness constraints defined by the proxy $\hat{z}$, i.e., constraints of the form:
\begin{equation}\label{eq:proxy_constraints}
\forall k_1, k_2 \in [K]: \quad
\frac{\E \left[ \hat{z}_{k_1} (x) \1 \left[ (x,y) \in E_1 (h) \cap E_2 \right] \right]}{\E \left[  \hat{z}_{k_1} (x) \1 \left[ (x,y) \in E_2 \right] \right]} \approx \frac{\E \left[ \hat{z}_{k_2} (x) \1 \left[ (x,y) \in E_1 (h) \cap E_2 \right] \right]}{\E \left[  \hat{z}_{k_2} (x) \1 \left[ (x,y) \in E_2 \right] \right]}
\end{equation}
it will also satisfy the original fairness constraints with respect to the real demographic groups $z$, i.e., it will satisfy the constraints of Definition~\ref{def:fair-notion}. If the condition of Equation~\eqref{eq:condition} is satisfied for \emph{every} $h \in \Hs$, then the proxy fairness constraints (Equation~\eqref{eq:proxy_constraints}) can without loss be used to \emph{optimize} over all fair classifiers in $\Hs$. With this idea in mind, we can formally define a proxy. We will consider two different settings: 1) when there exists a distribution over $\F$ and we want our guarantee to hold with high probability over the draw of $f$ from this distribution 2) when we want our guarantee to hold for every $f \in \F$.

\begin{definition}[Proxy]\label{def:proxy}
Fix a distribution $\Ps$ over the $(\X \times \Z)$ domain, a distribution $\Qs$ over $\F$, and a fairness notion defined by $(E_1, E_2)$. We say $\hat{z}$ is an $(\alpha, \beta)$-proxy for $z$ with respect to $(E_1, E_2 , \Ps, \Qs)$, if with probability $1-\beta$ over the draw of $f \sim \Qs$: for all classifiers $h \in \Hs$, and all groups $k \in [K]$,
\[
\left\vert \frac{\E_\Ps \left[ z_k \1 \left[ (x,f(x,z)) \in E_1 (h) \cap E_2 \right] \right]}{\E_\Ps \left[  z_k \1 \left[ (x,f(x,z)) \in E_2 \right] \right]} - \frac{\E_\Ps \left[ \hat{z}_k (x) \1 \left[ (x,f(x,z)) \in E_1 (h) \cap E_2 \right] \right]}{\E_\Ps \left[  \hat{z}_k (x) \1 \left[ (x,f(x,z)) \in E_2 \right] \right]} \right\vert \le \alpha
\]
If the above condition holds for every $f \in \F$, we say $\hat{z}$ is an $\alpha$-proxy with respect to $(E_1, E_2, \Ps)$.
\end{definition}

In the body of the paper, we showed the existence of $0$-proxies for the case of equalized error fairness, under the assumption that $z$ and $y$ are independent conditioned on $x$. Similar derivations apply for other notions of fairness like equalized false positive and negative rates. Here we show the existence of a proxy for statistical parity fairness, which requires no assumptions at all on the data distribution.

\begin{clm}[Existence of a Proxy]
For statistical parity fairness: $E_1(h) = \{ (x,y) : h(x) = 1 \}$ and $E_2 = \X \times \Y$, and for any distribution $\Ps$ over $\X \times \Z$, $\hat{z}(x) = \E \left[z \,| \, x \right]$ is an $\alpha$-proxy with respect to $(E_1, E_2, \Ps)$, for $\alpha = 0$.
\end{clm}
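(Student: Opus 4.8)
The plan is to verify the proxy definition (Definition~\ref{def:proxy}) directly with $\alpha = 0$, by showing that for statistical parity the two fractions appearing in that definition coincide exactly when $\hat{z}(x) = \E[z \mid x]$. The structural feature I would exploit is that here $E_2 = \X \times \Y$ and $E_1(h) = \{(x,y): h(x)=1\}$, so the membership indicators collapse to $\1[(x,f(x,z)) \in E_2] = 1$ and $\1[(x,f(x,z)) \in E_1(h) \cap E_2] = \1[h(x)=1]$. Crucially, neither indicator depends on the label $f(x,z)$ --- only on $x$ --- which is exactly why, in contrast to Claim~\ref{clm:existence}, no conditional independence assumption between $z$ and $y$ is needed. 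Throughout I assume each group has positive probability so that the denominators are nonzero and the fractions are well-defined.

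First I would handle the denominators. Since $\hat{z}_k(x) = \E[z_k \mid x]$ is a function of $x$ alone, the tower property gives $\E_\Ps[\hat{z}_k(x)] = \E_x[\E[z_k \mid x]] = \E_\Ps[z_k]$, so the two denominators agree for every $k \in [K]$ (and are in particular nonzero). Next I would handle the numerators: because $\1[h(x)=1]$ is a function of $x$, I can move it inside the inner conditional expectation, obtaining
\[
\E_\Ps[\hat{z}_k(x)\,\1[h(x)=1]] = \E_x[\E[z_k \mid x]\,\1[h(x)=1]] = \E_x[\E[z_k \,\1[h(x)=1] \mid x]] = \E_\Ps[z_k \,\1[h(x)=1]].
\]
Applying this for every $h \in \Hs$ and every $k \in [K]$ shows the two numerators agree as well. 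Combining the numerator and denominator equalities, the two fractions are identical, so their difference is $0 \le \alpha$ with $\alpha = 0$. Since the argument never referenced the particular $f \in \F$ (the label dropped out of both indicators entirely), the bound holds simultaneously for every $f \in \F$, which is precisely the $\alpha$-proxy condition for $\alpha=0$.

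I do not expect a genuine computational obstacle here; the work is a direct application of the tower property and mirrors the proof of Claim~\ref{clm:existence}, minus the single step there that used conditional independence to pass $\1[h(x)\neq y]$ through the conditional expectation. The one conceptual point worth stating clearly is \emph{why} that step is now unnecessary: the statistical-parity constraint is label-independent, so the indicator is measurable with respect to $x$ and the law of total expectation applies unconditionally. Making this observation explicit is what justifies the claim's assertion that the result requires ``no assumptions at all on the data distribution.''
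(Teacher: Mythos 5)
Your proposal is correct and takes essentially the same route as the paper's own proof: both establish the denominator equality $\E_\Ps[\hat{z}_k(x)] = \E_\Ps[z_k]$ by the tower property, and the numerator equality by pulling the $x$-measurable indicator $\1\left[h(x)=1\right]$ inside the conditional expectation, with the fixed $f$ playing no role since the statistical-parity events never reference the label. Your added remarks (the collapse of the $E_2$ indicator to $1$, and the explicit explanation of why conditional independence is unnecessary here) are faithful elaborations of what the paper's proof does implicitly, not a different argument.
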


\begin{proof}
Fix $f \in \F$, $h \in \Hs$, and $k \in [K]$. Note that
\[
\E_{(x,z) \sim \Ps} \left[  \hat{z}_k (x) \right] = \E_{x \sim \Ps_\X} \left[  \hat{z}_k (x) \right] = \E_{x \sim \Ps_\X} \left[  \E \left[z_k \, | \, x \right] \right] = \E_{(x,z)} [z_k ]
\]
Also,
\begin{align*}
    \E_{(x,z) \sim \Ps} \left[ \hat{z}_k (x) \1 \left[ (x,f(x,z)) \in E_1 (h) \right] \right] &= \E_{x \sim \Ps_\X} \left[ \E \left[z_k \, | \, x \right] \1 \left[ h(x) = 1 \right] \right] \\
    &= \E_{x \sim \Ps_\X} \left[ \E \left[z_k \1 \left[ h(x) = 1 \right] \, | \, x \right] \right] \\
    &= \E_{(x,z)} \left[z_k \1 \left[ h(x) = 1 \right] \right] \\
    &= \E_{(x,z)} \left[z_k \1 \left[ (x,f(x,z)) \in E_1 (h) \right] \right]
\end{align*}
completing the proof.
\end{proof}

\paragraph{Modelling the Proxy Learner (PL).}
In the body of the paper we derived the problem of the proxy learner for the special case of equalized error fairness. Similarly, and more generally, we can write down the corresponding optimization problem of the PL for a fairness notion defined by some $(E_1, E_2)$. Recall that we assume the PL has access to a data set, which consists of two components: 1) $S = \{ (x_i, z_i) \}_{i=1}^n$ which is a sample of $n$ individuals from $\X \times \Z$ represented by their non-sensitive features and sensitive attributes. 2) $F = \{ f_j \}_{j=1}^m$ which is a sample of $m$ labeling functions (or \emph{learning tasks}) taken from $\F$. The PL does not observe the actual functions $f_j \in \F$ but instead observes the realized labels of functions in $F$ on our data set of individuals $S$: $Y = \{ y_{ij} = f_j (x_i, z_i) \}_{i,j}$. Given these data sets, the empirical problem of the PL can be formulated as: for all $k$, solve
\begin{mini}
{\hat{z}_k \in \Gs}{\frac{1}{n}\sum_{i=1}^{n} \left( z_{ik} - \hat{z}_k (x_i) \right)^2}{}{}
\addConstraint{\forall j \in [m], \, \forall h \in \Hs}{}{}
\addConstraint{\frac{\sum_{i=1}^n z_{ik} \1 \left[ (x_i,y_{ij} ) \in E_1 (h) \cap E_2 \right]}{\sum_{i=1}^n z_{ik}  \1 \left[ (x_i,y_{ij} ) \in E_2 \right] } = \frac{\sum_{i=1}^n \hat{z}_{k} (x_i) \1 \left[ (x_i,y_{ij} ) \in E_1 (h) \cap E_2 \right]}{\sum_{i=1}^n \hat{z}_{k} (x_i) \1 \left[ (x_i,y_{ij} ) \in E_2 \right]}}{}{ \ }
\label{eqn:program1_general}
\end{mini}
which gives us a decomposition of learning $\hat{z} = (\hat{z}_1, \ldots, \hat{z}_K)$ into learning each component $\hat{z}_k$ separately.

\ifdraft
\section{Adjustments for Statistical Parity}
When we want to use statistical parity as our fairness notion, we can just create a dummy label $y_{i}=0$ for all $i$ and only have one label per sample and then run the Algorithms~\ref{alg:ftpl}, \ref{alg:functionclass}, and \ref{alg:EqualErrors} as normal.

Alternately, because the condition of being a good proxy with respect to $\Hs$ for statistical parity (essentially) maps on to a multi-accuracy constraint with respect to $\Hs$, we can obtain a good proxy by running the algorithms of \cite{multiaccuracy,multiaccuracy2}. This is because unlike multiaccuracy for \emph{error regions} as we study in the body of the paper (membership in which requires evaluating whether $h(x) \neq y$), whether or not $h(x) = 1$ can be evaluated at test time for any $h \in \Hs$, since it does not make reference to the unknown label $y$. The advantage of this approach is that the algorithms of \cite{multiaccuracy,multiaccuracy2} optimize over linear combinations of functions $h \in \Hs$, which they prove by construction always contain a feasible solution. This removes the need to assume that the class we optimize over contains a good proxy. 
\fi

\ifdraft
\section{Missing Material Section~\ref{sec:linear}}
\label{sec:missing_linear}
\begin{proof}
We begin by upper bounding the $L_2$ norm of the gradient 
\begin{align}
||\nabla_{\theta} L(\lambda,\theta_k)|| &= ||\frac{2}{n}\sum_{i=1}^{n} \nabla_{\theta_k} \hat{z}_k(x_i;\theta_k)\cdot \left( \hat{z}_k(x_i;\theta_k)- z_{ik} \right)\\
&+ \lambda_0 \frac{\sum_{i=1}^n \nabla_{\theta_k}\hat{z}_{k} (x_i; \theta_k)}{\sum_{i=1}^n z_{ik}} +\sum_{h \in \Hs, j \in [m]} \lambda_{h,j} \sum_{i=1}^n \nabla_{\theta_k} \hat{z}_{k} (x_i; \theta_k) \1 \left[ h(x_i) \neq y_{ij} \right]||\\
& \leq ||\frac{2}{n}\sum_{i=1}^{n} \nabla_{\theta_k} \hat{z}_k(x_i;\theta_k)\cdot \hat{z}_k(x_i;\theta_k)\\
&+ \lambda_0 \frac{\sum_{i=1}^n \nabla_{\theta_k}\hat{z}_{k} (x_i; \theta_k)}{\sum_{i=1}^n z_{ik}} +\sum_{h \in \Hs, j \in [m]} \lambda_{h,j} \sum_{i=1}^n \nabla_{\theta_k} \hat{z}_{k} (x_i; \theta_k) \1 \left[ h(x_i) \neq y_{ij} \right]||\\
& \leq d \left( 2MB + nCB \left( \frac{1}{\sum_{i=1}^{n} z_{ik}}\right)\right)
\end{align}

Applying Theorem~\ref{thm:gdregret}, with appropriate choice of $\eta$ (derived below), we bound the Proxy Learner's average regret over $T$ rounds:

\begin{align}
\frac{R_T}{T} &\leq \frac{\sup_{\theta, \theta' \in \Theta} \left\Vert \theta - \theta' \right\Vert_2 ||\nabla L||}{\sqrt{T}} \leq \frac{d \cdot D \left( 2MB + nCB \left( \frac{1}{\sum_{i=1}^{n} z_{ik}}\right)\right)}{\sqrt{T}}
\end{align}

Setting $T \geq \frac{d^2 D^2 \left( 2MB + nCB \left( \frac{1}{\sum_{i=1}^{n} z_{ik}}\right)\right)^2}{\epsilon^2}$ and $\eta = \frac{D}{d \left( 2MB + nCB \left( \frac{1}{\sum_{i=1}^{n} z_{ik}}\right)\right)\sqrt{T}}$, we have that $\frac{R_T}{T} \leq \epsilon$. Because the Auditor plays a no-regret strategy, we apply Theorem~\ref{thm:noregret} to assert that the mixed strategy $\hat{p}$ (and corresponding empirical distributions over $\lambda$ visited by the Auditor) form an $\epsilon$-approximate equilibrium.

Now we will show that an approximate solution to the game corresponds to an approximate solution to the Program~\eqref{eqn:programAlgos}. First, we consider some $\hat{z}(\cdot; \theta^*)$ that is a feasible solution to Program~\eqref{eqn:programAlgos}, and $\hat{\lambda}$ is an $\epsilon$-approximate minimax solution to the Lagrangian game specified in Equation~\eqref{eqn:game}. 

Now we will analyze the case in which we have a solution $\hat{z}(\cdot;\theta)$ that is an $\epsilon$-approximate solution to the Lagrangian game but is not a feasible solution for the constrained optimization problem~\eqref{eqn:programAlgos}. This must be because some constraint is violated. Let $\xi$ be the magnitude of the violated constraint, and let $\lambda$ be such that the dual variable for the violated constraint is set to $C$. By the definition of an $\epsilon$-approximate minimax solution, we know that $$L(\theta,\hat{\lambda}) \geq L(\theta, \lambda) \geq \frac{1}{n}\sum_{i=1}^{n} \left( z_{ik} - \hat{z}_k (x_i; \theta_k) \right)^2 + C\xi- \epsilon$$

Then,

\[
\frac{1}{n}\sum_{i=1}^{n} \left( z_{ik} - \hat{z}_k (x_i; \theta_k) \right)^2 + C\xi \leq L(\theta,\hat{\lambda}) + \epsilon \leq L(\theta^*,\hat{\lambda}) + 2\epsilon \leq \frac{1}{n}\sum_{i=1}^{n} \left( z_{ik} - \hat{z}_k (x_i; \theta^*_k) \right)^2 + 2 \epsilon
\]

Because 
\[
\frac{1}{n}\sum_{i=1}^{n} \left( z_{ik} - \hat{z}_k (x_i; \theta_k) \right)^2  \leq M^2
\]

$C\xi\geq M^2 + 2\epsilon$. Therefore, the maximum constraint violation is no more than $\frac{M^2 + 2\epsilon}{C}$. Setting $C = \frac{M^2 + 2\epsilon}{\epsilon}$,  we that $\hat{z}(\cdot; \hat{\theta})$ does not violate any constraint by more than $\epsilon$.

The last step is to transition from an $\epsilon$-approximate solution to the linear program to an $\alpha$-proxy. Setting $\epsilon = \frac{\alpha \sum_{i=1}^n z_{ik}}{1+nM}$, as outlined at the beginning of the section, gives us an $\alpha$-proxy. Therefore, the number of rounds $T$ that we require is 
\[
T \geq \frac{d^2 D^2 \left( 2MB + nCB \left( \frac{1}{\sum_{i=1}^{n} z_{ik}}\right)\right)^2}{\left( \frac{\alpha \sum_{i=1}^n z_{ik}}{1+nM} \right)^2}
\]
with dual variable upper bound
\[
C \geq \frac{M^2 + 2\frac{\alpha \sum_{i=1}^n z_{ik}}{1+nM}}{\frac{\alpha \sum_{i=1}^n z_{ik}}{1+nM}}
\]

\end{proof}
\else
\section{Learning a Linear Proxy}
\label{sec:linear}

\section{Adjustments for Statistical Parity}
When we want to use statistical parity as our fairness notion, we can just create a dummy label $y_{i}=0$ for all $i$ and only have one label per sample and then run the Algorithms~\ref{alg:ftpl}, \ref{alg:functionclass}, and \ref{alg:EqualErrors} as normal.

Alternately, because the condition of being a good proxy with respect to $\Hs$ for statistical parity (essentially) maps on to a multi-accuracy constraint with respect to $\Hs$, we can obtain a good proxy by running the algorithms of \cite{multiaccuracy,multiaccuracy2}. This is because unlike multiaccuracy for \emph{error regions} as we study in the body of the paper (membership in which requires evaluating whether $h(x) \neq y$), whether or not $h(x) = 1$ can be evaluated at test time for any $h \in \Hs$, since it does not make reference to the unknown label $y$. The advantage of this approach is that the algorithms of \cite{multiaccuracy,multiaccuracy2} optimize over linear combinations of functions $h \in \Hs$, which they prove by construction always contain a feasible solution. This removes the need to assume that the class we optimize over contains a good proxy. 
\fi

\ifdraft
\else
\section{Missing Material Section~\ref{sec:preliminaries}}

\begin{clm}
For every $k \in [K]$, we have
\begin{equation}\label{eq:observation}
    \Pr \left[ h(x) \neq y \, \vert \, z_k =1 \right] = \frac{\E \left[ z_k \1 \left[ h(x) \neq y \right] \right]}{\E \left[  z_k \right]}
\end{equation}
\end{clm}

\begin{proof}
We have
\begin{align*}
    \Pr \left[ h(x) \neq y \, \vert \, z_k =1 \right] &= \frac{\Pr \left[ z_k = 1, h(x) \neq y \right]}{\Pr \left[ z_k =1 \right]} \\
    & = \frac{\E \left[ \1 \left[ z_k = 1 \right] \1 \left[ h(x) \neq y \right]\right] }{\E \left[ \1 \left[ z_k = 1 \right] \right]} \\
    &= \frac{\E \left[ z_k \1 \left[ h(x) \neq y \right] \right]}{\E \left[  z_k \right]}
\end{align*}
\end{proof}

\begin{clm}[Existence of a Proxy]
For any distribution $\Ps$ over $\X \times \Z$, $\hat{z}(x) = \E \left[z \,| \, x \right]$ is an $\alpha$-proxy with respect to $\Ps$, for $\alpha = 0$, provided that $z$ and $y$ are independent conditioned on $x$.
\end{clm}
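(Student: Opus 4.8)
The plan is to show that, under the conditional independence hypothesis, the proxy $\hat z(x) = \E[z \mid x]$ makes \emph{both} the numerator and the denominator of the right-hand fraction in Definition~\ref{def:proxy} exactly equal to those of the left-hand fraction. Since the two fractions then coincide for every $f \in \F$, every $h \in \Hs$, and every $k \in [K]$, the absolute difference is identically zero, which is precisely the statement that $\hat z$ is an $\alpha$-proxy with $\alpha = 0$.

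First I would fix an arbitrary $f \in \F$, $h \in \Hs$, and $k \in [K]$, and write $y = f(x,z)$. The denominators are handled immediately by the tower property: because $\hat z_k(x) = \E[z_k \mid x]$ is a function of $x$ alone, we have $\E_{(x,z)}[\hat z_k(x)] = \E_x[\E[z_k \mid x]] = \E_{(x,z)}[z_k]$. This requires no assumption on the data distribution and disposes of one of the two factors.

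The substantive step is matching the numerators. Here the key observation is that $h(x)$ depends only on $x$, so the indicator $\1[h(x) \neq y]$ is a function of $(x,y)$ alone. By the conditional independence of $z$ and $y$ given $x$, it follows that $z_k$ and $\1[h(x) \neq y]$ are conditionally independent given $x$. I would then condition on $x$ and factor inside the inner expectation: $\E[z_k \1[h(x) \neq y] \mid x] = \E[z_k \mid x]\,\E[\1[h(x) \neq y] \mid x] = \hat z_k(x)\,\E[\1[h(x) \neq y] \mid x]$, using that $\hat z_k(x)$ is measurable with respect to $x$. Recombining and taking the outer expectation over $x$ yields $\E_{(x,z)}[z_k \1[h(x) \neq y]] = \E_{(x,z)}[\hat z_k(x)\,\1[h(x) \neq y]]$, so the numerators agree as well.

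The one place requiring care — the main obstacle, such as it is — is the conditional factorization of the numerator, since that is the single point at which the conditional-independence assumption is invoked; everything else is routine tower-property bookkeeping. Combining the matched numerators with the matched denominators shows the two fractions are equal, so their difference is exactly $0$ for all $f$, $h$, and $k$, completing the argument.
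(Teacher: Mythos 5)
Your proposal is correct and follows essentially the same route as the paper's proof: the tower property gives equality of the denominators, and the conditional independence of $z$ and $y$ given $x$ is used to factor the conditional expectation so that the numerators agree. Your write-up is, if anything, slightly more explicit than the paper's about \emph{why} the factorization is valid (namely, that $\1[h(x)\neq y]$ is a function of $(x,y)$ alone, hence conditionally independent of $z_k$ given $x$), but the underlying argument is identical.
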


\begin{proof}
Fix $f \in \F$, $h \in \Hs$, and $k \in [K]$. We have that
\[
\E_{(x,z) \sim \Ps} \left[  \hat{z}_k (x) \right] = \E_{x \sim \Ps_\X} \left[  \hat{z}_k (x) \right] = \E_{x \sim \Ps_\X} \left[  \E \left[z_k \, | \, x \right] \right] = \E_{(x,z)} [z_k ]
\]
Also, note that $y= f(x,z)$, and that
\begin{align*}
    \E_{(x,z) \sim \Ps} \left[ \hat{z}_k (x) \1 \left[ h(x) \neq f(x,z) \right] \right] &= \E_{(x,z)} \left[ \E \left[z_k \, | \, x \right] \1 \left[ h(x) \neq y \right] \right] \\
    &= \E_{(x,z)} \left[ \E \left[z_k \1 \left[ h(x) \neq y \right] \, | \, x \right] \right] \\
    &= \E_{(x,z)} \left[z_k \1 \left[ h(x) \neq y \right] \right] \\
    &= \E_{(x,z)} \left[z_k \1 \left[ h(x) \neq f(x,z) \right] \right]
\end{align*}
completing the proof. The second equality holds because of the assumption that conditional on $x$, $y$ and $z$ are independent.
\end{proof}

\fi

\section{Missing Material Section~\ref{sec:general}}

\subsection{Casting the PL's problem as a Linear Program}
In this section we provide more details as to how the problem of PL can be cast as a linear program with \emph{finitely many} variables and constraints. The high level idea is that on a given finite data set $S$, we can reduce the sets $\Hs$ and $\Gs$ to ones that have finitely many elements. Here we will use definitions and tools (such as covering sets and numbers) from Subsection~\ref{subsec:tools}.

In the first step, note that we can reduce the entire $\Hs$ to $\Hs (S) = \{ (h(x_1), h(x_2), \ldots, h(x_n)) : h \in \Hs\}$ which includes all possible labelings induced on $S$ by any function in $\Hs$. We then have that, as long as $\Hs$ has finite VC dimension, $|\Hs(S)|$ is finite. In particular
\begin{lemma}[Sauer's Lemma]
If $\Hs$ has finite VC dimension $d$, then for any data set $S$ of size $n$, we have $|\Hs(S)| = O (n^d)$.
\end{lemma}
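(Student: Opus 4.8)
The plan is to prove the exact Sauer–Shelah bound $|\Hs(S)| \le \sum_{i=0}^{d} \binom{n}{i}$ and then observe that this sum is $O(n^d)$ for fixed $d$. First I would identify $\Hs(S)$ with the family of subsets of $[n]$ that $\Hs$ induces on the sample: each labeling $(h(x_1),\ldots,h(x_n))$ corresponds to the set $\{i : h(x_i) = 1\}$, so $|\Hs(S)|$ counts exactly the distinct subsets of $\{x_1,\ldots,x_n\}$ realizable by some $h \in \Hs$, and the hypothesis $VCdim(\Hs) = d$ says no $(d+1)$-element subset of the points is shattered.

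The argument proceeds by induction on $n$, with $d$ as a parameter, using $\Phi_d(n) := \sum_{i=0}^{d}\binom{n}{i}$ as the target bound. The base cases are immediate: if $d = 0$ then no single point is shattered, so at most one labeling appears, matching $\Phi_0(n) = 1$; if $n = 0$ there is a single (empty) labeling, matching $\Phi_d(0) = 1$. For the inductive step I would single out the last coordinate $x_n$, set $S' = \{x_1,\ldots,x_{n-1}\}$, and split the induced family into two pieces: $\mathcal{A}$, the projection of $\Hs(S)$ onto $S'$ obtained by deleting the $x_n$-coordinate, and $\mathcal{B}$, the subfamily of those labelings of $S'$ that occur in $\Hs(S)$ with \emph{both} values of the $x_n$-coordinate. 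A direct count gives the identity $|\Hs(S)| = |\mathcal{A}| + |\mathcal{B}|$, since a labeling of $S'$ admitting a unique extension to $x_n$ is counted once (in $\mathcal{A}$), while one admitting both extensions is counted once in $\mathcal{A}$ and once more in $\mathcal{B}$.

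The step I expect to require the most care is the VC-dimension bookkeeping for the two subfamilies. Since $\mathcal{A}$ is a family over the $(n-1)$-point set $S'$ whose VC dimension is clearly at most $d$, induction yields $|\mathcal{A}| \le \Phi_d(n-1)$. The crux is showing $\mathcal{B}$ has VC dimension at most $d-1$: if $\mathcal{B}$ shattered some $T \subseteq S'$, then because every labeling in $\mathcal{B}$ appears in $\Hs(S)$ with both settings of $x_n$, the full family $\Hs(S)$ would shatter $T \cup \{x_n\}$; taking $|T| = d$ contradicts $VCdim(\Hs) = d$. Induction then gives $|\mathcal{B}| \le \Phi_{d-1}(n-1)$. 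Combining these with Pascal's rule $\binom{n-1}{i} + \binom{n-1}{i-1} = \binom{n}{i}$ yields $\Phi_d(n-1) + \Phi_{d-1}(n-1) = \Phi_d(n)$, closing the induction. Finally, the elementary estimate $\sum_{i=0}^{d}\binom{n}{i} \le (en/d)^d$ shows the bound is $O(n^d)$ for constant $d$, as claimed.
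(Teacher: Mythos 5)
Your proposal is correct: it is the standard Sauer--Shelah induction (splitting off one point, bounding the projection family by $\Phi_d(n-1)$ and the doubly-extended family by $\Phi_{d-1}(n-1)$, then closing via Pascal's rule), and every step, including the key claim that shattering $T$ by $\mathcal{B}$ forces $\Hs(S)$ to shatter $T \cup \{x_n\}$, is sound. Note that the paper itself states this lemma without proof, as a standard fact imported from the learning-theory literature, so there is no in-paper argument to compare against; your write-up is precisely the canonical proof that the citation implicitly relies on, and it even sharpens the paper's $O(n^d)$ statement to the exact bound $|\Hs(S)| \le \sum_{i=0}^{d}\binom{n}{i} \le (en/d)^d$.
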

We therefore get a linear program with finitely many constraints. Second, we can apply similar techniques to reduce the entire function class $\Gs$ to a set that has finitely many functions, implying we will get a linear program with finitely many variables. Since functions in $\Gs$ are real-valued, we take $\Gs_{\epsilon} (S)$ to be an $\epsilon$-cover of $\Gs$ with respect to $S$ and the $d_1$ metric (we use $\mathcal{N} \left( \epsilon, \Gs (S), d_1 \right)$ notation in Subsection \ref{subsec:tools}), for some $\epsilon$ appropriately chosen later on. What this implies is that for any $\hat{z}_k \in \Gs$, there exists $\tilde{z}_k \in \Gs_\epsilon (S)$, that satisfies
\begin{equation}\label{eq:bl}
\frac{1}{n} \sum_{i=1}^n \left\vert \hat{z}_k (x_i) -  \tilde{z}_k(x_i) \right\vert \le  \epsilon
\end{equation}
on the data set $S$, and furthermore, as long as $\Gs$ has finite pseudo-dimension, we know by Lemma~\ref{lem:N1} that $\Gs_\epsilon (S)$ is finite. In particular, if $\Gs$ has pseudo dimension $d$, then $|\Gs_\epsilon (S)| = O (\epsilon^{-d})$. Now given that our target proxy approximation parameter is $\alpha$, and using the guarantee of Equation~\eqref{eq:bl}, we can choose $\epsilon = O(\alpha)$ and optimize over (distributions over) $\Gs_\epsilon (S)$ which will guarantee that we have a linear program with finitely many variables (which are probability weights over functions in $\Gs_\epsilon (S)$).

\subsection{Derivation of Linear Program}
In this section, we derive a relationship between the slack of an approximate solution to Program~\ref{eqn:programAlgos} and the level of approximation ($\alpha$), that we seek in a proxy. First, we rewrite Program~\ref{eqn:programAlgos} in the following way:
For some group $k$ and labeling $j$:
\[
\sum_{i=1}^n \hat{z}_{k} (x_i) \1 \left[ h(x_i) \neq y_{ij} \right] - \sum_{i=1}^n z_{ik} \1 \left[ h(x_i) \neq y_{ij} \right] = 0
\]
\[
\frac{\sum_{i=1}^n \hat{z}_{k} (x_i)}{\sum_{i=1}^n z_{ik}} - 1 = 0
\]
Then for any $\epsilon > 0$, we will derive an algorithm to find $\hat{z}_k$ such that the constraints are satisfied up to at most $\epsilon$ slack:
\begin{equation}\label{eq:eq1}
\left\vert \sum_{i=1}^n \hat{z}_{k} (x_i) \1 \left[ h(x_i) \neq y_{ij} \right] - \sum_{i=1}^n z_{ik} \1 \left[ h(x_i) \neq y_{ij} \right] \right\vert \le \epsilon
\end{equation}
\begin{equation}\label{eq:eq2}
\left\vert \frac{\sum_{i=1}^n \hat{z}_{k} (x_i)}{\sum_{i=1}^n z_{ik}} - 1 \right\vert \le \epsilon
\end{equation}
Note that the second inequality gives us the following multiplicative  guarantee, which will be useful later on:
\begin{equation}\label{eq:eq3}
(1-\epsilon) \sum_{i=1}^n z_{ik} \le \sum_{i=1}^n \hat{z}_{k} (x_i) \le (1+\epsilon)
\sum_{i=1}^n z_{ik}
\end{equation}

We use these transformations to show how an approximate solution to Program~\ref{eqn:programAlgos} corresponds to an approximate proxy, formalized in the Lemma statement below.

\begin{lemma}
\label{lemma:epsilon}
For a fixed data set $S$, $\hat{z}$ is an $\alpha$ proxy for $z$ if it is an $\epsilon$-approximate solution to Program~\ref{eqn:programAlgos}, with 
\[
\epsilon = \frac{\alpha \sum_{i=1}^n z_{ik}}{1+nM}
\]

\end{lemma}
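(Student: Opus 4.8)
The plan is to unpack the definition of an $\alpha$-proxy specialized to the empirical distribution on $S$ and show that per-constraint slack $\epsilon$ translates into the desired bound $\alpha$ on the difference of the two conditional error rates. Fix a group $k$, a labeling index $j \in [m]$, and a hypothesis $h \in \Hs$, and abbreviate the four relevant sums by
\[
A = \sum_{i=1}^n z_{ik}\1\left[h(x_i)\neq y_{ij}\right], \quad B = \sum_{i=1}^n z_{ik}, \quad \hat{A} = \sum_{i=1}^n \hat{z}_k(x_i)\1\left[h(x_i)\neq y_{ij}\right], \quad \hat{B} = \sum_{i=1}^n \hat{z}_k(x_i).
\]
Being an $\alpha$-proxy on $S$ is exactly the statement $\left\vert A/B - \hat{A}/\hat{B}\right\vert \le \alpha$ for every such $k,j,h$, whereas an $\epsilon$-approximate solution supplies the two slack bounds $\left\vert \hat{A} - A\right\vert \le \epsilon$ (Equation~\eqref{eq:eq1}) and $\left\vert \hat{B}/B - 1\right\vert \le \epsilon$ (Equation~\eqref{eq:eq2}), the latter also guaranteeing $\hat{B} > 0$ via Equation~\eqref{eq:eq3} so that the proxy ratio is well defined.

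The key algebraic step is the telescoping decomposition
\[
\frac{A}{B} - \frac{\hat{A}}{\hat{B}} = \frac{A - \hat{A}}{B} + \frac{\hat{A}}{\hat{B}}\cdot\frac{\hat{B} - B}{B},
\]
which isolates the numerator error in the first term and the denominator error in the second. I would bound the first term by $\epsilon/B$ using Equation~\eqref{eq:eq1}. For the second term, the crucial observation is that $0 \le \hat{z}_k(x_i)$ forces each summand of $\hat{A}$ to be dominated by the corresponding summand of $\hat{B}$, so $\hat{A}\le\hat{B}$ and hence $\hat{A}/\hat{B}\le 1$; combined with $\left\vert \hat{B} - B\right\vert/B \le \epsilon$ from Equation~\eqref{eq:eq2}, the second term is at most $\epsilon$. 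Adding the two pieces yields $\left\vert A/B - \hat{A}/\hat{B}\right\vert \le \epsilon(1+B)/B$, uniformly over $h$ and $j$.

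Finally I would substitute $\epsilon = \alpha B/(1+nM)$ with $B = \sum_{i=1}^n z_{ik}$, giving the bound $\alpha(1+B)/(1+nM)$, and close the argument by noting $B = \sum_{i} z_{ik} \le n \le nM$ (using $M\ge 1$, the natural regime since each $\hat{z}_k$ predicts a quantity in $[0,1]$), so $1 + B \le 1 + nM$ and the bound is at most $\alpha$. Since the chain of inequalities holds for all $k,j,h$, the proxy definition is satisfied. I expect the only subtle point to be the treatment of the perturbed denominator: rather than invoking the lower bound $\hat{B} \ge (1-\epsilon)B$ and carrying an awkward $1/(1-\epsilon)$ factor, it is cleaner to exploit $\hat{A}/\hat{B} \le 1$ so that the stray $\hat{B}$ cancels and Equation~\eqref{eq:eq2} can be applied directly in its multiplicative form; this is precisely what keeps the final constant at the clean value $1+nM$.
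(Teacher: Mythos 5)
Your proof is correct in the regime you flag, and it takes a genuinely different route from the paper's. In your notation $A,B,\hat{A},\hat{B}$: the paper does not use a telescoping identity. Instead, assuming w.l.o.g.\ that the true ratio exceeds the proxy ratio, it substitutes the denominator bound $\hat{B}\le(1+\epsilon)B$ from Equation~\eqref{eq:eq3} into the proxy ratio, rewrites the difference as $\frac{1}{B}\left(A-\hat{A}+\frac{\epsilon}{1+\epsilon}\hat{A}\right)$, and bounds $A-\hat{A}\le\epsilon$ (Equation~\eqref{eq:eq1}) and $\hat{A}\le nM$ termwise, arriving at exactly $\epsilon(1+nM)/B$ for every $M>0$, with no comparison between $B$ and $nM$ ever needed. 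You instead exploit nonnegativity of the proxy to get $\hat{A}/\hat{B}\le 1$, yielding the intermediate bound $\epsilon(1+B)/B$. This buys two things: the absolute value is handled symmetrically in a single pass (the paper's remark that ``the same bound holds'' in the reverse direction is, if carried out literally with $\hat{B}\ge(1-\epsilon)B$, slightly lossy, since it introduces a $\frac{1}{1-\epsilon}$ factor), and your bound is tighter than the paper's whenever $B<nM$, which shows the lemma's prescribed $\epsilon$ is conservative. What it costs is the final comparison $1+B\le 1+nM$: this requires $B\le nM$, i.e.\ your assumption $M\ge 1$, and for $M<1$ with a large group (e.g.\ all $z_{ik}=1$, so $B=n>nM$) your chain of inequalities terminates above $\alpha$. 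So strictly speaking you prove the lemma under an extra hypothesis that the paper's proof does not need. In context this is harmless: $M\ge 1$ is the only sensible regime for a proxy of a binary attribute, and the feasibility assumption under which the lemma is actually invoked (existence of a $0$-proxy, i.e.\ some $\hat{z}^*_k$ with values in $[0,M]$ satisfying $\sum_i \hat{z}^*_k(x_i)=B$) already forces $B\le nM$. Still, if you want the statement at the paper's level of generality, state the hypothesis $B\le nM$ explicitly or fall back on the paper's $\hat{A}\le nM$ style of bound.
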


\begin{proof}
\[
\left\vert \frac{\sum_{i=1}^n z_{ik} \1 \left[ h(x_i) \neq y_{ij} \right]}{\sum_{i=1}^n z_{ik}} - \frac{\sum_{i=1}^n \hat{z}_{k} (x_i) \1 \left[ h(x_i) \neq y_{ij} \right]}{\sum_{i=1}^n \hat{z}_{k} (x_i)} \right\vert
\]
If the first term is larger than the second term, we can bound the difference as follows
\begin{align*}
    &\frac{\sum_{i=1}^n z_{ik} \1 \left[ h(x_i) \neq y_{ij} \right]}{\sum_{i=1}^n z_{ik}} - \frac{\sum_{i=1}^n \hat{z}_{k} (x_i) \1 \left[ h(x_i) \neq y_{ij} \right]}{\sum_{i=1}^n \hat{z}_{k} (x_i)} \\
    &\le \frac{\sum_{i=1}^n z_{ik} \1 \left[ h(x_i) \neq y_{ij} \right]}{\sum_{i=1}^n z_{ik}} - \frac{\sum_{i=1}^n \hat{z}_{k} (x_i) \1 \left[ h(x_i) \neq y_{ij} \right]}{(1+\epsilon)\sum_{i=1}^n z_{ik}} \\
    &= \frac{1}{\sum_{i=1}^n z_{ik}} \left( \sum_{i=1}^n z_{ik} \1 \left[ h(x_i) \neq y_{ij} \right] - \frac{\sum_{i=1}^n \hat{z}_{k} (x_i) \1 \left[ h(x_i) \neq y_{ij} \right]}{(1+\epsilon)} \right) \\
    &= \frac{1}{\sum_{i=1}^n z_{ik}} \left( \sum_{i=1}^n z_{ik} \1 \left[ h(x_i) \neq y_{ij} \right] - \sum_{i=1}^n \hat{z}_{k} (x_i) \1 \left[ h(x_i) \neq y_{ij} \right] + \frac{\epsilon}{1+\epsilon} \sum_{i=1}^n \hat{z}_{k} (x_i) \1 \left[ h(x_i) \neq y_{ij} \right] \right) \\
    &\le \frac{1}{\sum_{i=1}^n z_{ik}} \left( \epsilon + \epsilon nM\right) \\
    &= \frac{1+nM}{\sum_{i=1}^n z_{ik}} \cdot \epsilon
\end{align*}
The first inequality follows from Equation~\eqref{eq:eq3}. The second inequality follows from Equation~\eqref{eq:eq1}, and the fact that $\hat{z}_{k} (x_i) \le M$. The same bound holds if the second term is bigger than the first term. So we have that,
\[
\left\vert \frac{\sum_{i=1}^n z_{ik} \1 \left[ h(x_i) \neq y_{ij} \right]}{\sum_{i=1}^n z_{ik}} - \frac{\sum_{i=1}^n \hat{z}_{k} (x_i) \1 \left[ h(x_i) \neq y_{ij} \right]}{\sum_{i=1}^n \hat{z}_{k} (x_i)} \right\vert \le \frac{1+nM}{\sum_{i=1}^n z_{ik}} \cdot \epsilon
\]
Therefore, in order to produce an $\alpha$-proxy, we need to find a $\hat z$ satisfying constraints \eqref{eq:eq1} and \eqref{eq:eq2} up to slack:
\[
\epsilon = \frac{\alpha \sum_{i=1}^n z_{ik}}{1+nM}
\]
\end{proof}

\ifdraft
\else
\subsection{Derivations of Costs}
Given an action $\hat{z}_k$ of the Learner, we write $LC(\hat{z}_k)$ for the $n \times m$ matrix of costs for labelling each data point as 1, where $LC_j(\cdot)$ indicates the column of costs corresponding to the choice of labels $y_{.j}$. When we want to enforce equal group error rates, we can define the costs for labeling examples as positive ($h(x)=1$) as a function of their true labels $y_{i,j}$ as:

\begin{align}
\begin{split}
c^0(x_i, y_{ij})=0,
\end{split}
\begin{split}
c^1(x_i, y_{ij})=\left(z_{ik} - \hat{z}_k(x_i)\right) \left(\1 \left[y_{ij}=0\right] - \1\left[y_{ij}=1\right]\right)
\end{split}
\label{eqn:costs}
\end{align}
\fi

\subsection{Proof of Theorem~\ref{thm:ftplProxy}}

\begin{lemma}
\label{lemma:learner_regret} 
Let $T$ be the time horizon for Algorithm~\ref{alg:ftpl}. Let $P_k^1 \times Q_k^1,...,P_k^t \times Q_k^t$ be the sequence of distributions maintained by the Auditor's FTPL algorithm and $\hat{z}_k^1,...,\hat{z}_k^T$ be the sequence of plays by the Learner. Then
\[
\sum_{t=1}^{T} \mathbb{E}_{\lambda \sim P_k^t \times Q_k^t} [L(\hat{z}_k^t, \lambda)] - \min_{\lambda} \sum_{i=1}^{T} L(\hat{z}_k^t,\lambda) \leq 2\left(n^{3/2}CM + C_0\frac{nM}{\sum_{i=1}^n z_{ik}}\right)\sqrt{T}
\]
\end{lemma}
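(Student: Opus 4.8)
The plan is to use the fact that the Lagrangian $L(\hat z_k, \lambda)$ in Equation~\eqref{eqn:game} is affine in the dual variable $\lambda = (\lambda_0, \lambda')$ and that the Auditor's feasible set splits as a product of the interval $|\lambda_0| \le C_0$ and the scaled $\ell_1$-ball $\|\lambda'\|_1 \le C$. Because $L$ decomposes additively into a $\lambda_0$-term and a $\lambda'$-term, and the Auditor updates these two blocks independently ($\lambda_0$ via the scaled Bernoulli $P_k^t$, and $\lambda'$ via FTPL with distribution $Q_k^t$), the left-hand side of the lemma breaks into the sum of two separate single-player regret terms, which I would bound individually and add, targeting the two summands $2 n^{3/2} C M \sqrt{T}$ and $2 C_0 \frac{nM}{\sum_i z_{ik}} \sqrt{T}$. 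A key point is that both FTPL and the Bernoulli update give \emph{worst-case} (adversarial) regret guarantees, so the bound holds for the arbitrary realized sequence of Learner plays $\hat z_k^1, \ldots, \hat z_k^T$ with no assumption on how they were produced; this is what lets the lemma be stated as a clean regret inequality.

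For the $\lambda'$-block I would cast the Auditor's move as linear optimization over the vertices $\{\pm C e_{h,j}\}$ of the scaled cross-polytope, which is exactly the task of selecting the most-violated constraint $(h,j)$ together with its sign. Rewriting $\1[h(x_i) \neq y_{ij}]$ using the costs in Equation~\eqref{eqn:costs}, the $h$-dependent part of a constraint value equals $\langle LC_j(\hat z_k^t), h\rangle$, so maximizing over $(h,j)$ coincides with the $\argmin$ computed inside Algorithm~\ref{alg:ftpl}. This is an FTPL instance over the labeling set $\Hs(S) \subseteq \{0,1\}^n$ with per-round loss vectors $C \cdot LC_j(\hat z_k^t)$. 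Since $z_{ik} \in \{0,1\}$ and $\hat z_k(x_i) \in [0,M]$, each coordinate of $LC_j(\hat z_k^t)$ lies in $[-M,M]$, so the loss vectors lie in $[-CM, CM]^n$, and Theorem~\ref{thm:ftpl} with dimension $d=n$, loss magnitude $CM$, and the algorithm's learning rate $\eta$ gives expected regret at most $2 \cdot CM \cdot n^{3/2} \sqrt{T}$ for this block.

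For the $\lambda_0$-block the relevant per-round linear coefficient is $g^t = \frac{\sum_i \hat z_k^t(x_i)}{\sum_i z_{ik}} - 1$, which satisfies $|g^t| \le \frac{nM}{\sum_i z_{ik}}$ because $0 \le \sum_i \hat z_k^t(x_i) \le nM$. I would verify that producing $P_k^t = C_0(2\,\mathrm{Bern}(p_k^t) - 1)$ is a no-regret procedure for one-dimensional online linear optimization over $[-C_0, C_0]$, and that with the stated rate $\eta'$ its regret is at most $2 C_0 \frac{nM}{\sum_i z_{ik}} \sqrt{T}$. Summing the two blocks yields the claimed bound. I would also be explicit that this lemma concerns the \emph{exact} FTPL distribution $Q_k^t$ rather than the sampled $\hat Q_k^t$, so that the sampling error is cleanly isolated into a separate concentration step elsewhere in the proof of Theorem~\ref{thm:ftplProxy}.

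The step I expect to be the main obstacle is making the $\lambda'$-block reduction rigorous: confirming that the joint selection over $(h, j, \mathrm{sign})$ is faithfully an FTPL instance of dimension \emph{exactly} $n$, so that no spurious $m$- or factor-$2$ term enters the $d^{3/2}$ bound, and that the loss scale is $CM$ once the $h$-independent part of the constraint is removed. I would handle the column and sign choices by folding them into the oracle's $\argmax$ (keeping the perturbed-leader step over $\{0,1\}^n$, since the perturbation $\xi^w$ acts only on the $n$ labeling coordinates), and I would handle the $h$-independent constant offset by noting that it is common to all labelings and therefore does not affect the selection of the maximizing $h$ within a fixed column, while its effect across columns must be checked not to enlarge the per-round loss beyond $CM$.
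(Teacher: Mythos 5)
Your proposal follows essentially the same route as the paper's proof: decompose the Auditor's regret into the $\lambda_0$-block and the $\lambda'$-block, invoke Theorem~\ref{thm:ftpl} for the $\lambda'$-block with dimension $d=n$ and per-coordinate loss scale $CM$, bound the $\lambda_0$ coefficient by $nM/\sum_{i=1}^n z_{ik}$ so that block contributes $2C_0 \frac{nM}{\sum_{i=1}^n z_{ik}}\sqrt{T}$ with the learning rate $\eta'$, and add the two bounds. The paper's own proof is considerably terser (it simply records the two coordinate bounds and the two learning rates), so your additional care about folding the $(h,j,\mathrm{sign})$ selection into the oracle and isolating the sampled distribution $\hat{Q}_k^t$ into a separate concentration step is a faithful elaboration of the same argument rather than a different one.
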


\ifdraft
\else
\paragraph{Follow the Perturbed Leader} Follow the Perturbed Leader (FTPL) is a no-regret learning algorithm that can sometimes be applied -- with access to an oracle -- to an appropriately convexified learning space that is too large to run gradient descent over. It is formulated as a two-player game over $T$ rounds. At each round $t \leq T$, a learner selects an action $a^t$ from its action space $A \subset \{0,1\}^d$, and an auditor responds with a loss vector $\ell^t \in \Re ^d$. The learner's loss is the inner product of $\ell^t$ and $a^t$. If the learner is using an algorithm $A$ to select its action each round, then the learner wants to pick $A$ so that the regret $R_A(T) := \sum_{t=1}^{T} \langle \ell^t, a^t\rangle - \min_{a \in A} \sum_{i=1}^{T} \langle \ell^t, a^t \rangle$ grows sublinearly in $T$. Algorithm~\ref{alg:ftpl_pseudo} accomplishes this goal by perturbing the cumulative loss vector with appropriately scaled noise, and then an action is chosen to minimize the perturbed loss. Pseudocode and guarantees are stated below. 

\begin{algorithm}[h]
\KwIn{learning rate $\eta$}
Initialize the learner $a^1 \in A$\;
\For{t = 1,2, \ldots}{
Learner plays action $a^{t}$\;
Adversary plays loss vector $\ell^t$\;
Learner incurs loss of $\langle \ell^t, a^t \rangle$. Learner updates its action: $a^{t+1} = \argmin_{a \in A} \left\{ \left\langle \sum_{s \le t} \ell^s , a \right\rangle + \frac{1}{\eta} \left\langle \xi^t , a \right\rangle \right\}$ \;
where $\xi^t \sim Uniform \left( [0,1]^d \right)$, independent of every other randomness.
}
\caption{Follow the Perturbed Leader (FTPL)}
\label{alg:ftpl_pseudo}
\end{algorithm}

\begin{thm}[Regret of FTPL \cite{kalai}]
\label{thm:ftpl}
    Suppose for all $t$, $\ell^t \in [-M,M]^d$. Let $\mathcal{A}$ be Algorithm~\ref{alg:ftpl_pseudo} run with learning rate $\eta = 1/(M \sqrt{dT})$. For every sequence of loss vectors $(\ell^1, \ell^2, \ldots, \ell^T)$ played by the adversary,  $\E \left[ R_\mathcal{A}(T) \right] \le 2 M d^{3/2} \sqrt{T}$, where expectation is taken with respect to the randomness in $\mathcal{A}$. 
\end{thm}
\fi

\begin{proof} 
We appeal to Theorem~\ref{thm:ftpl} to bound the regret of the Auditor, and set the parameter $d$ in the theorem to $n$ (the dimension of our linear program). To do so, we examine the maximum absolute values over the coordinates of the two loss vectors. For any $\hat{z}(x)_k \in \Gs$, the absolute value of the $i$-th coordinate of $LC_j(\hat{z}(x)_k)$ is bounded by:

\[
||\lambda||\left(z_{ik} - \hat{z}_k(x_i)\right) \left(\1 \left[y_{ij}=0\right] - \1\left[y_{ij}=1\right]\right) \leq CM
\]

while the absolute value of $\lambda_0 (\frac{\sum_{i=1}^n \hat{z}_{k} (x_i)}{\sum_{i=1}^n z_{ik}} - 1)$ is bounded by $C_0\frac{nM}{\sum_{i=1}^n z_{ik}}$

Choosing $\eta =\frac{1}{CM}\sqrt{\frac{1}{nT}}$ and $\eta' = \frac{1}{C_0\frac{nM}{\sum_{i=1}^n z_{ik}}}\sqrt{\frac{1}{T}}$ causes the Auditor's regret to be bounded by $2(n^{3/2}CM + C_0\frac{nM}{\sum_{i=1}^n z_{ik}})\sqrt{T}$.
\end{proof}

\begin{lemma}
\label{lemma:uc}
Fix any $\xi,\delta \in (0,1)$. Let $\lambda^1,...,\lambda^W$ be $W$ i.i.d draws from $P_k \times Q_k$, and $P_k \times \hat{Q}_k$ be the empirical distribution over the realized sample. Then with probability at least $1-\frac{\delta}{K}$ over the random draws of $\lambda$'s, the following holds:
\[
\max_{\hat{z}(x)_k \in \Gs} |\mathbb{E}_{\lambda \sim P_k^t \times \hat{Q}_k^t} L(\hat{z}(x)_k, \lambda) - \mathbb{E}_{\lambda \sim P_k^t \times Q_k^t} L(\hat{z}(x)_k, \lambda)| \leq \xi
\]

as long as $W \geq \frac{n^2C^2M^2 \log{\frac{K}{2\delta}}}{2\xi^2}$.

\end{lemma}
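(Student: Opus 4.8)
The plan is to exploit two structural features of the Lagrangian in \eqref{eqn:game}: that it is \emph{linear} in the dual variable $\lambda$, and that the dual samples drawn by the Auditor in Algorithm~\ref{alg:ftpl} are extremely sparse. First I would observe that in the difference $\E_{\lambda \sim P_k^t \times \hat{Q}_k^t} L(\hat{z}_k, \lambda) - \E_{\lambda \sim P_k^t \times Q_k^t} L(\hat{z}_k, \lambda)$ the squared-error term (which does not depend on $\lambda$) and the entire $\lambda_0$ contribution (whose distribution $P_k^t$ is identical in both expectations) cancel, so only the $\lambda'$-dependent term survives. By linearity of $L$ in $\lambda'$, this reduces the claim to showing that an empirical average over the $W$ i.i.d.\ draws $\lambda'^{1},\dots,\lambda'^{W} \sim Q_k^t$ is close to its mean, uniformly over $\hat{z}_k \in \Gs$.

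Next I would bound the per-sample magnitude. By construction in Algorithm~\ref{alg:ftpl}, each sampled $\lambda'^{w}$ is supported on a single coordinate $(h^{w},j^{w})$ with value $\pm C$, so the surviving term equals $\pm C \sum_{i=1}^n (z_{ik} - \hat{z}_k(x_i)) \1[h^{w}(x_i) \neq y_{ij^{w}}]$. Using $z_{ik} \in \{0,1\}$ and $\hat{z}_k(x_i) \in [0,M]$, each summand has magnitude at most $M$, so this quantity lies in an interval of width $O(CnM)$ \emph{uniformly} over $\hat{z}_k$. For a fixed $\hat{z}_k$ the $W$ draws are i.i.d.\ and bounded, so Hoeffding's inequality gives the desired $\xi$-closeness with a failure probability that, set to $\delta/K$, matches the stated sample size $W = \Omega(n^2 C^2 M^2 \log(K/\delta)/\xi^2)$; the union bound over the $K$ groups is what produces the $\delta/K$.

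The hard part will be upgrading the fixed-$\hat{z}_k$ bound to the supremum over the (continuous) class $\Gs$ without paying a covering-number or pseudo-dimension price, since the claimed bound has no dependence on $\mathrm{Pdim}(\Gs)$. Here I would again lean on linearity: the deviation $D(\hat{z}_k) := \sum_i (z_{ik} - \hat{z}_k(x_i)) \Delta_i$, where $\Delta_i$ is the sampling deviation of the coefficient attached to coordinate $i$, is an \emph{affine} function of the vector $(\hat{z}_k(x_1),\dots,\hat{z}_k(x_n)) \in [0,M]^n$. A linear functional over a box is maximized at a vertex, so $\sup_{\hat{z}_k \in \Gs} |D(\hat{z}_k)| \le M \sum_i |\Delta_i|$, reducing the problem to controlling the $n$ coordinate-wise deviations $\Delta_i$, each built from a term in $\{-C,0,C\}$. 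Concentrating these directly, rather than covering $\Gs$, is what keeps the final bound free of $\Gs$'s complexity; the only subtlety I would need to track carefully is that the $n$ coordinates share the same $W$ samples, so their deviations are dependent and must be handled by a union bound (or a single vector-concentration step) absorbed into the logarithmic factor.
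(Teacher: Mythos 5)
Your proposal is correct and follows essentially the same route as the paper's proof: cancel the $\lambda$-independent squared-error term and the $\lambda_0$ term (whose distribution $P_k^t$ is common to both expectations), use linearity in $\lambda'$ to reduce the claim to concentration of the empirical average of the $W$ sparse, single-coordinate dual samples, bound each sample's contribution by $nCM$, and finish with an additive Chernoff--Hoeffding bound plus a union bound over the $K$ groups. If anything, your treatment of the supremum over $\Gs$ (maximizing the affine functional at a vertex of the box $[0,M]^n$ and then controlling the $n$ coordinate-wise deviations, which share the $W$ samples) is more careful than the paper's, which simply factors out $nCM$ and applies Hoeffding to a single indicator-frequency deviation with a free data index $i$ and no union bound over coordinates; the extra $\log n$ (or the vector-concentration step) your version requires is the honest price for that uniformity and is absorbed into the logarithmic factor.
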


\begin{proof}

\begin{align}
&\mathbb{E}_{\lambda \sim P_k^t \times \hat{Q}_k^t} L(\hat{z}(x)_k,\lambda) - \mathbb{E}_{\lambda \sim D_k} L(\hat{z}_k,\lambda) \\
&= \mathbb{E}_{\lambda \sim P_k^t \times \hat{Q}_k^t} \left[ \lambda_{h,j}\sum_{i=1}^n \left(\hat{z}_{k} (x_i) - z_{ik}\right) \1 \left[ h(x_i) \neq y_{ij} \right]\right] + \mathbb{E}_{\lambda \sim P_k^t \times Q_k^t}   \left[\lambda_{h,j}\sum_{i=1}^n \left(\hat{z}_{k} (x_i) - z_{ik} \1 \left[ h(x_i) \neq y_{ij} \right]\right)\right]\\
&= \sum_{i=1}^n \left(\hat{z}_{k} (x_i) - z_{ik}\right)\left(\mathbb{E}_{\lambda \sim P_k^t \times \hat{Q}_k^t} \left[ \lambda_{h,j}  \1 \left[ h(x_i) \neq y_{ij}\right] \right] - \mathbb{E}_{\lambda \sim P_k^t \times Q_k^t} \lambda_{h,j} \1 \left[ h(x_i) \neq y_{ij}\right]\right) \\
&\leq  nCM \left(\frac{1}{W} \sum_{\lambda' \in \hat{Q}_k^t}  \1 \left[ h(x_i) \neq y_{ij}\right] - \mathbb{E}_{\lambda' \sim Q_k^t} \1 \left[ h(x_i) \neq y_{ij}\right] \right) \\
\end{align}

We apply the additive Chernoff-Hoeffding Bound (Theorem~\ref{thm:add-chernoff}) to solve for $W$:

\begin{align}
&\mathbb{P} \left[ | nCM \left(\frac{1}{W} \sum_{\lambda' \in \hat{Q}_k^t} \sum_{h \in \Hs, j \in [m]} \1 \left[\lambda_{h,j} \neq 0\right] \1 \left[ h(x_i) \neq y_{ij}\right] - \mathbb{E}_{\lambda' \sim Q_k^t} \sum_{h \in \Hs, j \in [m]} \1 \left[\lambda_{h,j} \neq 0\right] \1 \left[ h(x_i) \neq y_{ij}\right] \right)|\geq \xi \right] \\
&\leq \mathbb{P} \left[ |nCM \left( \frac{1}{W} \sum_{\lambda' \in \hat{Q}_k^t}  \sum_{h \in \Hs, j \in [m]} \1 \left[\lambda_{h,j} \neq 0\right] \1 \left[ h(x_i) \neq y_{ij}\right] - \mathbb{E}_{\lambda' \sim Q_k^t} \sum_{h \in \Hs, j \in [m]} \1 \left[\lambda_{h,j} \neq 0\right] \1 \left[ h(x_i) \neq y_{ij}\right] \right)|\geq \xi \right] \\
&= \mathbb{P} \left[ |\frac{1}{W} \sum_{\lambda' \in \hat{Q}_k^t} \sum_{h \in \Hs, j \in [m]} \1 \left[\lambda_{h,j} \neq 0\right]  \1 \left[ h(x_i) \neq y_{ij}\right] - \mathbb{E}_{\lambda' \times Q_k^t} \sum_{h \in \Hs, j \in [m]} \1 \left[\lambda_{h,j} \neq 0\right] \1 \left[ h(x_i) \neq y_{ij}\right]|\geq \frac{\xi}{nCM} \right] \\
&\leq 2 \exp{\left(-2 \left(\frac{\xi}{nCM}\right)^2 W\right)}
\end{align}

If we want this probability to be no more than $\delta$ for \textit{any} $k$, we apply a union bound and see that we need $W\geq \frac{n^2C^2M^2\log{\frac{\delta}{2}}}{-2\xi^2}$
\end{proof}

\begin{lemma}
\label{lemma:auditor_regret}
Let $T$ be the time horizon for Algorithm~\ref{alg:ftpl}. Let $D_k^1,...,P_k^t \times Q_k^t$ be the sequence of distributions maintained by the Auditor's FTPL algorithm. For each $P_k^t \times Q_k^t$, let $P_k^t \times \hat{Q}_k^t$ be the empirical distribution over $W$ i.i.d draws from $P_k^t \times Q_k^t$. Let $\hat{z}_k^1,...,\hat{z}_k^T$ be the Learner's best responses against $P_k^1 \times \hat{Q}_k^1,...,P_k^t \times \hat{Q}_k^t$. Then, with probability $1-\delta$,

\[\max_{\hat{z}_k \in \Gs} \sum_{t=1}^{T} \mathbb{E}_{\lambda \sim P_k^t \times Q_k^t} [L(\hat{z}_k, \lambda)] - \sum_{t=1}^{T} \mathbb{E}_{\lambda \sim P_k^t \times Q_k^t} [L(\hat{z}_k^t, \lambda)] \leq T \sqrt{\frac{n^2C^2M^2 \log{\frac{T}{2\delta}}}{W}}
\]
\end{lemma}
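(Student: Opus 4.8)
The plan is to bound the Learner's regret measured under the true distributions $P_k^t \times Q_k^t$ by combining the \emph{best-response} property of the Learner's plays with the uniform sampling-error control provided by Lemma~\ref{lemma:uc}. The central observation is that at each round $t$ the Learner plays $\hat{z}_k^t = \argmin_{\hat{z}_k \in \Gs} \mathbb{E}_{\lambda \sim P_k^t \times \hat{Q}_k^t} L(\hat{z}_k, \lambda)$, i.e. it exactly minimizes against the \emph{empirical} distribution that round. Hence, letting $\hat{z}_k^\star$ denote the fixed comparator attaining the outer optimum on the left-hand side, we have the termwise (and therefore summed) inequality
\[
\sum_{t=1}^T \mathbb{E}_{\lambda \sim P_k^t \times \hat{Q}_k^t} L(\hat{z}_k^t, \lambda) \le \sum_{t=1}^T \mathbb{E}_{\lambda \sim P_k^t \times \hat{Q}_k^t} L(\hat{z}_k^\star, \lambda),
\]
so the Learner has nonpositive regret in the \emph{empirical} world. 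The whole task is then to transfer this to the true-distribution world at a controlled cost.

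First I would introduce $\hat{z}_k^\star$ and perform a three-term add-and-subtract decomposition of the true-distribution regret: I would replace $Q_k^t$ by $\hat{Q}_k^t$ in the comparator term, invoke the empirical best-response inequality above, and then replace $\hat{Q}_k^t$ back by $Q_k^t$ in the played-action term. This writes the true-distribution regret as (comparator deviation) $+$ (empirical regret) $+$ (played-action deviation). The middle term is nonpositive by best response, and each of the two outer terms measures the deviation between an empirical and a true expectation of $L$ for a single function.

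Second, I would control these two deviation terms round by round using Lemma~\ref{lemma:uc}. The crucial point is that the guarantee there is \emph{uniform} over $\hat{z}_k \in \Gs$, which is exactly what is required: although $\hat{z}_k^\star$ is a fixed comparator, the played actions $\hat{z}_k^t$ are themselves measurable functions of the sampled $\lambda$'s defining $\hat{Q}_k^t$, so they are \emph{not} independent of the sample and cannot be handled by a pointwise concentration bound; the supremum over $\Gs$ in Lemma~\ref{lemma:uc} absorbs this dependence. I would apply the lemma at each round with per-round failure probability $\delta/T$ and take a union bound over the $T$ rounds, so that with probability $1-\delta$ every round's deviation is at most $\xi$ simultaneously, where $\xi = \Theta\!\left(nCM\sqrt{\log(T/\delta)/W}\right)$ comes from solving the $W$-requirement of Lemma~\ref{lemma:uc}. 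Summing the two outer deviation terms over $t$ then yields the claimed $O(T\xi)$ bound, i.e. $O\!\left(T\sqrt{n^2C^2M^2\log(T/2\delta)/W}\right)$.

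The main obstacle is precisely the data-dependence of $\hat{z}_k^t$ just described: one cannot treat the played actions as a fixed comparator, so the entire argument must route through the uniform-over-$\Gs$ form of Lemma~\ref{lemma:uc} rather than a direct Hoeffding bound, and the union bound must be organized so a single high-probability event covers all $T$ rounds (which is what produces the $\log(T/2\delta)$ factor here, as opposed to the $\log(K/2\delta)$ factor that arises when union bounding over the $K$ groups elsewhere). Once that is set up correctly, the remainder — the best-response cancellation of the middle term and collecting constants — is routine.
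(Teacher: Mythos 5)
Your proposal is correct and follows essentially the same route as the paper: the paper's own proof consists precisely of your concentration step --- it defines the per-round uniform deviation $\gamma_{L,k}^t = \max_{\hat{z}_k \in \Gs}\left\vert \mathbb{E}_{\lambda \sim P_k^t \times \hat{Q}_k^t} L(\hat{z}_k,\lambda) - \mathbb{E}_{\lambda \sim P_k^t \times Q_k^t} L(\hat{z}_k,\lambda)\right\vert$ and bounds it simultaneously for all $t$ (and $k$) by Lemma~\ref{lemma:uc} plus a union bound over rounds. The add-and-subtract decomposition through the empirical best response, and your observation that the supremum over $\Gs$ in Lemma~\ref{lemma:uc} is exactly what absorbs the data-dependence of the played actions $\hat{z}_k^t$, are left implicit in the paper's one-line proof, so your write-up is if anything more complete (modulo the factor of $2$ from the two deviation terms, a constant the paper's stated bound also elides).
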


\begin{proof}
Let $\gamma_{L,k}^t$ be defined as:

\[
\gamma_{L,k}^t = \max_{\hat{z}_k \in \Gs} |\mathbb{E}_{\lambda \sim P_k^t \times \hat{Q}_k^t} L(\hat{z}_k, \lambda) - \mathbb{E}_{\sim P_k^t \times Q_k^t} L(\hat{z}_k, \lambda)|
\]

From Lemma~\ref{lemma:uc} and applying a union bound across $T$ steps, we have that with probability at least $1-\delta$, for all $t \in [T]$ and $k \in [K]$:
\[
\gamma_{L,k}^t \leq \sqrt{\frac{n^2C^2M^2 \log{\frac{TK}{2\delta}}}{W}}
\]
\end{proof}

\begin{thm}[$\alpha$-Proxy for $m$ labeling functions taken from $\F$]
Fix any $\alpha$, and $\delta$. Suppose $\Hs$ has finite VC dimension, and $\Gs$ has finite pseudo-dimension. Suppose $\Delta (\Gs)^K$ contains a $0$-proxy. Then given access to oracles $CSC(\Hs)$ and $ERM(\Gs)$, we have that with probability at least $1-\delta$, Algorithm~\ref{alg:ftpl} returns a distribution $\hat{p} \in \Delta (\Gs)^K$ such that $\hat{z} (x) \triangleq \E_{g \sim \hat{p}} \left[ g(x) \right] = \frac{1}{T} \sum_{t=1}^T \hat{z}_t (x)$ is an $\alpha$-proxy. 
\end{thm}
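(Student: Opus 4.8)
The plan is to show that Algorithm~\ref{alg:ftpl} simulates no-regret dynamics for the zero-sum Lagrangian game \eqref{eq:minimaxp}, so that the time-averaged play converges to an approximate minimax equilibrium, and then to translate that equilibrium first into an approximately feasible solution of the constrained program \eqref{eqn:programAlgos} and finally, via Lemma~\ref{lemma:epsilon}, into an $\alpha$-proxy. Since both the program and the algorithm decompose across groups $k \in [K]$, I would fix a single $k$ throughout and incur the factor of $K$ only at the end, when taking a union bound over the failure probabilities of the sampling steps.

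First I would assemble the two regret guarantees. For the Auditor, who runs FTPL on the dual variable $\lambda = (\lambda_0, \lambda')$, Lemma~\ref{lemma:learner_regret} bounds its regret against the true sequence of distributions $P_k^t \times Q_k^t$ by $2\bigl(n^{3/2}CM + C_0 \tfrac{nM}{\sum_i z_{ik}}\bigr)\sqrt{T}$; this is an application of Theorem~\ref{thm:ftpl} after bounding the per-coordinate magnitudes of the two loss vectors by $CM$ and $C_0\tfrac{nM}{\sum_i z_{ik}}$, which in turn fixes the learning rates $\eta,\eta'$. For the Learner, who best-responds exactly each round through $ERM(\Gs)$, the regret against the \emph{sampled} sequence $P_k^t \times \hat{Q}_k^t$ is nonpositive by definition of best response. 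The catch is that Theorem~\ref{thm:noregret} must be applied against the \emph{true} sequence, so I would use Lemma~\ref{lemma:auditor_regret} — built on the uniform-convergence estimate of Lemma~\ref{lemma:uc} — to bound the gap introduced by replacing $Q_k^t$ with its empirical version $\hat{Q}_k^t$, obtaining an additional Learner regret of at most $T\sqrt{n^2C^2M^2\log(TK/2\delta)/W}$ with probability $1-\delta$.

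Next I would sum the two regret bounds and invoke Theorem~\ref{thm:noregret} to conclude that the uniform average $\hat{z}_k = \tfrac{1}{T}\sum_{t=1}^T \hat{z}_k^t$, paired with the Auditor's averaged dual play, is a $\nu$-approximate equilibrium of the Lagrangian, where $\nu$ is the sum of the per-round regrets. I would then run the standard conversion from approximate equilibrium to approximate feasibility, exactly as in the linear-proxy argument of Section~\ref{sec:missing_linear}: comparing the Lagrangian value at $\hat{z}_k$ with its value at the assumed $0$-proxy (whose existence in $\Delta(\Gs)^K$ validates the minimax value and the comparison), and observing that if some constraint of \eqref{eqn:programAlgos} were violated by more than $\xi$, the Auditor could gain $C\xi$ by saturating its dual bound. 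This forces every constraint to hold up to slack $\epsilon = O(\nu/C)$, and the prescribed settings of $C$, $C_0$, $T$, and $W$ drive $\epsilon$ below $\alpha\sum_i z_{ik}/(1+nM)$. Lemma~\ref{lemma:epsilon} then upgrades this $\epsilon$-approximate in-sample solution to an $\alpha$-proxy, with the covering and Sauer's-Lemma reductions guaranteeing finitely many variables and constraints so that the game is well defined.

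I expect the main obstacle to be the sampling step. Because the true FTPL distribution $Q_k^t$ is supported on an exponentially large set of dual configurations (of size $O(n^{d_\Hs})$ after Sauer's Lemma), the Learner can never best-respond to $Q_k^t$ directly and must instead respond to the Monte Carlo approximation $\hat{Q}_k^t$. The delicate part is tracking how the per-round sampling error of Lemma~\ref{lemma:uc} accumulates across all $T$ rounds in Lemma~\ref{lemma:auditor_regret}, and then back-solving the stated values of $T$, $W$, $C$, and $C_0$ so that the equilibrium error, the accumulated sampling error, and the union-bound failure probability over the $K$ groups and $T$ rounds all fall under $\alpha$ and $\delta$ simultaneously.
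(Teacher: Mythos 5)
Your proposal follows essentially the same route as the paper's own proof: the FTPL regret bound for the Auditor (Lemma~\ref{lemma:learner_regret}), the Monte Carlo approximation of $Q_k^t$ by $\hat{Q}_k^t$ with per-round error controlled by Lemma~\ref{lemma:uc} and accumulated over $T$ rounds in Lemma~\ref{lemma:auditor_regret}, the conversion of joint low regret into an approximate equilibrium via Theorem~\ref{thm:noregret}, the dual-saturation argument (using the assumed $0$-proxy) to turn the approximate equilibrium into approximate feasibility of Program~\eqref{eqn:programAlgos}, and finally Lemma~\ref{lemma:epsilon} with $\epsilon = \alpha\sum_i z_{ik}/(1+nM)$ to obtain an $\alpha$-proxy, with the union bound over $K$ and $T$ absorbed into $W$. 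The only slight imprecision is that the constraint-violation slack is $O((M^2+\nu)/C)$ rather than $O(\nu/C)$, but since you set $C$ on the order of $M^2/\epsilon$ exactly as the paper does, this does not affect the argument.
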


\begin{proof}
We have seen that the Auditor's average regret for the sequence $P_k^1 \times Q_k^1,...,P_k^T \times Q_k^t$ is bounded by:
\[
\gamma_{A,k} = \frac{1}{T} \sum_{t=1}^{T} \mathbb{E}_{\lambda \sim P_k^t \times Q_k^t} [L(\hat{z}_k^t, \lambda)] - \min_{h \in H} \sum_{i=1}^{T} L(\hat{z}_k^t,\lambda) \leq \frac{ 2\left(n^{3/2}CM + C_0\frac{nM}{\sum_{i=1}^n z_{ik}}\right)}{\sqrt{T}}
\]

The Learner's average regret, with probability $1-\delta$ is bounded by
\[
\gamma_{L,j} \leq \sqrt{\frac{n^2C^2M^2\log{\frac{TK}{2\delta}}}{W}}
\]

By Theorem~\ref{thm:noregret}, we know that the average play $(\bar{P}_k \times \bar{\hat{Q}}_k,\bar{\hat{z}}_k)$ forms an $(\gamma_{A,k} + \gamma_{L,k})$-approximate equilibrium. Then $\gamma_{A,k} + \gamma_{L,k} \leq \epsilon$ if we choose $W \geq \frac{n^2C^2M^2 \log(\frac{TK}{2\delta})}{\epsilon^2}$ 
and $T \geq \sqrt{ \frac{2\left(n^{3/2}CM + C_0\frac{nM}{\sum_{i=1}^n z_{ik}}\right)}{\epsilon}}$.

Now we will show that an approximate solution to the game corresponds to an approximate solution to the Program~\ref{eqn:program1}. 

Because we assume that there is a $0$-proxy in $\Gs$, we may consider some $\hat{z}_k^*$ that is a feasible solution to Program~(\ref{eqn:program1}), and $\hat{\lambda}$ is an $\epsilon$-approximate minimax solution to the Lagrangian game specified in Equation~\ref{eqn:game}. 

Next, consider a proxy $\hat{z}_k$ that is an $\epsilon$-approximate solution to the Lagrangian game but is not a feasible solution for the constrained optimization problem~\ref{eqn:program1}. This must be because the $\lambda_0$ constraint is violated, a $\lambda_{h,j}$ constraint is violated, or both. Let $\xi$ be the maximum magnitude of the violated constraints, and let $\lambda$ be such that the dual variable for the violated constraint is set to $C_0$ and $C$ respectively. By the definition of an $\epsilon$-approximate minimax solution, we know that 
\[
L(\hat{z}_k,\hat{\lambda}) \geq L(\hat{z}_k, \lambda) \geq \frac{1}{n}\sum_{i=1}^{n} \left( z_{ik} - \hat{z}_k (x_i) \right)^2 + (C_0 + C)\xi- \epsilon
\]

Then,

\[
\frac{1}{n}\sum_{i=1}^{n} \left( z_{ik} - \hat{z}_k (x_i) \right)^2 + (C_0 + C)\xi \leq L(\hat{z}_k,\hat{\lambda}) + \epsilon \leq L(\hat{z}^*,\hat{\lambda}) + 2\epsilon \leq \frac{1}{n}\sum_{i=1}^{n} \left( z_{ik} - \hat{z}^*_k (x_i) \right)^2 + 2 \epsilon
\]

Because 
\[
\frac{1}{n}\sum_{i=1}^{n} \left( z_{ik} - \hat{z}_k (x_i) \right)^2  \leq M^2
\]

$(C_0 + C) \xi\geq M^2 + 2\epsilon$. Therefore, the maximum constraint violation is no more than $\frac{M^2 + 2\epsilon}{C_0 + C}$. Setting $C_0 + C = \frac{M^2 + 2\epsilon}{\epsilon}$, we guarantee that $\hat{z}(x)_k$ does not violate any constraint by more than $\epsilon$.

The last step is to transition from an $\epsilon$-approximate solution to the linear program to an $\alpha$-proxy. Plugging in $\epsilon = \frac{\alpha \sum_{i=1}^n z_{ik}}{1+nM}$, we see that choosing $W \geq \frac{(1+nM)^2 n^2C^2M^2 \log(\frac{TK}{2\delta})}{ \left(\alpha \sum_{i=1}^n z_{ik} \right)^2}$, $T \geq \sqrt{ \frac{2(1+nM)n^{5/4}\left(CM + C_0\frac{nM}{\sum_{i=1}^n z_{ik}}\right)}{\alpha \sum_{i=1}^n z_{ik}}}$, and $C=C_0 \geq \frac{M^2(1+nM)}{2\alpha \sum_{i=1}^n z_{ik}} + 1$, Algorithm~\ref{alg:ftpl} produces an $\alpha$-proxy for the sensitive attribute $z$.

\end{proof}

\section{Missing Material Section~\ref{sec:functionclass}}
\ifdraft
\else
For clarity, we expand the linear program that we aim to find an approximate solution to with Algorithm~\ref{alg:functionclass} and then write out the corresponding Lagrangian dual.

\begin{mini}
{p_k \in \Delta \Gs}{\frac{1}{n}\sum_{i=1}^{n} \E_{\hat{z}_k \sim p_k} \left[ \left(z_{ik} - \hat{z}_k(x_i) \right)^2\right]}{}{}
\addConstraint{\begin{split} \frac{\sum_{i=1}^n \E_{\hat{z}_k \sim p_k} \left[\hat{z}_k (x_i)\right]}{\sum_{i=1}^n z_{ik}} - 1=0, \end{split}\;
}{=0}{}
\addConstraint{
\begin{split}\;\sum_{i=1}^{n}(z_{ik} - \E_{\hat{z}_k \sim p_k} \left[\hat{z}_k(x_i)\right] ) \cdot \left[ g (x_i, z_i) = 1 \right]\end{split}}{=0,}{\forall g \in \Hs \oplus \F}
\label{eqn:programXOR}
\end{mini}

\begin{align}
\label{eqn:gameXOR}
L(\hat{z}_k, \lambda) &= \sum_{i=1}^{n} \E_{\hat{z}_k \sim p_k} \frac{\left( z_{ik} - \hat{z}_k (x_i) \right)^2}{n}\\
&+ \E_{\hat{z}_k \sim p_k} \left[ \lambda_0\left(\frac{\sum_{i=1}^n \hat{z}_k (x_i)}{\sum_{i=1}^n z_{ik}} - 1\right) +\sum_{g \in \Hs \oplus \F} \lambda_{g} \sum_{i=1}^n \left(\hat{z}_k (x_i) - z_{ik}\right) \1 \left[ g(x_i) =1 \right] \right]
\end{align}

We need only minimally alter Algorithm~\ref{alg:ftpl} to find an approximate solution to this game. Rather than call on $CSC(\Hs)$ to find the most violated constraint of Program~\ref{eqn:programAlgos}, we call on $CSC(\Hs \oplus F)$, using the cost for labeling $g(x_i)=1$:
\begin{align}
\begin{split}
c^0(x_i)=0,
\end{split}
\begin{split}
c^1(x_i)=\left(z_{ik} - \hat{z}_k(x_i)\right)
\end{split}
\end{align}
\fi

\begin{algorithm}
\SetAlgoLined
\KwIn{ Data set $\{x_i,y_{ij},z_i\}_{i=1}^{n} \forall j \in [m]$, target proxy parameter $\alpha$, target confidence parameter $\delta$, upper bound $M$ on proxy values, groups $k \in [K]$}
Set dual variable upper bounds: $C=C_0 = (M^2(1+nM)/2\alpha \sum_{i=1}^n z_{ik}) + 1$\;
Set iteration count: $T = \left\lceil \sqrt{2(1+nM)\left(n^{3/2}CM + C_0\frac{nM}{\sum_{i=1}^n z_{ik}}\right)/ \alpha\sum_{i=1}^n z_{ik}} \; \right\rceil$\;
Set sample count: $W = \left \lceil (1+nM)^2 n^2C^2M^2 \log(\frac{TK}{2\delta})/\left(\alpha \sum_{i=1}^n z_{ik} \right)^2 \; \right \rceil$\;
Set learning rates of FTPL: $\eta =\frac{1}{CM} \sqrt{\frac{1}{nT}}\;$, $\;\eta' = \frac{\sum_{i=1}^n z_{ik}}{C_0nM} \sqrt{\frac{1}{T}}$\;
\For{$k=1$ \textbf{to} $K$}{
Initialize $\hat{z}_k^0=\bar{0}$\;
\For{$t=1$ \textbf{to} $T$}{
Sample from the Auditor's FTPL distribution\;
\For{$w=1$ \textbf{to} $W$}{
Draw a random vector $\xi^w$ uniformly at random from $[0,1]^{n}$\;
Use the oracle $CSC(\Hs \oplus \F)$ to compute: \;
$g^{w,t} = \argmin_{g \in \Hs \oplus \F} - \sum_{t' < t}|\langle LC(\hat{z}_k^{t'}),g \rangle| + \frac{1}{\eta} \langle\xi^w,g\rangle$\;
Find sign of $\lambda^{w,t}_{g^{w,t}}$ : $q^{w,t} = 2\1 \left[ \langle LC(\hat{z}_k^{t}),g^{w,t}\rangle > 0 \right] - 1$\;
Let $\lambda'^{w,t}$ be defined as $\lambda^{w,t}_g= q^{w,t} \times C \1 \left[g = g^{w,t}\right]$\;
}
Let $\hat{Q}_k^t$ be the empirical distribution over $\lambda'^{w,t}$\;
Set distribution over $\lambda_0^t$ : $P_k^t = C_0\left(2Bern(p^t) - 1\right)$ where $p_k^t = \min(1,-\eta'(\frac{\sum_{i=1}^n \hat{z}_{k} (x_i)}{\sum_{i=1}^n z_{ik}} - 1)\1\left[\frac{\sum_{i=1}^n \hat{z}_{k} (x_i)}{\sum_{i=1}^n z_{ik}} -1 < 0 \right])$\;
The Learner best responds: $\hat{z}_k^t = \argmin_{\hat{z}_k \in \Gs} \mathbb{E}_{\lambda \sim P_k^t \times \hat{Q}_k^t} L(\hat{z}_k, \lambda)$ by calling $ERM(\Gs)$.
}
}
\KwOut{$\hat{p}$ = uniform distribution over $\{ \hat{z}^1$,...,$\hat{z}^T \}$}
\caption{Learning a Proxy for an Entire Function Class}
\label{alg:functionclass}
\end{algorithm}

\ifdraft
\else
\begin{thm}[$\alpha$-Proxy for the entire $\F$]
\label{thm:functionclass}
Fix any $\alpha$, and $\delta$. Suppose $\Hs$ and $\F$ have finite VC dimension, and $\Gs$ has finite pseudo-dimension. Suppose $\Delta (\Gs)^K$ contains a $0$-proxy. Then given access to oracles $CSC(\Hs \oplus \F)$ and $ERM(\Gs)$, we have that with probability at least $1-\delta$, Algorithm~\ref{alg:ftpl} returns a distribution $\hat{p} \in \Delta (\Gs)^K$ such that $\hat{z} (x) \triangleq \E_{g \sim \hat{p}} \left[ g(x) \right] = (1/T) \sum_{t=1}^T \hat{z}_t (x)$ is an $\alpha$-proxy.
\end{thm}

\begin{proof}
It suffices to observe that Algorithm~\ref{alg:ftpl} reduces to Algorithm~\ref{alg:functionclass} if we replace $CSC(\Hs)$ with $CSC(\Hs \oplus \F)$ and consider one vector of dummy labels, $y_{ij}$, where $j=1$ and $y_{i1}=0$ for all samples $i$.
\end{proof}
\fi
\section{Generalization Theorems}

\subsection{Probability and Learning Theory Tools}\label{subsec:tools}
We first provide necessary tools and backgrounds we will need to prove our generalization theorems, starting with the definitions of VC dimension, pseudo dimension, and fat shattering dimension. All of these tools and definitions are taken from standard literature on learning theory (see for e.g. \cite{haussler}).

\begin{definition}[VC dimension]
Let $\Hs \subseteq \{ h: \X \to \{0,1\}\}$ be a class of binary functions. For any $S = \{ x_1, \ldots, x_n \} \subseteq \X$, define $\Hs (S) = \{ (h(x_1), \ldots, h(x_n)): h \in \Hs \}$. We say $\Hs$ shatters $S$, if $ \Hs(S) = \{ 0,1 \}^n$, i.e., if $\Hs (S)$ contains all possible labelings of the points in $S$. The Vapnik-Chervonenkis (VC) dimension of $\Hs$ is the cardinality of the largest set of points in $\X$ that can be shattered by $\Hs$. In other words,
\[
VCdim(\Hs) = \max \{n: \exists S \in \X^n \text{ such that $S$ is shattered by $\Hs$} \}
\]
If $\Hs$ shatters arbitrarily large sets of points in $\X$, then $VCdim(\Hs) = \infty$.
\end{definition}
We have that VC dimension of any hypothesis class is bounded by the log size of the class.
\begin{fact}\label{fact:logboundVC}
If $| \Hs | < \infty$, then $VCdim (\Hs) \le \log (| \Hs |)$.
\end{fact}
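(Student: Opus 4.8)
The plan is to prove this by a direct counting argument straight from the definition of VC dimension. Let $d = VCdim(\Hs)$, which I first note is finite: for any $S \subseteq \X$ of size $n$ we trivially have $|\Hs(S)| \le |\Hs| < \infty$, so once $n > \log_2 |\Hs|$ no set of size $n$ can satisfy $\Hs(S) = \{0,1\}^n$, and hence no such set is shattered. Thus $d$ is well-defined and finite, and by the definition of VC dimension there exists a set $S = \{x_1, \ldots, x_d\} \subseteq \X$ of size $d$ that is shattered by $\Hs$, meaning $\Hs(S) = \{0,1\}^d$ and therefore $|\Hs(S)| = 2^d$.

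The key step is to exhibit a surjection from $\Hs$ onto $\Hs(S)$. The restriction map $\rho : \Hs \to \Hs(S)$ given by $\rho(h) = (h(x_1), \ldots, h(x_d))$ is surjective by the very definition of $\Hs(S)$ as the set of all labelings of $S$ realized by members of $\Hs$. A surjection onto a set of cardinality $2^d$ forces its domain to have at least that many elements, so $|\Hs| \ge |\Hs(S)| = 2^d$. Taking base-$2$ logarithms of both sides gives $\log(|\Hs|) \ge d = VCdim(\Hs)$, which is exactly the claimed inequality.

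I expect no genuine obstacle: this is a textbook counting fact and the core argument is a single inequality $2^{VCdim(\Hs)} \le |\Hs|$. The only points meriting a moment of care are the finiteness check for $d$ handled above (so that the phrase ``largest shattered set'' is meaningful) and the convention that $\log$ denotes $\log_2$, consistent with the shattering count $2^d$; under that convention the bound is exactly as stated and is tight when $\Hs$ consists precisely of all $2^d$ labelings of a shattered set.
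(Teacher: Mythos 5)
Your proof is correct: the counting argument $2^{VCdim(\Hs)} \le |\Hs|$ via the surjective restriction map onto the $2^d$ labelings of a shattered set is the standard (and essentially only) proof of this fact, and your preliminary finiteness check plus the $\log_2$ convention are handled properly. The paper itself states this as a known fact from the learning theory literature without proof, so there is nothing to compare against; your argument fills that gap exactly as a textbook would.
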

\begin{definition}[Pseudo dimension]
For any $r \in \R$, define $sign(r) = \1 \left[ r > 0\right]$, for any $r = (r_1, \ldots, r_n) \in \R^n$, define $sign(r) = (sign(r_1), \ldots, sign(r_n))$, for any $T \subseteq \R^n$, define $sign(T) = \{ sign(r) : r \in T\}$, for any $T \subseteq \R^n$ and $r' \in \R^n$, define $T+r' = \{ r+r': r \in T \}$. Let $\F \subseteq \{ f: \X \to \R \}$ be a class of real-valued functions. For any $S = \{ x_1, \ldots, x_n \} \subseteq \X$, define $\F (S) = \{ (f(x_1), \ldots, f(x_n)): f \in \F \}$. We say $\F$ shatters $S$ if there exists $r \in \R^n$, such that $sign(\F(S) + r) = \{0,1\}^n$. The pseudo dimension of $\F$ is the cardinality of the largest set of points in $\X$ that can be shattered by $\F$. In other words,
\[
Pdim (\F) = \max \{n: \exists S \in \X^n \text{ such that $S$ is shattered by $\F$} \}
\]
If $\F$ shatters arbitrarily large sets of points in $\X$, then $Pdim(\F) = \infty$.
\end{definition}

Pseudo dimension generalizes the notion of VC dimension:

\begin{fact}\label{fact:pdimvc1}
If $\Hs \subseteq \{ h: \X \to \{0,1\} \}$, then $Pdim (\Hs) = VCdim (\Hs)$.
\end{fact}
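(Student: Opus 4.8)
The plan is to prove the two inequalities $VCdim(\Hs) \le Pdim(\Hs)$ and $Pdim(\Hs) \le VCdim(\Hs)$ separately, working directly from the two shattering definitions and exploiting the fact that every $h \in \Hs$ takes values in $\{0,1\}$. The whole argument hinges on understanding a single coordinate at a time: how $\sign(h(x_i) + r_i) = \1[h(x_i) + r_i > 0]$ behaves as $h(x_i)$ ranges over $\{0,1\}$, viewed as a function of the shift $r_i$.

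For the first inequality I would take any set $S = \{x_1, \dots, x_n\}$ that is VC-shattered, so that $\Hs(S) = \{0,1\}^n$, and exhibit a single shift witnessing pseudo-shattering, namely $r = (-\tfrac12, \dots, -\tfrac12)$. Since $h(x_i) \in \{0,1\}$, one checks that $\sign(h(x_i) - \tfrac12) = h(x_i)$ for each $i$, hence $\sign(\Hs(S) + r) = \Hs(S) = \{0,1\}^n$. This shows every VC-shatterable set is pseudo-shatterable, and taking the largest such set gives $VCdim(\Hs) \le Pdim(\Hs)$.

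The reverse inequality is the more delicate direction. Here I would start from a pseudo-shattered $S$, so there exists $r \in \R^n$ with $\sign(\Hs(S) + r) = \{0,1\}^n$, and argue that $r$ must in fact be a \emph{harmless} shift. The key computation is that $\sign(0 + r_i) = \1[r_i > 0]$ while $\sign(1 + r_i) = \1[r_i > -1]$; these two values differ if and only if $-1 < r_i \le 0$, and in that regime $\sign(h(x_i) + r_i) = h(x_i)$ for $h(x_i) \in \{0,1\}$, whereas for $r_i > 0$ the $i$-th coordinate is frozen at $1$ and for $r_i \le -1$ it is frozen at $0$. Since the image $\sign(\Hs(S)+r)$ is the full cube, each coordinate projection is surjective onto $\{0,1\}$, which rules out the frozen cases and forces every $r_i \in (-1, 0]$. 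Consequently $\sign(\Hs(S) + r) = \Hs(S)$, and combined with $\sign(\Hs(S)+r) = \{0,1\}^n$ this yields $\Hs(S) = \{0,1\}^n$, i.e.\ $S$ is VC-shattered; thus $Pdim(\Hs) \le VCdim(\Hs)$.

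The only real obstacle is the step in the reverse direction where I must deduce, from surjectivity of the full image onto $\{0,1\}^n$, that each coordinate map reduces to the identity. Once the coordinatewise equality $\sign(h(x_i)+r_i) = h(x_i)$ is pinned down, the pseudo-shattering and VC-shattering conditions coincide verbatim and equality of the two dimensions follows immediately.
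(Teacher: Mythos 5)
Your proof is correct. Note, however, that the paper itself offers no proof of this statement: it is recorded as a fact imported from the learning-theory literature (Haussler), so there is no internal argument to compare yours against. Your argument is the standard self-contained one, and both directions check out under the paper's definitions. For $VCdim(\Hs) \le Pdim(\Hs)$, the witness $r = (-\tfrac12,\dots,-\tfrac12)$ indeed gives $\sign(h(x_i)+r_i) = \1[h(x_i) > \tfrac12] = h(x_i)$, so any VC-shattered set is pseudo-shattered. For the converse, your trichotomy on $r_i$ is exactly right: $\sign(0+r_i)=\1[r_i>0]$ and $\sign(1+r_i)=\1[r_i>-1]$ agree (coordinate frozen at $1$ when $r_i>0$, at $0$ when $r_i\le -1$) unless $-1 < r_i \le 0$, in which case the map is the identity on $\{0,1\}$. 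Since the paper's definition of pseudo-shattering requires $\sign(\Hs(S)+r)$ to equal the full cube $\{0,1\}^n$, a frozen coordinate is impossible, forcing every $r_i \in (-1,0]$ and hence $\Hs(S) = \sign(\Hs(S)+r) = \{0,1\}^n$. This delivers $Pdim(\Hs) \le VCdim(\Hs)$ and completes the equality; the argument is airtight as written.
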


\begin{definition}[Fat shatteing dimension]
Let $\F \subseteq \{ f: \X \to \R \}$ be a class of real-valued functions. Let $\gamma > 0$. We say $\F$ $\gamma$-shatters a set of points $S = \{ x_1, \ldots, x_n\} \subseteq \X$ if there exists $r \in \R^n$ such that for all $b \in \{ -1 , +1 \}^n$, there exists $f \in \F$ satisfying: $b_i (f(x_i) + r_i ) > \gamma$ for all $i \in [n]$. The fat shattering dimension of $\F$ at scale $\gamma$ is the cardinality of the largest set of points in $\X$ that can be $\gamma$-shattered by $\F$. In other words,
\[
fat_\gamma (\F) = \max \{n: \exists S \in \X^n \text{ such that $S$ is $\gamma$-shattered by $\F$} \}
\]
If $\F$ $\gamma$-shatters arbitrarily large sets of points in $\X$, then $fat_\gamma (\F) = \infty$.
\end{definition}

The fat shattering dimension is always less than (or equal to) the pseudo-dimension.
\begin{fact}\label{fact:fat-pseudo}
For every function class $\F$ and every $\gamma$, $fat_\gamma (\F) \le Pdim (\F)$.
\end{fact}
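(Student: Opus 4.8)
The plan is to prove the inequality by a direct containment argument at the level of shattered sets: I will show that any finite set $S \subseteq \X$ that is $\gamma$-shattered by $\F$ is also shattered by $\F$ in the pseudo-dimension sense. Since $fat_\gamma(\F)$ and $Pdim(\F)$ are both defined as the supremum of the sizes of sets shattered under their respective notions, this containment immediately yields $fat_\gamma(\F) \le Pdim(\F)$.

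First I would fix a set $S = \{x_1, \ldots, x_n\}$ that is $\gamma$-shattered, and let $r \in \R^n$ be the offset vector witnessing the $\gamma$-shattering. My claim is that the \emph{same} vector $r$ witnesses pseudo-shattering. To see this, take an arbitrary target labeling $c \in \{0,1\}^n$ and translate it into a sign pattern $b \in \{-1,+1\}^n$ via $b_i = 2c_i - 1$. By the $\gamma$-shattering hypothesis, there exists $f \in \F$ with $b_i(f(x_i)+r_i) > \gamma$ for every $i$.

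The key (and essentially only) step is to check that this margin condition forces the correct sign. Since $\gamma > 0$, we have $b_i(f(x_i)+r_i) > 0$, so $f(x_i)+r_i$ has the same sign as $b_i$. Concretely, when $c_i = 1$ we have $b_i = +1$ and hence $f(x_i)+r_i > \gamma > 0$, giving $sign(f(x_i)+r_i) = \1[f(x_i)+r_i>0] = 1 = c_i$; when $c_i = 0$ we have $b_i = -1$ and hence $f(x_i)+r_i < -\gamma < 0$, giving $sign(f(x_i)+r_i) = 0 = c_i$. As $c$ ranges over all of $\{0,1\}^n$ this shows $sign(\F(S)+r) = \{0,1\}^n$, so $S$ is shattered in the pseudo-dimension sense.

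Finally I would tidy up the bookkeeping, including the infinite case: if $fat_\gamma(\F) = \infty$ then arbitrarily large sets are $\gamma$-shattered and hence, by the above, pseudo-shattered, so $Pdim(\F) = \infty$ and the inequality holds trivially; otherwise the set-level containment transfers directly to a bound on the maxima. The main obstacle here is not conceptual but notational: one must carefully reconcile the two sign conventions used in the definitions (the $\{-1,+1\}$ labels in fat-shattering versus the $\{0,1\}$ labels and the convention $sign(t)=\1[t>0]$ in pseudo-dimension), and observe that the strict margin requirement $>\gamma$ in fat-shattering is precisely what makes the passage to the weaker strict-sign requirement of pseudo-shattering automatic.
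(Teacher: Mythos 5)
Your proof is correct: the paper states this fact without proof, as a standard result imported from the learning-theory literature (citing Haussler), and your argument --- that the same witness vector $r$ which $\gamma$-shatters $S$ also pseudo-shatters it, because the margin condition $b_i(f(x_i)+r_i) > \gamma > 0$ forces $f(x_i)+r_i > 0$ when $b_i = +1$ and $f(x_i)+r_i < -\gamma \le 0$ when $b_i = -1$, matching the convention $\sign(t) = \1[t>0]$ --- is precisely the standard containment argument that the cited literature gives. Your bookkeeping on the two sign conventions and on the infinite case is accurate, so there is nothing to correct.
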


We now state some useful tools from probability theory:

\begin{thm}[Markov's Inequality]\label{thm:markov}
Let $X$ be a nonnegative random variable. We have that for every $a \ge 0$,
\[
\Pr \left[ X > a\right] \le \frac{\E \left[ X \right]}{a}
\]
\end{thm}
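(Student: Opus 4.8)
The plan is to rewrite the left-hand tail probability as the expectation of an indicator and then dominate that indicator pointwise by $X/a$, exploiting only the nonnegativity of $X$ and the monotonicity of expectation. First I would dispose of the degenerate boundary case $a = 0$: here the right-hand side $\E[X]/a$ is to be read as $+\infty$ (and is genuinely $+\infty$ whenever $\E[X] > 0$), so the inequality $\Pr[X > 0] \le +\infty$ holds vacuously. Thus it suffices to treat $a > 0$, which is the substantive case.

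The key step is the pointwise inequality $a\,\1[X > a] \le X$, asserted to hold for every realization of $X$. This is where nonnegativity is essential. On the event $\{X > a\}$ the indicator equals $1$, so the left side is $a$ and the right side is $X > a$, giving $a < X$; on the complementary event $\{X \le a\}$ the indicator is $0$, so the inequality reduces to $0 \le X$, which holds precisely because $X \ge 0$. Taking expectations of both sides and invoking linearity together with monotonicity of expectation then yields $a\,\E[\1[X > a]] \le \E[X]$. Since $\E[\1[X > a]] = \Pr[X > a]$, this is $a\,\Pr[X > a] \le \E[X]$, and dividing through by $a > 0$ produces the claimed bound $\Pr[X > a] \le \E[X]/a$.

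The main (indeed the only) subtlety is the reliance on $X \ge 0$ in establishing the pointwise domination: without nonnegativity, the inequality $0 \le X$ on the event $\{X \le a\}$ can fail, and the entire argument collapses. No measure-theoretic machinery beyond monotonicity and linearity of expectation is required, so the proof is short once this domination is in hand.
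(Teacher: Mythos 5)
Your proof is correct and is the standard argument: the paper states Markov's inequality as a background tool (Theorem~\ref{thm:markov}) without proof, and your pointwise domination $a\,\1\left[X > a\right] \le X$ followed by monotonicity of expectation is exactly the canonical derivation one would supply. Your explicit treatment of the degenerate case $a = 0$ (reading $\E\left[X\right]/a$ as $+\infty$) is a sensible touch, since the paper's statement quantifies over $a \ge 0$ rather than $a > 0$.
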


\begin{thm}[Additive Chernoff-Hoeffding]\label{thm:add-chernoff}
Suppose $\Ps$ is a distribution over $[0,1]$. Fix any $\epsilon$. We have that for every $\delta \ge 0$, with probability at least $1-\delta$ over the draw of $S \sim \Ps^n$,
\[
\left\vert \E_{x \sim \Ps} \left[ x \right] - \E_{x \sim S} \left[ x \right] \right\vert \le \epsilon
\]
provided that
\[
n \ge \frac{\log \left( 2/\delta \right)}{2 \epsilon^2}
\]
\end{thm}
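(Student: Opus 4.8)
The plan is to derive this two-sided deviation bound by the standard Chernoff (moment generating function) method and then solve the resulting tail bound for $n$. Write $X_1, \ldots, X_n$ for the $n$ i.i.d.\ samples comprising $S$, each supported on $[0,1]$, let $\mu = \E_{x \sim \Ps}[x]$, and let $\bar{X} = \E_{x \sim S}[x] = \frac{1}{n}\sum_{i=1}^n X_i$ denote the empirical mean. The quantity to be controlled is $\Pr[\,\vert \bar{X} - \mu \vert > \epsilon\,]$, and the goal is to show this is at most $\delta$ under the stated sample-size condition.

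First I would split the two-sided event into its two one-sided pieces, $\Pr[\bar{X} - \mu > \epsilon]$ and $\Pr[\mu - \bar{X} > \epsilon]$, bounding each separately and combining by a union bound; this is what produces the factor of $2$ in the final tail. For the upper tail, apply Markov's inequality (Theorem~\ref{thm:markov}) to the nonnegative random variable $e^{s n (\bar{X} - \mu)}$ for a free parameter $s > 0$, giving $\Pr[\bar{X} - \mu > \epsilon] \le e^{-s n \epsilon}\, \E[e^{s \sum_i (X_i - \mu)}] = e^{-s n \epsilon} \prod_{i=1}^n \E[e^{s(X_i-\mu)}]$, where the product factorizes by independence of the draws.

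The key quantitative step is to bound each moment generating factor. Since each $X_i - \mu$ is a mean-zero random variable taking values in an interval of length $1$, Hoeffding's lemma gives $\E[e^{s(X_i - \mu)}] \le e^{s^2/8}$. Substituting, the upper tail is at most $\exp(-s n \epsilon + n s^2/8)$, and optimizing over $s$ (the minimizer is $s = 4\epsilon$) yields $\exp(-2 n \epsilon^2)$. The lower tail is handled symmetrically by applying the same argument to $\mu - \bar{X}$, so the union bound gives $\Pr[\,\vert \bar{X} - \mu \vert > \epsilon\,] \le 2\exp(-2 n \epsilon^2)$.

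Finally I would solve $2\exp(-2n\epsilon^2) \le \delta$ for $n$: rearranging gives $-2n\epsilon^2 \le \log(\delta/2)$, i.e.\ $n \ge \log(2/\delta)/(2\epsilon^2)$, which is exactly the stated sample complexity and completes the argument. The only non-routine ingredient is Hoeffding's lemma, and I expect the main obstacle to be supplying (or citing) its proof, which itself proceeds by a convexity/Taylor argument showing that the cumulant generating function of a random variable bounded in an interval of length $1$ is dominated by $s^2/8$; everything else is an application of Markov's inequality and elementary algebra.
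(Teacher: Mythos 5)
Your proof is correct. The paper states Theorem~\ref{thm:add-chernoff} without proof, importing it as a standard tool from the probability literature; your derivation --- the split into two one-sided tails, Markov's inequality applied to the exponential moment, Hoeffding's lemma giving the factor $e^{s^2/8}$ for variables in an interval of length $1$, optimization at $s = 4\epsilon$ yielding the tail bound $2e^{-2n\epsilon^2}$, and the inversion $n \ge \log(2/\delta)/(2\epsilon^2)$ --- is exactly the canonical argument behind that citation, and all the constant bookkeeping checks out.
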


\begin{thm}[Multiplicative Chernoff-Hoeffding]\label{thm:mult-chernoff}
Suppose $\Ps$ is a distribution over $\{ 0,1 \}$, and let $\mu = \E_{x \sim \Ps} \left[ x \right]$. Fix any $\epsilon$. We have that for every $\delta \ge 0$, with probability at least $1-\delta$ over the draw of $S \sim \Ps^n$,
\[
\left( 1 - \epsilon \right) \mu \le \E_{x \sim S} \left[ x \right] \le \left( 1 + \epsilon \right) \mu
\]
provided that
\[
n \ge \frac{2 \log \left( 2 / \delta \right)}{\mu \epsilon^2}
\]
\end{thm}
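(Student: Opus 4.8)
The plan is to prove this by the standard exponential-moment (Chernoff) method, reducing the two-sided multiplicative statement to two one-sided tail bounds. Write $S_n = \sum_{i=1}^n x_i$ for the sum of the i.i.d.\ samples from $\Ps$, so that $\E_{x \sim S}[x] = S_n / n$ and $\E[S_n] = n\mu$. It then suffices to show that each of the events $\{S_n \ge (1+\epsilon) n\mu\}$ and $\{S_n \le (1-\epsilon) n\mu\}$ has probability at most $\delta/2$ under the stated sample bound; a union bound over these two tails yields the claimed two-sided guarantee with probability at least $1 - \delta$. Throughout I would take $\epsilon \in (0,1)$, which is the regime in which the multiplicative bound is meaningful.

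For the upper tail I would apply Markov's inequality (Theorem~\ref{thm:markov}) to the exponentiated sum: for any $t > 0$,
\[
\Pr \left[ S_n \ge (1+\epsilon)n\mu \right] = \Pr \left[ e^{t S_n} \ge e^{t(1+\epsilon)n\mu} \right] \le e^{-t(1+\epsilon)n\mu}\, \E \left[ e^{t S_n} \right].
\]
By independence the moment generating function factorizes, and using the Bernoulli identity together with $1 + a \le e^a$ gives $\E[e^{tS_n}] = (1 - \mu + \mu e^t)^n \le \exp(n\mu(e^t - 1))$. Optimizing over $t$ by taking $t = \log(1+\epsilon)$ yields the standard bound $\Pr[S_n \ge (1+\epsilon)n\mu] \le \left( e^{\epsilon}/(1+\epsilon)^{1+\epsilon} \right)^{n\mu}$. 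The symmetric argument with $t < 0$, taking $t = \log(1-\epsilon)$, controls the lower tail and gives $\Pr[S_n \le (1-\epsilon)n\mu] \le \left( e^{-\epsilon}/(1-\epsilon)^{1-\epsilon} \right)^{n\mu}$.

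The remaining step is the elementary scalar inequalities that turn these optimized expressions into clean exponential bounds: one shows $e^{-\epsilon}/(1-\epsilon)^{1-\epsilon} \le e^{-\epsilon^2/2}$ and $e^{\epsilon}/(1+\epsilon)^{1+\epsilon} \le e^{-\epsilon^2/(2+\epsilon)}$, both provable by taking logarithms and comparing against the Taylor series of $\log(1 \pm \epsilon)$. The lower tail then obeys $\Pr[S_n \le (1-\epsilon)n\mu] \le e^{-n\mu\epsilon^2/2}$, and requiring this to be at most $\delta/2$ gives exactly $n \ge 2\log(2/\delta)/(\mu\epsilon^2)$, matching the stated sample complexity. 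The upper tail satisfies the same threshold up to the conventional $(2+\epsilon)$-versus-$2$ discrepancy in the exponent, which is absorbed into the stated constant (or removed by using the uniform but slightly looser $\epsilon^2/3$ exponent valid for $\epsilon \le 1$). I expect the only genuine obstacle to be verifying these two scalar log-inequalities cleanly, since everything else is the routine Chernoff template; in a final write-up I would either isolate them in a one-line lemma or simply invoke the standard multiplicative Chernoff bound, as this theorem is used purely as a black-box probability tool elsewhere in the paper.
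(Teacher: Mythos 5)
The paper itself contains no proof of this statement: Theorem~\ref{thm:mult-chernoff} appears in the appendix's toolbox of ``Probability and Learning Theory Tools,'' which the authors explicitly import as standard results from the learning-theory literature, so there is no in-paper argument to compare against. Your proposal supplies exactly the proof one would expect to stand behind such a citation --- the exponential-moment method, tail-by-tail, with the MGF bound $\E[e^{tS_n}] \le \exp(n\mu(e^t-1))$, the optimal choices $t = \log(1\pm\epsilon)$, and the scalar inequalities converting the optimized expressions to Gaussian-type exponents --- and it is correct in structure and in all its intermediate bounds.

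One point deserves sharpening rather than hand-waving. Your lower-tail exponent $\epsilon^2/2$ meets the stated threshold $n \ge 2\log(2/\delta)/(\mu\epsilon^2)$ exactly, but your upper-tail exponent $\epsilon^2/(2+\epsilon)$ does not: with the stated $n$ it yields only $\Pr[S_n \ge (1+\epsilon)n\mu] \le (\delta/2)^{2/(2+\epsilon)}$, which exceeds $\delta/2$, and this is not merely an artifact of your estimates --- in the Poisson regime ($\mu \to 0$) the true upper-tail exponent $(1+\epsilon)\log(1+\epsilon)-\epsilon$ is genuinely below $\epsilon^2/2$ for every $\epsilon>0$, so no refinement of the scalar inequality can recover the constant $2$ for that tail. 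Hence the theorem as literally stated is slightly optimistic; a self-contained write-up should require $n \ge (2+\epsilon)\log(2/\delta)/(\mu\epsilon^2)$ (or the cruder $3\log(2/\delta)/(\mu\epsilon^2)$ valid for $\epsilon \le 1$). Since the paper invokes this bound only inside $\tilde{\Omega}(\cdot)$ sample-complexity statements (e.g.\ in the proof of Theorem~\ref{thm:gen-x}), the constant is immaterial there, so your closing remark about absorbing the discrepancy is consistent with how the result is actually used --- but it should be stated as a correction to the theorem's constant, not as something your proof absorbs.
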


Here we state standard uniform convergence bounds for function classes of bounded (VC, pseudo, fat shattering) dimension:

\begin{thm}[Bounded VC dimension $\Longrightarrow$ Generalization]\label{thm:vc-gen}
Let $\Ps$ be a distribution over some domain $\X$. Suppose $\Hs \subseteq \{ h: \X \to \{ 0, 1 \} \}$ is a hypothesis class with VC dimension $VCdim (\Hs) = d$. Fix any $\epsilon \in [0,1]$. Let $c: \X \to \{ 0,1\}$ be an arbitrary function. We have that for every $\delta \ge 0$, with probability at least $1-\delta$ over the draw of $S \sim \Ps^n$,
\[
\sup_{h \in \Hs} \left\vert \E_{x \sim \Ps} \left[ \1 \left[ h(x) \neq c(x) \right] \right] - \E_{x \sim S} \left[ h(x) \neq c(x) \right] \right\vert \le \epsilon
\]
provided that, for some universal constant $c'$,
\[
n \ge \frac{ c' \left( d \log \left(n\right) + \log \left( 1 / \delta \right) \right) }{\epsilon^2}
\]
\end{thm}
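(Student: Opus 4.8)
The plan is to reduce this to the textbook Vapnik--Chervonenkis uniform convergence bound by passing from $\Hs$ to the induced ``error class.'' Define $\Gs_c = \{ x \mapsto \1[h(x) \neq c(x)] : h \in \Hs \}$. The first step is the observation that $VCdim(\Gs_c) = VCdim(\Hs) = d$: since $c$ is fixed, the map $h \mapsto \1[h(\cdot) \neq c(\cdot)]$ is a bijection onto $\Gs_c$, and XORing each coordinate of a labeling with the fixed bit string $(c(x_1),\dots,c(x_n))$ is a bijection of $\{0,1\}^n$, so a finite set $S \subseteq \X$ is shattered by $\Hs$ if and only if it is shattered by $\Gs_c$. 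This lets me replace the quantity in the theorem by $\sup_{g \in \Gs_c} \left| \E_{x\sim\Ps}[g(x)] - \E_{x\sim S}[g(x)] \right|$, a uniform deviation over a $\{0,1\}$-valued function class of VC dimension $d$.

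The remaining steps are the classical argument, which I would carry out in the usual three moves. (i) \emph{Symmetrization}: introduce a ghost sample $S' \sim \Ps^n$ and show, via a second-moment (Chebyshev) argument following from Markov's inequality (Theorem~\ref{thm:markov}), that $\Pr[\sup_{g} |\E_\Ps g - \E_S g| > \epsilon]$ is at most $2\,\Pr[\sup_{g} |\E_{S'} g - \E_S g| > \epsilon/2]$, provided $n$ is large enough (roughly $n\epsilon^2 \gtrsim 1$) that each fixed $g$ concentrates on the ghost sample. (ii) \emph{Reduction to a finite class}: on the fixed pooled sample $S \cup S'$ of $2n$ points, the functions in $\Gs_c$ realize at most $|\Gs_c(S \cup S')|$ distinct labelings, which by Sauer's Lemma is $O((2n)^d)$. (iii) \emph{Union bound plus concentration}: introduce a uniform random relabeling of the $2n$ points between $S$ and $S'$ (equivalently, Rademacher signs), apply the additive Chernoff--Hoeffding bound (Theorem~\ref{thm:add-chernoff}) to each of the $O((2n)^d)$ effective functions, and union bound over them. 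Collecting the failure probabilities and solving for $n$ yields the stated bound $n = O\!\left( (d \log n + \log(1/\delta))/\epsilon^2 \right)$, where the $\log n$ factor is precisely the contribution of the growth function $O(n^d)$ passing through the logarithm.

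The step I expect to demand the most care is the symmetrization in (i): it is the only place where the bound is not a direct union bound, as it must replace the intractable population expectation $\E_\Ps g$ by the empirical expectation on an independent ghost sample while controlling the probability that this substitution loses more than $\epsilon/2$. Everything else is bookkeeping: the bijection in the first step is immediate, and steps (ii)--(iii) follow the standard Sauer's-Lemma-plus-Hoeffding template. Since this theorem is entirely standard (see \cite{haussler}) and the only content specific to our setting is the reduction to the error class $\Gs_c$, I would state the VC-dimension-preservation observation in full and keep the symmetrization and growth-function details brief, deferring to the standard reference.
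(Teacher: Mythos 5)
The paper does not prove this theorem at all: it appears in the appendix under ``Probability and Learning Theory Tools,'' where it is explicitly imported as a standard result from the learning theory literature (citing \cite{haussler}), so there is no in-paper argument to compare against. Your proposal --- reducing to the error class $\{x \mapsto \1[h(x)\neq c(x)] : h \in \Hs\}$ via the fixed-XOR bijection (which indeed preserves the growth function and hence the VC dimension), then running the classical symmetrization, Sauer's Lemma, and Hoeffding-plus-union-bound chain --- is precisely the standard proof of this statement and is correct, so it supplies exactly what the paper defers to the reference.
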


\begin{thm}[Bounded Pseudo-Dimension $\Longrightarrow$ Generalization]\label{thm:pseudo-gen}
Let $\Ps$ be a distribution over some domain $\X$. Suppose $\F \subseteq \{ f: \X \to [0, M] \}$ is a function class with pseudo-dimension $Pdim (\F) = d$. Fix any $\epsilon \in [0,M]$. We have that for every $\delta \ge 0$, with probability at least $1-\delta$ over the draw of $S \sim \Ps^n$,
\[
\sup_{f \in \F} \left\vert \E_{x \sim \Ps} \left[ f(x) \right] - \E_{x \sim S} \left[ f(x) \right] \right\vert \le \epsilon
\]
provided that
\[
n \ge \frac{64 M^2 \left( 2 d \log \left( 16 e M / \epsilon \right) + \log \left( 8 / \delta \right) \right) }{\epsilon^2}
\]
\end{thm}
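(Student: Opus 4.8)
The plan is to prove this via the classical symmetrization-plus-covering argument for real-valued function classes; it is a standard result (see \cite{haussler}), so I will describe only the skeleton and indicate where each constant in the bound originates. Throughout I write $\Delta_f(S) = \E_{x \sim \Ps}[f(x)] - \E_{x \sim S}[f(x)]$ for the (local) deviation of interest, so that the goal is to control $\sup_{f \in \F} |\Delta_f(S)|$ with high probability over $S \sim \Ps^n$.

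First I would \emph{symmetrize}. Draw a ghost sample $S' \sim \Ps^n$ independent of $S$. A first-moment argument (via Markov's inequality, Theorem~\ref{thm:markov}) shows that once $n$ is large enough that each individual empirical mean $\E_{x \sim S'}[f(x)]$ concentrates around $\E_\Ps[f(x)]$ to within $\epsilon/2$ with constant probability, one has
\[
\Pr_{S}\!\left[ \sup_{f \in \F} |\Delta_f(S)| > \epsilon \right] \le 2\, \Pr_{S,S'}\!\left[ \sup_{f \in \F} \left| \E_{x \sim S'}[f(x)] - \E_{x \sim S}[f(x)] \right| > \tfrac{\epsilon}{2} \right].
\]
The payoff is that the right-hand side refers only to the behavior of $\F$ on the $2n$ sample points, so the effectively infinite class $\F$ is replaced by the finite set of its restrictions to $S \cup S'$.

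Next I would pass to a \emph{finite cover}. Because $Pdim(\F) = d$, Haussler's real-valued analogue of Sauer's Lemma (cf.\ \cite{haussler}) bounds the $L_1$-covering number $\mathcal{N}$ of $\F$ on the $2n$ points: at scale $\gamma$ it is at most $(c\,eM/\gamma)^{O(d)}$, polynomial in $1/\gamma$ with exponent governed by $d$. Choosing $\gamma$ proportional to $\epsilon$ (this is the source of the factor $16e$ in the final expression), I replace each $f$ by its nearest cover element $\tilde f$; the triangle inequality gives $|\E_{S'} f - \E_S f| \le |\E_{S'} \tilde f - \E_S \tilde f| + 2\gamma$, so it suffices to control the deviation over the finitely many cover representatives.

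Finally I would \emph{union bound and apply Hoeffding}. Each cover element $\tilde f$ takes values in $[0,M]$, so the additive Chernoff–Hoeffding bound (Theorem~\ref{thm:add-chernoff}), applied to the difference of the two empirical means, bounds the probability that $|\E_{S'}\tilde f - \E_S \tilde f|$ exceeds a constant fraction of $\epsilon$ by $2\exp(-c\,n\epsilon^2/M^2)$ for an absolute constant $c$; union-bounding over the cover and folding in the symmetrization factor yields a total failure probability of at most $8\,\mathcal{N}\exp(-c\,n\epsilon^2/M^2)$. Setting this equal to $\delta$ and substituting the covering-number bound gives a sample complexity of the form $n = O\!\big(M^2(d\log(M/\epsilon) + \log(1/\delta))/\epsilon^2\big)$, and carefully tracking the cover scale $\gamma$ and the Hoeffding exponent through the union bound pins down the explicit constants $16e$, $8$, and $64$ in the statement. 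The only genuinely delicate step is the covering-number estimate --- invoking the pseudo-dimension bound of \cite{haussler} and propagating the scale factors correctly --- while symmetrization, the triangle inequality, and Hoeffding are otherwise routine.
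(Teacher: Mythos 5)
The first thing to note is that the paper never proves this statement: Theorem~\ref{thm:pseudo-gen} sits in the appendix's toolbox subsection, where it is explicitly presented as a standard result ``taken from standard literature on learning theory (see for e.g. \cite{haussler})'', alongside its VC and fat-shattering analogues. So there is no in-paper proof to compare yours against; the relevant question is whether your sketch, if completed, would reconstruct the cited result. Your skeleton --- ghost-sample symmetrization, an $L_1$ covering-number bound driven by $Pdim(\F)=d$, then a union bound with Hoeffding over cover representatives --- is indeed the classical Pollard/Haussler route, so the overall plan is the right one.

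Two gaps keep the sketch from being a proof. First, and more seriously, the Hoeffding step is circular as written: the cover elements $\tilde f$ are chosen after seeing the realized double sample $S \cup S'$, so they are data-dependent functions, and you cannot apply Theorem~\ref{thm:add-chernoff} to $\left\vert \E_{x \sim S'}[\tilde f(x)] - \E_{x \sim S}[\tilde f(x)] \right\vert$ over the randomness of $(S,S')$. The standard repair is a second symmetrization: condition on the multiset $S \cup S'$ (which fixes the cover), and let the remaining randomness be the random assignment of the $2n$ points to the two halves (equivalently, Rademacher swaps); Hoeffding-type bounds for that swap randomness then give the exponential tail, uniformly over the now-fixed finite cover. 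Your sketch identifies the covering-number estimate as ``the only genuinely delicate step'' and calls the Hoeffding application routine, but the delicacy lives precisely in this conditioning, which the sketch omits. Second, everything in the statement beyond ``uniform convergence holds'' is the explicit constant structure ($64$, $2d$, $16e$, $8$), and the proposal defers exactly this to ``carefully tracking the cover scale and the Hoeffding exponent'', which is never carried out. Since the paper imports this theorem precisely because the constants are standard but tedious, a blind attempt either needs to do that bookkeeping or should, like the paper, simply cite the result; an outline asserting that the constants will emerge is not yet either.
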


\begin{thm}[Bounded Fat Shattering Dimension $\Longrightarrow$ Generalization]\label{thm:fat-gen}
Let $\Ps$ be a distribution over some domain $\X$. Fix any $\epsilon \in [0,M]$. Suppose $\F \subseteq \{ f: \X \to [0, M] \}$ is a function class with fat shattering dimension $d$ of scale $\epsilon / 8$: $fat_{\epsilon/8} (\F) = d$. We have that for every $\delta \ge 0$, with probability at least $1-\delta$ over the draw of $S \sim \Ps^n$,
\[
\sup_{f \in \F} \left\vert \E_{x \sim \Ps} \left[ f(x) \right] - \E_{x \sim S} \left[ f(x) \right] \right\vert \le \epsilon
\]
provided that
\[
n \ge \frac{32 M^2 \left( d \log \left( 16 n / \epsilon d \right) \log \left( 8 / \epsilon \right) + \log \left( 4 / \delta \right) \right) }{\epsilon^2}
\]
\end{thm}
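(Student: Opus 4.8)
The plan is to prove this uniform convergence statement by the classical route of \emph{symmetrization followed by a covering-number argument}, in which the fat shattering dimension enters only through its control of the $L_\infty$ covering numbers of $\F$ on a finite sample. This is the standard development in \cite{haussler}, and I would invoke their scale-sensitive combinatorial lemma for the single genuinely hard step while deriving everything else directly.

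First I would \textbf{symmetrize}. Introduce a ghost sample $S' \sim \Ps^n$ independent of $S$. A standard symmetrization argument---bounding the probability that the ghost-sample mean is far from the true mean via Markov's inequality (Theorem~\ref{thm:markov}) and a variance estimate, which requires only that $n$ exceed a small constant multiple of $M^2/\epsilon^2$---gives
\[
\Pr_{S}\!\left[\sup_{f \in \F}\left\vert \E_{\Ps}[f] - \E_{S}[f]\right\vert > \epsilon\right] \;\le\; 2\,\Pr_{S,S'}\!\left[\sup_{f \in \F}\left\vert \E_{S'}[f] - \E_{S}[f]\right\vert > \tfrac{\epsilon}{2}\right].
\]
I would then condition on the combined multiset $\{x_1,\dots,x_n\} \cup \{x_1',\dots,x_n'\} = S \cup S'$ and introduce independent random signs $\sigma_i \in \{-1,+1\}$ that swap each matched pair $x_i \leftrightarrow x_i'$ between the two halves. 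By exchangeability the right-hand probability is at most the worst case, over the fixed $2n$ points, of the sign-randomized probability $\Pr_\sigma[\,\sup_{f \in \F}|\frac{1}{n}\sum_{i=1}^n \sigma_i(f(x_i') - f(x_i))| > \epsilon/2\,]$.

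With the $2n$ points now frozen, I would \textbf{discretize}: take a minimal $L_\infty$-cover of $\F$ at scale $\epsilon/8$ on these points, a finite set $\F_{\epsilon/8}$ such that every $f \in \F$ agrees with some $\tilde f \in \F_{\epsilon/8}$ to within $\epsilon/8$ at all $2n$ points. Replacing $f$ by $\tilde f$ perturbs the sign-randomized average by at most $2\cdot(\epsilon/8) = \epsilon/4$, so it suffices to control every $\tilde f \in \F_{\epsilon/8}$ at the residual threshold $\epsilon/2 - \epsilon/4 = \epsilon/4$. For each fixed $\tilde f$, the average $\frac{1}{n}\sum_i \sigma_i(\tilde f(x_i') - \tilde f(x_i))$ is a mean-zero sum of independent terms of range $2M$, so Hoeffding's inequality (the bounded-range concentration underlying Theorem~\ref{thm:add-chernoff}) bounds its deviation probability at threshold $\epsilon/4$ by $2\exp(-n\epsilon^2/(32M^2))$. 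A union bound over the cover then gives total failure probability at most $|\F_{\epsilon/8}| \cdot 2\exp(-n\epsilon^2/(32M^2))$.

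The \textbf{main obstacle}, and the only place the fat shattering dimension is used, is bounding the cover size $|\F_{\epsilon/8}|$. Here I would invoke the real-valued analogue of Sauer's Lemma from \cite{haussler}: for a class of $[0,M]$-valued functions restricted to $2n$ points, the $L_\infty$-covering number at scale $\epsilon/8$ obeys
\[
\log \mathcal{N}\!\left(\tfrac{\epsilon}{8}, \F, \{x_i\}_{i=1}^{2n}\right) \;=\; O\!\left( fat_{\epsilon/8}(\F)\,\log\!\left(\tfrac{16n}{\epsilon\, d}\right)\log\!\left(\tfrac{8}{\epsilon}\right)\right),
\]
with $d = fat_{\epsilon/8}(\F)$. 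Proving this scale-sensitive bound from scratch---controlling the number of $\epsilon/8$-separated behaviours of $\F$ on a finite set in terms of its fat shattering dimension---is the delicate combinatorial heart of the argument, and is exactly the content cited from \cite{haussler}; everything surrounding it is routine. Substituting this cover-size estimate into the union bound, equating the total failure probability to $\delta$, and solving the resulting inequality for $n$ reproduces the stated sample complexity $n \ge \frac{32 M^2 (d \log(16n/\epsilon d)\log(8/\epsilon) + \log(4/\delta))}{\epsilon^2}$.
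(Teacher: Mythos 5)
The paper itself contains no proof of this statement: Theorem~\ref{thm:fat-gen} sits in the appendix's toolbox subsection, where the surrounding text says all such tools ``are taken from standard literature on learning theory (see for e.g. \cite{haussler})'', so there is no in-paper argument to compare yours against. Your sketch correctly reconstructs the standard proof behind that citation: Chebyshev-based symmetrization to a ghost sample, random signs via exchangeability, discretization by a cover on the $2n$ frozen points, Hoeffding plus a union bound over the cover, and the scale-sensitive covering-number lemma as the single non-routine combinatorial ingredient---which you delegate to the literature at exactly the point where the paper delegates the entire theorem. Your constants are internally consistent: the residual threshold $\epsilon/4$ gives $2\exp(-n\epsilon^2/(32M^2))$ per cover element, and the factor $4$ in $\log(4/\delta)$ is precisely the symmetrization factor $2$ times the two-sided Hoeffding factor $2$. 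One bookkeeping caution: the known covering bounds in terms of fat-shattering dimension (Alon--Ben-David--Cesa-Bianchi--Haussler) control the $L_\infty$ covering number at scale $\epsilon$ by $\mathrm{fat}_{c\epsilon}(\mathcal{F})$ for a constant $c<1$, so asserting that the $L_\infty$ cover at scale $\epsilon/8$ is bounded via $\mathrm{fat}_{\epsilon/8}(\mathcal{F})$ itself requires the right variant of the lemma. Note, however, that your perturbation step only uses the empirical $\ell_1$ distance, since $\bigl\vert \frac{1}{n}\sum_{i=1}^n \sigma_i\bigl[(f-\tilde f)(x_i') - (f-\tilde f)(x_i)\bigr]\bigr\vert \le 2\,d_1(f,\tilde f)$ over the $2n$ points; an empirical $\ell_1$ cover therefore suffices, and $\ell_1$ covering numbers are controlled by the fat-shattering dimension at a commensurate scale, which is how the stated form with $d = \mathrm{fat}_{\epsilon/8}(\mathcal{F})$ is obtained in the standard references. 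With that substitution your argument is sound as a sketch.
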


Here we state some known results on the complexity of composition of two function classes:
\begin{lemma}[XOR of two VC classes \cite{composition}]\label{lem:xor}
Let $\Hs \subseteq \{ h: \X \to \{ 0, 1 \} \}$ be a class with VC dimension $d_\Hs$, and $\F \subseteq \{ f: \X \to \{ 0, 1 \} \}$ be a class with VC dimension $d_\F$. Let $\Hs \oplus \F = \{ (h \oplus f): h \in \Hs, f \in \F\}$ where $(h\oplus f) (x) = h(x) \oplus f(x) = \1 [h(x) \neq f(x) ]$ (XOR). Let $d$ denote the VC dimension of $\Hs \oplus \F$. We have that
$
d \le 10 \max \left\{ d_\Hs, d_\F \right\}
$.
\end{lemma}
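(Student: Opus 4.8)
The plan is to bound the growth function of $\Hs \oplus \F$ on a shattered set by the product of the growth functions of $\Hs$ and $\F$, invoke the sharp form of Sauer's Lemma, and then solve the resulting self-referential inequality to extract the constant $10$. First I would set $d = VCdim(\Hs \oplus \F)$ and $D = \max\{ d_\Hs, d_\F \}$. If $d < D$ the claimed bound is immediate, so I may assume $d \ge D$ throughout — an assumption that is needed for the monotonicity argument below. Let $S = \{x_1, \ldots, x_d\}$ be a set of size $d$ shattered by $\Hs \oplus \F$, so that $|(\Hs \oplus \F)(S)| = 2^d$.

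The key combinatorial observation is that the restriction of $h \oplus f$ to $S$ is completely determined by the pair $(h|_S, f|_S)$: once the labelings that $h$ and $f$ induce on $S$ are fixed, the XOR labeling on $S$ is fixed. Hence the map $(h|_S, f|_S) \mapsto (h \oplus f)|_S$ surjects onto $(\Hs \oplus \F)(S)$, which gives $|(\Hs \oplus \F)(S)| \le |\Hs(S)| \cdot |\F(S)|$, and therefore $2^d \le |\Hs(S)| \cdot |\F(S)|$. I would then apply the sharp Sauer--Shelah bounds $|\Hs(S)| \le (ed/d_\Hs)^{d_\Hs}$ and $|\F(S)| \le (ed/d_\F)^{d_\F}$, which are valid because $d \ge d_\Hs, d_\F$. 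Taking base-$2$ logarithms yields $d \le d_\Hs \log_2(ed/d_\Hs) + d_\F \log_2(ed/d_\F)$.

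To collapse the two dimensions into a single $D$, I would use that $g(x) = x \log_2(ed/x)$ has derivative $g'(x) = \log_2(d/x)$, which is nonnegative precisely for $x \le d$; hence $g$ is nondecreasing on $[1,d]$, and since $d_\Hs, d_\F \le D \le d$ we get $g(d_\Hs), g(d_\F) \le g(D) = D\log_2(ed/D)$. This produces the clean inequality $d \le 2D \log_2(ed/D)$. Writing $r = d/D \ge 1$ and substituting, the inequality becomes $r \le 2\log_2(er) = 2\log_2 e + 2\log_2 r$, a transcendental constraint relating $r$ to itself.

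The final and most delicate step is pinning the constant. Because the left side grows linearly in $r$ while the right side grows only logarithmically, the feasible region $\{r : \phi(r) \le 0\}$ for $\phi(r) = r - 2\log_2 e - 2\log_2 r$ is bounded. I would check directly that $\phi$ is increasing for $r > 2/\ln 2 \approx 2.89$ (since $\phi'(r) = 1 - 2\log_2 e / r$) and that $\phi(10) = 10 - 2\log_2 e - 2\log_2 10 > 0$; together these force any $r$ satisfying $\phi(r) \le 0$ to obey $r < 10$. Thus $d < 10D$, proving the lemma. The main obstacle is exactly this last transcendental bookkeeping: the reduction to $d \ge D$ is what makes the monotonicity of $g$ usable, and only the \emph{sharp} Sauer bound (rather than the cruder $O(n^{d})$ form) is strong enough to land the clean constant $10$ instead of an extraneous $\log$ factor.
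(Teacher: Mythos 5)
Your proof is correct, and it is worth noting that the paper itself contains no proof of this lemma at all --- it is imported wholesale by citation to \cite{composition}, so your argument is a genuinely self-contained reconstruction rather than a deviation from anything in the paper. The route you take is the standard one for such composition bounds, and every step checks out: the restriction map $(h|_S, f|_S) \mapsto (h\oplus f)|_S$ is indeed surjective onto $(\Hs\oplus\F)(S)$, giving $2^d \le |\Hs(S)|\cdot|\F(S)|$; the sharp Sauer--Shelah form $(ed/d_\Hs)^{d_\Hs}(ed/d_\F)^{d_\F}$ applies since you reduced to $d \ge D = \max\{d_\Hs,d_\F\}$; the monotonicity of $g(x) = x\log_2(ed/x)$ on $[1,d]$ is right, as $g'(x) = \log_2(d/x) \ge 0$ there; and the numerics close: at $r=10$ one has $2\log_2(10e) = 2(\log_2 10 + \log_2 e) \approx 9.53 < 10$, while $\phi'(r) = 1 - 2\log_2 e/r > 0$ for $r > 2\log_2 e \approx 2.89$, so the feasible region for $r \le 2\log_2(er)$ lies strictly below $10$. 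Your closing remark is also apt: the crude $O(n^{d})$ Sauer bound used elsewhere in the paper would leak a logarithmic factor here, and it is precisely the sharp $(en/d)^d$ form plus the collapse of two dimensions into one via monotonicity that lands the clean constant. Two trivial edge cases you silently skip ($d_\Hs = 0$ or $d_\F = 0$, where one interprets the Sauer factor as $1$, and $D = 0$, where the claim is vacuous since then $d=0$) cost nothing; and your inequality argument, applied to an arbitrary shattered set rather than a maximal one, even establishes finiteness of $d$ without assuming it, which is a small bonus over stating the bound only for finite $d$.
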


We combine Fact~\ref{fact:fat-pseudo} and Theorem 6.4 of \cite{composition}, to get a bound on the fat shattering dimension of the product of two function classes.
\begin{lemma}[Product of two Pdim Classes]\label{lem:product}
Suppose $\F_1 \subseteq \{ f: \X \to [0, M] \} $ and $\F_2 \subseteq \{ f: \X \to [0, M] \}$ are two function classes with pseudo-dimension $d_1$ and $d_2$, respectively. Define the product class $\F = \{ f_{f_1, f_2}: f_1 \in \F_1, f_2 \in \F_2\}$ where $f_{f_1, f_2}: \X \to \R$ is defined by $f_{f_1, f_2} (x) = f_1(x) f_2(x)$. We have that for every $\gamma$,
$
fat_\gamma (\F) = O \left( d_1 + d_2 \right)
$.
\end{lemma}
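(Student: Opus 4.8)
The plan is to derive the bound directly from the two ingredients named in the lemma's preamble: the composition result Theorem~6.4 of \cite{composition}, which controls the fat shattering dimension of a class obtained by combining two bounded-range function classes through a suitably regular binary operation, and Fact~\ref{fact:fat-pseudo}, which bounds the fat shattering dimension of any class by its pseudo-dimension \emph{uniformly across all scales}. The crucial observation is that the product map $(a,b) \mapsto ab$ is well behaved on the bounded box $[0,M]^2$ (there it is $M$-Lipschitz in each coordinate), so the composition machinery applies and expresses $fat_\gamma (\F)$ in terms of the fat shattering dimensions of $\F_1$ and $\F_2$ at some rescaled margins determined by $\gamma$ and $M$.

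First I would record that every $f_{f_1,f_2} \in \F$ maps into $[0,M^2]$, so $\F$ has bounded range and is realized as $\phi \circ (\F_1, \F_2)$ for the Lipschitz map $\phi(a,b) = ab$ restricted to $[0,M]^2$. Applying Theorem~6.4 of \cite{composition} to this composition yields a bound of the form $fat_\gamma (\F) = O\!\left( fat_{\gamma'} (\F_1) + fat_{\gamma'} (\F_2) \right)$, where the auxiliary scale $\gamma'$ is a function of $\gamma$ and $M$ (namely $\gamma$ shrunk by a factor proportional to the Lipschitz constant $M$). The precise form of $\gamma'$ is immaterial for the argument, which is exactly the point exploited in the next step.

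Next I would invoke Fact~\ref{fact:fat-pseudo}, which gives $fat_{\gamma'} (\F_i) \le Pdim (\F_i) = d_i$ for \emph{every} value of $\gamma'$, in particular for the rescaled margin produced by the composition theorem. Substituting these scale-independent bounds into the displayed inequality gives $fat_\gamma (\F) = O(d_1 + d_2)$, and since the right-hand side no longer depends on $\gamma$, the bound holds for every $\gamma$, as claimed.

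The main obstacle is the faithful application of Theorem~6.4 of \cite{composition}: one must verify that multiplication satisfies the regularity hypotheses of that theorem on the bounded domain $[0,M]^2$ (Lipschitzness in each argument, with constant $M$), and one must track how the margin $\gamma$ is rescaled to $\gamma'$ so that the composition bound is genuinely of the linear-in-the-sum form. Once this rescaling is identified, its dependence on $\gamma$ and $M$ is harmless precisely because Fact~\ref{fact:fat-pseudo} is uniform over scales, so no quantitative control of $\gamma'$ is needed in the end.
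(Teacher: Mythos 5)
Your proposal is correct and takes essentially the same route as the paper, which proves this lemma in exactly one line by combining Theorem~6.4 of \cite{composition} with Fact~\ref{fact:fat-pseudo}, relying on precisely your key observation: since the pseudo-dimension bounds the fat shattering dimension \emph{at every scale}, the rescaled margin $\gamma'$ produced by the composition theorem needs no quantitative tracking. Your explicit check that multiplication is $M$-Lipschitz in each coordinate on $[0,M]^2$ merely fleshes out a hypothesis the paper leaves implicit.
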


Finally, we give the definitions of the covering sets and covering numbers for a function class:

\begin{definition}[$\epsilon$-cover and $\epsilon$-covering number]
Let $(A, d)$ be a metric space. A set $C \subseteq A$ is said to be an $\epsilon$-cover for $W \subseteq A$ with respect to the metric $d$, if for every $w \in W$ there exists $c \in C$ such that $d(w,c) \le \epsilon$. We have that the $\epsilon$-covering number of $W$ with respect to $d$ is
\[
\mathcal{N} \left( \epsilon, W, d \right) = \min \left\{ \vert C \vert : C \text{ is an $\epsilon$-cover for } W \text{ w.r.t. } d \right\}
\]
\end{definition}

\begin{definition}[covering number of a function class]
Let $\F \subseteq \{ f: \X \to \R \}$ be a class of real-valued functions. We have that
\[
\mathcal{N} \left( \epsilon, \F, n \right) \triangleq \max_{S \in \X^n} \mathcal{N} \left( \epsilon, \F (S), d_1 \right)
\]
where for $S=\{x_1, \ldots, x_n \}$, $\F(S) = \{ (f(x_1), \ldots, f(x_n)): f \in \F \} \subseteq \R^n$, and that $d_1$ is the following metric over $\R^n$:
\[
\forall r, r' \in \R^n: \ d_1 \left(r , r' \right) = \frac{1}{n} \sum_{i=1}^n \vert r_i - r'_i \vert
\]
\end{definition}

\begin{lemma}[Bounded Pseudo-Dimension $\Longrightarrow$ Bounded Covering Number]\label{lem:N1}
Let $\F \subseteq \{ f: \X \to [0,M] \}$ be a class of real-valued functions such that $Pdim (\F) = d$. We have that for every $\epsilon \in [0,M]$, and every $n$,
\[
\mathcal{N} \left( \epsilon, \F, n \right) = O \left( \left( \frac{1}{\epsilon} \right)^d \right)
\]
\end{lemma}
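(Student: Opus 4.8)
The plan is to recognize this as the classical Haussler covering-number bound and to prove it in three moves: pass from covering to packing, transport the $d_1$-packing into a Hamming-distance packing of a Boolean class of VC dimension $d$, and finish with a sphere-packing estimate for VC classes. First I would fix a worst-case sample $S=\{x_1,\dots,x_n\}$ attaining the maximum in the definition of $\mathcal{N}(\epsilon,\F,n)$, and let $\mu$ be the uniform measure on $S$, so that $d_1$ is exactly the $L_1(\mu)$ metric on $\F(S)$. Since a maximal $\epsilon$-separated subset of $\F(S)$ is automatically an $\epsilon$-cover, it suffices to bound the size $N$ of any family $f_1,\dots,f_N\in\F$ that is $\epsilon$-separated in $L_1(\mu)$.

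The second move converts this real-valued separation into Hamming separation of a Boolean class whose VC dimension is controlled by the pseudo-dimension. By the definition of pseudo-dimension given above, the subgraph class $\mathcal{B}_\F=\{(x,t)\mapsto\1[t<f(x)]:f\in\F\}$ on $\X\times[0,M]$ has $VCdim(\mathcal{B}_\F)=Pdim(\F)=d$. I would draw $m$ i.i.d. pairs $(X_\ell,t_\ell)$ with $X_\ell\sim\mu$ and $t_\ell\sim U[0,M]$, for an $m$ to be fixed, and map each $f_j$ to the Boolean vector $b_j=(\1[t_\ell<f_j(X_\ell)])_{\ell=1}^m$. For a pair $j\ne j'$, the $\ell$-th coordinates disagree with probability $|f_j(X_\ell)-f_{j'}(X_\ell)|/M$, so the expected normalized Hamming distance between $b_j$ and $b_{j'}$ equals $\|f_j-f_{j'}\|_{L_1(\mu)}/M\ge\epsilon/M$.

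Taking $m=O((M/\epsilon)\log N)$ and combining a Chernoff bound per pair with a union bound over the $\binom{N}{2}$ pairs, I would extract a single realization of the augmented sample under which every pair $b_j,b_{j'}$ is at Hamming distance at least $m\epsilon/(2M)$. Since all $b_j$ are realized by evaluating $\mathcal{B}_\F$ on the $m$ augmented points, they form an $\tfrac{m\epsilon}{2M}$-separated subset of $\{0,1\}^m$ realizable by a class of VC dimension $d$. A direct appeal to Sauer's Lemma here gives only a bound of the form $O((m/d)^d)$, which—because $m$ grows logarithmically in $N$—leaves spurious logarithmic factors; obtaining the clean, sample-size-free bound requires Haussler's sharper sphere-packing estimate for VC classes \cite{haussler}, namely that a subset of $\{0,1\}^m$ realizable by a VC-dimension-$d$ class with pairwise Hamming distances at least $k$ has size $O((m/k)^d)$. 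Substituting $k=m\epsilon/(2M)$ makes $m$ cancel and yields $N=O((M/\epsilon)^d)=O((1/\epsilon)^d)$, since $M$ and $d$ are constants of the class. I expect this final sphere-packing step to be the main obstacle: the two reductions are routine bookkeeping, but the fact that the bound is independent of both $n$ and the auxiliary sample size $m$ genuinely relies on Haussler's estimate rather than a plain Sauer count.
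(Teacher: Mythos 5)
Your proposal is correct: the paper does not prove this lemma at all — it imports it as a standard fact from the learning-theory literature with a citation to Haussler — and your three-step argument (a maximal $\epsilon$-separated set serving as the cover, randomized discretization through the subgraph class of VC dimension $d$, and Haussler's sphere-packing estimate in place of a plain Sauer count to eliminate the logarithmic factors) is precisely the canonical proof from that cited source. There are no gaps: the disagreement probability $\|f_j - f_{j'}\|_{L_1(\mu)}/M$ per coordinate, the choice $m = O((M/\epsilon)\log N)$ via Chernoff plus a union bound over pairs, and the cancellation of $m$ in the final packing bound $O((m/k)^d)$ with $k = m\epsilon/(2M)$ are all handled correctly, and you rightly identify the sphere-packing step as the one place where Haussler's sharper estimate is genuinely needed.
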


\subsection{Proofs of Generalization Theorems}
We are now ready to prove our generalization theorems.
\begin{thm}[Generalization over $\Ps$]
Fix any $\epsilon$ and $\delta$. Fix a distribution $\Ps$ over $\X \times \Z$. Suppose $Pdim (\Gs) = d_\Gs$, $VCdim(\F) = d_\F$, and $VCdim(\Hs) = d_\Hs$. We have that with probability at least $1-\delta$ over $S \sim \Ps^n$, every $\hat z$ that is an $\alpha$-proxy with respect to the data set $S$ is also an $(\alpha + \epsilon)$-proxy with respect to the underlying distribution $\Ps$, provided that
\[
n = \tilde{\Omega} \left( \frac{M^2 \left( d_\Gs + \max \left\{ d_\Hs, d_\F \right\}  + \log \left( K / \delta \right) \right)}{\mu \mu_\Gs^2 \left( \min \left\{ \mu, \mu_\Gs \right\} \right)^2 \epsilon^2} \right)
\]
where
\[
\mu = \min_{1 \le k \le K} \E_{(x,z) \sim \Ps} \left[ z_k \right], \quad \mu_\Gs = \inf_{\hat z \in \Gs} \left\{  \min_{1 \le k \le K} \E_{(x,z) \sim \Ps} \left[ \hat{z}_k (x) \right] \right\}
\]
\end{thm}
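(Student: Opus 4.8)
The plan is to prove this by uniform convergence, reducing the proxy property --- which is a statement about ratios of expectations --- to additive concentration of its four constituent expectations, and then controlling how additive errors propagate through the ratios. Recall that $\hat z$ being an $\alpha$-proxy asks, for every $h \in \Hs$, $f \in \F$, and $k \in [K]$, that the two ratios $R := \E_\Ps[z_k \1[h(x)\neq f(x,z)]]/\E_\Ps[z_k]$ and $\hat R := \E_\Ps[\hat z_k(x)\1[h(x)\neq f(x,z)]]/\E_\Ps[\hat z_k(x)]$ satisfy $|R - \hat R| \le \alpha$, with the empirical version (over $S$) denoted $R_S, \hat R_S$. By the triangle inequality $|R - \hat R| \le |R - R_S| + |R_S - \hat R_S| + |\hat R_S - \hat R|$, so if $\hat z$ is an empirical $\alpha$-proxy it suffices to show that, uniformly over $\Gs$, $\Hs$, $\F$, and $k$, each of $|R - R_S|$ and $|\hat R_S - \hat R|$ is at most $\epsilon/2$.

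First I would record a simple ratio bound: if $0 \le A \le B$ with $B, B_S \ge b > 0$, then $|A/B - A_S/B_S| \le (|A - A_S| + |B - B_S|)/b$, where crucially the inequality $A \le B$ is what cancels the otherwise-troublesome $A/B^2$ term. Applying this with $b = \mu/2$ on the true side and $b = \mu_\Gs/2$ on the proxy side reduces the problem to additively approximating the numerators and denominators to accuracy of order $\epsilon\mu$ and $\epsilon\mu_\Gs$ respectively. Lower-bounding the empirical denominators is itself part of the argument: $\E_S[z_k] \ge \mu/2$ follows from the multiplicative Chernoff bound (Theorem~\ref{thm:mult-chernoff}) together with $\E_\Ps[z_k] \ge \mu$, and $\E_S[\hat z_k(x)] \ge \mu_\Gs/2$ uniformly over $\Gs$ follows from the same uniform-convergence bound used to control the denominator additively.

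The core of the proof is establishing uniform additive concentration of the four expectation types. For $\E[z_k]$ I would use multiplicative Chernoff (Theorem~\ref{thm:mult-chernoff}); for $\E[\hat z_k(x)]$, uniformly over $\hat z_k \in \Gs$, I would invoke the pseudo-dimension generalization bound (Theorem~\ref{thm:pseudo-gen}) with $Pdim(\Gs) = d_\Gs$. The two numerators require rewriting the error indicator: since $\1[h(x)\neq f(x,z)] = \1[(h\oplus f)(x,z)=1]$ with $h\oplus f \in \Hs \oplus \F$, Lemma~\ref{lem:xor} gives $VCdim(\Hs \oplus \F) = O(\max\{d_\Hs, d_\F\})$. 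For $\E[z_k \1[\cdots]]$ the integrand is binary (multiplying the class by the fixed indicator $z_k$ does not raise its VC dimension), so Theorem~\ref{thm:vc-gen} applies. For $\E[\hat z_k(x)\1[\cdots]] = \E[\hat z_k(x)\,g(x,z)]$ with $g \in \Hs \oplus \F$, the integrand is a product of a real-valued function from $\Gs$ and a binary function from $\Hs \oplus \F$ (whose pseudo-dimension equals its VC dimension by Fact~\ref{fact:pdimvc1}); Lemma~\ref{lem:product} bounds the fat-shattering dimension of this product class by $O(d_\Gs + \max\{d_\Hs, d_\F\})$, and Theorem~\ref{thm:fat-gen} then delivers the needed uniform convergence. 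A union bound over the $K$ groups and the constant number of quantity types contributes the $\log(K/\delta)$ term; substituting the required accuracies into the four sample-complexity bounds and taking the maximum yields a bound dominated by the stated $n$, so the stated $n$ is sufficient.

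I expect the main obstacle to be the fourth quantity, $\E[\hat z_k(x)\,g(x,z)]$, whose integrand mixes a real-valued predictor with a label-dependent binary function: this is exactly the term that VC arguments alone cannot handle and that forces the combined $d_\Gs + \max\{d_\Hs, d_\F\}$ complexity through the product/fat-shattering machinery. The second delicate point is that the proxy property is a ratio, so the whole argument only goes through once the empirical denominators are shown to be bounded away from zero uniformly over $\Gs$ --- which is precisely why both $\mu$ and $\mu_\Gs$, together with their minimum, appear in the denominator of the sample-complexity bound.
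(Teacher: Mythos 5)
Your proposal is correct and follows essentially the same route as the paper's proof: the same four uniform convergence statements (multiplicative Chernoff for $\E[z_k]$, the pseudo-dimension bound of Theorem~\ref{thm:pseudo-gen} for $\E[\hat z_k(x)]$ over $\Gs$, the XOR lemma plus the VC bound for $\E[z_k \1[h(x)\neq f(x,z)]]$, and the product/fat-shattering machinery of Lemma~\ref{lem:product} and Theorem~\ref{thm:fat-gen} for $\E[\hat z_k(x)\1[h(x)\neq f(x,z)]]$), combined with a union bound over the $K$ groups and error propagation through the ratios. The only difference is organizational: you route the ratio analysis through a triangle inequality via the empirical ratios together with an explicit ratio-perturbation lemma (using $A \le B$ and empirical denominator lower bounds), whereas the paper directly sandwiches the denominators multiplicatively, inserts the empirical proxy condition, and rescales $\epsilon$ by $\min\{\mu,\mu_\Gs\}/12$ at the end --- both yield requirements dominated by the stated $n$.
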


\begin{proof}
We first provide uniform convergence bounds for every four expectations appearing in the definition of a proxy (see Definition~\ref{def:proxy}). We use $\oplus$ throughout to denote the XOR function: for $a,b \in \{0,1\}$, $a \oplus b = \1 \left[ a \neq b \right]$. Fix any $\epsilon \ge 0$ and any $\delta \in [0,1]$. First, we have by an application of Multiplicative Chernoff-Hoeffding bound (Theorem~\ref{thm:mult-chernoff}) that, with probability at least $1-\delta$ over the draw of $S \sim \Ps^n$, for every $k \in [K]$,
\[
\left( 1 - \epsilon \right) \E_{(x,z) \sim \Ps} \left[ z_k \right] \le \E_{(x,z) \sim S} \left[ z_k \right] \le \left( 1 + \epsilon \right) \E_{(x,z) \sim \Ps} \left[ z_k \right]
\]
as long as
\[
n = \Omega \left( \frac{\log \left( K / \delta \right)}{ \mu \epsilon^2 }\right)
\]
Second, we have by standard uniform convergence bounds for function classes of bounded pseudo-dimension (Theorem~\ref{thm:pseudo-gen}) that, with probability at least $1-\delta$ over the draw of $S \sim \Ps^n$, for every $k \in [K]$ and every $\hat{z}_k \in \Gs$,
\[
\left( 1 - \epsilon \right) \E_{(x,z) \sim \Ps} \left[ \hat{z}_k (x) \right] \le \E_{(x,z) \sim S} \left[ \hat{z}_k (x) \right] \le \left( 1 + \epsilon \right) \E_{(x,z) \sim \Ps} \left[ \hat{z}_k (x) \right]
\]
so long as
\[
n = \Omega \left( \frac{M^2 \left( d_\Gs \log \left( M / \epsilon \right) + \log \left( K / \delta \right) \right)}{ \mu_\Gs^2 \epsilon^2 }\right)
\]

Third, we want to find a uniform convergence bound for $\E_{(x,z) \sim S} \left[ z_k \1 \left[ h(x) \neq f(x,z) \right] \right]$. Fix any $k$. For any $h \in \Hs$ and $f \in \F$, define the XOR function $h \oplus f: \X \times \Z \to \{0,1\}$ as $(h \oplus f) (x,z) = h(x) \oplus f(x,z)$, and let $g_k : \X \times \Z \to \{0,1\}$ be defined as $g_k (x,z) = z_k$. Note that we can write
\begin{equation}\label{eq:re-write0}
z_k \1 \left[ h(x) \neq f(x,z) \right] = z_k \1 \left[ h(x) \oplus f(x,z) = 1 \right] = \1 \left[ g_k (x,z) \cdot (h \oplus f) (x,z) = 1 \right]
\end{equation}
Define $\Hs \oplus \F = \{ h \oplus f: h \in \Hs, f \in \F \}$ and note that Lemma~\ref{lem:xor} implies
\begin{equation}\label{eq:vcxor}
VCdim(\Hs \oplus \F) = O(\max \left\{ d_\Hs, d_\F \right\})
\end{equation}
Let $(\Hs \oplus \F)_k = \{  g_k \cdot (h \oplus f): h \in \Hs, f \in \F \}$ where $g_k \cdot (h \oplus f) (x,z) = g_k (x,z) \cdot (h \oplus f) (x,z)$. Note that
\begin{equation}\label{eq:vc-xor}
VCdim ((\Hs \oplus \F)_k) = O(VCdim(\Hs \oplus \F)) = O(\max \left\{ d_\Hs, d_\F \right\})
\end{equation}
We therefore have using the equality established in Equation~\eqref{eq:re-write0} that
\begin{align*}
&\sup_{k \in [K], h \in \Hs, f \in \F} \left\vert \E_{(x,z) \sim \Ps} \left[ z_k \1 \left[ h(x) \neq f(x,z) \right] \right] - \E_{(x,z) \sim S} \left[ z_k \1 \left[ h(x) \neq f(x,z) \right] \right] \right\vert \\
&= \max_{1 \le k \le K} \left\{ \sup_{h' \in (\Hs \oplus \F)_k} \left\vert \E_{(x,z) \sim \Ps} \left[ \1 \left[ h'(x,z) = 1 \right] \right] - \E_{(x,z) \sim S} \left[ \1 \left[ h'(x,z) = 1 \right] \right] \right\vert \right\}
\end{align*}
But using uniform convergence bounds for VC classes (apply Theorem~\ref{thm:vc-gen} with $c(x) = 1$), we have with probability at least $1-\delta$ over the draw of $S \sim \Ps^n$ that
\[
\max_{1 \le k \le K} \left\{ \sup_{h' \in (\Hs \oplus \F)_k} \left\vert \E_{(x,z) \sim \Ps} \left[ \1 \left[ h'(x,z) = 1 \right] \right] - \E_{(x,z) \sim S} \left[ \1 \left[ h'(x,z) = 1 \right] \right] \right\vert \right\} \le \epsilon
\]
so long as
\[
n = \Omega \left( \frac{ \max \left\{ d_\Hs, d_\F \right\} \log \left( n \right) + \log \left( K / \delta \right) }{\epsilon^2} \right)
\]

where we use Equation~\eqref{eq:vc-xor}. Finally, we want to find a uniform convergence bound for $\E_{(x,z) \sim S} \left[ \hat{z}_k (x) \1 \left[ h(x) \neq f(x) \right] \right]$. Fix any $k$. Note that we can write
\begin{equation}\label{eq:re-write1}
\hat{z}_k (x) \1 \left[ h(x) \neq f(x,z) \right] = \hat{z}_k (x) \1 \left[ (h \oplus f) (x,z) = 1 \right] = \hat{z}_k (x) \cdot (h \oplus f) (x,z)
\end{equation}
Now define a function class $\Gs'_k = \{ \hat{z}_k \cdot (h \oplus f): \hat{z}_k \in \Gs, h \in \Hs, f \in \F \}$ where $\hat{z}_k \cdot (h \oplus f) : \X \times \Z \to \R$ is defined as $\hat{z}_k \cdot (h \oplus f) (x,z) =  \hat{z}_k (x) \cdot (h \oplus f) (x,z)$. Using Lemma~\ref{lem:product}, we have for every $\gamma$
\[
fat_\gamma (\Gs'_k) = O \left( Pdim (\Gs) + Pdim (\Hs \oplus \F)\right)
\]
But $Pdim (\Gs) = d_\Gs$, and using Equation~\eqref{eq:vcxor} and Fact~\ref{fact:pdimvc1}, $Pdim (\Hs \oplus \F) = VCdim (\Hs \oplus \F) = O(\max \left\{ d_\Hs, d_\F \right\})$. Consequently, we have that for every $\gamma$, 
\begin{equation}\label{eq:fatdim}
fat_\gamma (\Gs'_k) = O \left( d_\Gs + \max \left\{ d_\Hs, d_\F \right\} \right)
\end{equation}
We therefore have using Equation~\eqref{eq:re-write1} that
\begin{align*}
&\sup_{k, h, f, \hat{z}} \left\vert \E_{(x,z) \sim \Ps} \left[ \hat{z}_k (x) \1 \left[ h(x) \neq f(x) \right] \right] - \E_{(x,z) \sim S} \left[ \hat{z}_k (x) \1 \left[ h(x) \neq f(x) \right] \right] \right\vert \\
& = \max_{1 \le k \le K} \left\{ \sup_{g \in \Gs'_k} \left\vert \E_{(x,z) \sim \Ps} \left[ g (x,z) \right] - \E_{(x,z) \sim S} \left[ g(x,z) \right] \right\vert \right\}
\end{align*}
But using uniform convergence bounds for function classes with bounded fat shattering dimension (Theorem~\ref{thm:fat-gen}), we have with probability at least $1-\delta$ over the draw of $S \sim \Ps^n$ that
\[
\max_{1 \le k \le K} \left\{ \sup_{g \in \Gs'_k} \left\vert \E_{(x,z) \sim \Ps} \left[ g (x,z) \right] - \E_{(x,z) \sim S} \left[ g(x,z) \right] \right\vert \right\} \le \epsilon
\]
so long as
\[
n = \Omega \left( \frac{M^2 \left( \left( d_\Gs + \max \left\{ d_\Hs, d_\F \right\} \right) \log \left( n / \epsilon \right) \log \left( 1 / \epsilon \right) + \log \left( K / \delta \right) \right)}{\epsilon^2} \right)
\]

where we use Equation~\eqref{eq:fatdim}. Therefore, combining all four uniform convergence bounds, we have that so long as
\begin{equation}\label{eq:samplecomplexity0}
n = \tilde{\Omega} \left( \frac{M^2 \left( d_\Gs + \max \left\{ d_\Hs, d_\F \right\}  + \log \left( K / \delta \right) \right)}{\mu \mu_\Gs^2 \epsilon^2} \right)
\end{equation}
we have with probability $1-\delta$ over the draw of $S \sim \Ps^n$, for all groups $k \in [K]$, all classifiers $h \in \Hs$, all learning tasks $f \in \F$, and all proxies $\hat{z}_k \in \Gs$, that the following inequalities simultaneously hold.
\begin{equation}\label{eq:b1}
 \left( 1 - \epsilon \right) \E_{(x,z) \sim \Ps} \left[ z_k\right] \le \E_{(x,z) \sim S} \left[ z_k\right] \le \left( 1 + \epsilon \right) \E_{(x,z) \sim \Ps} \left[ z_k\right]
\end{equation}
\begin{equation}\label{eq:b2}
 \left( 1 - \epsilon \right) \E_{(x,z) \sim \Ps} \left[ \hat{z}_k (x) \right] \le \E_{(x,z) \sim S} \left[ \hat{z}_k (x) \right] \le \left( 1 + \epsilon \right) \E_{(x,z) \sim \Ps} \left[ \hat{z}_k (x) \right]
\end{equation}
\begin{equation}\label{eq:b3}
\left\vert \E_{(x,z) \sim \Ps} \left[ z_k \1 \left[ h(x) \neq f(x,z) \right] \right] - \E_{(x,z) \sim S} \left[ z_k \1 \left[ h(x) \neq f(x,z) \right] \right] \right\vert \le \epsilon
\end{equation}
\begin{equation}\label{eq:b4}
\left\vert \E_{(x,z) \sim \Ps} \left[ \hat{z}_k (x) \1 \left[ h(x) \neq f(x,z) \right] \right] - \E_{(x,z) \sim S} \left[ \hat{z}_k (x) \1 \left[ h(x) \neq f(x,z) \right] \right] \right\vert \le \epsilon
\end{equation}
We note that in this proof, the same sample complexity bound holds when we take our uniform convergence over the simplex $\Delta (\Gs)$ because any $\sup$ over $\Delta (\Gs)$ in this proof can be upper bounded by a $\sup$ over $\Gs$ due to linearity of expectations. In particular, for any function $g$,
\begin{align*}
& \sup_{p \in \Delta (\Gs)} \left\vert \E_{\hat{z}_k \sim p} \left[ \E_{(x,z) \sim S} \left[ g(\hat{z}_k ; x,z ) \right] - \E_{(x,z) \sim \Ps} \left[ g(\hat{z}_k ; x,z ) \right] \right]\right\vert \\
&\le \sup_{p \in \Delta (\Gs)} \E_{\hat{z}_k \sim p} \left[ \left\vert   \E_{(x,z) \sim S} \left[ g(\hat{z}_k ; x,z ) \right] - \E_{(x,z) \sim \Ps} \left[ g(\hat{z}_k ; x,z ) \right] \right\vert \right] \\
&= \sup_{\hat{z}_k \in \Gs} \left\vert   \E_{(x,z) \sim S} \left[ g(\hat{z}_k ; x,z ) \right] - \E_{(x,z) \sim \Ps} \left[ g(\hat{z}_k ; x,z ) \right] \right\vert
\end{align*}
This observation is important because our algorithm outputs an object in $\Delta (\Gs)^K$ and so our uniform convergence bound must be taken over $\Delta (\Gs)^K$. But given this observation, without any loss, we work with deterministic proxies in $\Gs^K$ in the rest of the proof. In particular, suppose $\hat z \in \Gs^K$ is an $\alpha$-proxy with respect to $S \sim \Ps^n$. In other words we have for all $f \in \F$ and all $h \in \Hs$ and all $k \in [K]$,
\begin{equation}\label{eq:proxy-sample}
\left\vert \frac{\E_{(x,z) \sim S} \left[ z_k \1 \left[ h(x) \neq f(x,z) \right] \right]}{\E_{(x,z) \sim S} \left[  z_k \right]} - \frac{\E_{(x,z) \sim S} \left[ \hat{z}_k (x) \1 \left[ h(x) \neq f(x,z) \right] \right]}{\E_{(x,z) \sim S} \left[  \hat{z}_k (x) \right]} \right\vert \le \alpha
\end{equation}
where $(x,z) \sim S$ means a sample drawn uniformly at random from $S$. We want to use the uniform convergence bounds found above, along with Equation~\eqref{eq:proxy-sample}, to argue that $\hat z$ is a proxy with respect to the underlying distribution $\Ps$, with small degradation in its approximation parameter $\alpha$. I.e., we want to bound the following, for all $f,h,k$.
\[
\left\vert \frac{\E_{(x,z) \sim \Ps} \left[ z_k \1 \left[ h(x) \neq f(x,z) \right] \right]}{\E_{(x,z) \sim \Ps} \left[  z_k \right]} - \frac{\E_{(x,z) \sim \Ps} \left[ \hat{z}_k (x) \1 \left[ h(x) \neq f(x,z) \right] \right]}{\E_{(x,z) \sim \Ps} \left[  \hat{z}_k (x) \right]} \right\vert
\]
Fix any $f,h,k$. Suppose the first term is greater than the second term. Similar derivations apply if the second term is greater than the first one. We have that
\begin{align*}
    &\frac{\E_{(x,z) \sim \Ps} \left[ z_k \1 \left[ h(x) \neq f(x,z) \right] \right]}{\E_{(x,z) \sim \Ps} \left[  z_k \right]} - \frac{\E_{(x,z) \sim \Ps} \left[ \hat{z}_k (x) \1 \left[ h(x) \neq f(x,z) \right] \right]}{\E_{(x,z) \sim \Ps} \left[  \hat{z}_k (x) \right]} \\
    &\le \frac{\left( 1+\epsilon \right) \E_{(x,z) \sim \Ps} \left[ z_k \1 \left[ h(x) \neq f(x,z) \right] \right]}{\E_{(x,z) \sim S} \left[  z_k \right]} - \frac{ \left( 1-\epsilon \right) \E_{(x,z) \sim \Ps} \left[ \hat{z}_k (x) \1 \left[ h(x) \neq f(x,z) \right] \right]}{\E_{(x,z) \sim S} \left[  \hat{z}_k (x) \right]} \\
    &= \left( 1-\epsilon \right) \left( \frac{ \E_{(x,z) \sim \Ps} \left[ z_k \1 \left[ h(x) \neq f(x,z) \right] \right]}{\E_{(x,z) \sim S} \left[  z_k \right]} - \frac{ \E_{(x,z) \sim \Ps} \left[ \hat{z}_k (x) \1 \left[ h(x) \neq f(x,z) \right] \right]}{\E_{(x,z) \sim S} \left[  \hat{z}_k (x) \right]} \right) \\
    &+ 2 \epsilon \frac{ \E_{(x,z) \sim \Ps} \left[ z_k \1 \left[ h(x) \neq f(x,z) \right] \right]}{\E_{(x,z) \sim S} \left[  z_k \right]} \\
    &\le \frac{ \E_{(x,z) \sim S} \left[ z_k \1 \left[ h(x) \neq f(x,z) \right] \right]}{\E_{(x,z) \sim S} \left[  z_k \right]} - \frac{ \E_{(x,z) \sim S} \left[ \hat{z}_k (x) \1 \left[ h(x) \neq f(x,z) \right] \right]}{\E_{(x,z) \sim S} \left[  \hat{z}_k (x) \right]} \\
    &+\frac{ \E_{(x,z) \sim \Ps} \left[ z_k \1 \left[ h(x) \neq f(x,z) \right] \right] - \E_{(x,z) \sim S} \left[ z_k \1 \left[ h(x) \neq f(x,z) \right] \right]}{\E_{(x,z) \sim S} \left[  z_k \right]} \\
    &+ \frac{ \E_{(x,z) \sim \Ps} \left[ \hat{z}_k (x) \1 \left[ h(x) \neq f(x,z) \right] \right] - \E_{(x,z) \sim S} \left[ \hat{z}_k (x) \1 \left[ h(x) \neq f(x,z) \right] \right]}{\E_{(x,z) \sim S} \left[  \hat{z}_k (x) \right]} \\
    &+ 2 \epsilon \frac{ \E_{(x,z) \sim \Ps} \left[ z_k \1 \left[ h(x) \neq f(x,z) \right] \right]}{\E_{(x,z) \sim S} \left[  z_k \right]} \\
    &\le \alpha + \epsilon \left( \frac{3}{\E_{(x,z) \sim S} \left[  z_k \right]} + \frac{1}{\E_{(x,z) \sim S} \left[  \hat{z}_k (x) \right]} \right) \\
    &\le \alpha + \epsilon \left( \frac{3}{\left( 1 - \epsilon \right) \E_{(x,z) \sim \Ps} \left[  z_k \right]} + \frac{1}{\left(1-\epsilon\right)\E_{(x,z) \sim \Ps} \left[  \hat{z}_k (x) \right]} \right) \\
    &\le \alpha + \frac{12 \epsilon}{\min \left\{ \mu, \mu_\Gs \right\} }
\end{align*}
where the first inequality follows from Equations~\eqref{eq:b1} and \eqref{eq:b2}, and the third follows from Equations~\eqref{eq:b3}, \eqref{eq:b4}, and \eqref{eq:proxy-sample}. The fourth inequality is another application of Equations~\eqref{eq:b1} and \eqref{eq:b2}, and the last one follows from the definition of $\mu$ and $\mu_\Gs$. Now, by replacing $\epsilon$ with $\epsilon \cdot \min \left\{ \mu, \mu_\Gs \right\} / 12$ in the sample complexity bound of Equation~\eqref{eq:samplecomplexity0}, we have that if
\[
n = \tilde{\Omega} \left( \frac{M^2 \left( d_\Gs + \max \left\{ d_\Hs, d_\F \right\}  + \log \left( K / \delta \right) \right)}{\mu \mu_\Gs^2 \left( \min \left\{ \mu, \mu_\Gs \right\} \right)^2 \epsilon^2} \right)
\]
Then $\hat z$ is an $(\alpha + \epsilon)$-proxy with respect to the underlying distribution $\Ps$.
\end{proof}

\begin{thm}[Generalization over $\Ps$ and $\Qs$]
Fix any $\epsilon$, $\delta$, and $\beta$. Fix a distribution $\Ps$ over $\X \times \Z$, and a distribution $\Qs$ over $\F$. Suppose $Pdim (\Gs) = d_\Gs$, and $VCdim(\Hs) = d_\Hs$. We have that with probability at least $1-\delta$ over $S \sim \Ps^n$ and $F \sim \Qs^m$, every $\hat z$ that is an $(\alpha, 0)$-proxy with respect to the data set $(S, F)$ is also a $((\alpha + \epsilon) / \beta, \beta )$-proxy with respect to the underlying distributions $(\Ps, \Qs)$, provided that
\[
n = \tilde{\Omega} \left( \frac{M^2 \left( d_\Gs + \max \left\{ d_\Hs, \log \left( m \right) \right\}  + \log \left( K / \delta \right) \right)}{\mu \mu_\Gs^2 \left( \min \left\{ \mu, \mu_\Gs \right\} \right)^2 \epsilon^2} \right)
\]
\[
 m = \tilde{\Omega} \left( \frac{M^2 \left( K d_\Gs \log \left( \left\vert \text{supp} (\Ps) \right\vert \right) + \log \left( 1 / \delta \right) \right) }{ \mu_\Gs^2 \left( \min \left\{ \mu, \mu_\Gs \right\} \right)^2 \epsilon^4} \right)
\]
where $\text{supp} (\Ps)$ is the support of $\Ps$, and that
\[
\mu = \min_{1 \le k \le K} \E_{(x,z) \sim \Ps} \left[ z_k \right], \quad \mu_\Gs = \inf_{\hat z \in \Gs} \left\{  \min_{1 \le k \le K} \E_{(x,z) \sim \Ps} \left[ \hat{z}_k (x) \right] \right\}
\]
\end{thm}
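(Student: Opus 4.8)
The plan is to combine a two–stage uniform convergence argument with Markov's inequality. For a fixed labeling function $f$ and a proxy $\hat z$, write the distributional violation
\[
\Phi_\Ps(f,\hat z) = \max_{h \in \Hs,\, k \in [K]} \left| \frac{\E_\Ps[z_k \1[h(x)\neq f(x,z)]]}{\E_\Ps[z_k]} - \frac{\E_\Ps[\hat z_k(x)\1[h(x)\neq f(x,z)]]}{\E_\Ps[\hat z_k(x)]} \right|,
\]
and let $\Phi_S(f,\hat z)$ denote the analogous quantity with $\Ps$ replaced by the empirical distribution on $S$. By Definition~\ref{def:proxy}, being an $(\alpha,0)$-proxy on $(S,F)$ means $\Phi_S(f_j,\hat z)\le\alpha$ for every $j\in[m]$, while being a $((\alpha+\epsilon)/\beta,\beta)$-proxy on $(\Ps,\Qs)$ means $\Pr_{f\sim\Qs}[\Phi_\Ps(f,\hat z)>(\alpha+\epsilon)/\beta]\le\beta$. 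The exact shape of the target parameters is what Markov's inequality (Theorem~\ref{thm:markov}) produces from a bound of the form $\E_{f\sim\Qs}[\Phi_\Ps(f,\hat z)]\le\alpha+\epsilon$, so the whole argument reduces to establishing this expectation bound uniformly over $\hat z\in\Delta(\Gs)^K$.

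Stage one controls the gap between $\Phi_S$ and $\Phi_\Ps$ and reuses the machinery of Theorem~\ref{thm:gen-x}. I would fix the sampled functions $f_1,\dots,f_m$ and, for each one separately, establish the four uniform convergence bounds on the numerator and denominator expectations (analogous to Equations~\eqref{eq:b1}--\eqref{eq:b4}). The crucial simplification is that, with $f_j$ held fixed, the indicator $\1[h(x)\neq f_j(x,z)]$ ranges over $\{h\oplus f_j : h\in\Hs\}$, whose VC dimension is just $d_\Hs$ (composing with a fixed function does not increase it); a union bound over the $m$ functions therefore replaces the $d_\F$ term of Theorem~\ref{thm:gen-x} by $\log m$. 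Running the identical ratio manipulation as in that proof (absorbing a factor $\min\{\mu,\mu_\Gs\}$ and invoking the multiplicative and additive Chernoff bounds together with the pseudo- and fat-shattering generalization theorems) gives, with probability $1-\delta/2$ at the stated $n$, that $|\Phi_S(f_j,\hat z)-\Phi_\Ps(f_j,\hat z)|\le\epsilon/2$ simultaneously for all $j$ and all $\hat z$. Hence $\Phi_\Ps(f_j,\hat z)\le\alpha+\epsilon/2$ for every $j$, so $\frac1m\sum_{j}\Phi_\Ps(f_j,\hat z)\le\alpha+\epsilon/2$.

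Stage two passes from this empirical average over $F$ to the expectation over $\Qs$, uniformly in $\hat z$. For a fixed $\hat z$ the map $f\mapsto\Phi_\Ps(f,\hat z)$ is a single $[0,1]$-valued function of $f$ (the maxima over $h,k$ are internal), so $\frac1m\sum_j\Phi_\Ps(f_j,\hat z)$ concentrates around $\E_{f\sim\Qs}[\Phi_\Ps(f,\hat z)]$ by additive Chernoff–Hoeffding, with no complexity penalty from $\F$ — which is essential since $\F$ may have $d_\F=\infty$. To make this uniform I would discretize: $\Phi_\Ps(f,\cdot)$ depends on a proxy only through its values on $\text{supp}(\Ps)$, and since the indicators are bounded by $1$ and the denominators satisfy $\E_\Ps[z_k],\E_\Ps[\hat z_k]\ge\min\{\mu,\mu_\Gs\}$, two proxies that are close in $\ell_\infty$ over $\text{supp}(\Ps)$ (hence in $\Ps$-weighted $\ell_1$) yield $\Phi_\Ps$ values within $\epsilon/4$. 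A standard $\ell_\infty$ covering bound in terms of $Pdim(\Gs)=d_\Gs$ over the $|\text{supp}(\Ps)|$ support points, applied to each of the $K$ components, gives log-cover-size $\tilde O(K d_\Gs\log|\text{supp}(\Ps)|)$, and a union bound of the Hoeffding estimate over the cover yields, at the stated $m$, that $\E_{f\sim\Qs}[\Phi_\Ps(f,\hat z)]\le\frac1m\sum_j\Phi_\Ps(f_j,\hat z)+\epsilon/2$ for all $\hat z$ with probability $1-\delta/2$. Chaining the two stages gives $\E_{f\sim\Qs}[\Phi_\Ps(f,\hat z)]\le\alpha+\epsilon$, and Markov finishes the proof. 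As in Theorem~\ref{thm:gen-x}, the supremum over $\Delta(\Gs)^K$ reduces to the supremum over $\Gs^K$ by linearity of expectation, so covering $\Gs^K$ suffices even though the algorithm outputs a distribution over proxies.

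The main obstacle I anticipate is the bookkeeping in stage two: because $f$ is now the random object, the complexity must be routed entirely through $\text{supp}(\Ps)$ and the cover of the proxy class rather than through any dimension of $\F$. Translating the cover accuracy and the Hoeffding accuracy into the final proxy slack through the normalizing denominators $\mu_\Gs$ and $\min\{\mu,\mu_\Gs\}$ is what produces the $M^2/(\mu_\Gs^2(\min\{\mu,\mu_\Gs\})^2)$ prefactor and the higher power $1/\epsilon^4$ in the $m$ bound; getting these dependencies correct (rather than merely finite) is the delicate part.
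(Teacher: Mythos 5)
Your overall architecture matches the paper's: your stage one is exactly the paper's first step (the paper invokes Theorem~\ref{thm:gen-x} on the sampled set $F$, using Fact~\ref{fact:logboundVC} to replace $d_\F$ by $\log m$, which is equivalent to your fix-each-$f_j$-and-union-bound argument), and your stage two plus Markov is the content of the paper's Lemma~\ref{lem:gen-f}. The gap is in the last sentence of your stage two. The claim that ``the supremum over $\Delta(\Gs)^K$ reduces to the supremum over $\Gs^K$ by linearity of expectation'' is valid inside Theorem~\ref{thm:gen-x}, where the quantities requiring uniform convergence, such as $\E_{S}[\hat z_k(x)] - \E_\Ps[\hat z_k(x)]$, are linear in the mixing distribution $p_k$, so the absolute deviation is convex in $p_k$ and its supremum over the simplex is attained at point masses. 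It is not valid in your stage two: there the random object is $f$, and the function whose empirical mean over $F$ must converge to its $\Qs$-expectation is $f \mapsto \Phi_\Ps(f,\hat z) = \sup_{h,k} g(f;\hat z,h,k)$, in which the mixture enters through the ratio $\E_{g\sim p_k}\E_\Ps\left[g(x)\1\left[h(x)\neq f(x,z)\right]\right] / \E_{g\sim p_k}\E_\Ps\left[g(x)\right]$ (denominator included) and through a supremum over $h,k$. This functional is neither linear nor convex in $p$, so the supremum of the deviation over $\Delta(\Gs)^K$ is not controlled by the supremum over $\Gs^K$; moreover, an $\ell_\infty$ cover of $\Gs$ does not cover the mixtures at all (if $\Gs$ contains the two constant functions $0$ and $M$, the mixture $M/2$ is at $\ell_\infty$-distance $M/2$ from both).

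This is not a cosmetic issue; it is exactly where the theorem's bound on $m$ comes from. The paper's Lemma~\ref{lem:gen-f} handles $\Delta(\Gs)^K$ with a sparsification (Maurey-type) step: any mixture is approximated pointwise on $\text{supp}(\Ps)$, to accuracy $O(\epsilon\mu_\Gs)$, by an $s$-sparse mixture of cover elements with $s = O\left(1/(\epsilon\mu_\Gs)^2\right)$, and the Hoeffding union bound is then taken over $\Delta_s(C)^K$, whose log-cardinality is the factor $s$ times $K\log|C|$. That factor of $s$ is precisely what produces the extra $1/\mu_\Gs^2$ and the $1/\epsilon^4$ (rather than $1/\epsilon^2$) in the stated sample complexity for $m$. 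Your accounting --- a cover of log-size only $\tilde{O}\left(Kd_\Gs\log\left|\text{supp}(\Ps)\right|\right)$, with the $1/\epsilon^4$ attributed to ``translating accuracies through the denominators'' --- would actually yield $m = \tilde{\Omega}(\cdot/\epsilon^2)$, which signals that the sparsification step is missing rather than hidden in your bookkeeping. As written, your argument does prove the theorem restricted to deterministic proxies $\hat z \in \Gs^K$, but the paper needs (and explicitly states, in Section~\ref{sec:generalization}) uniform convergence over all of $\Delta(\Gs)^K$, because the algorithms output mixtures.
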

\begin{proof}
We lift our in sample guarantees to distributional guarantees in two steps. First, Theorem~\ref{thm:gen-x} implies, for every set of functions $F \in \F^m$, with probability at least $1-\delta/2$ over $S \sim \Ps^n$, every $\hat{z}$ that is $(\alpha, 0)$-proxy with respect to $(S,F)$ is also $(\alpha+\epsilon/2, 0)$-proxy with respect to $(\Ps, F)$, where we use the fact (Fact~\ref{fact:logboundVC}) that the VC dimension of $F$ is at most $\log m$: $d_\F \le \log m$ (in the sample complexity for $n$). Second, we can apply Lemma~\ref{lem:gen-f} to conclude that, with probability at least $1-\delta/2$ over $F \sim \Qs^m$, every $\hat{z}$ that is an $(\alpha+\epsilon/2, 0)$-proxy with respect to $(\Ps, F)$, is also a $(\alpha + \epsilon / \beta, \beta )$-proxy with respect to the underlying distributions $(\Ps, \Qs)$.
\end{proof}

\begin{lemma}[Generalization over $\Qs$]\label{lem:gen-f}
Fix any $\epsilon$, $\delta$ and $\beta$. Fix a distribution $\Ps$ over $\X \times \Z$ and a distribution $\Qs$ over $\F$. Suppose $Pdim (\Gs) = d_\Gs$. We have that with probability at least $1-\delta$ over $F \sim \Qs^m$, every $\hat z$ that is an $(\alpha, 0)$-proxy with respect to $(\Ps, F)$ is also a $( (\alpha + \epsilon)/ \beta, \beta )$-proxy with respect to $(\Ps, \Qs)$, provided that
\[
 m = \tilde{\Omega} \left( \frac{M^2 \left( K d_\Gs \log \left( \left\vert \text{supp} (\Ps) \right\vert \right) + \log \left( 1 / \delta \right) \right) }{ \mu_\Gs^2 \left( \min \left\{ \mu, \mu_\Gs \right\} \right)^2 \epsilon^4} \right)
\]
where $\text{supp} (\Ps)$ is the support of $\Ps$.
\end{lemma}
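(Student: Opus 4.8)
The plan is to reduce the statement to a uniform law of large numbers over the draw of $f$ together with a single application of Markov's inequality (Theorem~\ref{thm:markov}). For a fixed proxy $\hat z \in \Gs^K$ define the worst-case violation
\[
V_{\hat z}(f) \;=\; \max_{k \in [K], \, h \in \Hs} \left\vert \frac{\E_{\Ps}\left[ z_k \1\left[ h(x) \neq f(x,z) \right] \right]}{\E_{\Ps}\left[ z_k \right]} - \frac{\E_{\Ps}\left[ \hat z_k(x) \1\left[ h(x) \neq f(x,z)\right]\right]}{\E_{\Ps}\left[ \hat z_k(x)\right]} \right\vert ,
\]
which is a nonnegative, $[0,1]$-valued function of $f$ (each ratio is a conditional-probability-like quantity bounded by $1$) that depends on $f$ \emph{only} through its restriction to $\text{supp}(\Ps)$. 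By definition, asserting that $\hat z$ is an $(\alpha,0)$-proxy with respect to $(\Ps, F)$ is exactly the statement $V_{\hat z}(f_j) \le \alpha$ for every $f_j \in F$, and in particular $\tfrac{1}{m}\sum_{j=1}^m V_{\hat z}(f_j) \le \alpha$. If I can establish that, with probability $1-\delta$ over $F \sim \Qs^m$, one has $\E_{f \sim \Qs}\left[ V_{\hat z}(f)\right] \le \tfrac1m \sum_j V_{\hat z}(f_j) + \epsilon$ simultaneously for all $\hat z$, then $\E_{f\sim\Qs}[V_{\hat z}(f)] \le \alpha + \epsilon$, and Markov's inequality applied to the nonnegative random variable $V_{\hat z}(f)$ gives $\Pr_{f\sim\Qs}\left[ V_{\hat z}(f) > (\alpha+\epsilon)/\beta\right] \le \beta$, which is precisely the assertion that $\hat z$ is a $((\alpha+\epsilon)/\beta, \beta)$-proxy with respect to $(\Ps,\Qs)$.

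The crux is therefore the one-sided uniform convergence of the empirical mean of $V_{\hat z}$ to its expectation, uniformly over the index set of proxies. The essential difficulty --- and the reason the bound on $m$ involves $\log|\text{supp}(\Ps)|$ rather than $d_\F$ or $d_\Hs$ --- is that $\F$ is allowed to have infinite VC dimension, so I cannot control $\{V_{\hat z}\}$ through the complexity of $\F$. I would instead exploit that every expectation defining $V_{\hat z}(f)$ is taken over the \emph{finite-support} distribution $\Ps$: both $h$ and $f$ enter only through their labelings on $\text{supp}(\Ps)$, of which there are at most $2^{|\text{supp}(\Ps)|}$, so the inner maximum over $h \in \Hs$ and the dependence on $f$ contribute no complexity beyond $|\text{supp}(\Ps)|$ itself, independent of $VCdim(\Hs)$ and $VCdim(\F)$. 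Consequently the only genuinely infinite index is $\hat z$, and it suffices to build a finite cover of $\Gs^K$ on the support points. Since $\Gs$ has pseudo-dimension $d_\Gs$, the number of its distinct thresholded behaviours on $|\text{supp}(\Ps)|$ points is polynomial of degree $d_\Gs$ per coordinate (a Sauer-type consequence of bounded pseudo-dimension, cf.\ Lemma~\ref{lem:N1}), hence $\sim |\text{supp}(\Ps)|^{K d_\Gs}$ over the $K$ coordinates, whose logarithm is exactly the $K d_\Gs \log|\text{supp}(\Ps)|$ term. As in the proof of Theorem~\ref{thm:gen-x}, the supremum over $\Delta(\Gs)^K$ reduces to a supremum over $\Gs^K$ by linearity of expectation, so it is enough to argue over deterministic proxies.

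It remains to translate a cover of $\Gs^K$ into a cover of $\{V_{\hat z}\}$ and to apply a concentration bound over the cover. I would show that $V_{\hat z}$ is Lipschitz in $\hat z$ when $\hat z$ is measured in the $\Ps$-weighted $\ell_1$ metric on the support: because each ratio has the form $\E[\hat z_k \1[\cdot]]/\E[\hat z_k]$ with denominator bounded below by $\mu_\Gs$, while the competing ratio has denominator bounded below by $\mu$, a standard quotient estimate shows that perturbing $\hat z$ by $\epsilon'$ perturbs each ratio by $O\!\left(\epsilon'/(\mu_\Gs \min\{\mu,\mu_\Gs\})\right)$, and taking the maximum over $(h,k)$ preserves this bound. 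This is the source of the $\mu_\Gs$ and $\min\{\mu,\mu_\Gs\}$ factors in the denominator of the sample complexity. Choosing the cover scale $\epsilon'$ so that $V$ changes by at most $\epsilon/2$, and then union-bounding the additive Chernoff--Hoeffding inequality (Theorem~\ref{thm:add-chernoff}) over the cover and over the $K$ groups, yields the claimed bound on $m$; the extra $\epsilon$-power appearing as $\epsilon^4$ comes from feeding the Lipschitz rescaling back through the concentration estimate. I expect this ratio-sensitivity bookkeeping, together with the argument that the finiteness of $\text{supp}(\Ps)$ neutralizes the complexity of $\Hs$ and $\F$, to be the main technical obstacle, whereas the reduction to Markov and the cover-and-union-bound step are routine.
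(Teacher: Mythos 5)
Your overall skeleton matches the paper's proof: define the worst-case violation $V_{\hat z}(f)=\sup_{h,k} g(f;\hat z,h,k)$, note that being an $(\alpha,0)$-proxy for $(\Ps,F)$ forces the empirical mean over $F$ to be at most $\alpha$, establish uniform (over proxies) convergence of this empirical mean to $\E_{f\sim\Qs}[V_{\hat z}(f)]$ via a cover of the proxy class on the finite support of $\Ps$ (pseudo-dimension $\Rightarrow$ Lemma~\ref{lem:N1}), exploit the quotient/Lipschitz stability of the ratios with constants involving $\mu_\Gs$ and $\min\{\mu,\mu_\Gs\}$, union-bound Chernoff--Hoeffding over the cover, and finish with Markov. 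All of that is exactly what the paper does.

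However, there is a genuine gap in one step: you assert that ``the supremum over $\Delta(\Gs)^K$ reduces to a supremum over $\Gs^K$ by linearity of expectation, so it is enough to argue over deterministic proxies.'' That reduction is valid in the proof of Theorem~\ref{thm:gen-x}, where the four quantities being concentrated ($\E[\hat z_k(x)]$, $\E[\hat z_k(x)\1[h(x)\neq f(x,z)]]$, etc.) are each \emph{linear} in $\hat z_k$, so a Jensen step $\vert\E_{p}[D]\vert \le \E_{p}[\vert D\vert]$ applies. Here it fails: for a mixture $\hat z_k(x)=\E_{g\sim p_k}[g(x)]$, the quantity $V_{\hat z}(f)$ is a maximum over $(h,k)$ of absolute differences of \emph{ratios} whose numerator and denominator are each linear in $p_k$; the ratio is not linear in $p_k$, so $V$ of the mixture is not the mixture of the $V$'s, and no Jensen-type inequality reduces the supremum over $\mathrm{conv}(\Gs)^K$ to one over $\Gs^K$. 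Since the lemma must hold over $\Delta(\Gs)^K$ (the algorithm outputs such objects, and Theorem~\ref{thm:gen-xf} uses it in that form), your argument as written only proves the statement for deterministic proxies. The paper closes this hole with a Maurey-type sparsification: every $p\in\Delta(\Gs)$ is approximated, uniformly over $\mathrm{supp}(\Ps)$ and $k$, by an $s$-sparse mixture with $s=O\!\left(1/(\epsilon\mu_\Gs)^2\right)$, and the union bound is taken over $s$-sparse mixtures of cover elements, a set of size $\vert C\vert^{sK}$. This multiplies the log-cardinality in the Chernoff bound by $s$, and is precisely the source of the $1/\left(\mu_\Gs^2\,\epsilon^4\right)$ in the stated sample complexity. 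Relatedly, your accounting for the $\epsilon^4$ is quantitatively wrong: in a cover-and-union-bound argument the Lipschitz constant of $V$ in $\hat z$ enters only through the \emph{logarithm} of the cover size, so ``feeding the Lipschitz rescaling back through the concentration estimate'' yields a $1/\epsilon^2$ rate, not $1/\epsilon^4$; the extra two powers of $\epsilon$ (and the extra $\mu_\Gs^2$) cannot be recovered without the sparsification step over mixtures.
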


\begin{proof}[Proof of Lemma~\ref{lem:gen-f}]
Define
\[ g \left( f ; \hat{z}, h, k\right) \triangleq 
\left\vert \frac{\E_{(x,z) \sim \Ps} \left[ z_k \1 \left[ h(x) \neq f(x,z) \right] \right]}{\E_{(x,z) \sim \Ps} \left[  z_k \right]} - \frac{\E_{(x,z) \sim \Ps} \left[ \hat{z}_k (x) \1 \left[ h(x) \neq f(x,z) \right] \right]}{\E_{(x,z) \sim \Ps} \left[  \hat{z}_k (x) \right]} \right\vert
\]
which is the quantity that appears in the definition of a proxy. Suppose for $F \sim \Qs^m$, we have that $\hat z$ is an $(\alpha, 0)$-proxy with respect to $(\Ps, F)$. In other words we have that $\hat z$ satisfies: for every $f \in F$, every $h \in \Hs$, and every $k \in [K]$,
\[
g \left( f ; \hat{z}, h, k\right) \le \alpha
\]
For any $\alpha'$, we have that
\begin{align}\label{eq:something1}
\begin{split}
    &\Pr_{f \sim \Qs} \left[ \exists h, k : g \left( f ; \hat{z}, h, k\right) > \alpha' \right] \\
    &\le \Pr_{f \sim \Qs} \left[ \sup_{h,k} \left\{ g \left( f ; \hat{z}, h, k\right) \right\} > \alpha' \right] \\
    &\le \frac{\E_{f \sim \Qs} \left[ \sup_{h,k} \left\{ g \left( f ; \hat{z}, h, k\right) \right\} \right]}{\alpha'} \\
    %&\le \frac{ \E_{f \sim F} \left[ \sup_{h,k} \left\{ g \left( f ; \hat{z}, h, k\right) \right\} \right] + \left\vert \E_{f \sim \Qs} \left[ \sup_{h,k} \left\{ g \left( f ; \hat{z}, h, k\right) \right\} \right] - \E_{f \sim F} \left[ \sup_{h,k} \left\{ g \left( f ; \hat{z}, h, k\right) \right\} \right] \right\vert}{\alpha'} \\
    &\le \frac{ \alpha + \left\vert \E_{f \sim \Qs} \left[ \sup_{h,k} \left\{ g \left( f ; \hat{z}, h, k\right) \right\} \right] - \E_{f \sim F} \left[ \sup_{h,k} \left\{ g \left( f ; \hat{z}, h, k\right) \right\} \right] \right\vert}{\alpha'} \\
    &\le \frac{ \alpha + \sup_{\hat{z}} \left\vert \E_{f \sim \Qs} \left[ \sup_{h,k} \left\{ g \left( f ; \hat{z}, h, k\right) \right\} \right] - \E_{f \sim F} \left[ \sup_{h,k} \left\{ g \left( f ; \hat{z}, h, k\right) \right\} \right] \right\vert}{\alpha'}
\end{split}
\end{align}

where the second inequality is an application of Markov's inequality (Theorem~\ref{thm:markov}), and the third follows because $\E_{f \sim F} \left[ \sup_{h,k} \left\{ g \left( f ; \hat{z}, h, k\right) \right\} \right] \le \alpha$. We first bound the following
\[
\sup_{\hat{z} \in \Gs^K} \left\vert \E_{f \sim \Qs} \left[ \sup_{h,k} \left\{ g \left( f ; \hat{z}, h, k\right) \right\} \right] - \E_{f \sim F} \left[ \sup_{h,k} \left\{ g \left( f ; \hat{z}, h, k\right) \right\} \right] \right\vert
\]
which is a uniform convergence over the class of functions $\Gs^K$, and then consider uniform convergence over the simplex $\Delta (\Gs)^K$ which is what we want. Note that $\Gs$ can potentially have infinitely many functions, but it is known that when the pseudo dimension of $\Gs$ is \emph{finite}, the $\epsilon$-cover (defined in the previous subsection) of $\Gs$ is \emph{finite}, and hence, up to an $O(\epsilon)$ error, we can take our uniform convergence over the $\epsilon$-cover of $\Gs$ which will enable us to apply a union bound over this finite class of functions.
\begin{comment}
So let us formally define an $\epsilon$-cover. We take these definitions and Lemma~\ref{lem:N1} from the literature on learning theory (see for e.g. \cite{haussler}).
\begin{definition}[$\epsilon$-cover and $\epsilon$-covering number]
Let $(A, d)$ be a metric space. A set $C \subseteq A$ is said to be an $\epsilon$-cover for $W \subseteq A$ with respect to the metric $d$, if for every $w \in W$ there exists $c \in C$ such that $d(w,c) \le \epsilon$. We have that the $\epsilon$-covering number of $W$ with respect to $d$ is
\[
\mathcal{N} \left( \epsilon, W, d \right) = \min \left\{ \vert C \vert : C \text{ is an $\epsilon$-cover for } W \text{ w.r.t. } d \right\}
\]
\end{definition}

\begin{definition}[covering number of a function class]
Let $\F \subseteq \{ f: \X \to \R \}$ be a class of real-valued functions. We have that
\[
\mathcal{N} \left( \epsilon, \F, n \right) \triangleq \max_{S \in \X^n} \mathcal{N} \left( \epsilon, \F (S), d_1 \right)
\]
where for $S=\{x_1, \ldots, x_n \}$, $\F(S) = \{ (f(x_1), \ldots, f(x_n)): f \in \F \} \subseteq \R^n$, and that $d_1$ is the following metric over $\R^n$:
\[
\forall r, r' \in \R^n: \ d_1 \left(r , r' \right) = \frac{1}{n} \sum_{i=1}^n \vert r_i - r'_i \vert
\]
\end{definition}

\begin{lemma}\label{lem:N1}
Let $\F \subseteq \{ f: \X \to [0,M] \}$ be a class of real-valued functions such that $Pdim (\F) = d$. We have that for every $\epsilon \in [0,M]$, and every $n$,
\[
\mathcal{N} \left( \epsilon, \F, n \right) = O \left( \left( \frac{1}{\epsilon} \right)^d \right)
\]
\end{lemma}
\end{comment}
 In particular, if $S = \text{supp} (\Ps)$ is the entire data points in the support of $\Ps$, then Lemma~\ref{lem:N1} implies that for every $\epsilon'$,
\[
\mathcal{N} \left( \frac{\epsilon'}{\left\vert \text{supp} (\Ps) \right\vert}, \Gs (S), d_1 \right) = O \left( \left( \frac{\left\vert \text{supp} (\Ps) \right\vert}{\epsilon'} \right)^{d_\Gs} \right)
\]
implying that there exists some $C \subseteq \Gs$, such that the following holds: for every ${\hat z}_k \in \Gs$, there exists $\tilde{z}_k \in C$, such that
\[
d_1 (\hat{z}_k (S), \tilde{z}_k (S) ) \le \frac{\epsilon'}{\left\vert \text{supp} (\Ps) \right\vert} \Longrightarrow \forall x \in \text{supp} (\Ps): \ \left\vert \hat{z}_k (x) - \tilde{z}_k (x) \right\vert \le \epsilon'
\]
and furthermore, we have that,
\[
\left\vert C \right\vert = O \left( \left( \frac{\left\vert \text{supp} (\Ps) \right\vert}{\epsilon'} \right)^{d_\Gs} \right)
\]

Now given $\hat{z} \in \Gs^K$, and $\epsilon \in [0,4]$, let $\tilde{z} \in C^K$ be such that for all $k \in [K]$,
\[
\forall x \in \text{supp} (\Ps): \ \left\vert \hat{z}_k (x) - \tilde{z}_k (x) \right\vert \le \frac{\epsilon \mu_G}{8} := \epsilon'
\]
We have that
\begin{align*}
&\frac{\E_{(x,z) \sim \Ps} \left[ \tilde{z}_k (x) \1 \left[ h(x) \neq f(x,z) \right] \right]}{\E_{(x,z) \sim \Ps} \left[  \tilde{z}_k (x) \right]} - \frac{\E_{(x,z) \sim \Ps} \left[ \hat{z}_k (x) \1 \left[ h(x) \neq f(x,z) \right] \right]}{\E_{(x,z) \sim \Ps} \left[  \hat{z}_k (x) \right]} \\
&\le \frac{\E_{(x,z) \sim \Ps} \left[ \hat{z}_k (x) \1 \left[ h(x) \neq f(x,z) \right] \right] + \epsilon'}{\E_{(x,z) \sim \Ps} \left[  \hat{z}_k (x) \right] - \epsilon'} - \frac{\E_{(x,z) \sim \Ps} \left[ \hat{z}_k (x) \1 \left[ h(x) \neq f(x,z) \right] \right]}{\E_{(x,z) \sim \Ps} \left[  \hat{z}_k (x) \right]} \\
&\le \frac{\epsilon'}{\E_{(x,z) \sim \Ps} \left[  \hat{z}_k (x) \right] - \epsilon \mu_G} \\
&\le \frac{\epsilon'}{\mu_\Gs - \epsilon'} \\
&\le \frac{\epsilon}{4}
\end{align*}
We can similarly show
\[
\frac{\E_{(x,z) \sim \Ps} \left[ \hat{z}_k (x) \1 \left[ h(x) \neq f(x,z) \right] \right]}{\E_{(x,z) \sim \Ps} \left[  \hat{z}_k (x) \right]} - \frac{\E_{(x,z) \sim \Ps} \left[ \tilde{z}_k (x) \1 \left[ h(x) \neq f(x,z) \right] \right]}{\E_{(x,z) \sim \Ps} \left[  \tilde{z}_k (x) \right]} \le \frac{\epsilon}{4}
\]
which implies
\[
\left\vert \frac{\E_{(x,z) \sim \Ps} \left[ \tilde{z}_k (x) \1 \left[ h(x) \neq f(x,z) \right] \right]}{\E_{(x,z) \sim \Ps} \left[  \tilde{z}_k (x) \right]} - \frac{\E_{(x,z) \sim \Ps} \left[ \hat{z}_k (x) \1 \left[ h(x) \neq f(x,z) \right] \right]}{\E_{(x,z) \sim \Ps} \left[  \hat{z}_k (x) \right]} \right\vert \le \frac{\epsilon}{4}
\]
Therefore, we have that
\begin{align*}
&\sup_{\hat{z} \in \Gs^K} \left\vert \E_{f \sim \Qs} \left[ \sup_{h,k} \left\{ g \left( f ; \hat{z}, h, k\right) \right\} \right] - \E_{f \sim F} \left[ \sup_{h,k} \left\{ g \left( f ; \hat{z}, h, k\right) \right\} \right] \right\vert \\
&\le \sup_{\tilde{z} \in C^K} \left\vert \E_{f \sim \Qs} \left[ \sup_{h,k} \left\{ g \left( f ; \tilde{z}, h, k\right) \right\} \right] - \E_{f \sim F} \left[ \sup_{h,k} \left\{ g \left( f ; \tilde{z}, h, k\right) \right\} \right] \right\vert + \frac{\epsilon}{2}
\end{align*}
I.e. we have reduced a uniform convergence over $\Gs^K$ to a uniform convergence over the finite set $C^K$. Now we can apply a Chernoff-Hoeffding bound (Theorem~\ref{thm:add-chernoff}), while union bounding over the finite covering $C^K$, to get that with probability $1-\delta$ over $F \sim \Qs^m$,
\begin{align*}
&\sup_{\tilde{z} \in C^K} \left\vert \E_{f \sim \Qs} \left[ \sup_{h,k} \left\{ g \left( f ; \tilde{z}, h, k\right) \right\} \right] - \E_{f \sim F} \left[ \sup_{h,k} \left\{ g \left( f ; \tilde{z}, h, k\right) \right\} \right] \right\vert \\
&\le \frac{M}{\min \left\{ \mu, \mu_\Gs \right\}} \sqrt{ \frac{K d_\Gs \log \left( \left\vert \text{supp} (\Ps) \right\vert / \epsilon \mu_\Gs \right) + \log \left( 2 / \delta \right)}{2 m} }
\end{align*}
Hence, if
\begin{equation}\label{eq:sampcomp}
m = 8 \left( \frac{M}{\min \left\{ \mu, \mu_\Gs \right\}} \right)^2  \frac{K d_\Gs \log \left( 4 \left\vert \text{supp} (\Ps) \right\vert / \alpha \mu_\Gs \right) + \log \left( 2 / \delta \right)}{ \epsilon^2} 
\end{equation}
we are guaranteed that, with probability $1-\delta$ over $F \sim \Qs^m$, 
\[
\sup_{\hat{z} \in \Gs^K} \left\vert \E_{f \sim \Qs} \left[ \sup_{h,k} \left\{ g \left( f ; \hat{z}, h, k\right) \right\} \right] - \E_{f \sim F} \left[ \sup_{h,k} \left\{ g \left( f ; \hat{z}, h, k\right) \right\} \right] \right\vert \le \epsilon
\]
We now need to lift this uniform convergence bound over $\Gs^K$ to a uniform convergence bound over the simplex $\Delta (\Gs)^K$. We achieve this, for some appropriately chosen $s$, by reducing the uniform convergence over $\Delta (\Gs)^K$ to the uniform convergence over $\Delta_s (\Gs)^K$ where $\Delta_s (\Gs)$ denotes the distributions over $\Gs$ that are $s$-sparse, i.e., their support size is at most $s$. In particular, for any $\epsilon'$, if $s = O(1/\epsilon'^2)$, then for every distribution $p \in \Delta (\Gs)$, an application of Chernoff-Hoeffding's inequality (Theorem~\ref{thm:add-chernoff}) implies that there exists $p_s \in \Delta_s (\Gs)$ (which can be derived by taking the uniform distribution over $s$ samples drawn $i.i.d.$ from $p$) such that for all $x \in \text{supp} (\Ps)$ and all $k$,
\[
\left\vert \E_{\hat{z}_k \sim p} \left[ \hat{z}_k (x) \right] - \E_{\hat{z}_k \sim p_s} \left[ \hat{z}_k (x) \right] \right\vert \le \epsilon'
\]
Given this observation, and taking $\epsilon' = O(\epsilon \mu_\Gs)$ which implies $s = O(1/ (\epsilon \mu_\Gs)^2)$, we can show similar to our previous derivations, that
\begin{align*}
&\sup_{\hat{z} \in \Delta(\Gs)^K} \left\vert \E_{f \sim \Qs} \left[ \sup_{h,k} \left\{ g \left( f ; \hat{z}, h, k\right) \right\} \right] - \E_{f \sim F} \left[ \sup_{h,k} \left\{ g \left( f ; \hat{z}, h, k\right) \right\} \right] \right\vert \\
&\le \sup_{\hat{z} \in \Delta_s (\Gs)^K} \left\vert \E_{f \sim \Qs} \left[ \sup_{h,k} \left\{ g \left( f ; \hat{z}, h, k\right) \right\} \right] - \E_{f \sim F} \left[ \sup_{h,k} \left\{ g \left( f ; \hat{z}, h, k\right) \right\} \right] \right\vert + O (\epsilon) \\
&\le \sup_{\hat{z} \in \Delta_s (C)^K} \left\vert \E_{f \sim \Qs} \left[ \sup_{h,k} \left\{ g \left( f ; \hat{z}, h, k\right) \right\} \right] - \E_{f \sim F} \left[ \sup_{h,k} \left\{ g \left( f ; \hat{z}, h, k\right) \right\} \right] \right\vert + O (\epsilon)
\end{align*}
So using our uniform convergence bound over $C^K$, and taking the desired union bound over $\Delta_s (C)^K$, which is of size $|C|^{sK}$, we can blow up the sample complexity (Equation~\ref{eq:sampcomp}) by a factor of $s$ and get a uniform convergence bound over $\Delta_s (C)^K$. In other words, as long as
\[
m = \Omega \left( \left( \frac{M}{\min \left\{ \mu, \mu_\Gs \right\}} \right)^2  \frac{K d_\Gs \log \left( 4 \left\vert \text{supp} (\Ps) \right\vert / \alpha \mu_\Gs \right) + \log \left( 2 / \delta \right)}{ \mu_\Gs^2 \cdot \epsilon^4} \right)
\]
we are guaranteed that, with probability $1-\delta$ over $F \sim \Qs^m$, 
\[
\sup_{\hat{z} \in \Delta(\Gs)^K} \left\vert \E_{f \sim \Qs} \left[ \sup_{h,k} \left\{ g \left( f ; \hat{z}, h, k\right) \right\} \right] - \E_{f \sim F} \left[ \sup_{h,k} \left\{ g \left( f ; \hat{z}, h, k\right) \right\} \right] \right\vert \le \epsilon
\]

and consequently, using Equation~\eqref{eq:something1}, we get that with probability $1-\delta$ over $F \sim \Qs^m$, for $\hat{z}$ that is an $(\alpha,0)$-proxy with respect to $(\Ps, F)$,
\[
\Pr_{f \sim \Qs} \left[ \exists h, k : g \left( f ; \hat{z}, h, k\right) > \alpha' \right] \le \frac{\alpha + \epsilon}{\alpha'}
\]
So for $\alpha' = (\alpha + \epsilon)/ \beta$, we get that
\[
\Pr_{f \sim \Qs} \left[ \exists h, k : g \left( f ; \hat{z}, h, k\right) > \frac{ (\alpha + \epsilon)}{\beta} \right] \le \beta
\]
meaning $\hat z$ is a $( (\alpha+\epsilon) / \beta, \beta)$-proxy with respect to $(\Ps, \Qs)$.
\end{proof}

\section{Missing Material Section~\ref{sec:experiments}}
\label{sec:appendix_experiments}
\subsection{Definitions}
\begin{definition}[Weighted Binary Sample Transformation]
\label{def:swt}
We define a Weighted Binary Sample Transformation of a dataset $S$ $(WBST(S,\hat{z}_k))$ as a function that takes in a dataset $S=\{(x_i,y_i)\}_{i=1}^{n} \subset \X \times Y$ and a proxy function $\hat{z}_k \in \Gs$ and produces an augmented dataset $\tilde{S}= \{(\tilde{x}_{ik},\tilde{y}_i,\tilde{z}_{ik})\}_{i=1}^{2n} \in \X \times Y \times \{0,1\}$ equipped with a probability mass vector $\tilde{q}_k=\{\tilde{q}_{ik}\}_{i=1}^{2n}$ such that
\begin{enumerate}
    \item $(\tilde{x}_i,\tilde{y}_i) = (x_i,y_i)$ for $1\leq i \leq n$
    \item $(\tilde{x}_{i+n},\tilde{y}_{i+n}) = (x_i,y_i)$ for $1\leq i \leq n$
    \item $\tilde{z}_{ik} = \1[i>n]$
    \item $\tilde{q}_{ik}=\frac{1-\hat{z}_k(x_i)}{n}$ if $1\leq i \leq n$ and $\frac{\hat{z}_k(x_i)}{n}$ otherwise
\end{enumerate}

We assume that samples $\{(x_i,y_i)\}_{i=1}^{n}$ have uniform mass $q_i = \frac{1}{n}$ in the dataset $S$.

\end{definition}

\begin{clm}
\label{clm:swt}
Consider a dataset $S \cup z \in \X \times \Y \times \Z$, where $z \in \Z = \{0,1\}^K$, and a proxy $\hat{z} \in \Gs$. For any group $z_k$ and any hypothesis $h \in \Hs$, the average error of group $z_k$ estimated according to $\tilde{z}_k$ with dataset $\tilde{S}$ is the same as the average error estimated according to the proxy $\hat{z}_k$ with $S$, i.e.

\begin{align}
&\frac{\E_{(x,y,z) \sim \tilde{q}} \left[(1-\tilde{z}_k) \cdot  \1[h(\tilde{x}) \neq \tilde{y}] \right]}{\E_{(x,y,z) \sim \tilde{q}} \left[1-\tilde{z}_k\right]} = \frac{\E_{(x,y)\sim q} \left[ (1-\hat{z}_k(x))\1[h(x) \neq y)\right]}{\E_{(x,y)\sim q}\left[1-\hat{z}_k(x)\right]} \\
&\frac{\E_{(\tilde{x},\tilde{y},\tilde{z}_k) \sim \tilde{q}} \left[ \tilde{z} \cdot  \1[h(\tilde{x}) \neq \tilde{y}] \right]}{\E_{(\tilde{x},\tilde{y},\tilde{z}) \sim \tilde{q}} \left[\tilde{z}_k \right]} = \frac{\E_{(x,y)\sim q} \left[ \hat{z}_k(x) \1[h(x) \neq y)\right]}{\E_{(x,y)\sim q}\left[\hat{z}_k(x)\right]}
\end{align}

\end{clm}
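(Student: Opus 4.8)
The plan is to directly expand both expectations over the transformed distribution $\tilde{q}$ and observe that the group indicator $\tilde{z}_{ik}=\1[i>n]$ (and its complement $1-\tilde{z}_{ik}=\1[i\le n]$) selects exactly one of the two copies of each original sample, so that the sum over the $2n$ augmented points collapses to a sum over the original $n$ points. Since $q$ is uniform over $S$ by assumption, matching the collapsed sums against $\E_q[\cdot]$ is then immediate.

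First I would prove the second identity. Writing the numerator of its left-hand side as $\sum_{i=1}^{2n}\tilde{q}_{ik}\,\tilde{z}_{ik}\,\1[h(\tilde{x}_i)\neq\tilde{y}_i]$ and using $\tilde{z}_{ik}=\1[i>n]$, only the second half of the samples survives. For $i=n+j$ with $1\le j\le n$ we have $(\tilde{x}_{n+j},\tilde{y}_{n+j})=(x_j,y_j)$ and $\tilde{q}_{(n+j),k}=\hat{z}_k(x_j)/n$, so the numerator collapses to $\frac{1}{n}\sum_{j=1}^n\hat{z}_k(x_j)\,\1[h(x_j)\neq y_j]=\E_{(x,y)\sim q}[\hat{z}_k(x)\,\1[h(x)\neq y]]$. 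The same bookkeeping applied to the denominator gives $\E_{\tilde{q}}[\tilde{z}_k]=\frac{1}{n}\sum_{j=1}^n\hat{z}_k(x_j)=\E_q[\hat{z}_k(x)]$, and taking the ratio yields the claimed equality.

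For the first identity I would repeat the argument with $1-\tilde{z}_{ik}=\1[i\le n]$, which now selects the first half of the samples, each carrying weight $\tilde{q}_{ik}=(1-\hat{z}_k(x_i))/n$. This reduces the numerator to $\frac{1}{n}\sum_{i=1}^n(1-\hat{z}_k(x_i))\,\1[h(x_i)\neq y_i]=\E_q[(1-\hat{z}_k(x))\,\1[h(x)\neq y]]$ and the denominator to $\frac{1}{n}\sum_{i=1}^n(1-\hat{z}_k(x_i))=\E_q[1-\hat{z}_k(x)]$, establishing the equality.

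There is no genuine obstacle here: the result is a short direct calculation, and the only thing to be careful about is the indexing convention---that the point indexed $n+j$ in $\tilde{S}$ is the duplicate of the original point indexed $j$, so that $\hat{z}_k(\tilde{x}_{n+j})=\hat{z}_k(x_j)$. For completeness it is worth noting that $\tilde{q}_k$ is a valid probability vector, since $\sum_{i=1}^{2n}\tilde{q}_{ik}=\frac{1}{n}\sum_{i=1}^n\left((1-\hat{z}_k(x_i))+\hat{z}_k(x_i)\right)=1$, which justifies interpreting $\E_{\tilde{q}}[\cdot]$ as an expectation in the first place.
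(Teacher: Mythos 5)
Your proposal is correct and follows essentially the same route as the paper's proof: expand the expectations as weighted sums over the $2n$ augmented points, use the indicator structure $\tilde{z}_{ik}=\1[i>n]$ (and its complement) to collapse each sum to the corresponding $n$ original points, and invoke $(\tilde{x}_i,\tilde{y}_i)=(x_i,y_i)$ and $(\tilde{x}_{i+n},\tilde{y}_{i+n})=(x_i,y_i)$ together with uniformity of $q$. The only additions---treating the two identities in the opposite order and verifying that $\tilde{q}_k$ sums to one---are harmless and the latter is a nice completeness check the paper omits.
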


\begin{proof}
We write the expectations as finite weighted sums over the probability spaces $\tilde{S}$ and $S$ respectively. To move from the first to second line, we observe that $(1-\tilde{z}_{ik})$ is an indicator that only evaluates to 1 on the sample indices $1\leq i \leq n$. This allows us to reduce the sum over $2n$ terms to a sum over $n$ terms. To move from the second to third line, we use the fact that $(\tilde{x}_i,\tilde{y}_i)=(x_i,y_i)$.

\begin{align}
\frac{\E_{(x,y,z) \sim \tilde{q}} \left[(1-\tilde{z}_k) \cdot  \1[h(\tilde{x}) \neq \tilde{y}] \right]}{\E_{(x,y,z) \sim \tilde{q}} \left[1-\tilde{z}_k\right]}
&=\frac{\sum_{i=1}^{2n} \tilde{q}_{ik} \cdot \1[h(\tilde{x}_i) \neq \tilde{y}_i] \cdot (1-\tilde{z}_{ik})}{\sum_{i=1}^{2n} \tilde{q}_{ik} \cdot (1-\tilde{z}_{ik})} \\
&= \frac{\sum_{i=1}^{n} (1-\hat{z}_k(\tilde{x}_i))\1[h(\tilde{x}_i) \neq \tilde{y}_i]}{\sum_{i=1}^{n} (1-\hat{z}_k(\tilde{x}_i))} \\
&= \frac{\sum_{i=1}^{n} (1-\hat{z}_k(x_i))\1[h(x_i) \neq y_i]}{\sum_{i=1}^{n} (1-\hat{z}_k(x_i))} \\
&= \frac{\E_{(x,y)\sim q} \left[ (1-\hat{z}_k(x)\1[h(x) \neq y)\right]}{\E_{(x,y)\sim q}\left[1-\hat{z}_k\right]}
\end{align}

We proceed similarly to show the second equality in the claim. Here we use the fact that $\tilde{z}_{ik}$ is 1 on samples $1+n \leq i \leq 2n$ and 0 everywhere else. We also use the fact that $(\tilde{x}_{i+n},\tilde{y}_{i+n}) = (x_i,y_i)$.

\begin{align}
\frac{\E_{(x,y,z) \sim \tilde{q}} \left[\tilde{z}_k \cdot \1[h(\tilde{x}) \neq \tilde{y}]\right]}{\E_{(x,y,z) \sim \tilde{q}_{ik}} \left[\tilde{z}_k\right]}
&=\frac{\sum_{i=1}^{2n} \tilde{q}_{ik} \cdot \1[h(\tilde{x}_i) \neq \tilde{y}_i] \cdot \tilde{z}_{ik}}{\sum_{i=1}^{2n} \tilde{q}_{ik} \cdot \tilde{z}_{ik}} \\
&= \frac{\sum_{i=1+n}^{2n} \hat{z}_k(\tilde{x}_i) \cdot \1[h(\tilde{x}_i) \neq \tilde{y}_i]}{\sum_{i=n+1}^{2n} \hat{z}_k(\tilde{x}_i)} \\
&= \frac{\sum_{i=1}^{n} \hat{z}_k(x_i) \cdot \1[h(x_i) \neq y_i]}{\sum_{i=n+1}^{2n} \hat{z}_k(x_i)} \\
&= \frac{\E_{(x,y)\sim q} \left[\hat{z}_k(x) \cdot \1[h(x) \neq y)\right]}{\E_{(x,y)\sim q}\left[\hat{z}_k(x)\right]}
\end{align}

\end{proof}

\begin{definition}[Paired Regression Classifier]
\label{def:PRC}
The paired regression classifier operates as follows: We form two weight vectors, $z^0$ and $z^1$, where $z^k_i$ corresponds to the penalty assigned to sample $i$ in the event that it is labeled $k$. For the correct labeling of $x_i$, the penalty is $0$. For the incorrect labeling, the penalty is the current sample weight of the point, $w_i$. We fit two linear regression models $h^0$ and $h^1$ to predict $z^0$ and $z^1$, respectively, on all samples. Then, given a new point $x$, we calculate $h^0(x)$ and $h^1(x)$ and output $h(x) = \argmin_{k\in\{0,1\}} h^k(x)$.
\end{definition} 

\begin{definition}[(Informal) Reductions algorithm for error parity]
\label{def:reductions_error_parity}

\text{} 

\noindent Input: 
\begin{enumerate}
    \item An arbitrary dataset with binary sensitive features and (optional) sample weights 
    \item An approximate cost sensitive classification oracle.\footnote{Because classification algorithms are generally intractable, the use of heuristics, such as the paired regression classifier of\emily{Add citation} (Section~\ref{sec:prc}) is necessary.}
    \item A relaxation parameter $\gamma \in [0, 1]$ 
\end{enumerate}

\noindent Output:

A randomized ensemble of classification model--each of which is produced by the CSC "oracle"--that, in expectation, minimize population error subject to (approximately) enforcing the constraint that the difference in error between any pair of sensitive groups is at most $\gamma$. When $\gamma = 0$, the algorithm produces a model that minimizes population error while (approximately) enforcing exact error parity. 
\end{definition}

\subsection{Implementation Details and Hyperparameters}
\label{sec:appendix_implementation}

\subsubsection{Implementation details}

We implemented our algorithm in PyTorch with a custom loss function to solve our constrained optimization. The two non-standard elements of our implementation are the specific definition of our loss function, and the use of the auditor in the training loop to produce the most constraint violating model $h \in \Hs$ with respect to the current weights of the proxy. This implementation is based on the algorithm derived for linear $\Gs$ in Section~\ref{sec:linear}, but simplified to work easily with Pytorch's auto differentiation. In particular, we do not carefully set the values for the number of rounds $T$, the dual variable upper bound $C$ or the learning rate $\eta$. Instead, optimization is completely delegated to the optimizer (in our case Adam) with our custom loss function. 

Our loss function takes the following form:
$$
\text{Proxy Loss}(z, \hat{z}^t, y,  h^t, \alpha) = \alpha \cdot \sum_{i=1}^n {\left(z_i - \hat{z}_i\right)^2} + \left | \frac{\sum_{i=1}^n{\hat{z_i}}}{\sum_i{z_i}} - 1 \right | + \left | \sum_{i=1}^n \left( z_i - \hat{z}_i \right) \cdot \mathds{1}[h^t(x_i) \neq y_i] \right|
$$
\noindent where $\alpha$ is hyperparameter coefficient, $y$ are the true labels, $h^t$ is the ``most violating model'' chosen by the auditor in round $t$, $z$ are the true sensitive features, and $\hat{z}^t$ are the proxy values for the sensitive features in round $t$. 

The first term in the loss function is  MSE($z$, $\hat{z}$). This corresponds to the objective function of our constrained optimization and--despite not being strictly necessary to the theory-- we found was a useful heuristic in guiding the weights of the proxy. We also apply a scaling parameter $\alpha < 1$ to ensure that we satisfy the constraints before focusing on our objective. The second term corresponds to a penalty denoting the degree of violation of the constraint that the arithmetic means of $z$ and $\hat{z}$ are equal. The third term denotes the degree of violation of the constraint that enforces that the absolute value of the sum of differences of $z_i$ and $\hat{z}_i$ \textit{in the regions where the auditor's model $h$ makes an error} are equal. We note that our loss function is dynamic in that after each step of gradient descent, the auditor chooses a new model $h$ which may make errors on a completely different set of points than in the previous round of gradient descent. 

The training loop works as follows. For $t \in [1..T]$:

\begin{enumerate}

    \item Use the current weights of the model to compute $\hat{z}^t$
    
    \item Let the vector $c = \left(z - \hat{z}\right)\cdot \left(1 - 2 \cdot y\right)$
    \item Train $h = \Hs(x, c)$ to be the regression function that predicts these costs as a function of $x$
    
    \item Let $h_{+}$ and $h_{-}$ be classification models $h_{+} = h(x) > 0$ and $h_{-} = h(x) <= 0$
    
    \item Let $err_{+} = h_{+}(x) != y$ and  $err_{-} = h_{-}(x) != y$ 
    
    \item Let $v_{+} = \sum_i err_{+}$ and $v_{-} = -\sum_i err_{-}$ 
    
    \item If $v_{+} > v_{-}$, let $h^t = h_{+}$. Otherwise let $h^t = h_{-}$ 
    
    \item Update the weights of the proxy ($\hat{z}$) with gradient descent according to the loss function above to create $\hat{z}^{t+1}$
    
\end{enumerate}

In step (1), the vector $c$ denotes the ``costs'' of an error on a positive prediction when the costs of a negative prediction have been normalized to 0. (See Equation~\ref{eqn:costs} and the accompanying explanation for a derivation. The concept of costs is also discussed in Section~\ref{sec:general} in the main body.) In step (4), note that to maximize the absolute sum of differences between $z$ and $\hat{z}$ on the error region, the auditor will either want to select only positive differences or only negative differences if possible, and will prefer higher values to lower ones. Thresholding this regression model is a heuristic way of solving the CSC problem that finds that model that makes errors with the highest such sum. In steps (6) and (7), $v_{+}$ and $v_{-}$ denote the degree of violation of each of the models $h_{+}$ and $h_{-}$ of the constraint that the auditor is trying to find a violating model for, so we select the model corresponding to the highest constraint violation. 

\subsubsection{Hyperparameters}

We stuck to a standard set of hyperparameters for all experiments to keep results consistent. To select these hyperparameters, we increased the number of rounds until we reached convergence on a few tasks and then fixed them for all remaining experiments. We avoided hyperparameter tuning on each task to ensure that our results were an accurate representation of our algorithm's performance.  

\begin{itemize}
    \item For our proxy algorithm we used $\alpha = 0.1$ and $T = 300$ with a learning rate of 0.01 and the Adam optimizer built into PyTorch (the remaining settings of the Adam optimizer were left at their defaults)
    
    \item  For our downstream learner, which was the reductions model for error parity, we used $T = 500, a=5, b=0.5$ where $a$ and $b$ define the learning rate $\eta$ at round $t$. In particular, $\eta^t = a \cdot t^{-b}$. The default parameters are $a = 1$ and $b=0.5$, we simply increased $a$ by a constant factor to speed up training. 
    
\end{itemize}

\subsection{More dataset info}
\label{sec:appendix_datasets}

Each of the tasks we performed come directly from the pre-defined tasks specified in \cite{DBLP:journals/corr/abs-2108-04884}, and specifically as implemented in the folktables Python package which is available on GitHub at \url{https://github.com/zykls/folktables}. Since these tasks each used some subset of the data included in the entire ACS dataset, we will describe the changes we made to particular \textit{features} which, combined with the definition of these tasks, fully specify the experiment.

\begin{enumerate}

    \item We removed the following features from all tasks: OCCP, POBP, ST, PUMA, POWPUMA, RELP. Our reason for doing this was that the first five features on this list were categorical and contained dozens or even hundreds of distinct categories. Since we applied a one-hot encoding, including even one of these features would result in a dimensionality increase potentially many times greater than the dimensionality resulting from all other features. The RELP feature was excluded because--in addition to containing many categorical option--it was unclear exactly what this value represented.
    
    \item We applied binning to the following features:  SCHL, ESP, JWTR. Feature values in each bin were replaced with the bin's index, leaving these categorical features with fewer distinct categories. In parentheses, we specify the encoding values in the original dataset of all entries that fit into a particular bin.
    
    \begin{itemize}
        \item  The SCHL feature represents the amount of schooling and originally had 24 distinct options including every unique grade level. We simplified this into the following categories:
        \begin{itemize}
            \item Didn't finish high school (0-15)
            \item Finished high school or equivalent (16-19)
            \item Associate's degree (20)
            \item Bachelor's degree (21)
            \item Master's degree (22)
            \item Other professional degree (23)
            \item PhD (24)
        \end{itemize}
        
        \item The ESP feature represented the employment status of one's parents. We created the following categories by treating the gender of the parents as irrelevant:
        \begin{itemize}
            \item N/A (0)
            \item Living with two parents, both working (1)
            \item Living with two parents, one working (2 and 3)
            \item Living with two parents, neither (4)
            \item Living with one parent, working (5 and 7)
            \item Living with one parent, not working (6 and 8)
        \end{itemize}
        
        \item The JWTR feature represented one's means of transportation to work. We created the following categories:
        \begin{itemize}
            \item Personal vehicle (1, 8)
            \item Bus, streetcar, or trolley bus (2, 3)
            \item Subway, elevated, or railroad (4, 5)
            \item Taxicab (7)
            \item Bicycle (9)
            \item Walked (10)
            \item Worked at home (11)
            \item Other (including Ferry) (6, 12)
        \end{itemize}
    \end{itemize}

\end{enumerate}

\subsection{More plots}
\label{sec:appendix_plots}

In this section, we include plots and analysis for the remaining tasks that were not covered in the main body.

\subsubsection{ACS-Employment-Race}

In Fig.~\ref{fig:acs_employment_race} we observe that the even without fairness constraints, the downstream learner on the true sensitive features achieves an error disparity of only ${\sim}$0.003 in-sample. Since we try discrete values of gamma in intervals of 0.005, this means that the tradeoff curve on the true sensitive features contains only two points. The $\Hs$-proxy is incredibly successful on this task and achieves nearly identical performance to the true sensitive features. In fact, the least disparate model trained on the $\Hs$-proxy achieves a slightly lower in-sample error disparity than that trained on the true sensitive features \textit{measured with respect to the true sensitive features}. Since the models trained on the true labels should be able to optimize for this quantity exactly, these results indicate that any differences in the downstream models that result from being trained on the proxy rather than the true sensitive features are less significant than the unavoidable approximation error of the downstream learner. Thus, the $\Hs$-proxy's performance on this task is as close to perfect as we could hope for, since a perfect proxy exactly emulates the true sensitive features. On the other hand, the MSE proxy and baseline proxy both perform poorly--exhibiting tradeoff curves that increase error dramatically for negligible decreases in disparity. This further highlights the success of the $\Hs$-proxy on this task. Generalization on this dataset is quite good (note the scale of the y-axis): out-of-sample the models achieve similar errors and error disparities that are only $\sim~$0.004 higher than in-sample. 

\begin{figure*}[h]
\centering
\includegraphics[width=0.7\textwidth]{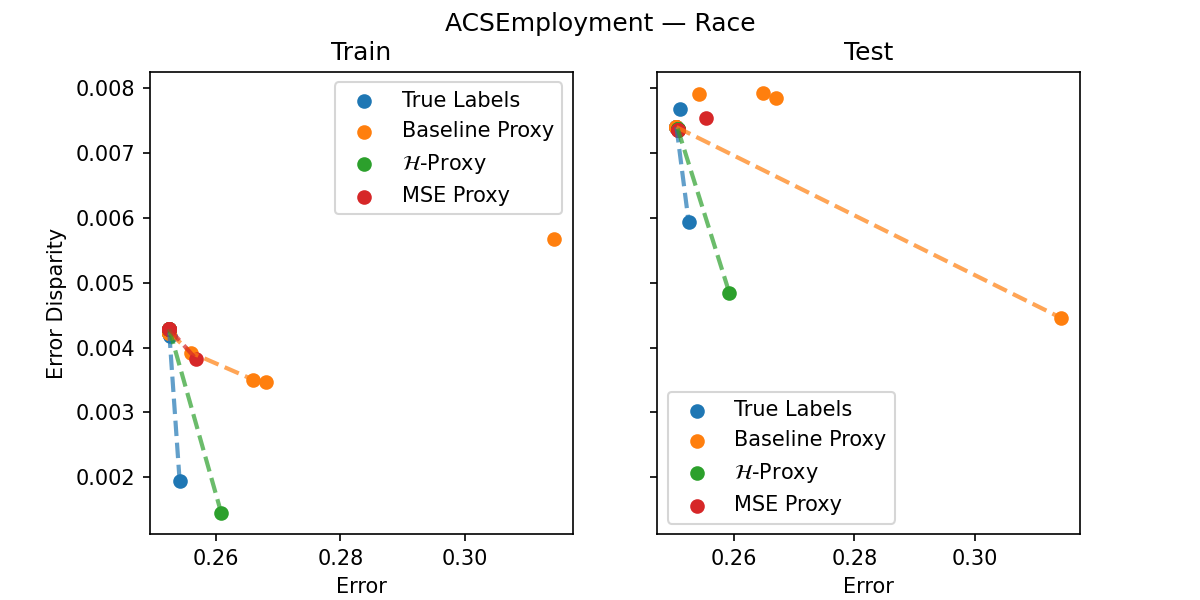}
\captionsetup[subfigure]{labelformat=empty}
\caption{Plots for ACSEmployment task with race as sensitive feature}
\label{fig:acs_employment_race}
\end{figure*}

\subsubsection{ACS-Employment-Age}

In Fig.~\ref{fig:acs_employment_age} we note that none of the three proxies \textit{nor} the true sensitive features admit models that yield sensible tradeoff curves on the downstream task. Poor performance on the true sensitive features indicates that the downstream learning algorithm may have failed to converge, rather than  indicating a failure of the proxy. In fact, the odd behavior of the models trained on the true labels is similar to the odd behavior of the downstream models trained on the proxy. Despite the less-than-optimal behavior, generalization is quite good: the train and test plots look nearly identical.

\begin{figure*}[h]
\centering
\includegraphics[width=0.7\textwidth]{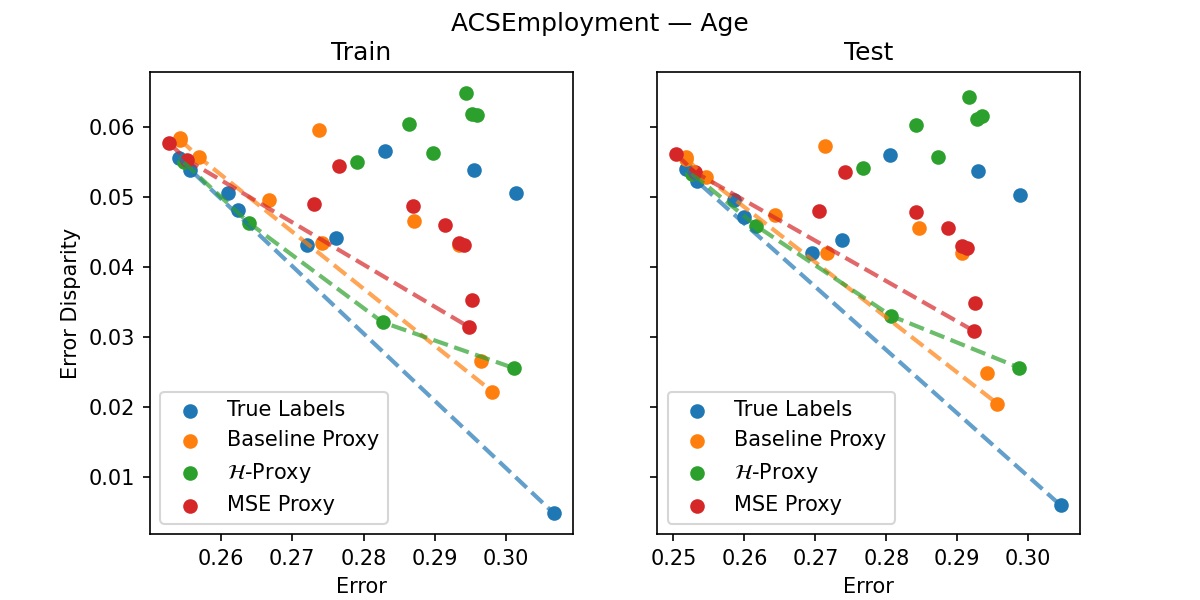}
\captionsetup[subfigure]{labelformat=empty}
\caption{Plots for ACSEmployment dataset with age as sensitive feature}
\label{fig:acs_employment_age}
\end{figure*}

\subsubsection{ACS-Employment-Sex}

In Fig.~\ref{fig:acs_employment_sex} we see that all three proxies fail to exhibit a sensible tradeoff curve on the downstream task, while the models trained on the true sensitive features do. This could be due to lack of convergence of the proxy algorithm, approximation error, or non-existence of a multi-accurate proxy in $\Gs$. Despite this, we note that the models all demonstrate remarkable generalization in terms of both error and error disparity, and the poor performance of the proxies was detected in-sample.

\begin{figure*}[h]
\centering
\includegraphics[width=0.7\textwidth]{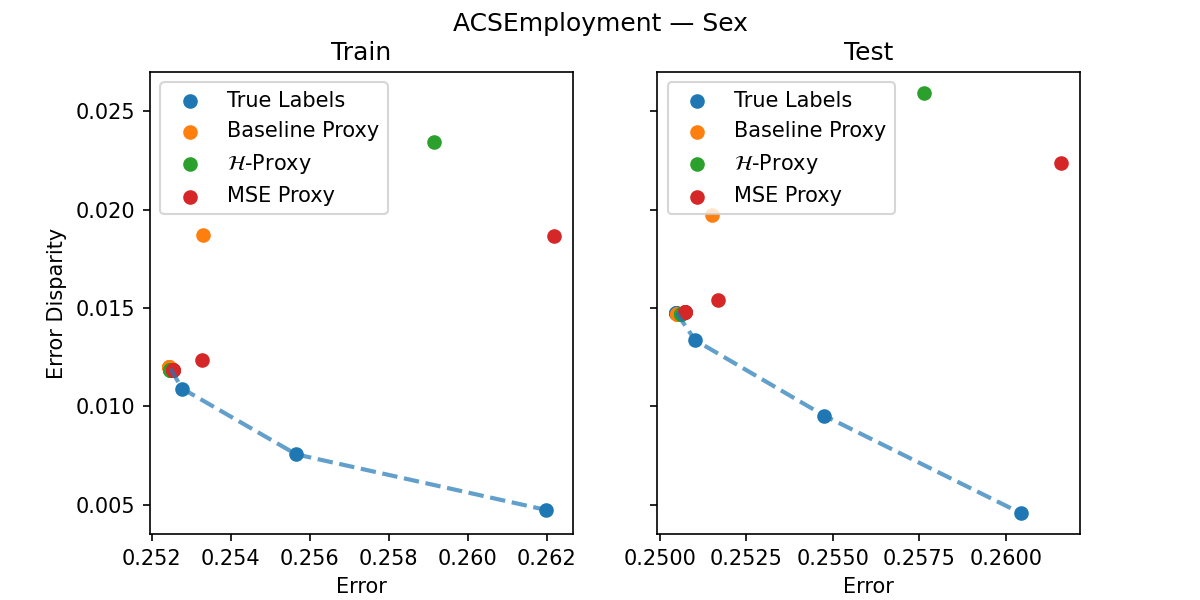}
\captionsetup[subfigure]{labelformat=empty}
\caption{Plots for ACSEmployment dataset with sex as sensitive feature}
\label{fig:acs_employment_sex}
\end{figure*}

\subsubsection{ACS-IncomePovertyRatio-Race}

In Fig.~\ref{fig:acs_incomepovertyratio_race} we observe that the MSE and $\Hs$-proxy achieve near optimal performance with tradeoff curves that are overlaid on the curve corresponding to the true sensitive features. Both of these proxies outperform the baseline, which results in models that have error up to 0.02 greater for the same levels of disparity. Generalization performance is quite good for all models. The shape and scale of the curves are similar in- and out-of-sample.

\begin{figure*}[h]
\centering
\includegraphics[width=0.7\textwidth]{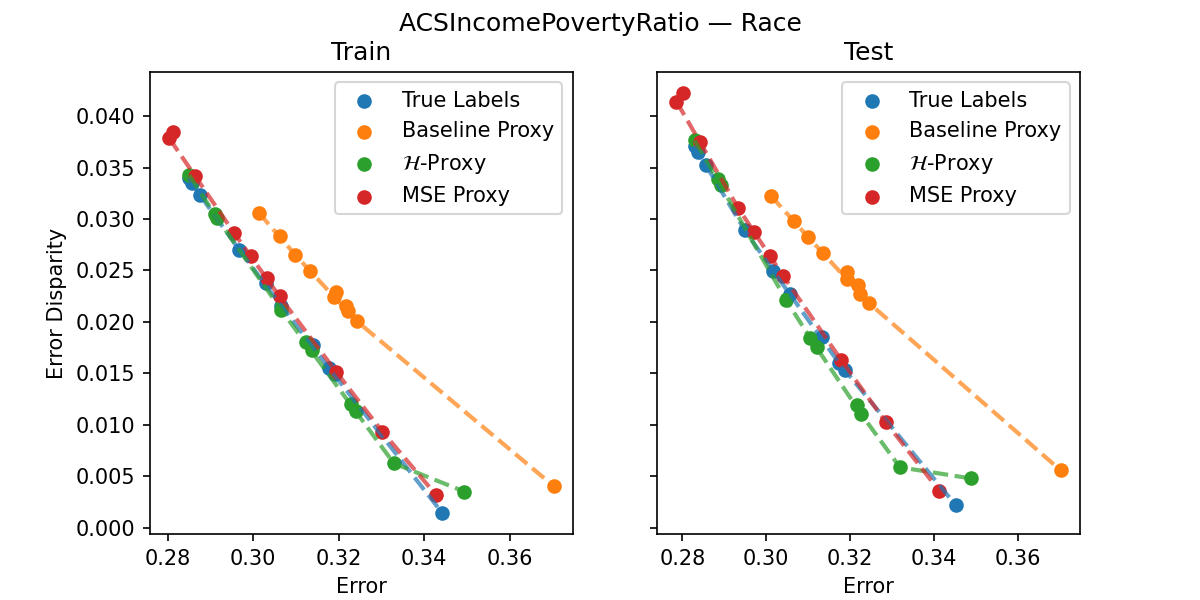}
\captionsetup[subfigure]{labelformat=empty}
\caption{Plots for ACSIncomePovertyRatio task with race as sensitive feature}
\label{fig:acs_incomepovertyratio_race}
\end{figure*}

\subsubsection{ACS-IncomePovertyRatio-Age}

In Fig.~\ref{fig:acs_incomepovertyratio_age} we observe near optimal performance of the $\Hs-$proxy, with a tradeoff curve that is overlaid on that of the true sensitive features. The MSE and baseline proxies achieve similar performance for the more relaxed constraints but are unable to achieve error disparity below 0.005 even though the sensitive features and $H-$proxy are both capable of inducing models that can achieve near 0 error disparity. Generalization performance on this dataset is excellent for all models trained. 

\begin{figure*}[h]
\centering
\includegraphics[width=0.7\textwidth]{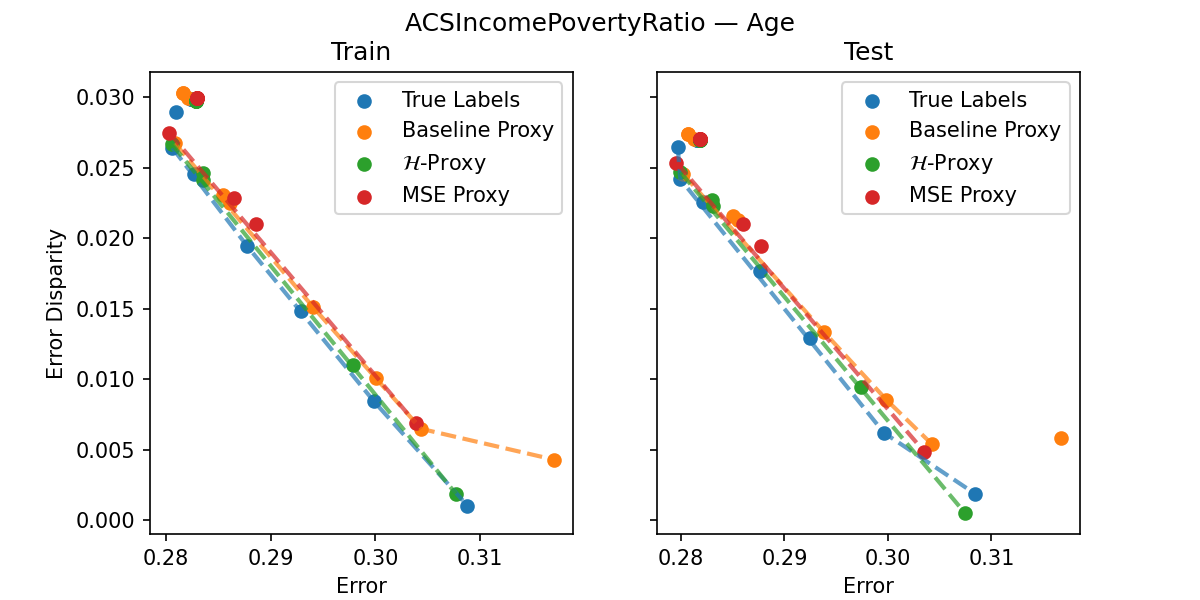}
\captionsetup[subfigure]{labelformat=empty}
\caption{Plots for ACSIncomePovertyRatio dataset with age as sensitive feature}
\label{fig:acs_incomepovertyratio_age}
\end{figure*}

\subsubsection{ACS-IncomePovertyRatio-Sex}

In Fig.~\ref{fig:acs_incomepovertyratio_sex} the $\Hs$-proxy induces models with slightly worse performance than the true sensitive features. The cost of using the pareto-optimal models induced by the $\Hs$-proxy rather than those of the true sensitive features is an error disparity less than 0.002. However, we note that in the worst case, one would accidentally use the non-pareto model resulting from the $\Hs$-proxy, which would result in an error ${\sim}$0.01 greater for the same level of error disparity. Given the small scale of the of these differences in absolute terms, it is possible that they can be explained by approximation error of the downstream learner rather than an explicit failure of the proxy algorithm.

\begin{figure*}[h]
\centering
\includegraphics[width=0.7\textwidth]{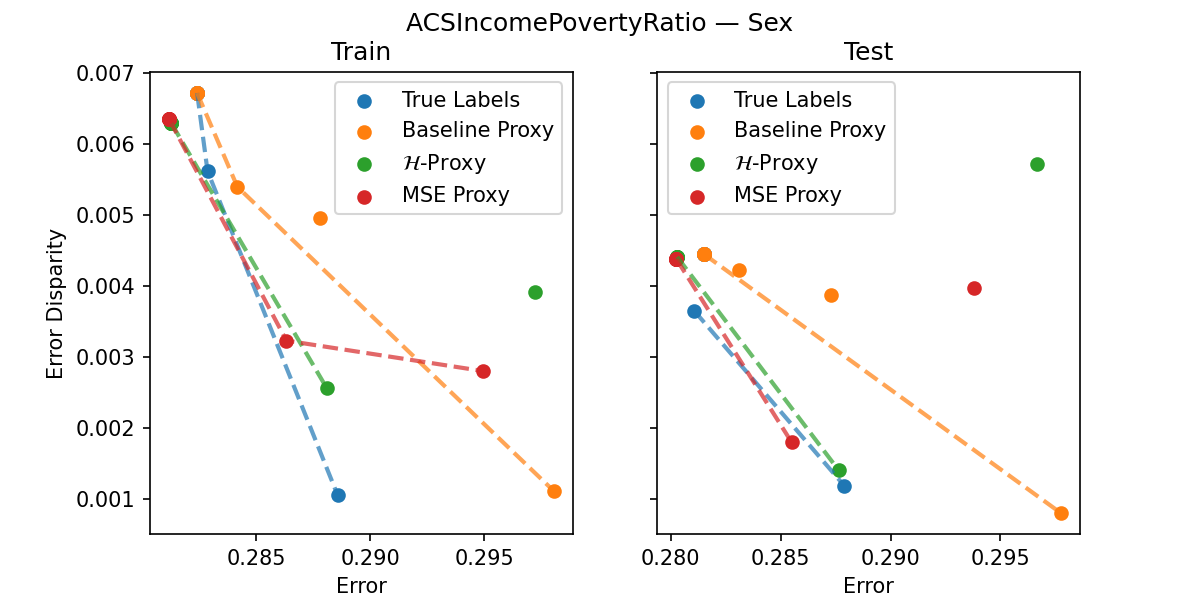}
\captionsetup[subfigure]{labelformat=empty}
\caption{Plots for ACSIncomePovertyRatio dataset with sex as sensitive feature}
\label{fig:acs_incomepovertyratio_sex}
\end{figure*}

\subsubsection{ACS-Mobility-Race}

In Fig.~\ref{fig:acs_mobility_race} we observe that the three proxies exhibit similar performance to each other and the true labels for the relaxed portion of the tradeoff curves, but none of them are able to achieve an error disparity lower than 0.008 while the least disparate model on the true features achieves an error disparity near 0. However, we also note that just looking at models trained on the true sensitive features, the second most disparate model achieves an error disparity greater than 0.11, even though it has a gamma value of 0.005. Therefore, there is some approximation error of the downstream learner that, independent of the effects of the proxy, can increase the error disparity by at least 0.006 compared to the intended constraint. This source of error explains most of the discrepancy between the results on the proxies and those on the true labels. Therefore, these results are not a strong indication that the proxy ``failed" as a substitute for the true sensitive features, though it may have been imperfect.

\begin{figure*}[h]
\centering
\includegraphics[width=0.7\textwidth]{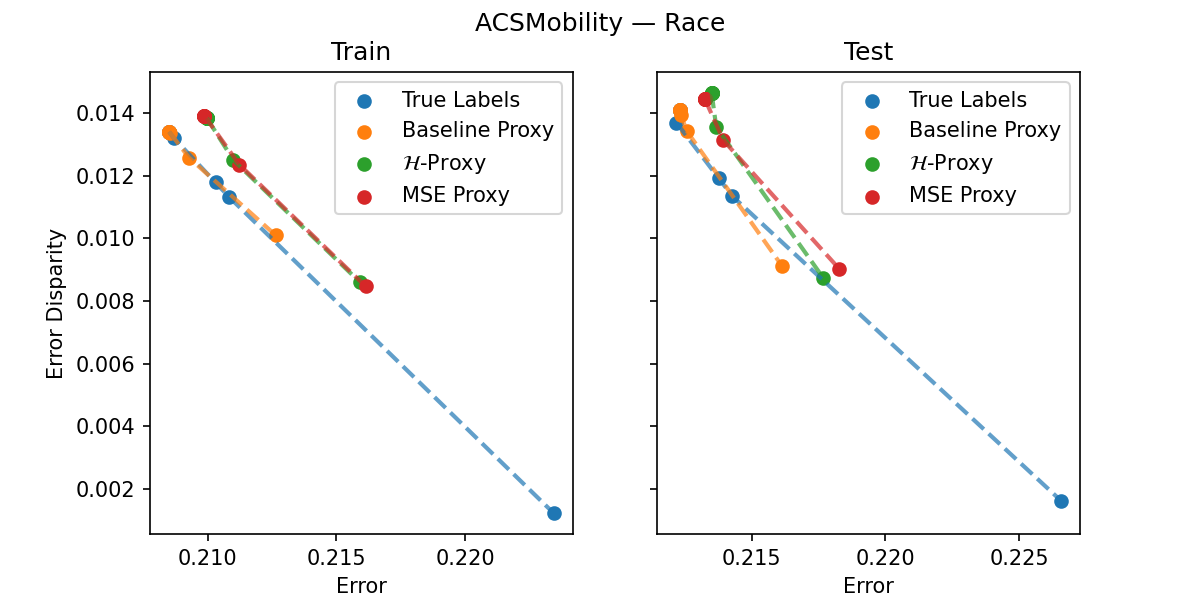}
\captionsetup[subfigure]{labelformat=empty}
\caption{Plots for ACSMobility task with race as sensitive feature}
\label{fig:acs_mobility_race}
\end{figure*}

\subsubsection{ACS-Mobility-Sex}

In Fig.~\ref{fig:acs_mobility_sex} we investigate the performance on the ACSMobility task, using sex as the sensitive feature. At first glance, these results appear to indicate a failure of the proxies. None of the models demonstrate clear tradeoffs between error and error disparity. However, looking at the models trained on the true sensitive features, we notice that there is exactly one point on the tradeoff curve, with error ${\sim}$0.209 and error disparity ${\sim}$0.0005. This means that the population error minimizing model, by luck, achieves an error disparity that is essentially 0, meaning there was not much room for improvement for the original model, let alone the proxies. We also note that the apparent poor-performance of the proxies is visually amplified by the unusually small scale of the plots. The least disparate model for the $\Hs$-proxy achieves an error disparity ${\sim}$0.0015 and error ${\sim}$0.213. Compared to the model trained on the true features, the error disparity is greater by only 0.001 and the error by only 0.004. From our previous experiments, we recall that these differences are well within the reasonable approximation error of the downstream learner alone, and thus do not indicate that the proxy failed as a substitute for the true sensitive features on this task. 

\begin{figure*}[h]
\centering
\includegraphics[width=0.7\textwidth]{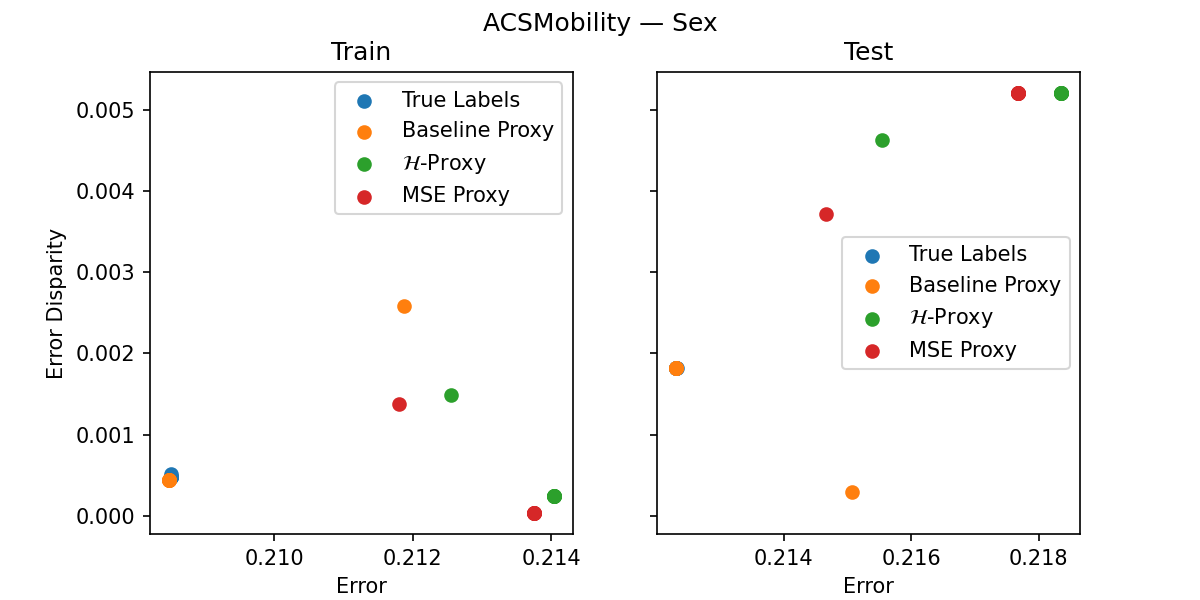}
\captionsetup[subfigure]{labelformat=empty}
\caption{Plots for ACSMobility dataset with sex as sensitive feature}
\label{fig:acs_mobility_sex}
\end{figure*}

\subsubsection{ACS-PublicCoverage-Race}

In Fig.~\ref{fig:acs_publiccoverage_race} we note more evidence of success for the $\Hs$-proxy. The tradeoff curve it induces is nearly identical to that induced by the true sensitive features, although with slightly greater error disparity. The $\Hs$-proxy also outperforms the baseline and MSE proxies, which achieve similar performance for the relaxed portion of the curve, but are unable to achieve error disparity lower than 0.002 whereas the $\Hs$-proxy achieves a minimum error disparity $<0.001$. The baseline proxy also appears to induce models with slightly more error for the same levels of unfairness compared to the other three curves. All models exhibit excellent generalization performance and out-of-sample behavior is nearly identical to in-sample. 
\begin{figure*}[h]
\centering
\includegraphics[width=0.7\textwidth]{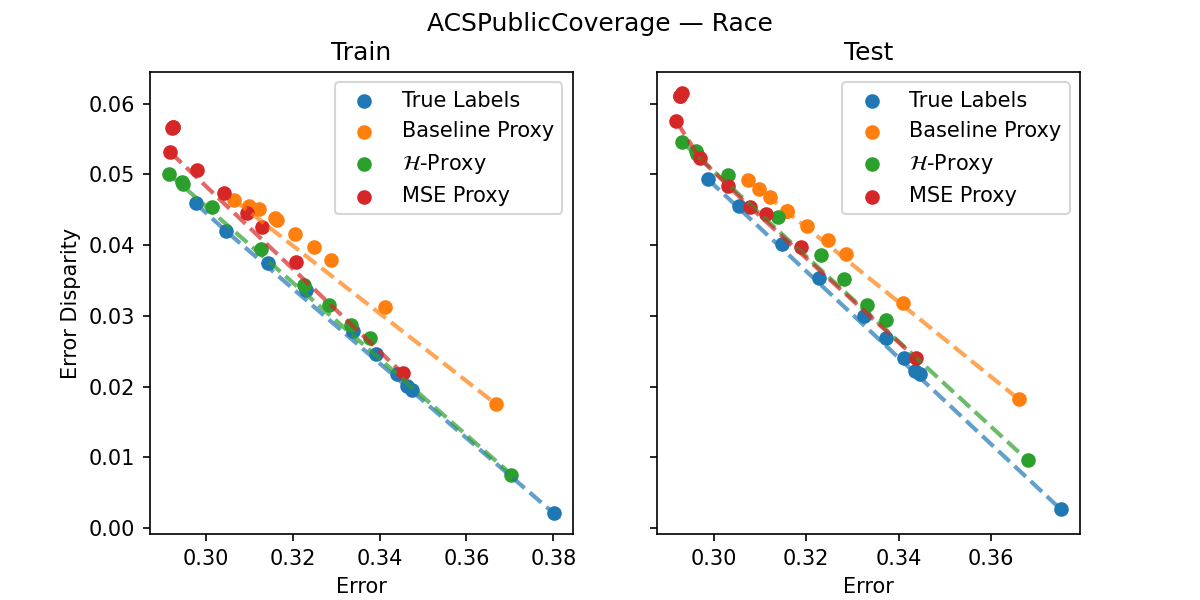}
\captionsetup[subfigure]{labelformat=empty}
\caption{Plots for ACSPublicCoverage task with race as sensitive feature}
\label{fig:acs_publiccoverage_race}
\end{figure*}

\subsubsection{ACS-PublicCoverage-Age}

In Fig.~\ref{fig:acs_publiccoverage_age} we observe that the unconstrained downstream learner achieves an error disparity that is essentially 0. Because of this, the tradeoff curve on the true sensitive features consists of a single point. The proxies induce models that don't indicate a clear tradeoff, but, when considering scale, do not indicate a significant failure. All proxies have tradeoff curves consisting of just two points, meaning that the estimated error disparity of the unweighted model (i.e. the error disparity with respect to the proxies) was at most 0.005. This difference is quite small in absolute terms and--while indicating that the proxy was not a perfect substitute for the true sensitive features--likely does not constitute a ``failure'' of the proxy. Out-of-sample, we note that all proxies actually end up with downstream models that achieve near 0 error disparity and slightly less error than that of the true labels, but this may be the result of random noise.

\begin{figure*}[h]
\centering
\includegraphics[width=0.7\textwidth]{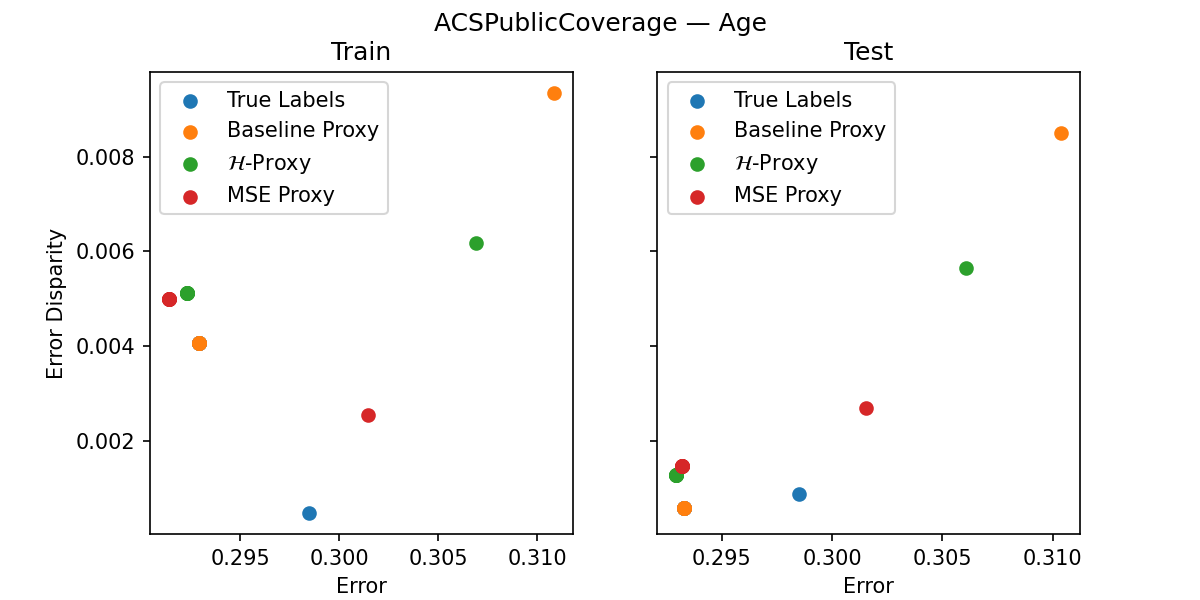}
\captionsetup[subfigure]{labelformat=empty}
\caption{Plots for ACSPublicCoverage dataset with age as sensitive feature}
\label{fig:acs_publiccoverage_age}
\end{figure*}

\subsubsection{ACS-PublicCoverage-Sex}

In Fig.~\ref{fig:acs_publiccoverage_sex}  we observe that the $\Hs$-proxy induces models that are unable to achieve much better performance than the unweighted models, achieving a minimum error disparity of only 0.006 while the true sensitive features are able to induce models with error disparity as low as 0.001, albeit with slightly higher error. Unlike the MSE and baseline proxies, the $H$-proxy does not seem to admit models that increase both error and disparity, which is a desirable property. This may indicate that the multi-accuracy constraints succeeded in reigning in the proxy such that the downstream learner recognized that it could not make improvements despite being unable to meet the strictest constraints, but it is also possible that this apparent ``good" behavior was simply by luck.

\begin{figure*}[h]
\centering
\includegraphics[width=0.7\textwidth]{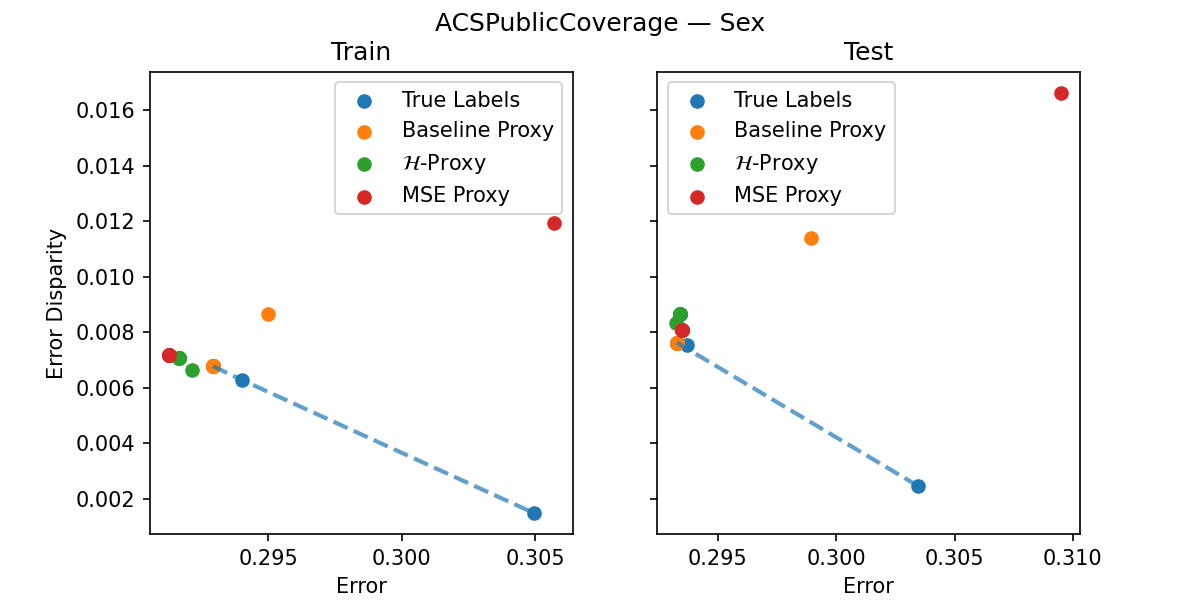}
\captionsetup[subfigure]{labelformat=empty}
\caption{Plots for ACSPublicCoverage dataset with sex as sensitive feature}
\label{fig:acs_publiccoverage_sex}
\end{figure*}

\subsubsection{ACS-TravelTime-Race}

In Fig.~\ref{fig:acs_traveltime_race} we observe that  both the $\Hs$-proxy (as well as the MSE proxy) seem to have at least partially failed on this task, perhaps due to non-convergence or a lack of a suitable multi-accurate proxy in $\Gs$. There are a sufficient number of points for both proxies that illustrate no clear tradeoff and result in simultaneously increasing error disparity and population error. The baseline proxy admits a more sensible looking tradeoff, although the minimum disparity it can reach is only 0.001 and, for the same levels of disparity as the true labels, admits models that are up to 0.015 less accurate. Fortunately, we note that all un-intended behavior could be detected in-sample and, in fact, that the proxy's performance seemed to \textit{improve} out-of-sample. Despite this improvement, we would recommend discarding proxies that exhibit unexpected behavior in-sample.

\begin{figure*}[h]
\centering
\includegraphics[width=0.7\textwidth]{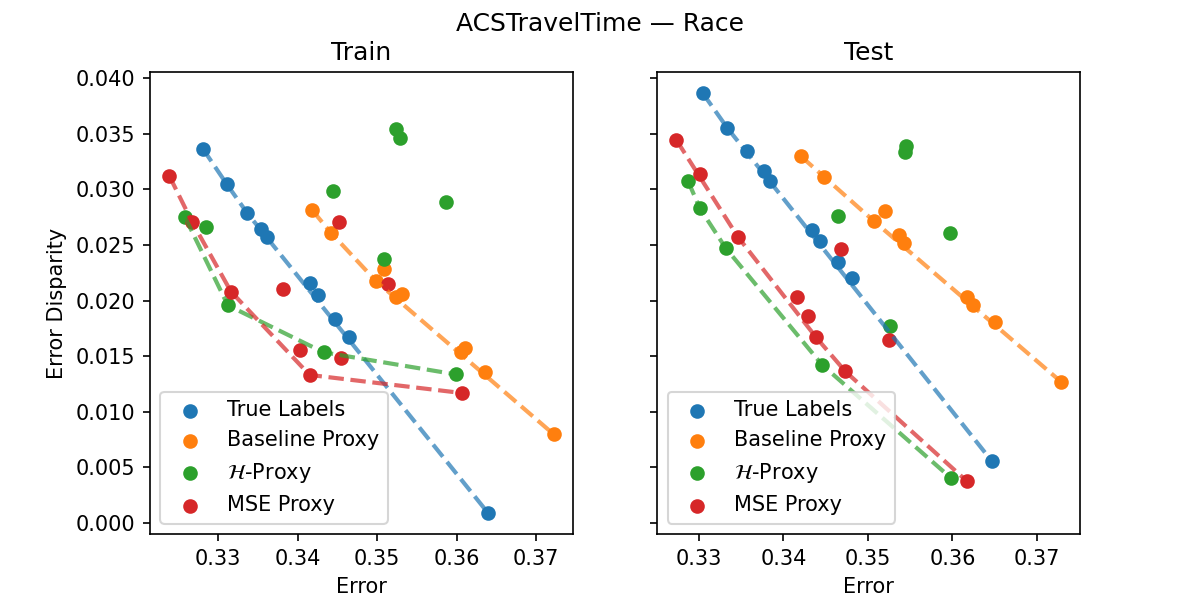}
\captionsetup[subfigure]{labelformat=empty}
\caption{Plots for ACSTravelTime task with race as sensitive feature}
\label{fig:acs_traveltime_race}
\end{figure*}

\subsubsection{ACS-TravelTime-Age}

In Fig.~\ref{fig:acs_traveltime_age} we note that all proxies seem to achieve similar performance to the true labels, although that it appears the downstream learner may have failed to converge in all cases. The evidence for this is that the smallest error disparity for models trained on the true labels is 0.005, rather than near 0. There is some non-convex behavior of the tradeoff curves induced by the proxies (and the true features), but all models are within 0.005 error disparity of forming a sensible tradeoff. Results are generally consistent in- and out-of-sample.

\begin{figure*}[h]
\centering
\includegraphics[width=0.7\textwidth]{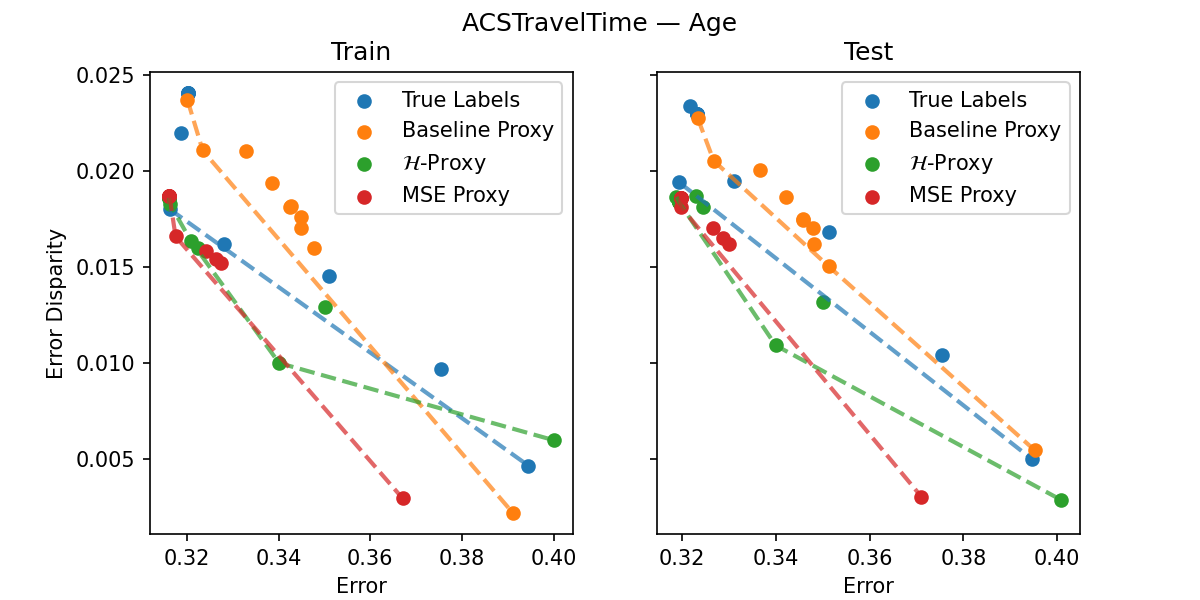}
\captionsetup[subfigure]{labelformat=empty}
\caption{Plots for ACSTravelTime dataset with age as sensitive feature}
\label{fig:acs_traveltime_age}
\end{figure*}

\subsubsection{ACS-TravelTime-Sex}

In Fig.~\ref{fig:acs_traveltime_sex} on this task, all three proxies failed to induce downstream models capable of achieving optimal error disparity. In particular, the baseline, $\Hs-$ and MSE proxies achieved minimum error disparity, 0.018, 0.016, and 0.012, respectively, while the true labels were able to induce models with disparity near 0. Failure of the $\Hs$-proxy may indicate that there did not exist a suitable proxy $g \in \Gs$ that satisified multi-accuracy constraints.

\begin{figure*}[h]
\centering
\includegraphics[width=0.7\textwidth]{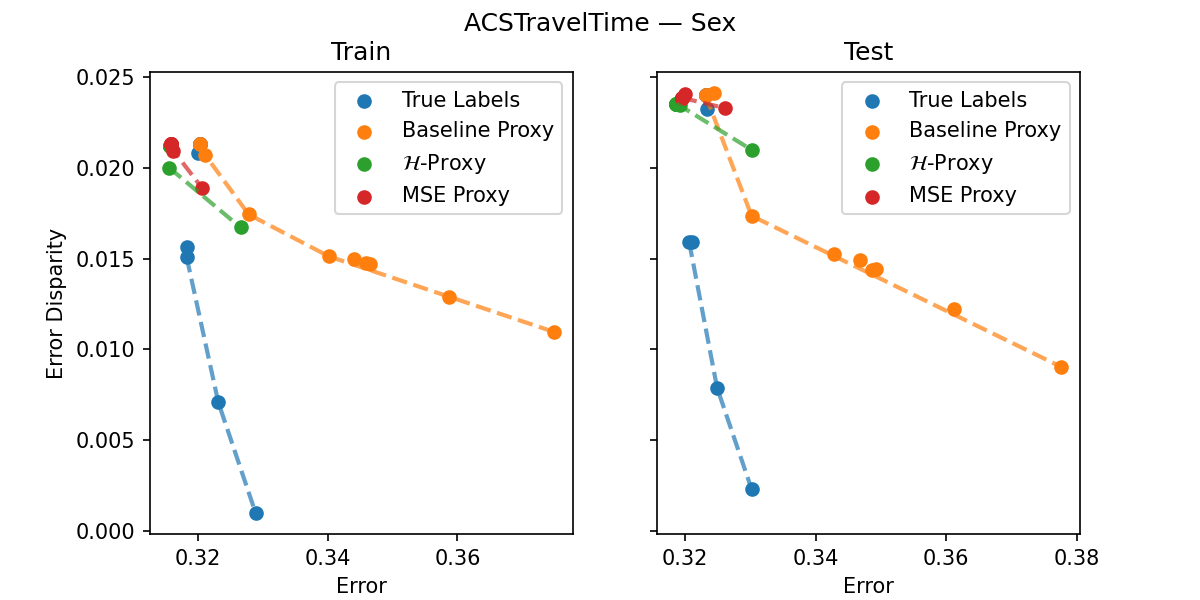}
\captionsetup[subfigure]{labelformat=empty}
\caption{Plots for ACSTravelTime dataset with sex as sensitive feature}
\label{fig:acs_traveltime_sex}
\end{figure*}

\subsection{Detecting proxy failures} 
\label{sec:addressing_failures}

Despite occasional failures, our experiments indicate that the $\Hs$-proxy can safely be used in practice on the condition that it is tested before deployment. In particular, we advise that anyone using our algorithm to train a proxy do the following: After training the proxy, train two downstream models on the sample. One using the proxy in place of the sensitive attributes, and the other using the sensitive attributes directly. If the performance of these two models is similar, our results indicate that the proxy can be deployed for use on out-of-sample instances on the same distribution and maintain the expected fairness guarantees on the relevant fairness task. Otherwise, the proxy should be discarded.

\end{document}